\begin{document}

\title{Monge-Kantorovich Fitting With Sobolev Budgets}

\author[F.\ Kobayashi]{Forest Kobayashi}
\address{Forest Kobayashi\\ Department of Mathematics \\ University of
  British Columbia\\ Vancouver, Canada\\ {Email:
    {fkobayashi@math.ubc.ca}}}

\author[J.\ Hayase]{Jonathan Hayase}
\address{Jonathan Hayase\\ Paul G. Allen School of Computer Science \&
  Engineering \\ University of Washington \\ Seattle, USA \\ {Email:
    jhayase@cs.washington.edu}}

\author[Y.H.\ Kim]{Young-Heon Kim}
\address{Young-Heon Kim\\ Department of Mathematics \\ University of
  British Columbia\\ Vancouver, Canada\\ {Email: yhkim@math.ubc.ca}}

\keywords{Monge-Kantorovich fitting, principal curves, principal
  manifolds, manifold learning, Sobolev constraint}

\subjclass{49Q10, 49Q20, 49Q22, 65D10}

\thanks{FK is supported by the 4YF doctoral fellowship of the
  University of British Columbia. JH is supported by NSF award 2134012
  and the NSF Graduate Research Fellowship Program. YHK is partially
  supported by the Natural Sciences and Engineering Research Council
  of Canada (NSERC), with Discovery Grant RGPIN-2019-03926, as well as
  Exploration Grant (NFRFE-2019-00944) from the New Frontiers in
  Research Fund (NFRF). YHK is also a member of the Kantorovich
  Initiative (KI) which is supported by the PIMS Research Network
  (PRN) program of the Pacific Institute for the Mathematical Sciences
  (PIMS). We thank PIMS for their generous support; report identifier
  PIMS-20240925-PRN01. Part of this work was completed during YHK and
  FK's visit at the Korea Advanced Institute of Science and Technology
  (KAIST), and we thank them for their hospitality and the excellent
  environment. \copyright 2024 by the author. All rights reserved.}

\maketitle

\begin{abstract}
  Given $\ddim < \cdim$, we consider the problem of ``best''
  approximating an $\cdim\text{-d}$ probability measure $\cmeas$ via
  an $\ddim\text{-d}$ measure $\imeas$ such that $\supp \imeas$ has
  bounded total ``complexity.'' When $\cmeas$ is concentrated near an
  $\ddim\text{-d}$ set we may interpret this as a \emph{manifold
    learning problem with noisy data}. However, we do not restrict our
  analysis to this case, as the more general formulation has broader
  applications.

  We quantify $\nu$'s performance in approximating $\rho$ via the
  Monge-Kantorovich (also called \emph{Wasserstein}) $\objp$-cost
  $\wpp(\cmeas, \imeas)$, and constrain the complexity by requiring
  $\supp \nu$ to be coverable by an $\fmap : \rddim \to \rcdim$ whose
  $\wkp$ Sobolev norm is bounded by $\ell \geq 0$. This allows us to
  reformulate the problem as minimizing a functional $\objfp(\fmap)$
  under the Sobolev ``budget'' $\ell$. This problem is closely related
  to (but distinct from) \emph{principal curves with length
    constraints} when $\ddim=1, \sobk = 1$ and an unsupervised
  analogue of \emph{smoothing splines} when $\sobk > 1$. New
  challenges arise from the higher-order differentiability condition.

  We study the ``gradient'' of $\objfp$, which is given by a certain
  vector field that we call the {\em barycenter field}, and use it to
  prove a nontrivial (almost) strict monotonicity result. We also
  provide a natural discretization scheme and establish its
  consistency. We use this scheme as a toy model for a generative
  learning task, and by analogy, propose novel interpretations for the
  role regularization plays in improving training.
\end{abstract}

\clearpage

\tableofcontents

\clearpage

\section{Introduction}
\label{sec:introduction}

We begin with a motivating example.
\subsection{Motivation}
\label{sec:motivation}
\emph{Given a rope of length $\budg$, what shape should one bend it
  into to best ``fill'' $\cset?$} In simple cases it is easy to
distinguish between ``good'' and ``bad'' solutions. The idea is that a
``good'' solution should minimize the average distance from a point in
$\cset$ to the closest point on the rope. With this in mind, one can
clearly see the curve in \cref{fig:good-fill} performs better than the
one in \cref{fig:bad-fill} (note the sample points are identical).
\begin{figure}[H]
  \centering
  \begin{subfigure}{.49\linewidth}
    \centering
    \includegraphics{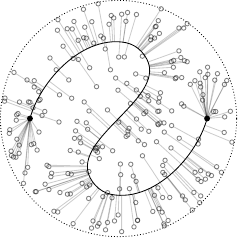}
    \caption{A good solution.}
    \label{fig:good-fill}
  \end{subfigure}
  \begin{subfigure}{.49\linewidth}
    \centering
    \includegraphics{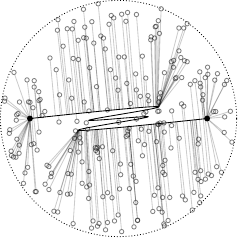}
    \caption{A bad solution.}
    \label{fig:bad-fill}
  \end{subfigure}
  \caption{Closest points for $N=250$ uniformly-sampled $\cpt \in
    \cset$. }
\end{figure}
Explicitly: Parametrizing the curves by $\fgood$, $\fbad : [0,1] \to
\cset$, we have
\begin{equation}
  \EE\bk{\inf_{\dpt \in [0,1]} \abs{\cpt - \fgood(\dpt)}}  <
  \frac{1}{2} \EE\bk{\inf_{\dpt \in [0,1]} \abs{\cpt - \fbad(\dpt)}}.
  \label{eq:expectations}
\end{equation}
If we endow $\cset$ with a measure $\cmeas$ more concentrated along
the horizontal we find the ``bad'' solution now performs better than
the ``good'' one. Thus we may generalize \eqref{eq:expectations} by
replacing the expectations with $\cmeas$-expectations. As a real-world
example, this could model a courier service wanting to find an
efficient delivery route for clients distributed as $\cmeas$ in a city
$\cset$.
\begin{figure}[H]
  \centering
  \begin{subfigure}{.49\linewidth}
    \centering
    \includegraphics{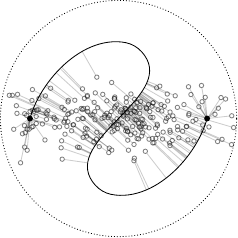}
    \caption{The new ``bad'' solution.}
    \label{fig:good-now-bad-curve}
  \end{subfigure}
  \begin{subfigure}{.49\linewidth}
    \centering
    \includegraphics{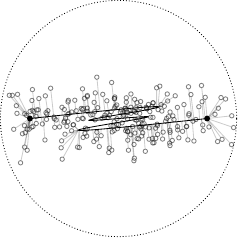}
    \caption{The new ``good'' solution.}
    \label{fig:bad-now-good-curve}
  \end{subfigure}
  \caption{Closest point projections for $N=250$ points sampled
    according to $\cmeas = \mc N(\mb 0; [\sigma_X^2\ 0;\ 0\
    \sigma_Y^2])$ (truncated to $\cset$), where $\sigma_X^2 = 1$,
    $\sigma_Y^2=9/121$.}
  \label{fig:new-target-measure}
\end{figure}
This problem is closely related to the \emph{average-distance problem}
introduced in \cite{buttazzo2002,buttazzo2003}; those works considered
optimizing the expectations in \eqref{eq:expectations} over closed,
connected sets $\Sigma \subseteq \cset$ with Hausdorff 1-measure $\mc
H^1(\Sigma) \leq \budg$. In a similar vein, we might try formalizing
our problem as an optimization over $\fmap \in C([0,1]; \cset)$ with
$\haus^1(\fmap([0,1])) \leq \budg$. However, as we now discuss, this
has some disadvantages.

Note \eqref{eq:expectations} is equally meaningful in the context of
trying to fill a $\cmeas$ on $\cset \subseteq \rcdim$ with some kind
of $\ddim$-surface. For example, this could represent the problem of
trying to design an optimal catalytic surface for reagents in a vessel
$\cset \subseteq \RR^3$, or a manifold learning problem where data
noise causes $\cmeas$ to ``spread'' off of the generating
$m$-manifold. Regardless, when $\ddim > 1$, the $\ddim$-dimensional
Hausdorff measure $\haus^\ddim(\fmap(\dset))$ is no longer a
meaningful constraint, since one may obtain arbitrarily-good
approximations of $\cmeas$ for which $\haus^\ddim(\fmap) = 0$ by
degenerating $\fmap(\dset)$ to an $(\ddim-1)$-surface. Thus in our
problem we will replace $\haus^\ddim$ with a constraint $\cost$ that
at least includes the derivatives $\partial_i \fmap^j$, where
$\fmap^j$ denotes the $j$\textsuperscript{th} component function of
$\fmap$. Our particular choice will be the Sobolev norm we discuss
later in \cref{sec:the-constraint}; for now, a simpler example is
\[
  \cost(\fmap) = \pn{\sum \norm[]{\partial_i
  \fmap^j}_{L^\sobp(\dset)}^\sobp}^{1/\sobp}.
\]

Thinking again of the courier-service example, we see it is also quite
natural to add higher-order terms to $\cost$: Delivery routes that
contain frequent acceleration might perform poorly in terms of fuel
economy, thus incurring extra costs. The particular constraint we use
in this paper shall be the Sobolev norm $\sobnorm{\fmap}$, with which
we can choose to penalize (weak) derivatives up to order $\sobk$.

Lastly, we should also notice that it is meaningful to consider
exponents $p$ other than $p=1$ on the distance function in
\eqref{eq:expectations}. In analogy with $L^\objp$ norms, we may
interpret this as weighting the relative importance of ``outliers'' in
$\cmeas$. Hence we end up with the fairly-general problem of trying to
minimize the objective function
\[
  \fmap \mapsto \EE_\cmeas\bk{\inf_{\dpt \in \dset} \abs{\cpt -
      \fmap(\dpt)}^\objp} \hbox{ subject to a Sobolev constraint of
    $\fmap$.}
\]
With this motivation in mind we now formally define the problem in a
slightly more general context, so as to facilitate comparisons to
existing works. Note, from now on we will write the expectation
$\EE_\cmeas$ as an integral.

\subsection{The General Problem}
\label{sec:the-general-problem}

Consider the following class of problems:
\begin{align*}
  \hbox{Fix $\ddim < \cdim$ and
  let $\dset \subseteq \rddim$ and $\cset \subseteq \rcdim$.}
\end{align*}
Also fix a class of admissible functions $\admiss \subseteq \set{\fmap
  \MID \fmap : \dset \to \cset}$. Let $\cmeas$ be a probability
measure on $\cset$ and for some $\objp \in \pn{0,\infty}$ define the
objective function $\objfp : \admiss \to \RR^{\geq 0}$ by
\begin{equation}
  \label{eqn:fp}
  \objfp(\fmap) = \int_\cset \inf_{\dpt \in \dset} \abs{\cpt -
    \fmap(\dpt)}^\objp \dd \cmeas(\cpt).
\end{equation}
Then, given a constraint function $\cost : \admiss \to \RR^{\geq 0}$
and a budget $\budg \geq 0$, the goal is
\begin{adjustwidth}{1em}{0em}
  \vspace{.25em}
  \begin{leftbar} \vspace{-.75em}
    \begin{problem}[Hard Constraint (HC)]
      \xlabel[(HC)]{prob:hard-constraint}
      Minimize $\objfp(\fmap)$ over $\set{\fmap \in \admiss \MID
        \cost(\fmap) \leq \budg}$.
    \end{problem} \vspace{-.25em}
  \end{leftbar}
  \vspace{-.375em}
\end{adjustwidth}
We define the \emph{optimal-value function} for a given budget $\budg$
by
\begin{equation}\label{eqn:J-l}
  \objell(\budg) = \inf_{\cflb} \objfp(\fmap).
\end{equation}
The typical constraints considered are usually either integral
constraints, \eg\ $\cost(\fmap) = \arclen(\fmap) = \int \abs{\fmap'}
\dd \dpt$ \cite{kegl2000}, or geometric constraints, \eg\
$\cost(\fmap) = \haus^1(\fmap(\dset))$ where $\haus^1$ denotes the
Hausdorff 1-measure \cite{buttazzo2002,buttazzo2003}. Much work has
been done on these problems and the literature is quite diverse so we
will postpone a detailed discussion until \cref{sec:relationships}.
For now, we simply note that frameworks like these provide a way to
study manifold learning problems where data noise is \emph{explicitly}
assumed---see \cite[\S 3.5.2]{Meila2024Apr}, and the references
therein---as well as certain generative learning problems
(\cref{sec:unequal-dimensional-ot}).

As succinctly stated in \cite{kirov-slepcev-2017}, one may think of
$\objfp$ as measuring how well $\fmap$ approximates $\cmeas$ while
$\cost(\fmap)$ restricts the complexity of the approximation. In
\cref{prop:connection-to-ot} we show $\objfp$ has an interpretation in
terms of an optimal transport cost. It is also common
\cite{lu-slepcev-2013, lu-slepcev-2016, kirov-slepcev-2017} to replace
the hard constraint $\cost(\fmap) \leq \budg$ with a soft penalty by
defining for a fixed parameter $\lambda > 0$
\begin{align}\label{eqn:fp-lambda}
  \objfpl(\fmap) = \objfp(\fmap) + \lambda \cost(\fmap),
\end{align}
whence the problem becomes
\begin{adjustwidth}{1em}{0em}
  \vspace{.25em}
  \begin{leftbar} \vspace{-.75em}
    \begin{problem}[Soft Penalty (SP)]
      \xlabel[(SP)]{prob:soft-penalty}
      Minimize $\objfpl$ over $\set{\cost < \infty}$.
    \end{problem} \vspace{-.25em}
  \end{leftbar}
  \vspace{-.375em}
\end{adjustwidth}
\xref{prob:hard-constraint} and \xref{prob:soft-penalty} are very
closely related but not always equivalent for every choice of $\cost$;
see \cref{sec:comparison-to-soft-penalty}. For now, we simply note
that solutions to \xref{prob:soft-penalty} recover solutions to
\xref{prob:hard-constraint} but not necessarily vice versa, and also
that \xref{prob:soft-penalty} is often more direct to implement
numerically. Thus we will typically use \xref{prob:soft-penalty} in
lieu of \xref{prob:hard-constraint} in our simulations.

\subsection{Optimal Transport Interpretation}
\label{sec:ot-interpretation}
We can interpret \xref{prob:hard-constraint} in terms of a
``free-source'' version of the unequal-dimensional optimal transport
studied in \cite{McCann2020Dec,nenna_pass_2020}. Recall that given
compact domains $\uneqds \subseteq \RR^\uneqdd, \uneqis \subseteq
\RR^\uneqid$ (for now making no assumptions about the relative sizes
of $\uneqdd$, $\uneqid$), probability measures $\uneqdm \in
\pmeas(\uneqds)$, $\uneqim \in \pmeas(\uneqis)$, and a cost function
$c : \uneqds \times \uneqis \to \RR$, the \emph{optimal transport
  cost} between $\uneqdm$ and $\uneqim$ is given by \[ \otcost
  (\uneqdm,\uneqim) = \inf_{\uneqtp \in \uneqtps(\uneqdm,\uneqim)}
  \int_{\uneqds \times \uneqis} c(\uneqdp,\uneqip) \dd
  \uneqtp(\uneqdp,\uneqip), \] where $\uneqtps(\uneqdm, \uneqim)$
denotes the set of all probability measures on $\uneqds \times
\uneqis$ with marginals $\uneqdm$ and $\uneqim$. When $\uneqdd =
\uneqid$ and $c(\uneqdp,\uneqip) = \abs{\uneqdp - \uneqip}^p$, we
let \begin{equation} \label{eqn:Wassp} \wassp(\uneqdm,\uneqim) =
  (\otcost(\mu,\nu))^{1/p} \end{equation} and call it the
\emph{Monge-Kantorovich $p$-distance} (also called the
\emph{Wasserstein} $p$-distance); see \eg\ the books
\cite{villanitopics,Santambrogio2015,Peyre2018Mar}. It is
straightforward to show (see \cref{prop:connection-to-ot}) that
\begin{align}\label{eqn:J-OT}
  \objell(\budg) = \inf_{\cflb} \inf_{\imeas
  \in \pmeas(\fmap(\dset))} \wpp(\cmeas, \imeas).
\end{align}

\subsection{Hypotheses}
\label{sec:assumptions}
Having discussed the general problem, we now specify the particular
instance we consider. The core hypotheses we employ are \medskip
\begin{hyps}[leftmargin=4em]
\item $\dset \subseteq \rddim$ is a simply connected, compact domain
  with nonempty interior and $\partial \dset \in
  C^1$, \label{hyp:dset}
\item $\cset \subseteq \rcdim$ $(\ddim < \cdim)$ is a compact, convex
  domain, \label{hyp:cset}
\item $\cost(\fmap)$ is given by a $\wkp$ Sobolev norm
  $(\sobk \geq 1)$ where \label{hyp:sob}
  \begin{enumerate}[leftmargin=2em, label=(H.3.\alph*)]
    \item $1 < \sobp < \infty$, and \label{hyp:sobp}
    \item $\sobk \sobp > \ddim$, and \label{hyp:kqm}
  \end{enumerate}
\item $\objp \geq 1$. \label{hyp:p}
\end{hyps}
\medskip

Broadly speaking, \ref{hyp:dset}--\ref{hyp:sob} are used for getting
things like compactness and convexity of the feasible set and suitable
continuity of $\objfp$, while \ref{hyp:p} is used to give $\objfp$ enough
regularity to get a ``gradient'' in \cref{sec:local-strategies}. On
that note, while it is not a core hypothesis, the following addition
of \ref{hyp:ac} often allows one to dramatically simplify statements in
\cref{sec:local-strategies}; we will note this explicitly when
relevant: \medskip
\begin{hypsopt}
\item $\cmeas$ is absolutely continuous with respect to the Lebesgue
  measure. \label{hyp:ac}
\end{hypsopt}
\medskip The primary contribution of our work is the generality of
\ref{hyp:dset}--\ref{hyp:p}, especially the free choice of $\ddim <
\cdim$. A particular challenge is posed by the fact that taking $\sobk
> 1$ includes higher-order terms in $\cost(\fmap)$; this makes $\cost$
severely parametrization-dependent. As a consequence, typical
local-modification arguments (see \cite[Lemma 3.12]{buttazzo2002},
\cite[Lemma 3.1]{delattre2020}) for improving $\objfp$ become
unusable; we discuss this more in \cref{sec:arclen-constraints}. For
now, we simply note that the higher-order terms fundamentally change
the flavor of the problem. We are forced to take a more abstract
approach than in the $\cost(\fmap) = \arclen(\fmap)$ case and go on
some detours to set up machinery. On the other hand, our proof
strategies tend to be more general, and so might arguably expose more
fundamental geometric features of the problem.

\subsection{Discussion of Main Results}

Under \ref{hyp:dset}, \ref{hyp:cset}, and \ref{hyp:sob} we establish
existence of optimizers (\cref{thm:existence}) and discuss some
trivial counterexamples to uniqueness (\cref{cex:nonuniqueness}). We
exhibit fundamental properties of $\objfp$ such as joint continuity in
$(\fmap, \cmeas)$ (\cref{thm:joint-continuity}) and lack of
convexity/concavity (\cref{sec:non-concavity-non-convexity}).
Similarly, for the optimal-value function $\objell(\budg)$ we
establish some basic properties including continuity
(\cref{thm:objell-properties}; \cf\ \cite[Lem. 3.1.i]{delattre2020})
and an asymptotic lower bound in terms of the budget $\budg$
(\cref{prop:asymptotics}; \cf\ \cite[Thm 3.16]{buttazzo2002}).

A natural question is whether $\objell$ must be strictly monotone. In
other words,
\begin{equation}\label{eqn:strict-question}
  \hbox{\itshape ``Does increasing the budget allow us to create
  strictly-better solutions?''}
\end{equation}
This is surprisingly nontrivial. A closely related result we can show
via elementary methods is that for all $\budg$ large enough to
guarantee optimizers are nonconstant
(\cref{prop:nontrivial-optimizers-nonconstant}), there exist
optimizers saturating the constraint
(\cref{cor:saturating-optimizers}). By contrast, the converse to
\eqref{eqn:strict-question} is immediate: Decreasing $\objell$ always
requires increasing the budget, since $\objell$ is trivially
nonincreasing.

The difficulty in getting a strict monotonicity result
\eqref{eqn:strict-question} is mainly due to the significant
parametrization-dependence of $\cost$ when $\sobk > 1$. Hence much of
the remainder of the paper is dedicated to establishing local
modification strategies that are compatible with $\cost$ and provably
improve $\objfp$. Fundamental to this goal is the question:
\begin{center}
  \itshape ``What is the gradient of $\objfp(\fmap)$?''
\end{center}
To that end we consider a natural object, the first variation of
$\objfp$. Versions of this have been considered in the literature (see
\eg\ \cite{buttazzo-mainini-stepanov2009, kirov-slepcev-2017,
  delattre2020}); the point is that we treat the case of general
$\objp \geq 1$, and importantly remove common hypotheses on $\cmeas$
(\eg\ ``vanishing on small sets'') for $\objp > 1$ while also
establishing sharper hypotheses for the $\objp = 1$ case, thus
unifying the results scattered in the literature. On the other hand,
we also use this to provide a novel and nontrivial result on making a
local modification to an optimizer while remaining in the Sobolev
class.

The approach is as follows. First, in \cref{sec:local-strategies},
under \ref{hyp:p} we express the first variation of $\objfp$ as a
certain vector field attached to the map $\fmap$; we refer to this
field as the \emph{barycenter field}.
\begin{repdefinition}{def:fbary}[Barycenter Field]
  Let $\pfd(\cpt) \in \argmin_{\dpt \in \dset} \abs{\cpt -
    \fmap(\dpt)}$ and let $(\set{\cmdpt}_{\dpt \in \dset}, \cpushdh)$
  be the disintegration of $\cmeas$ via $\pfd$. Then we define the
  \emph{barycenter field} at $\dpt$ to be
  \[
    \fbdh(\dpt) = \objp \int_{\pfd^{-1}(\dpt)} \abs{\cpt -
      \fmap(\dpt)}^{\objp - 2} (\cpt - \fmap(\dpt)) \dd \cmdpt(\cpt).
  \]
  Letting $\iset = \fmap(\dset)$ and $\pfi(\cpt) \in \argmin_{\ipt \in
    \iset} \abs{\cpt - \ipt}$ we may define $\fbih$ analogously.
\end{repdefinition}
In the case $(\ddim, \objp) = (1,2)$ this trivially recovers the
objects used in \cite{kirov-slepcev-2017,delattre2020}; see also
similar ideas in
\cite{gerber_dimensionality_2009,buttazzo-mainini-stepanov2009}. Note
that in this definition there are some technical wrinkles arising from
the fact that there might be multiple possible choices for the value
$\pfd$ can take for $\cpt$ in the so-called \emph{ambiguity set}
$\ambf$; typically these are addressed by making assumptions on
$\cmeas$ \cite{buttazzo-mainini-stepanov2009, kirov-slepcev-2017} or
by exploiting structure particular to the case $\objp = 2$
\cite{delattre2020}. We bypass such issues by applying a simple result
on measurable selections; see \cref{sec:disintegration-details}.

Let $\pfdset$ denote the set of all measurable selections of $\cpt
\mapsto \argmin_{\dpt \in \dset} \abs{\cpt - \fmap(\dpt)}$ and for all
$\pfd \in \pfdset$ let $\cpushdh$ denote the pushforward of $\cmeas$
by the map $\pfd$. It turns out that the barycenter field is
essentially the $L^2(\cpushdh)$-gradient of $\objfp$ at $\fmap$:
\begin{reptheorem}{thm:fbary-grad-J}
  Take \ref{hyp:dset}, \ref{hyp:cset}, and \ref{hyp:p}. Fix $\fmap \in
  C(\dset; \cset)$ and $\pert \in C(\dset; \rcdim)$, and for all
  $\varepsilon > 0$ let $\feps \coloneqq \fmap + \varepsilon \pert$
  and $\iset = \fmap (\dset)$, $\iseps=\feps(\dset)$. If $\objp = 1$,
  let $\iset_{\rm out} = \set{\ipt \in \iset \MID \liminf_{\varepsilon
      \to 0} 1_{\iseps}(\ipt) = 0}$, and take the extra hypothesis
  $\cmeas(\iset_{\rm out}) = 0$. Then
  \begin{equation*}
    \lim_{\varepsilon \to 0}
    \frac{\objfp(\feps) - \objfp(\fmap)}{\varepsilon} =
    \min_{\pfd \in \pfdset} \int_\dset
    \ip[]{-\fbdh(\dpt), \pert(\dpt)} \dd
    \cpushdh(\dpt).
  \end{equation*}
\end{reptheorem}
Here the minimization of the set $\pfdset$ arises due to the ambiguity
set $\ambf$.
\begin{boxedremark}
  An analogous result for the $(\ddim, \cost(\Sigma), \objp) = (1,
  \haus^1(\Sigma), 1)$ case was established in
  \cite{buttazzo-mainini-stepanov2009} under the assumption that
  $\pert \in C^\infty$ and $\cmeas(E) = 0$ for all $E$ such that
  $\haus^{\cdim - 1}(E) = 0$. Similarly, if one takes \ref{hyp:ac},
  $\fmap, \pert \in C^2$, and $(\ddim, \cost(\fmap), \objp) = (1,
  \arclen(\fmap), 2)$ then \cref{thm:fbary-grad-J} is implied by the
  analysis in \cite{kirov-slepcev-2017}. In \cite{delattre2020} it was
  further shown that if $\fmap$ is an optimizer of
  \xref{prob:hard-constraint} and $\pert$ is bounded and Borel, then
  exploiting an invariant particular to $\objp = 2$ allows removing
  the \ref{hyp:ac} hypothesis of \cite{kirov-slepcev-2017}. We
  reiterate that our hypotheses are more general, requiring only
  continuity of $\fmap, \pert$, and a mild extra hypothesis on
  $\cmeas$ for $\objp = 1$.
\end{boxedremark}
When $\cmeas(\ambf) = 0$ (\eg, via \ref{hyp:ac}), adding an
injectivity hypothesis to $\fmap$ and rewriting in terms of $\fbih$
yields a much simpler form---see \cref{cor:fbary-grad-J-y}. However,
even with this simplified form, it is nontrivial to construct local
improvements to $\fmap$ while remaining in the Sobolev class. Indeed,
in \cref{sec:barycenter-field-and-properties} we study some properties
of $\fbdh$, $\fbih$, and demonstrate that they can be discontinuous
even when $\fmap \in C^\infty$ (\cref{cex:f-smooth-fbary-discont}) or
$\iset$ is a $C^1$-manifold (\cref{cex:Y-c1-fbary-discont}). This
complicates attempts to employ $\fbdh$, $\fbih$ for gradient flow
(\cf\ \cite[\S 5]{gerber_dimensionality_2009}), as such a scheme
would require $\fmap + \varepsilon \fbdh$ to remain within the
admissible set, that is, the Sobolev class for $k\ge 1$. Nevertheless,
we show that if $\fbdh$ is nontrivial we can construct a local
improvement regardless of the choice of $\sobk$:
\begin{reptheorem}{thm:smooth-improvement}
  \label{thm:smooth-improvement-fake}
  Take the hypotheses of \cref{thm:fbary-grad-J} and suppose that
  \begin{align}\label{eqn:nontrivial-F}
  \hbox{for some $\pfd \in \pfdset$ we have $\cpushdh(\set[]{\fbdh
    \neq 0}) > 0$.}
  \end{align}
  Then for all $\varepsilon > 0$ there exists $\perteps \in
  C^\infty(\dset; \rcdim)$, depending on $\fbdh$, such that
  \[
    \objfp(\fmap + \perteps) < \objfp(\fmap) \qquad \text{and} \qquad
    \cost(\fmap + \perteps) < \cost(\fmap) + \varepsilon .
  \]
  Furthermore, if $\cpushdh(\finv(\partial \cset)) = 0$, then
  $\perteps$ can be chosen so that $\fmap + \perteps$ takes values
  only in $\cset$, whence $\fmap$ is not a local minimum of $\objfp$
  in $\wkpsets$.
\end{reptheorem}
This can of course be restated in terms of $\iset$, $\fbih$; see
\cref{cor:fbary-grad-J-y}. In any case, this theorem almost
establishes the desired ``strict monotonicity''
\eqref{eqn:strict-question} of $\objell$, modulo the nontriviality
condition $\cpushdh(\set[]{\fbdh \neq 0}) > 0$; such nontriviality was
called ``default of self-consistency'' in \cite[Lemma
3.2]{delattre2020} where it was established for the special case
$(\ddim, \cost(\fmap), \objp) = (1, \arclen(\fmap), 2)$. The
assumption $\cpushdh(\set[]{\fbdh \neq 0}) > 0$ is crucial for us, and
we conjecture that it holds for each optimizer $f$ of the hard
constraint problem $\objell(\budg)$ \eqref{eqn:J-l}, though
unfortunately the proof of \cite{delattre2020} does not generalize to
our case, with challenges being provided by both $\sobk > 1$ and
$\sobp > 1$.

In the later part of the paper we turn toward numerical simulations
and applications to machine learning. In
\cref{sec:the-discrete-problem} we demonstrate consistency results for
both \xref{prob:hard-constraint} and \xref{prob:soft-penalty}
(\cref{cor:consistency,cor:consistency-sp}). We demonstrate an
algorithm for computing discrete approximations to $\fbdh$ and show an
example output. In \cref{sec:generative-learning} we highlight a
connection between \xref{prob:hard-constraint} and generative learning
problems. We give an informal overview of why \emph{a priori} we might
expect such a connection (summarized in
\cref{sec:unequal-dimensional-ot}) and give numerical experiments
demonstrating that the arc length constraint (\cf\
\cite{lu-slepcev-2016}, \cite{delattre2020}) is a powerful
regularization method in a simple image-generation problem, clearly
outperforming an unregularized network as well as modestly
outperforming weight-decay. Further experiments would be required to
ensure this is not an artifact of lucky hyperparameter tuning, as well
as to understand the impact of our constraint when $\sobk > 1$. This
is left for a future work.

\subsection{Example Applications}
\label{sec:applications}
We believe there are many practical directions that our results and
perspectives can find new applications to. Here we list two examples.
\subsubsection{Factorization of Wasserstein GANs}
\label{sec:unequal-dimensional-ot}

Given the relationship between $\objfp$ and $\wpp(\cmeas, \imeas)$
(see \cref{sec:ot-interpretation} and \cref{prop:connection-to-ot}),
our results can be connected to ideas in machine learning research,
especially Wasserstein GANs \cite{arjovsky2017}. First, one may
trivially extend $\wassp(\uneqdm, \uneqim)$ to the case $\uneqdd <
\uneqid$ by fixing a measurable $\uneqf : \uneqds \to \uneqis$,
letting $c(\uneqdp, \uneqip) = \abs{\uneqf(\uneqdp) - \uneqip}^p$, and
taking the infimum over $\uneqtp \in \uneqtps(\uneqf_{\# \uneqdm},
\uneqim)$. Then the essential idea of {\em Wasserstein GANs (WGANs)}
is the problem of learning the best choice of $\uneqf$ to minimize
$\wassp(\uneqf_{\#\uneqdm}, \uneqim)$.

Our problem is slightly different. From the double optimization
structure \eqref{eqn:J-OT} we may think of \xref{prob:hard-constraint}
as trying to find the ``best'' $\fmap \in \cflb$ when we are free to
choose \emph{any} probability measure to put on $\fmap(\dset)$, rather
than just $\uneqf_{\# \uneqdm}$ for the given $\uneqdm$. Here we may
interpret the restriction to $\cflb$ as penalizing overfitting of an
empirical measure $\cmeas_N$ in the presence of high-frequency noise,
with increasing $\sobk$ yielding solutions $\fmap$ that have fewer
``sharp bends.'' Additionally, we may view the $\sobk = 1$ term as
penalizing ``backtracking,'' thus incentivizing choices of $\fmap$ for
which locality in $\cset$ corresponds more nicely to locality in
$\dset$. This may be used to ``factor'' problems like WGAN (between
$\uneqdm$ and $\cmeas$) into two steps.

\vspace{.25em}
\begin{adjustwidth}{.75em}{0em}
  \begin{leftbar}
    \vspace{-.25em}
    \emph{Factorization of WGAN (A Two-Step Approach):}
    \begin{enumerate}[label=\arabic*)]
      \item A ``hard'' step \xref{prob:hard-constraint} consisting of
        finding a low-complexity $\ddim$-to-$\cdim$ map $\fmap$
        whose image best approximates a measure $\cmeas$ on $\rcdim$,
        and
      \item An ``easy'' step (reparametrization) consisting of finding an
        $\ddim$-to-$\ddim$ map $\varphi$ such that $(\fmap \circ
        \varphi)_{\# \uneqdm} = \argmin_{\imeas \in \pmeas(\fmap(\dset))}
        \wpp(\cmeas, \imeas)$.
    \end{enumerate}
    \vspace{-.75em}
  \end{leftbar}
\end{adjustwidth}
We hypothesize that performing this sort of factorization could yield
performance improvements over the existing methods, which generally
try to solve the two problems simultaneously.

Essentially, the idea is that in the factored problem, the explicit
regularization of $\fmap$ can help avoid situations where $\fmap$ gets
stuck in a well whose local minimum has extremely low regularity
(\eg\ being not even continuous). Such low-regularity functions can
be difficult to express in a given model class, leading to undesirable
qualities such as slow convergence or training instabilities.

In fact, we propose that even in the ``unfactored'' context, adding
regularization resembling $\cost$ could provide important
benefits---see \cite{Vardanyan2024Jul} for a similar idea. We discuss
this and more connections to generative learning in
\cref{sec:generative-learning}; for now, we simply propose that
theoretical analysis of
\xref{prob:hard-constraint}/\xref{prob:soft-penalty} could be useful
in developing qualitative intuition for problems like WGAN.

\subsubsection{A Routing Problem with Sobolev constraint}
\label{sec:routing-problems}
The following application gives an idea of why it is relevant to
consider for $\cost(f)$ the Sobolev norm of $f$, \ie\ including all
the lower order terms.

In \cref{sec:motivation} we mentioned the example of a company trying
to find an efficient route for delivering packages. We describe a
similar drone-routing application involving $W^{2,\sobp}$, inspired by
\cite[\S 7.2.1]{lebrat2019}. Note, however, that the authors of that
work considered $L^\infty$ constraints on the linear and angular
velocities of the drone, and also required it to return to the start
at predetermined times for charging. This is in contrast to our
constraint, which is $L^\sobp$ and directly incorporates the
acceleration ($\sobk = 2$) term, thus allowing the optimization
process itself to determine when the drone should return for charging.

Suppose an organization plans to fly drones over remote,
wildfire-prone areas to attempt to locate new burns. Based on past
data and weather modeling they have a rough idea of the distribution
$\cmeas$ where the risk of fire is the highest. What is the best path
$\fmap(t)$ for the drone to take, given its finite battery capacity?

There are three main factors. First, as the drone flies farther from
the launch station, it requires more power to transmit a strong enough
signal to stream data back; we encode this with $\norm{\fmap}_{L^{1}}$
(alternatively, one could interpret the $0$\textsuperscript{th}-order
term as penalizing the expected cost of retrieval were the drone to
malfunction and crash in the wilderness, as typically the
transmission-power variation is small). Second, assuming low speeds,
the drone must constantly expend energy proportional to its velocity
in order to overcome drag; we encode this with
$\norm[]{D\fmap}_{L^{1}}$. Lastly, the drone will need to expend
energy any time it accelerates; we encode this by
$\norm[]{D^2\fmap}_{L^{1}}$. Hence, up to a choice of coefficients
$c_0, c_1, c_2 > 0$ our constraint becomes
\[
  \norm{\fmap}_{W^{2,1}([0,1]; \cset)} = c_0 \norm{\fmap}_{L^1} +
  c_1 \norm{D \fmap}_{L^1} + c_2 \norm{D^2 \fmap}_{L^1}.
\]
Thus we see the Sobolev norm of $\fmap$ (the sum of lower and higher
order terms), especially in this case with $k=2$, is a meaningful
constraint. We may interpret the replacement of $\norm{\cdot}_{L^1}$
with $\norm{\cdot}_{L^\sobp}^\sobp$ as a form of reweighting to
penalize outliers.

For this particular application, we believe a discretized version of
the barycenter field $\fbdh$ (\cref{def:discrete-barycenter-field})
could be particularly helpful if $\cmeas$ is being approximated via
data collected in real-time. For example, if satellite data shows that
a region far from the current flight path is experiencing severe dry
conditions, $\fbdh$ could inform the operators how to most-efficiently
modify the drone path to investigate the new area without having to
recompute the entire trajectory from scratch.

\subsection{Relationships with Existing Work}
\label{sec:relationships}

\subsubsection{Non-parametric vs.\ Parametric Formulation}
When $\ddim=1$, \xref{prob:hard-constraint} is closely related to the
``average distance problem'' or ``irrigation problem'' first
introduced in \cite{buttazzo2002,buttazzo2003} (a review is given in
\cite{lemenant2012}). That work considered the particular case $\objp
= 1$ and optimized over closed, connected sets $\Sigma \subseteq
\rcdim$ with finite Hausdorff $\haus^1$ measure. We may view this as
\xref{prob:hard-constraint} taking $\admiss = C(\dset; \cset)$ and
$\cost(\fmap) = \haus^1(\fmap(\dset))$.

A key distinguishing feature of the $\haus^1$ formulation is that it
makes it easy to make changes to the topology of $\fmap(\dset)$: We
may freely create ``branches'' in $\Sigma$, so long as it remains
connected and we maintain $\haus^1(\Sigma) \leq \budg$. By contrast,
in the parametrization-dependent formulation of the cost $\cost$ these
kinds of modifications are disincentivized: Creating a branch
typically requires some form of ``backtracking,'' which is usually
highly inefficient. In fact in \cite{delattre2020} it was actually
shown for the parametric formulation $(\cost(\fmap), \cdim, \objp) =
(\arclen(\fmap), 2, 2)$, that optimizers of
\xref{prob:hard-constraint} \emph{must} be injective, thus precluding
backtracking of any kind.

We may interpret this as a reflection that the $\haus^1$ formulation
is best suited for problems with some form of parallel structure,
\eg\ designing an irrigation network: Once water flows in the pipes
we can deliver it to all customers simultaneously. By contrast, the
parametrization-dependent formulation is suited more to problems with
a sequential structure, such as the courier problem: A delivery truck
can only visit one address at a time.

\subsubsection{Principal Curves and Surfaces}
\label{sec:principal-curves}
In \cite{hastie1984} Hastie introduced the notion of \emph{principal
  curves and surfaces}, defined (in our notation) to be injective maps
$\fmap$ for which the $\objp = 2$ barycenter field $\fbih \equiv 0$. A
concise reference for this framework is found in \cite{hastie1989}. In
analogy with the standard technique of \emph{principal component
  analysis}, we may understand principal curves as essentially
performing nonlinear dimension reduction. However, this definition
proved to be difficult to work with theoretically, hence the authors
of \cite{kegl2000} chose to redefine principal curves as (in our
notation) solutions of \xref{prob:hard-constraint} when $\ddim=1$,
$\objp=2$, and $\cost(\fmap) = \arclen(\fmap)$. This is typically
taken as the standard definition now \cite{delattre2020}. Clearly,
this problem is very similar to our case $\cost(\fmap) =
\sobnorm{\fmap}$, but there are some key differences as we now
discuss.

\subsubsection{The Arc Length Constraint}
\label{sec:arclen-constraints}
When $\ddim=1$, $\sobk = 1$, our problem becomes similar to the case
where $\cost(\fmap) = \arclen(\fmap)$. Indeed, since $\abs{\fmap}$ is
bounded by $\diam(\cset)$ while $\abs{D\fmap}$ is unbounded, we see
that as $\cost(\fmap) \to \infty$, $\norm{\fmap}_{W^{1,\sobp}} \approx
\norm{D\fmap}_{L^\sobp}$, which is similar in spirit to
$\arclen(\fmap)$. When $\sobk > 1$ however the problem becomes very
different.

To see this, suppose $\dset=[-1,1]$, $\cset = [-1,1]^2$ with $\cunif$,
and $\fmap(\dpt) = (\dpt, 0)$. Suppose we wanted to prove that for all
$\objp$, given $\varepsilon$ extra budget, we can modify $\fmap$ in a
way that improves $\objfp$. If $\sobk = 1$ this is easy: We may add a
``spike'' of height $O(\varepsilon)$ by reparametrizing $\fmap$ to be
constant on some $\bk{\dz - \delta, \dz + \delta}$ ($\delta \ll 1$)
and adding a perturbation (\cref{fig:post-spike}). Importantly, the
reparametrization step is trivial since we can freely introduce jump
discontinuities in $D \fmap$ at $\dz \pm \delta$. By contrast, the
cost of such a reparametrization is generally quantized when $\sobk >
1$ and $D \fmap(\dz) > 0$.

Thus if $\varepsilon$ is sufficiently small, inserting a
spike is no longer possible and the best we can do is typically
something like \cref{fig:smooth-spike}. This makes analyzing the
effects on $\objfp$ nontrivial since the image of the initial curve is
no longer a subset of the image of the perturbed curve. This is the
essential difference between the two problems.
\begin{figure}[H]
  \centering
  \begin{subfigure}{.32\linewidth}
    \centering
    \includegraphics{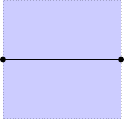}
    \caption{Pre-spike.}
    \label{fig:pre-spike}
  \end{subfigure}
  \begin{subfigure}{.32\linewidth}
    \centering
    \includegraphics{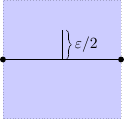}
    \caption{Post-spike.}
    \label{fig:post-spike}
  \end{subfigure}
  \begin{subfigure}{.32\linewidth}
    \centering
    \includegraphics{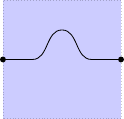}
    \caption{Smooth ``spike.''}
    \label{fig:smooth-spike}
  \end{subfigure}
  \vspace{-.5em}
  \caption{Adding two kinds of spikes.}
  \label{fig:adding-a-spike}
\end{figure}

\subsubsection{The Soft Penalty Formulation}
\label{sec:comparison-to-soft-penalty}
Earlier we mentioned the hard constraint and soft penalty (see
\cite{kirov-slepcev-2017,lu-slepcev-2021}) formulations are not
entirely equivalent. Let us briefly discuss why. First note that the
soft penalty problem \xref{prob:soft-penalty} recovers optimizers to
the hard constraint problem \xref{prob:hard-constraint}:
\vspace{.25em}
\begin{adjustwidth}{.75em}{0em}
  \begin{leftbar}
    \vspace{-1.5em}
    \begin{align*}
      \text{If }
      &\fhat \text{ is a minimizer of the soft penalty problem
        $\objfpl(\fmap)$, then}\\[-.25em]
      &\fhat \text{ is a minimizer of the hard penalty problem
        $\objfp(\fmap)$ over $\set[]{\cost(\fmap) \leq \hbudg
        \coloneqq \cost(\fhat)}$.}
    \end{align*}
    \vspace{-2em}
  \end{leftbar}
\end{adjustwidth}
We also have an easy monotonicity result:
\begin{lemma}\label{lem:monotone-c-lambda}
  For each $\lambda$ let $ \fmap_\lambda$ be an optimizer of
  $\objfp^\lambda$ of \xref{prob:soft-penalty}. Then,
  \begin{align*}
    \cost(\fmap_{\lambda_2})
    \le \cost(\fmap_{\lambda_1}) \hbox{ for all $\lambda_1 \le \lambda_2$.}
  \end{align*}
\end{lemma}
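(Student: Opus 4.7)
The plan is to use the standard exchange argument from parametric optimization. Since $\fmap_{\lambda_1}$ minimizes $\objfpl[\lambda_1]$, comparing against $\fmap_{\lambda_2}$ yields
\begin{equation*}
    \objfp(\fmap_{\lambda_1}) + \lambda_1 \cost(\fmap_{\lambda_1}) \;\leq\; \objfp(\fmap_{\lambda_2}) + \lambda_1 \cost(\fmap_{\lambda_2}),
\end{equation*}
and symmetrically, optimality of $\fmap_{\lambda_2}$ for $\objfpl[\lambda_2]$ against the competitor $\fmap_{\lambda_1}$ gives
\begin{equation*}
    \objfp(\fmap_{\lambda_2}) + \lambda_2 \cost(\fmap_{\lambda_2}) \;\leq\; \objfp(\fmap_{\lambda_1}) + \lambda_2 \cost(\fmap_{\lambda_1}).
\end{equation*}
I would then add these two inequalities. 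The $\objfp$ terms cancel on both sides, leaving
\begin{equation*}
    \lambda_1 \cost(\fmap_{\lambda_1}) + \lambda_2 \cost(\fmap_{\lambda_2}) \;\leq\; \lambda_1 \cost(\fmap_{\lambda_2}) + \lambda_2 \cost(\fmap_{\lambda_1}),
\end{equation*}
which rearranges to $(\lambda_2 - \lambda_1)\bigl(\cost(\fmap_{\lambda_2}) - \cost(\fmap_{\lambda_1})\bigr) \leq 0$. Since $\lambda_2 - \lambda_1 \geq 0$, the conclusion $\cost(\fmap_{\lambda_2}) \leq \cost(\fmap_{\lambda_1})$ follows, with equality forced when $\lambda_1 < \lambda_2$ only if both costs already coincide.

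There is essentially no obstacle here: the argument uses nothing beyond the definition of a minimizer and the fact that $\objfpl$ depends affinely on $\lambda$. The only minor caveat is the degenerate case $\lambda_1 = \lambda_2$, where the inequality is merely the trivial $0 \leq 0$; here the statement is read as pertaining to whatever specific minimizers $\fmap_{\lambda_1}, \fmap_{\lambda_2}$ were chosen, and no genuine ordering of costs among distinct minimizers of a single $\objfpl$ is asserted. I would note in the write-up that this same exchange argument dually shows $\objfp(\fmap_{\lambda_2}) \geq \objfp(\fmap_{\lambda_1})$, which clarifies the intuition: larger $\lambda$ penalizes complexity more heavily, so the optimizer trades off by reducing $\cost$ at the expense of a worse fit $\objfp$.
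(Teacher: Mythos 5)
Your proposal is correct and is essentially identical to the paper's own proof: both write down the two optimality comparisons, add them, cancel the $\objfp$ terms, and rearrange to $(\lambda_2 - \lambda_1)(\cost(\fmap_{\lambda_1}) - \cost(\fmap_{\lambda_2})) \geq 0$. Your closing remark about the dual monotonicity of $\objfp(\fmap_\lambda)$ is a nice addition the paper does not state, but the core argument is the same.
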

\begin{proof}
  Consider optimizers $\fmap_{\lambda_i}$, for $\lambda_i$, $i=1,2$,
  respectively. By optimality,
  \begin{align*}
    \objfp(\fmap_{\lambda_{1}}) + \lambda_1 \cost(\fmap_{\lambda_{1}})
    &\leq \objfp(\fmap_{\lambda_{2}}) + \lambda_1
      \cost(\fmap_{\lambda_{2}}), \text{ and} \\
    \objfp(\fmap_{\lambda_{2}}) + \lambda_2 \cost(\fmap_{\lambda_{2}})
    &\leq \objfp(\fmap_{\lambda_{1}}) + \lambda_2 \cost(\fmap_{\lambda_{1}}).
  \end{align*}
  Adding these and rearranging yields
  \begin{equation*}
    0 \leq (\lambda_2 - \lambda_1) (\cost(\fmap_{\lambda_1}) -
    \cost(\fmap_{\lambda_2})),
  \end{equation*}
  which completes the proof.
\end{proof}

Regarding whether \xref{prob:hard-constraint} solutions are
\xref{prob:soft-penalty} solutions, it should depend on the choice of
$\cost$. First of all, solutions to the soft penalty problem are
``maximally efficient'' in the sense that if there exist multiple
values of $\budg$ for which $\objell(\budg) = \objfp(\fhat)$, then
$\objfpl$ will select the one with the smallest $\budg$. A priori the
hard constraint problem has less restriction. In particular, it is not
true for general $\cost$ that optimizers for $\objfp$ over $\cflb$
saturate the constraint (\ie\ occur only on $\set{\cost(\fmap) =
  \budg}$), though this is expected for our choice of Sobolev cost
$\cost$. Second, even with such a result, one would need to show that
for every choice of budget $\budg$ (excluding two endpoint cases
corresponding to $\lambda=0$ and $\lambda=\infty$) there exists a
corresponding $\lambda_\budg$ such that a maximally-efficient
$\budg$-optimizer of \xref{prob:hard-constraint} is a
$\lambda_\budg$-optimizer of \xref{prob:soft-penalty}, so that
\xref{prob:hard-constraint} and \xref{prob:soft-penalty} would become
``equivalent'' in the sense of having identical solution sets. This
should be the case in our framework, though we were not able to prove
it. To the best of our knowledge, the only result in this direction is
a proof in \cite{delattre2020} that in the case $(\cost(\fmap), \ddim,
\objp) = (\arclen(\fmap), 1, 2)$ optimizers saturate the constraint,
with more general cases remaining open.

Anyways, the maximal-efficiency property of \xref{prob:soft-penalty}
is generally desirable in real-world implementations. After all, why
pay more if you could get the same performance for less? On the other
hand, \xref{prob:hard-constraint} is easier to analyze, as it makes
the relationship between $\budg$ and the optimal value
$\objell(\budg)$ more explicit. Indeed, in \xref{prob:soft-penalty} it
is not always obvious a priori how the choice of $\lambda$ will relate
to the constraint value of an optimal solution. Still, we can derive
any qualitative results for \xref{prob:soft-penalty} by analyzing
\xref{prob:hard-constraint} as the solutions to the former belong to
the latter. On the other hand, for the computation of optimizers we
can use \xref{prob:soft-penalty}.

To the last point, let us briefly discuss the advantages of employing
\xref{prob:soft-penalty} in numerical contexts. By moving
$\cost(\fmap)$ into the objective function, \xref{prob:soft-penalty}
reduces the problem to an unconstrained optimization. Thus we eschew
the need to explicitly parametrize $\set{\cost(\fmap) \leq \budg}$,
whose structure might be complex. Compare this with the analogous
encoding of \xref{prob:hard-constraint}: Defining the indicator
$\chi_\budg(\fmap)$ to be $0$ if $\cost(\fmap) \leq \budg$ and
$\infty$ otherwise, one may consider the augmented objective
$\objfp(\fmap) + \chi_\budg(\fmap)$. However, this functional is
discontinuous with respect to the metric induced by $\cost(\fmap)$,
thus preventing the use of any optimization methods that rely on
calculus techniques.

\subsubsection{Similar Sobolev Constraints}
Our work is similar to some work in the field of image analysis; in
particular, it is somewhat complementary to the frameworks of
\cite{chauffert2017, lebrat2019}. Note that both of these papers treat
only the case $\ddim = 1$.

In \cite{chauffert2017} a convolution kernel is added to their
analogue of $\objfp$, which has a nice interpretation in terms of the
image problem they consider: Whether given a collection of discrete
pixels comprising a computer screen or a set of discrete ink dots
printed on a sheet of paper, at large enough distances the human eye
perceives the image to be ``continuous'' rather than comprised of
discrete pieces.

They show some results that are analogous to ours (particularly
regarding existence and consistency), though their analysis is
primarily concerned with discretized schemes. More similarly to our
work, they also include some theoretical analysis of the case $\admiss
= \sobw^{1, \infty}(\dset; \cset)$ and touch briefly on the case of
$\admiss \subseteq \wkpsets$ when exact $L^\sobp$ bounds are placed on
the derivatives of each order (\eg\ $\norm{D^\mind \fmap}_{L^\sobp}
\leq C_\mind$, where $\mind$ is a multiindex and $C_\mind$ is a given
constant). By contrast, our work is concerned exclusively with the
case $\sobp \in (1,\infty)$, and only places bounds on the overall
norm $\sobnorm{\fmap}$.

The work of \cite{lebrat2019} uses a more direct analogue of our
$\objf_2$ and focuses particularly on developing numerical algorithms
and highlighting some less-obvious example applications. Their
algorithms are primarily concerned with the case where $\sobk \in
\set{1,2}$, and $\sobp = \infty$, and also the case where $\cost$ is
taken to be a geometric constraint like the arc length or total
curvature. Thus, to the best of our knowledge our treatment of $\ddim
> 1$ and $1 < \sobp < \infty$ is novel.

\subsubsection{Spline Smoothing \& Surface Reconstruction} \label{sec:spline-smoothing}
We now discuss some similar ideas to \xref{prob:soft-penalty} in the
context of using splines to fit observed data. We begin with
\emph{smoothing splines}, or rather, one of their generalizations to
$\ddim \geq 1$, so-called \emph{thin-plate splines}; see \eg\ \cite[\S
7.9]{greensilverman1993}.

In contravention of our standard notation, temporarily suppose $\cdim
= 1$, and suppose that we are given data $\set{(\dpt_i,
  \cpt_i)}_{i=1}^N \subseteq \dset \times \cset$ where the $\cpt_i$
are assumed to be generated via some process
\begin{equation}
  \label{eq:data-model}
  \cpt_i = \fmap(\dpt_i) + \varepsilon_i.
\end{equation}
Here, the model $\fmap$ is to-be-determined, and $\varepsilon_i$ is a
noise term representing error in the observation process; typically
$\varepsilon_i$ is assumed to have a particular form (\eg\ i.i.d.\
Gaussians) based on domain-specific knowledge. Then, given a
``roughness'' parameter $\lambda > 0$, so-called \emph{thin-plate
  splines} are minimizers of the variational problem
\begin{equation}
  \inf_{\fmap \in \sobw^{\sobk, 2}(\dset; \RR)} \sum_{i=1}^N
  \pn{\cpt_i - \fmap(\dpt_i)}^2 + \lambda \int_{\dset}
  \sum_{\abs{\mind} = k} \pn{D^{\mind} \fmap(\dpt)}^2 \dd
  \dpt. \label{eq:thin-plate-spline}
\end{equation}
Note, in some texts the term \emph{thin-plate spline} refers
particularly to the case $\sobk = 2$, with $\sobk > 2$ referred to as
\emph{higher-order thin-plate splines}.

The idea is that when the observation noise $\varepsilon_i$ is
nonnegligible, the penalty term in \eqref{eq:thin-plate-spline} helps
the solution ``smooth out'' errors and recover the underlying
generating process $\fmap$ in \eqref{eq:data-model}. In correspondence
with our hypothesis of $\sobk \sobp > \ddim$ \ref{hyp:kqm}, solutions
of \eqref{eq:thin-plate-spline} exist iff $\sobk \cdot 2 > \ddim$ as
well.\footnote{A fantastic overview of this (and smoothing splines
  more generally) can be found in online lecture notes by Ryan
  Tibshirani. \emph{Nonparametric Regression: Splines and RKHS
    Methods}, Advanced Topics in Statistical Learning, Spring 2024.
  \url{https://www.stat.berkeley.edu/~ryantibs/statlearn-s24/lectures/splines_rkhs.pdf}.}
To summarize, the problem is that as soon as we leave the
supercritical regime $\sobk \cdot 2 > \ddim$, the uniform convergence
we will see in \cref{cor:wconv-uconv} can fail spectacularly. Since
the data-fitting term in \eqref{eq:thin-plate-spline} is dependent
only on the behavior of $\fmap$ at individual points, it becomes
possible to perfectly interpolate $\set{(\dpt_i, \cpt_i)}$ while
incurring arbitrarily-small roughness penalty, hence the inf in
\eqref{eq:thin-plate-spline} is not achieved.

We highlight that \eqref{eq:thin-plate-spline} deals particularly with
scalar-valued $\fmap$. In \cite{Miller1987Jan}, the general $\cdim
\geq 1$ case was treated under some restrictions on the covariance of
the data noise; in \cite{Fessler1991Apr} a solution is presented for
the $\ddim = 1$, $\cdim \geq 1$ case allowing for general covariances.
In any case, while \eqref{eq:thin-plate-spline} has some similarities
with \xref{prob:soft-penalty} (particularly the case where $\objp =2$
and $\cmeas$ is taken to be an empirical measure on $N$ samples), they
remain distinct. First, while our Sobolev penalty $\cost(\fmap)$
disincentivizes complexity of all orders in $\fmap$, the roughness
penalty in \eqref{eq:thin-plate-spline} contains only the
highest-order weak derivative. This has a nice interpretation as a
squared projection distance from $\sobw^{\sobk, 2}(\dset; \RR)$ onto
the set of $\ddim$-variable polynomials variables of degree at most
$\sobk -1$; see \eg\ \cite[Ch.\ 2.8]{Wang2011} or \cite[Ch.\
4.2.1.7]{Yee2015}.

The consequence is that \emph{global} $(\sobk-1)$-degree structure in
thin-plate splines is ``free;'' for example, if $\sobk = 2$ and the
data $(\dpt_i, \cpt_i)$ are restricted to an $\ddim$-hyperplane in
$\RR^{\ddim + 1}$, then the optimizer in \eqref{eq:thin-plate-spline}
will perfectly interpolate the data regardless of the choice of
$\lambda$. This is in contrast to the case of our full Sobolev
penalty, which would yield solutions that are $\approx 0$ when
$\lambda \gg 1$ and slowly-expanding $\ddim$-manifolds approximating
$\mrm{ConvexHull}(\set{(\dpt_i, \cpt_i)}_{i=1}^N)$ (with a bias toward
the mean of the data) as $\lambda \decto 0$. Note in particular that
solutions of \xref{prob:soft-penalty} are incentivized to stop growing
soon after they have successfully fitted the data.

A second (and perhaps more important) distinction between the two
problems is that the fitting performed by a thin-plate spline is
\emph{informed ahead of time} of the proper choice of $\dpt_i$ for
each $\cpt_i$. No such information is available in
\xref{prob:soft-penalty}; as a consequence, we lose the convexity
structure of (regularized) least squares and must use different
analysis (indeed, $\objfp$ lacks both convexity and concavity over
$\wkpsets$; see \cref{sec:non-concavity-non-convexity}).

Still, there are some spline-based numerical methods (as well as many
non-spline-based ones; see \eg\ the reviews
\cite{Khanna2014Mar,You,Zhu2022Jun}) that aim to fit
\emph{unstructured} data sets. This problem is more similar to our
case, though we note that we are not aware of a comprehensive abstract
treatment comparable to our formulation. The literature is extensive
and difficult to summarize succinctly; we simply mention, for example,
\cite{Ma1998Sep,Yang2005Sep,Wang2006Apr,Zhao2011Jun,Laube2018,Lan2024Apr},
particularly highlighting the similarity of the penalty term in
\cite[Eq. 12]{Wang2006Apr} to our Sobolev penalty.

Our context is distinct from these works in a key way: Fundamentally,
we do not make any assumptions about the dimension of our target
measure $\cmeas$---it could be uniform, supported on a fractal, or
even a dirac---though we are particularly motivated by examples where
$\cmeas$ is of dimension strictly greater than $\ddim$. By contrast,
when the $\fmap$ in \eqref{eq:data-model} is at least $C^1$
(essentially a necessary assumption given that the goal is to
approximate $\fmap$ by objects with regularity), we see that the
analogous target measure in the spline-fitting problem will always be
at most $\ddim$-dimensional (\cf\ \cref{prop:effectiveness}), modulo
some diffusion introduced by the error term $\varepsilon_i$.

As a tangible example: In \cref{fig:voronoi-sims}, we display
simulations for our problem in which a curve grows to approximate the
uniform measure on $[-1,1]^2$. Note that because the target is
fundamentally 2-dimensional, the curve becomes highly furrowed and
fractal-esque, approaching a space-filling curve in the limit. By
contrast, for a spline-fitting algorithm, the analogous problem would
be to learn the 1-dimensional boundary set $\set{\abs{\cpt}_\infty =
  1}$.

\subsection{Acknowledgements}
The authors owe a significant debt to Andrew Warren for bringing our
attention to the existing literature on principal curves, as well as
for pointing us toward the measurable selection tools in
\cref{sec:disintegration-details}. We would also like to thank Pablo
Shmerkin for providing the concise arguments in
\cref{lem:lebesgue-shrinking} and \cref{cor:radon-shrinking-bound},
and Rentian Yao for pointing out the similarities between our work and
smoothing splines. In addition, we would like to thank Dejan
Slep\v{c}ev, No\'e Ducharme, and Nitya Gadhiwala for many helpful
discussions.

\section{Preliminaries}
In this section we define the constraint and objective function and
prove existence of optimizers. For the sake of brevity, we omit the
proofs of the standard facts in \cref{sec:the-constraint}.

\subsection{The Sobolev Constraint}
\label{sec:the-constraint}
We use a vector-valued version of the Sobolev norm as our constraint
$\cost$. There are established ways to extend the scalar Sobolev norm
to such contexts; see \eg\
\cite{evseev2021,Caamano2021Jan,Kreuter2015}. In order to simplify
some arguments we have chosen a slightly different definition that
nonetheless yields an equivalent norm in our context. Note, since
Sobolev functions are defined on open sets we will temporarily replace
$\dset$ with $\dsub$ until we can recover $\dset$ via the Sobolev
inequality.
\begin{defbox}
  \begin{definition} \label{def:sob-func} As usual we let $\wkpu$
    denote the scalar-valued $(\sobk, \sobp)$ Sobolev space on
    $\dsub$. Endow $\wkpur \coloneqq \bigoplus_{j=1}^\cdim \wkpu$ with
    the norm
    \[
      \norm{\fmap}_{\wkpur} \coloneqq \pn[bigg]{\sum_{j=1}^\cdim
        \norm{\fcompj}_{\wkpu}^\sobp}^{1/\sobp},
    \]
    where $\fcompj$ denotes the $j$\textsuperscript{th} component
    function of $\fmap$. We let $\wkpuo$ be the restriction of
    $\wkpur$ to those $\fmap$ taking values in $\cset$ and endow it
    with the inherited norm $\sobnorm{\fmap}$ (note that since $\cset$
    is compact and convex \ref{hyp:cset}, $\wkpuo$ is not itself a Banach
    space, but rather a convex subset). We define our constraint
    function $\cost$ to be
      \begin{equation}
        \label{eqn:cost}
        \cost(\fmap)=\sobnorm{\fmap}.
      \end{equation}
  \end{definition}
\end{defbox}

\begin{boxedremark}
  \label{rem:rotation-invariance}
  Let $S : \dsub \to \dsub$ and $T : \cset \to \cset$ be smooth
  isometries. Note that in general $\cost(\fmap \circ S) =
  \cost(\fmap)$. By contrast we can only guarantee $\cost(T \circ
  \fmap) = \cost(\fmap)$ when $\sobp = 2$, since for $\sobp \neq 2$
  the $\sobp$-norms on $\rcdim$ are not rotationally-invariant.
\end{boxedremark}

\subsubsection{Basic technical facts for the Sobolev constraint $\cost$}
Since $1 < \sobp < \infty$ \ref{hyp:sobp}, $\wkpu$ is a reflexive,
separable Banach space whence $\wkpur$ is as well. Due to compactness
and convexity of $\cset$ \ref{hyp:cset}, applying textbook results
like \cite[Cor.\ 3.22,~Thm.\ 3.29]{brezis}, we get the following
lemma.
\begin{lemma}
  \label{lem:weak-compactness}
  Let $\cset$ be compact and convex \ref{hyp:cset} and let $1 < \sobp
  < \infty$ \ref{hyp:sobp}. Fix $\fmap \in \wkpuo$ and for concision
  let $\bell(\fmap) \coloneqq \ol{B}_\budg(\fmap;\ \wkpuo)$, the
  closed ball of radius $\budg$ around $f$. Then $\bell(\fmap)$ is
  weakly compact and weakly metrizable.
\end{lemma}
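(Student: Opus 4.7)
The plan is to verify weak compactness by reducing to the standard Kakutani and Mazur machinery in the ambient reflexive Banach space $\wkpur$, then invoke the metrizability of bounded sets in separable reflexive spaces. Concretely, I would first observe that under \ref{hyp:sobp}, each factor $\wkpu$ is reflexive and separable, hence so is the finite direct sum $\wkpur$ with the norm of \cref{def:sob-func}. The ball $\bell(\fmap)$ is a bounded subset of $\wkpur$ by the triangle inequality.

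The key step is to show $\bell(\fmap)$ is norm-closed in $\wkpur$; then Mazur's theorem upgrades this to weak closedness since $\bell(\fmap)$ is convex (as the intersection of the convex closed ball $\ol B_\budg(\fmap;\ \wkpur)$ with $\wkpuo$, which is convex because $\cset$ is \ref{hyp:cset}). The only nontrivial point is that $\wkpuo \subseteq \wkpur$ is itself norm-closed: if $g_n \to g$ in $\wkpur$ with $g_n(\dpt) \in \cset$ for a.e.\ $\dpt$, then by passing to a subsequence $g_n \to g$ pointwise a.e., and closedness of $\cset$ forces $g(\dpt) \in \cset$ a.e. Once $\bell(\fmap)$ is bounded, convex, and norm-closed inside the reflexive space $\wkpur$, Kakutani's theorem (\eg\ \cite[Thm.~3.17]{brezis}) gives weak compactness.

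For metrizability, I would use the textbook fact that bounded subsets of a Banach space $X$ are weakly metrizable whenever the dual $X^*$ is separable (\eg\ \cite[Thm.~3.29]{brezis}). Since $\wkpur$ is reflexive and separable, its dual is separable as well, so this criterion applies to $\bell(\fmap)$.

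I do not anticipate any real obstacle here; everything beyond the norm-closedness of $\wkpuo$ is a direct quotation of standard Banach-space theorems, which is consistent with the paper's own remark that the lemma follows from results in Brezis. The only care required is in handling the pointwise-a.e.\ constraint $g(\dpt) \in \cset$ under weak limits, and this is immediate from the Rellich--Kondrachov compact embedding $\wkpu \hookrightarrow L^\sobp(\dsub)$ (valid since $\partial \dset \in C^1$ by \ref{hyp:dset} and $\sobk \geq 1$), which upgrades weak $\wkpur$-convergence along $\bell(\fmap)$ to strong $L^\sobp$-convergence and hence to a.e.\ convergence along a subsequence.
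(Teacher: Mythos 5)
Your proof is correct and follows the same route the paper takes: observe that under \ref{hyp:sobp} the ambient space $\wkpur$ is reflexive and separable, that $\bell(\fmap)$ is a bounded, norm-closed, convex subset (using convexity and closedness of $\cset$ from \ref{hyp:cset}), and then quote the standard Kakutani/Mazur package and the separable-dual metrizability criterion, which is exactly the Brezis material the paper cites. Your closing appeal to Rellich--Kondrachov is superfluous and would implicitly invoke \ref{hyp:dset}, which the lemma does not assume: norm convergence in $\wkpur$ already gives $L^\sobp$ convergence and hence an a.e.-convergent subsequence, so the pointwise constraint $g(\dpt)\in\cset$ passes to strong limits directly, and Mazur then supplies weak closedness without any compact embedding.
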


Weak metrizability makes weak sequential continuity equivalent to weak
continuity. One sees our objective function $\objfp$ \eqref{eqn:fp} is
continuous on $L^\infty(\dset; \cset)$, hence if we can get a
relationship between weak convergence in $\wkpuo$ and uniform
convergence this will make it easy to show weak continuity of
$\objfp$. To that end we first recall the Sobolev inequality, which is
readily extended to our context.
\begin{proposition}[General Sobolev Inequality] \label{prop:gen-sob}
  Suppose that $\dsub$ is bounded and that $\partial \dsub$ is $C^1$
  \ref{hyp:dset}. Let $\sobk, \sobp \in \NN$ such that $\sobk\sobp >
  \ddim$ \ref{hyp:kqm} and let $\fmap \in \wkpur$. Define
  \begin{align*}
    \holda
    &\coloneqq \sobk - 1 - \floor{\frac{\ddim}{\sobp}} \\
    \holdr
    &\coloneqq
      \begin{cases}
        1 - \pn{\frac{\ddim}{\sobp} - \floor{\frac{\ddim}{\sobp}}}
        & \frac{\ddim}{\sobp} \in \NN \\
        \text{any positive number} < 1
        & \frac{\ddim}{\sobp} \not \in \NN.
      \end{cases}
  \end{align*}
  Then $\fmap \in \holdur$ and there exists a constant $C$ depending
  only on $\sobk, \sobp, \ddim, \holdr$, and $\dsub$ such that
  \[
    \norm{\fmap}_{\holdur} \leq C
    \norm{\fmap}_{\wkpur}.
  \]
\end{proposition}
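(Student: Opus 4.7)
The plan is to reduce the vector-valued statement to the classical scalar Morrey-type Sobolev embedding applied componentwise. This is essentially a bookkeeping extension: the hypothesis \ref{hyp:kqm} $\sobk\sobp > \ddim$ together with \ref{hyp:dset} ($\dsub$ bounded, $\partial\dsub \in C^1$) is exactly the standard condition for continuous embedding of $\wkpu$ into a Hölder space, and the norms defining $\wkpur$ and $\holdur$ both decompose across components up to finite-dimensional norm equivalence on $\rcdim$.

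First I would invoke the scalar version of the inequality (which is classical, see e.g.\ the General Sobolev Inequalities in Evans, \emph{Partial Differential Equations}, Ch.~5.6.3): under the given hypotheses on $\dsub$ and on $\sobk, \sobp, \ddim$, every $u \in \wkpu$ admits a representative in $C^{\holda,\holdr}(\overline{\dsub})$ with the exponents $\holda, \holdr$ defined in the statement (the two cases for $\holdr$ correspond precisely to whether $\ddim/\sobp \in \NN$ or not), and there is a constant $C_0 = C_0(\sobk, \sobp, \ddim, \holdr, \dsub)$ with
\[
  \|u\|_{C^{\holda,\holdr}(\overline{\dsub})} \leq C_0\,\|u\|_{\wkpu}.
\]
The proof of the scalar case proceeds through a Sobolev extension operator (using $\partial\dsub \in C^1$) followed by Morrey's inequality applied $\floor{\ddim/\sobp}$ times to successively higher derivatives; none of this requires adaptation.

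Next I would apply this estimate separately to each component function $\fcompj$ of $\fmap \in \wkpur$ to get $\|\fcompj\|_{C^{\holda,\holdr}(\overline{\dsub})} \leq C_0 \,\|\fcompj\|_{\wkpu}$ for $j = 1, \ldots, \cdim$. Raising to the $\sobp$th power, summing over $j$, and extracting $\sobp$th roots yields
\[
  \Bigl(\sum_{j=1}^\cdim \|\fcompj\|_{C^{\holda,\holdr}(\overline{\dsub})}^\sobp\Bigr)^{1/\sobp} \leq C_0 \,\|\fmap\|_{\wkpur}.
\]
Finally I would absorb into the constant the equivalence between the left-hand $\ell^\sobp$-aggregate of scalar Hölder norms and the vector-valued norm $\|\cdot\|_{\holdur}$, which costs only a factor depending on $\cdim$ and $\sobp$, and can be folded into the final $C$.

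The only real subtlety—hardly an obstacle—is that the paper does not explicitly fix a convention for the vector-valued Hölder norm on $\holdur$. Any standard convention (componentwise sum, Euclidean norm of derivative vectors with sup over $\dsub$, or $\ell^\sobp$ aggregate of component Hölder norms) differs from the others by a constant depending only on $\cdim$ and $\sobp$ because the target $\rcdim$ is finite-dimensional, so the dependencies claimed for $C$ are preserved regardless of the convention chosen.
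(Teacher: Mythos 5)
Your proposal is correct and is exactly the argument the paper intends: the paper explicitly omits this proof as a ``standard fact,'' remarking only that the scalar Sobolev inequality ``is readily extended to our context,'' and the componentwise application of the classical Morrey-type embedding followed by $\ell^\sobp$-aggregation (which meshes with the paper's definition of $\norm{\cdot}_{\wkpur}$) is precisely that extension. Your closing remark about the unspecified convention for the vector-valued H\"older norm is apt, and is harmless for the same reason you give.
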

Conveniently, $\holdur$ gives enough structure to show that weak
convergence must be at least uniform.
\begin{corollary} \label{cor:wconv-uconv} Take \ref{hyp:dset},
  \ref{hyp:cset}, and \ref{hyp:kqm}. Let $\fjseq$ be a sequence in
  $\wkpsets$ with $\fj \wconv \fmap$. Then in fact $\fj \to \fmap$
  uniformly.
\end{corollary}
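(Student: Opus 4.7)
The plan is to deduce uniform convergence from the Hölder embedding provided by the preceding Sobolev inequality, combined with Arzelà--Ascoli and a standard subsequence argument to identify the limit.

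First I would use that any weakly convergent sequence in a Banach space is norm-bounded, so $\sup_j \norm{\fj}_{\wkpur} < \infty$. Applying the General Sobolev Inequality (the preceding \cref{prop:gen-sob}) componentwise, we obtain $\sup_j \norm{\fj}_{\holdur} < \infty$, where $\holda \geq 0$ and $\holdr \in (0,1]$ are as in that proposition. In particular the sequence $\set{\fj}$ is uniformly bounded on $\ol{\dsub}$ (this is also guaranteed by $\cset$ being compact, via \ref{hyp:cset}) and equicontinuous (in fact uniformly Hölder of exponent $\holdr$). By Arzelà--Ascoli, every subsequence of $\set{\fj}$ admits a further subsequence $\set{\fmap_{j_k}}$ converging uniformly on $\ol{\dsub}$ to some continuous $\gmap : \ol{\dsub} \to \cset$.

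Next I would identify $\gmap = \fmap$. Since $\dsub$ is bounded, uniform convergence $\fmap_{j_k} \to \gmap$ implies $L^\sobp(\dsub; \rcdim)$ convergence, hence in particular weak convergence in $L^\sobp$. On the other hand, the weak convergence $\fj \wconv \fmap$ in $\wkpur$ implies weak convergence in $L^\sobp(\dsub; \rcdim)$ (by testing against $L^{\sobp'}$ functionals, which are continuous on $\wkpur$). By uniqueness of weak limits we conclude $\gmap = \fmap$ almost everywhere; since both are continuous on $\ol{\dsub}$, they agree pointwise.

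Thus every subsequence of $\set{\fj}$ has a further subsequence converging uniformly to $\fmap$, which implies the full sequence converges uniformly to $\fmap$. Finally, since the Sobolev inequality extends to all of $\dset$ (via $\fmap \in \holdxc$ by continuity up to $\partial \dset$, using \ref{hyp:dset}), the uniform convergence on $\ol{\dsub}$ upgrades to uniform convergence on $\dset$. The only mild point to watch is that weak convergence in $\wkpur$ really does imply weak convergence of components in $L^\sobp$, which is immediate from the definition of $\norm{\cdot}_{\wkpur}$ as a direct sum. No step is substantially difficult; the result is essentially a packaging of the Sobolev embedding into $\holdur$ with Arzelà--Ascoli.
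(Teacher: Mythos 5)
Your proof is correct and follows essentially the same route as the paper: a uniform Hölder bound via the Sobolev embedding (\cref{prop:gen-sob}), Arzelà--Ascoli, and identification of the subsequential uniform limit with the weak limit. The only slight difference is that you explicitly invoke boundedness of weakly convergent sequences to control $\norm{\fj}_{\wkpur}$, a step the paper leaves implicit; this is a minor tidying, not a different argument.
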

The proof of \cref{cor:wconv-uconv} is a straightforward
Arzel\`a-Ascoli argument. Compactness of $\cset$ \ref{hyp:cset} gives
a uniform bound to the $\set{\fmap_j}$ while \ref{hyp:dset},
\ref{hyp:kqm} mean \cref{prop:gen-sob} can be used to obtain
equicontinuity via a uniform bound to the H\"older coefficients. Then,
translating from a uniformly-convergent subsequence to uniform
convergence overall can be done via uniqueness of weak limits.

\subsubsection{Discussion of Assumptions}
Unless stated otherwise, from now on we assume \ref{hyp:dset},
\ref{hyp:cset}, and \ref{hyp:kqm}, whence we may make free use of
\cref{lem:weak-compactness}, \cref{prop:gen-sob}, and
\cref{cor:wconv-uconv}. In particular, we may apply
\cref{prop:gen-sob} to represent any $\fmap \in \wkpuo$ with an
a.e.-equivalent $\holdxc$ representative. Hence we will interpret
``$\wkpsets$'' to mean the embedded copy of $\wkpuo \into \holdxc$,
and we will use the notation $\cost(\fmap)$ to denote
$\norm{\fmap}_{\wkpsets}$.

\subsection{The Objective}
\label{sec:objective}
We now formally define the objective.
\begin{defbox}
  \begin{definition}[Objective Function] \label{def:objective} For $0
    < \objp < \infty$ let
    \begin{align*}
      {\objfp}(\fmap)
      &\coloneqq \int_\cset \inf_{\dpt \in \dset} \abs{\cpt -
        \fmap(\dpt)}^\objp \dd \cmeas(\cpt).
    \end{align*}
    When necessary we will use the notation $\objfp(\fmap; \cmeas)$ to
    specify the target measure $\cmeas$ explicitly.
  \end{definition}
\end{defbox}

We are not aware of a reference treating the $0 < \objp < 1$ regime in
detail; in similar fashion we will largely restrict our attention to
$1 \leq \objp < \infty$ \ref{hyp:p} unless the general case presents
no additional complexity. In any case, note $\objfp$ depends only on
the image $\fmap(\dset)$ and not on the parametrization $\fmap$. We
also note that $\objfp$ has a very close relationship to
Monge-Kantorovich transport costs \eqref{eqn:Wassp}:
\begin{proposition}
  \label{prop:connection-to-ot}
  Given fixed $\fmap : \dset \to \cset$, as usual let $\iset =
  \fmap(\dset)$ and let $\cpush$ denote the pushforward of $\cmeas$
  under the closest-point projection $\pf: \cset \to \iset$,
  tiebreaking measurably (\eg\ \cref{prop:measurable-selection}) when
  $\pf$ is not uniquely-determined. Then
  \[
    \objfp(\fmap) = \wpp(\cmeas, \cpush) = \inf_{\imeas \in
      \pmeas(\iset)} \wpp(\cmeas, \imeas).
  \]
\end{proposition}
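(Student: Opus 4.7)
The plan is to establish the two equalities separately by sandwiching $\objfp(\fmap)$ between $\inf_{\imeas} \wpp(\cmeas,\imeas)$ and $\wpp(\cmeas,\cpush)$, with everything keyed to the identity $\inf_{\dpt \in \dset} \abs{\cpt - \fmap(\dpt)}^\objp = \abs{\cpt - \pf(\cpt)}^\objp$. First I would note that since $\dset$ is compact \ref{hyp:dset} and $\fmap \in \holdxc$ (so in particular continuous, after passing to the $\holdxc$ representative), the image $\iset = \fmap(\dset)$ is compact, so for each $\cpt \in \cset$ the set $\argmin_{\ipt \in \iset} \abs{\cpt - \ipt}$ is nonempty. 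The cited measurable selection result (\cref{prop:measurable-selection}) then yields a measurable $\pf : \cset \to \iset$ with $\pf(\cpt) \in \argmin_{\ipt \in \iset}\abs{\cpt - \ipt}$ everywhere, ensuring that $\cpush \coloneqq \pf_\# \cmeas \in \pmeas(\iset)$ is well-defined. In particular, by the change-of-variables formula,
\[
  \objfp(\fmap) = \int_\cset \abs{\cpt - \pf(\cpt)}^\objp \dd \cmeas(\cpt).
\]

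For the upper bound $\wpp(\cmeas, \cpush) \leq \objfp(\fmap)$, I would exhibit the explicit transport plan $\uneqtp = (\id_\cset, \pf)_\# \cmeas$ on $\cset \times \iset$. Its marginals are $\cmeas$ and $\cpush$ by construction, so $\uneqtp \in \uneqtps(\cmeas,\cpush)$. Its cost is precisely $\int_{\cset \times \iset} \abs{\cpt - \ipt}^\objp \dd \uneqtp(\cpt,\ipt) = \int_\cset \abs{\cpt - \pf(\cpt)}^\objp \dd \cmeas(\cpt) = \objfp(\fmap)$, so $\wpp(\cmeas,\cpush) \leq \objfp(\fmap)$, and trivially $\inf_{\imeas \in \pmeas(\iset)} \wpp(\cmeas,\imeas) \leq \wpp(\cmeas,\cpush)$.

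For the matching lower bound, I would take an arbitrary $\imeas \in \pmeas(\iset)$ and any $\uneqtp \in \uneqtps(\cmeas,\imeas)$. Because $\imeas$ is supported on $\iset$, $\uneqtp$ is concentrated on $\cset \times \iset$, so the pointwise inequality $\abs{\cpt - \ipt}^\objp \geq \inf_{\ipt' \in \iset}\abs{\cpt - \ipt'}^\objp = \abs{\cpt - \pf(\cpt)}^\objp$ holds $\uneqtp$-a.e. Integrating against $\uneqtp$ and using that its first marginal is $\cmeas$ gives
\[
  \int_{\cset \times \iset} \abs{\cpt - \ipt}^\objp \dd \uneqtp(\cpt,\ipt) \geq \int_\cset \abs{\cpt - \pf(\cpt)}^\objp \dd \cmeas(\cpt) = \objfp(\fmap).
\]
Taking the infimum over $\uneqtp$ and then over $\imeas$ yields $\inf_{\imeas \in \pmeas(\iset)} \wpp(\cmeas,\imeas) \geq \objfp(\fmap)$, closing the sandwich.

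The main obstacle, such as it is, is purely measure-theoretic: ensuring that $\pf$ can be chosen measurably on the ambiguity set where the closest point is non-unique, so that $\cpush$ is a well-defined Borel probability measure and the plan $(\id_\cset,\pf)_\# \cmeas$ is valid. This is handled entirely by appealing to \cref{prop:measurable-selection}; the remaining content is the direct transport-plan sandwich above, which is essentially an instance of the standard identification of the Monge cost for the closest-point projection.
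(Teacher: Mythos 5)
Your proof is correct, and it takes a cleaner route than the paper. The paper establishes the two equalities sequentially, each via a proof by contradiction: it posits a strictly better transport plan $\bestertp$, uses the marginal condition to locate a pair $(\cpt_0,\ipt_0)$ with $\abs{\cpt_0 - \ipt_0}^\objp < \abs{\cpt_0 - \pf(\cpt_0)}^\objp$, and contradicts the defining property of $\pf$. You instead run a direct sandwich: the explicit plan $(\id,\pf)_\#\cmeas$ gives $\inf_\imeas\wpp(\cmeas,\imeas)\leq\wpp(\cmeas,\cpush)\leq\objfp(\fmap)$, while for any $\imeas$ and any $\uneqtp\in\uneqtps(\cmeas,\imeas)$, integrating the pointwise bound $\abs{\cpt-\ipt}^\objp\geq\abs{\cpt-\pf(\cpt)}^\objp$ (valid $\uneqtp$-a.e.\ since $\uneqtp$ is concentrated on $\cset\times\iset$) against $\uneqtp$ and projecting to the first marginal gives $\wpp(\cmeas,\imeas)\geq\objfp(\fmap)$. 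Both arguments ultimately rest on the same fact that $\pf(\cpt)$ realizes the pointwise infimum over $\iset$, but your direct integration avoids the contradiction scaffolding and handles both equalities in one stroke; it is also more careful than the paper about the a.e.\ justification and about why $\cpush$ is well defined. If anything, the paper's ``there exists at least one pair $(\cpt_0,\ipt_0)$ such that\ldots'' step is the weakest link in its version, since passing from a strict integral inequality to a pointwise violation is exactly the content your argument makes explicit.
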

\begin{proof}
  We begin with the first equality. Define $\besttp = (\id, \pf)_{\#
    \cmeas}$ and note $\besttp \in \uneqtps(\cmeas, \cpush)$. By
  definition $\objfp(\fmap) = \int_\cset \abs{\cpt - \pf(\cpt)}^\objp
  \dd \cmeas = \int_{\cset \times \iset} \abs{\cpt - \ipt}^\objp \dd
  \besttp(\cpt, \ipt)$, so we see $\objfp(\fmap) \geq \wpp(\cpush,
  \cmeas)$. Suppose, to obtain a contradiction, that the inequality
  were strict. Then there exists a more efficient transport plan
  $\bestertp \in \uneqtps(\cmeas, \cpush)$ and a set $A \subseteq
  \cset$  such that
  \[
    \int_{A \times \iset} \abs{\cpt - \ipt}^\objp \dd \bestertp(\cpt,
    \ipt) < \int_{A \times \iset} \abs{\cpt - \ipt}^\objp \dd
    \besttp(\cpt, \ipt) = \int_{A \times \iset} \abs{\cpt -
      \pf(\cpt)}^\objp \dd \cmeas(\cpt).
  \]
  By the marginal condition, $\bestertp(A \times \iset) = \cmeas(A) =
  \besttp(A \times \iset)$. So there exists at least one $(\cpt_0,
  \ipt_0)$ pair such that $\abs{\cpt_0 - \ipt_0}^\objp < \abs{\cpt -    \pf(\cpt)}^\objp$,
a  contradiction to the definition of $\pf$. So
  $\objfp = \wpp(\cpush, \cmeas)$ as desired.

  For the second equality, note we already have $\wpp(\cmeas, \cpush)
  \geq \inf_{\imeas} \wpp(\cmeas, \imeas)$. Hence suppose, to obtain a
  contradiction, that there exists $\imeas_0$ such that $\wpp(\cmeas,
  \cpush) > \wpp(\cmeas, \imeas_0)$. Let $\besttp \in \uneqtps(\cmeas,
  \cpush)$ and $\bestertp \in \uneqtps(\cmeas, \imeas_0)$. Since both
  measures share the same first marginal one may proceed exactly as
  before to derive a contradiction.
\end{proof}
This justifies the intuitive statements we made previously about
$\objfp(\fmap)$ measuring how well $\fmap$ ``approximates'' $\cmeas$.
\begin{boxedremark}
  Note that in light of \cref{prop:connection-to-ot}, when $0 < \objp
  < 1$ one could develop economic interpretations for $\objfp$
  analogous to those given for $\wpp(\cdot, \cdot)$ in
  \cite{gangboGeometryOptimalTransportation1996}. In particular, it
  would be interesting to study how qualitative properties of
  optimizers change at the critical value $\objp = 1$.
\end{boxedremark}

\subsubsection{Nonconcavity and Nonconvexity of $\objfp$}
\label{sec:non-concavity-non-convexity}
Given \cref{prop:connection-to-ot} it is natural to ask whether
$\objfp$ possesses any sort of concavity or convexity. This turns out
to be too much to hope for: in general, $\objfp$ is both nonconcave
and nonconvex.

For nonconcavity, note that when $\objp \geq 1$ \ref{hyp:p}, the
$\objp$\textsuperscript{th}-power function $x \mapsto |x|^\objp$ is a
nonlinear, convex function, and thus nonconcave. When $\objp > 1$,
$\objfp$ inherits this nonconcavity trivially, as can be seen by
interpolating between any pair of constant functions $\fmap_0 \neq
\fmap_1$. When $\objp = 1$ the nonconcavity behavior is slightly
different, but can still be obtained as long as $\cdim > 1$ (as can be
seen by considering $\cset = [0,1]$ and $\cmeas = \frac{1}{2}
\pn{\delta_0 + \delta_1}$, this $\cdim > 1$ hypothesis is essential).

On the other hand, even with convexity of $|x|^\objp$, it is easy to
demonstrate that $\objfp$ is not convex either. To make the argument
simpler we first note a trivial property of $\objfp$.
\begin{proposition}
  \label{prop:supset-better}
  Let $\fmap_0, \fmap_1 : \dset \to \cset$; denote their images by
  $\iset_0, \iset_1$. Suppose $\iset_0 \subseteq \iset_1$. Then
  $\objfp(\fmap_1) \leq \objfp(\fmap_0)$. If, further, there exists
  $\ipt \in \iset_1 \setminus \iset_0$ such that (1) $\ipt \in
  \supp(\cmeas)$ and (2) $d(\ipt, \iset_0) > 0$, then $\objfp(\fmap_1)
  < \objfp(\fmap_0)$.
\end{proposition}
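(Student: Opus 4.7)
The plan is to rewrite $\objfp$ in terms of the distance function to the image. For any $\fmap : \dset \to \cset$ with image $\iset$, the inner infimum is exactly $d(\cpt, \iset)$, so
\[
  \objfp(\fmap) = \int_\cset d(\cpt, \iset)^\objp \dd \cmeas(\cpt).
\]
With this reformulation the weak inequality is immediate: since $\iset_0 \subseteq \iset_1$, we have the pointwise bound $d(\cpt, \iset_1) \leq d(\cpt, \iset_0)$ for every $\cpt \in \cset$, and hence $d(\cpt, \iset_1)^\objp \leq d(\cpt, \iset_0)^\objp$ (using $\objp \geq 0$). Integrating against $\cmeas$ yields $\objfp(\fmap_1) \leq \objfp(\fmap_0)$.

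For the strict inequality, I would localize the improvement to a small ball around the witness point $\ipt$. Let $\delta \coloneqq d(\ipt, \iset_0) > 0$, pick some $r \in (0, \delta/3)$, and set $B \coloneqq B(\ipt, r)$. For any $\cpt \in B$, the triangle inequality gives
\[
  d(\cpt, \iset_1) \leq |\cpt - \ipt| < r \quad \text{and} \quad d(\cpt, \iset_0) \geq d(\ipt, \iset_0) - |\cpt - \ipt| > \delta - r > 2\delta/3,
\]
using $\ipt \in \iset_1$ for the first bound. Hence on $B$ we have the uniform gap $d(\cpt, \iset_0)^\objp - d(\cpt, \iset_1)^\objp \geq (2\delta/3)^\objp - r^\objp > 0$ (for $r$ small enough). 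Since $\ipt \in \supp(\cmeas)$, every open neighborhood of $\ipt$ has positive $\cmeas$-mass, in particular $\cmeas(B) > 0$.

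Combining these, splitting the integral over $B$ and $\cset \setminus B$, and using the weak inequality on the complement, we obtain
\[
  \objfp(\fmap_0) - \objfp(\fmap_1) \geq \int_B \pn[]{d(\cpt, \iset_0)^\objp - d(\cpt, \iset_1)^\objp} \dd \cmeas(\cpt) \geq \pn[]{(2\delta/3)^\objp - r^\objp} \cmeas(B) > 0,
\]
which proves strict inequality. The only mild subtlety is measurability of $\cpt \mapsto d(\cpt, \iset_i)^\objp$, which is automatic since $\iset_i$ is compact (as the continuous image of compact $\dset$) and the distance function to a closed set is Lipschitz; no further obstacle arises.
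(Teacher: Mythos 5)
Your proof is correct, and in fact the paper never writes one out — it introduces \cref{prop:supset-better} as a ``trivial property of $\objfp$'' and then uses it directly. Your argument (rewriting $\objfp(\fmap)=\int_\cset d(\cpt,\iset)^\objp\,d\cmeas$, noting monotonicity of $d(\cdot,\iset)$ in $\iset$ under inclusion, and then extracting a quantitative gap on a small ball $B(\ipt,r)$ with $r<\delta/3$ using $\ipt\in\supp\cmeas$ to get $\cmeas(B)>0$) is exactly the standard proof one would supply, and every step checks out. Two very small remarks: the caveat ``for $r$ small enough'' is superfluous, since any $r\in(0,\delta/3)$ already gives $r^\objp<(2\delta/3)^\objp$; and measurability is automatic for any $\iset$ (closed or not) because $\cpt\mapsto d(\cpt,\iset)$ is always $1$-Lipschitz, so the appeal to compactness of $\iset_i$ is unnecessary (though harmless, and the paper does work with continuous $\fmap$ on compact $\dset$).
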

Then we have:
\begin{counterexample}[Nonconvexity]
  \label{cex:convexity-rn}
  Suppose $\supp(\cmeas)$ contains at least two distinct points, call
  them $\cpt_0$ and $\cpt_1$. Fix an arbitrary smooth surjection
  $\varphi : \dset \to [0,1]$. (\eg\ projection on an axis plus
  normalization). By \ref{hyp:cset} $\cset$ is convex, so
  \[
    \fmap_0(\dpt) \coloneqq (1 - \varphi(\dpt)) \cpt_0 + \varphi(\dpt)
    \cpt_1
    \quad \text{and} \quad
    \fmap_1(\dpt) \coloneqq \varphi(\dpt) \cpt_0 + (1 - \varphi(\dpt))
    \cpt_1
  \]
  are both admissible in $\wkpsets$. Define $\fmap_{.5} =
  \frac{1}{2}(\fmap_0 + \fmap_1) \equiv \frac{\cpt_0 + \cpt_1}{2}$.
  Since $\objfp(\fmap_0) = \objfp(\fmap_1)$, applying
  \cref{prop:supset-better} gives $\frac{1}{2} (\objfp(\fmap_0) +
  \objfp(\fmap_1)) = \objfp(\fmap_0) < \objfp(\fmap_{.5})$.
  \begin{figure}[H]
    \centering
    \begin{subfigure}{.32\linewidth}
      \centering
      \includegraphics[scale=.7]{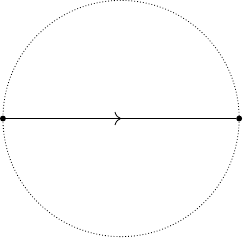}
      \caption{$\fmap_0$}
    \end{subfigure}
    \begin{subfigure}{.32\linewidth}
      \centering
      \includegraphics[scale=.7]{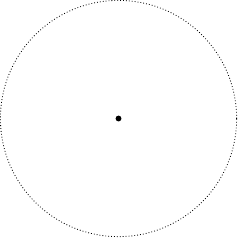}
      \caption{$\fmap_{.5}$}
    \end{subfigure}
    \begin{subfigure}{.32\linewidth}
      \centering
      \includegraphics[scale=.7]{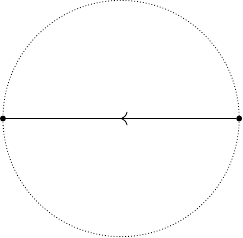}
      \caption{$\fmap_1$}
    \end{subfigure}
    \caption{Example for $\cset = B_1(0; \RR^2)$ and $(\pm 1,0) \in
    \supp(\cmeas)$.}
  \end{figure}
\end{counterexample}
\cref{cex:convexity-rn} is ``trivial'' in the sense that we have
$\iset_0 = \iset_1$. Many other counterexamples can be constructed in
which $\iset_0$, $\iset_1$ are equivalent under symmetry action on
$\cset$ (for example, $\iset_0 = [0,1] \times \set{0}$, $\iset_1 =
[-1, 0] \times \set{0}$). Thus one might wonder if it is possible to
salvage convexity of $\objfp$ by modding out by the ``right'' choice
of symmetry group of $\cset$. Unfortunately,
\cref{cex:nonconvexity-asymmetric} seems to imply the answer is no.
\begin{counterexample}[Nonconvexity, Asymmetric Example]
  \label{cex:nonconvexity-asymmetric}
  One may apply \cref{prop:supset-better} to $\fmap_0$, $\fmap_{.5}$,
  $\fmap_1$ having the following images:
  \begin{figure}[H]
    \centering
    \begin{subfigure}{.32\linewidth}
      \centering
      \includegraphics{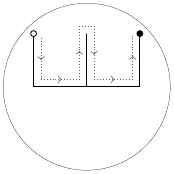}
      \caption{$\fmap_0$}
    \end{subfigure}
    \begin{subfigure}{.32\linewidth}
      \centering
      \includegraphics{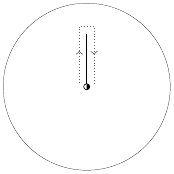}
      \caption{$\fmap_{.5}$}
    \end{subfigure}
    \begin{subfigure}{.32\linewidth}
      \centering
      \includegraphics{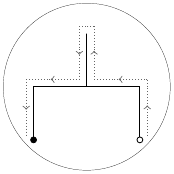}
      \caption{$\fmap_1$}
    \end{subfigure}
    \caption{Example $\fmap_0$, $\fmap_{.5}$ and $\fmap_1$ for which
      $\iset_0 \not\equiv \iset_1$.}
    \label{fig:non-symm-cex}
  \end{figure}
\end{counterexample}
Note in \cref{fig:non-symm-cex} the unfilled/filled circles represent
the starting/ending points in a $1$-d parametrization \`a la
$\varphi(\dpt)$ in \cref{cex:convexity-rn}. Using an argument similar
to \cref{lem:arbitrarily-bad-reparametrizations} one may show that
$\varphi$ can be made smooth whence $\fmap_0$, $\fmap_{.5}$, and
$\fmap_1$ are all admissible.

In any case, the intuition is that nonconvexity tends to arise from
cancellation, while nonconcavity (when $\objp >1$) tends to arise from
the convexity of $x \mapsto \abs{x}^\objp$. Both appear to be
fundamental features of the problem.

\subsection{Continuity Properties of $\objfp$}
\label{sec:continuity}
$\objfp(\fmap;\cmeas)$ has a very helpful continuity property that
will be used in proving existence of optimizers and also in
establishing the consistency results later on. Note this result does
not rely on \ref{hyp:p}.

\begin{theorem}[Joint Sequential Continuity]
  \label{thm:joint-continuity}
  Take \ref{hyp:cset} and let $\set{\fmap_M}$, $\set{\cmeas_N}$ be
  convergent in $C(\dset; \cset)$ and $\pmeas(\cset)$, respectively,
  with the former given the topology of uniform convergence and the
  latter given the usual weak-\textasteriskcentered{} topology. Then
  \[
    \lim_{M,N \to \infty} \objfp(\fmap_M; \cmeas_N) = \objfp(\finf;
    \cmeas_\infty).
  \]
  In particular, if one also takes \ref{hyp:dset} and \ref{hyp:kqm},
  then by \cref{cor:wconv-uconv} if $\set{\fmap_M}$ is any sequence
  converging weakly in $\wkpsets$, the same result holds.
\end{theorem}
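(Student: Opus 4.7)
For each $\fmap \in C(\dset;\cset)$ define the closest-point functional
$d_\fmap(\cpt) \coloneqq \inf_{\dpt \in \dset} \abs{\cpt - \fmap(\dpt)}^\objp$,
so that $\objfp(\fmap;\cmeas) = \int_\cset d_\fmap \dd \cmeas$. The plan is to show (i) $d_{\fmap_M} \to d_{\finf}$ uniformly on $\cset$, and (ii) $d_{\finf}$ is continuous on the compact set $\cset$; then standard manipulations combining uniform convergence with weak-$*$ convergence of $\set{\cmeas_N}$ finish the argument.

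First I would establish that $d_\fmap^{1/\objp}$ is $1$-Lipschitz on $\cset$ for any $\fmap$. Indeed, for $\cpt_1, \cpt_2 \in \cset$ and any $\dpt \in \dset$, the triangle inequality yields $\abs{\cpt_1 - \fmap(\dpt)} \leq \abs{\cpt_1 - \cpt_2} + \abs{\cpt_2 - \fmap(\dpt)}$; infimizing over $\dpt$ (a min, since $\dset$ is compact by \ref{hyp:dset} and $\fmap$ continuous) and symmetrizing gives $\abs{d_\fmap^{1/\objp}(\cpt_1) - d_\fmap^{1/\objp}(\cpt_2)} \leq \abs{\cpt_1 - \cpt_2}$. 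In particular $d_{\finf}$ is continuous. The same triangle inequality, applied to $\fmap = \fmap_M$ versus $\fmap = \finf$, gives
\[
\abs[]{d_{\fmap_M}^{1/\objp}(\cpt) - d_{\finf}^{1/\objp}(\cpt)} \leq \norm{\fmap_M - \finf}_{C(\dset;\cset)}
\]
for every $\cpt \in \cset$. Because $d_\fmap^{1/\objp}$ takes values in $[0, \diam(\cset)]$ and $t \mapsto t^\objp$ is uniformly continuous on this compact interval (for any $0 < \objp < \infty$), uniform convergence of the $p$-th roots promotes to uniform convergence of $d_{\fmap_M} \to d_{\finf}$ on $\cset$. (Alternatively, for $\objp \geq 1$ use $\abs{a^\objp - b^\objp} \leq \objp\,\diam(\cset)^{\objp-1}\abs{a-b}$, and for $0 < \objp < 1$ use subadditivity $(a+b)^\objp \leq a^\objp + b^\objp$ directly, so this step works without invoking \ref{hyp:p}.)

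With these two facts in hand, the final step is the standard $\varepsilon/2$ split:
\[
\objfp(\fmap_M;\cmeas_N) - \objfp(\finf;\cmeas_\infty) = \int_\cset (d_{\fmap_M} - d_{\finf}) \dd \cmeas_N + \int_\cset d_{\finf} \dd(\cmeas_N - \cmeas_\infty).
\]
The first term is bounded in absolute value by $\norm{d_{\fmap_M} - d_{\finf}}_{L^\infty(\cset)}$ since $\cmeas_N$ is a probability measure, and so vanishes as $M \to \infty$ uniformly in $N$ by (i). The second term vanishes as $N \to \infty$ for each fixed relationship between $M$ and $N$ by the weak-$*$ convergence $\cmeas_N \rightharpoonup \cmeas_\infty$ applied to the bounded continuous test function $d_{\finf}$ from (ii). Combining these yields the joint limit.

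The main obstacle, such as it is, is showing that the $\objp$-th power does not disrupt uniform convergence; this is where compactness of $\cset$ (from \ref{hyp:cset}) is essential, giving the uniform bound $\diam(\cset) < \infty$ and hence uniform continuity of $t\mapsto t^\objp$ on the relevant range. The promotion to weakly-convergent sequences in $\wkpsets$ is then an immediate consequence of \cref{cor:wconv-uconv}.
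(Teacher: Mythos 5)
Your proof is correct and follows essentially the same route as the paper's: both establish uniform convergence $d_{\fmap_M}\to d_{\finf}$ on $\cset$ via the triangle inequality and the bounded range afforded by compactness of $\cset$, and then combine this with weak-$*$ convergence of $\cmeas_N$; the paper packages the combination through the Moore--Osgood theorem after verifying uniform-in-$N$ and pointwise convergence, while your explicit $\varepsilon/2$ split does the equivalent by hand, applying weak-$*$ convergence to the single test function $d_{\finf}$ rather than to each $d_{\fmap_M}$, which is a minor simplification. One tiny note: the parenthetical asserting the infimum over $\dset$ is a minimum invokes \ref{hyp:dset}, which the theorem statement does not assume; this is harmless since the triangle-inequality estimate works with the infimum directly.
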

\begin{proof}
  We employ the Moore-Osgood theorem for double limits. It suffices to
  show
  \begin{enumerate}[label=(\roman*)]
    \item $\objfp(\fmap_M; \cmeas_N) \to \objfp(\fmap_\infty;
      \cmeas_N)$ uniformly in $N$, and
    \item for each $M$, $\objfp(\fmap_M; \cmeas_N) \to \objfp(\fmap_M;
      \cmeas_\infty)$.
  \end{enumerate}
  For (i): For all $\dpt, \cpt$, we have $\abs[big]{\abs{\fmap_M(\dpt)
      - \cpt} - \abs{\fmap_\infty(\dpt) - \cpt}} \leq
  \abs{\fmap_M(\dpt) - \fmap_\infty(\dpt)}$, and so $\fmap_M \uconv
  \fmap_\infty$ gives $d(\fmap_M, \cpt) \uconv d(\fmap_\infty, \cpt)$
  in $\cpt$. Then since $\cset$ is compact \ref{hyp:cset},
  \[
    \inf_{\dpt \in \dset} \abs{\fmap_M(\dpt) - \cpt}^\objp \uconv
    \inf_{\dpt \in \dset} \abs{\fmap_\infty(\dpt) - \cpt}^\objp
  \]
  as functions of $\cpt$. Finally since the $\cmeas_N$ are probability
  measures we have $\objfp(\fmap_M; \cmeas_N) \to \objfp(\fmap_\infty;
  \cmeas_N)$ uniformly in $N$, as desired.

  For (ii): Note that $\inf_{\dpt \in \dset} \abs{\fmap_M(\dpt) -
    \cpt}^\objp$ is a bounded, continuous function of $\cpt$, so by
  weak convergence of the $\cmeas_N$ we immediately get
  \[
    \int_\cset \inf_{\dpt \in \dset} \abs{\fmap(\dpt) - \cpt}^\objp
    \dd \cmeas_N \to \int_\cset \inf_{\dpt \in \dset} \abs{\fmap(\dpt)
      - \cpt}^\objp \dd \cmeas,
  \]
  as desired. Thus the double limit exists, namely,
  \[
    \lim_{M,N\to\infty} \objfp(\fmap_M; \cmeas_N) = \lim_{M\to\infty}
    \lim_{N\to\infty} \objfp(\fmap_M; \cmeas_N) = \lim_{M\to\infty}
    \objfp(\fmap_M; \cmeas_\infty) = \objfp(\fmap_\infty;
    \cmeas_\infty)
  \]
  as desired.
\end{proof}

\section{Existence of Optimizers and Properties of $J$}
\label{sec:existence-and-J}

We now have everything we need to show existence of solutions to
\xref{prob:hard-constraint}. But first we address a small technical
detail when $0 \not \in \cset$. We remind the reader that when writing
$\cflb$ it should be understood that we are considering only $\fmap
\in \wkpsets$. Define
\begin{equation}
  \budgmin = \inf \set{\budg \MID \cflb \neq
  \varnothing}. \label{eq:budgmin-def}
\end{equation}
Recall our global assumptions \ref{hyp:cset} and \ref{hyp:sobp}, which
stipulate that $\cset$ is compact and convex and that $\sobp > 1$. So
in general $\budgmin$ is achieved by a unique constant function
\begin{equation}
  \fmin = \argmin_{\cpt \in \cset} \cost(\cpt) = \argmin_{\cpt \in
    \cset} \abs{\cpt}^\sobp. \label{eq:fmin-def}
\end{equation}
Clearly $0 \in \cset$ iff $\reffmin = 0$ iff $\refbudgmin = 0$. Not
much would be lost by assuming $0 \in \cset$; nonetheless, we treat
the general case since doing so adds little complexity and the case
$\refbudgmin \neq 0$ might be of interest in applications (for
example, if a package delivery service has its central warehouse
located outside a city modeled by $\cset$).

\subsection{Existence \& Nonuniqueness of Optimizers}
\label{sec:existence-and-nonuniqueness}
\begin{theorem}[Existence of Optima]
  \label{thm:existence}
  Take \ref{hyp:dset}, \ref{hyp:cset}, \ref{hyp:sobp}, and
  \ref{hyp:kqm}. Fix $\budg \geq \refbudgmin$. Then $\objfp$ attains
  its optima on $\cflb$. In particular \xref{prob:hard-constraint} has
  solutions.
\end{theorem}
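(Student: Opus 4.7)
The plan is the standard direct method of the calculus of variations, essentially immediate from the preliminaries of \cref{sec:the-constraint} and \cref{sec:continuity}. First, $\cflb$ is nonempty since $\budg \geq \refbudgmin$ guarantees that the constant function $\reffmin$ belongs to $\cflb$. I would fix a minimizing sequence $\set{\fmap_n} \subseteq \cflb$, so $\objfp(\fmap_n) \to \inf_{\cflb} \objfp$. By the triangle inequality, each $\fmap \in \cflb$ satisfies $\norm{\fmap - \reffmin}_{\wkpsets} \leq \cost(\fmap) + \cost(\reffmin) \leq 2\budg$, so $\cflb$ is contained in the closed ball $\ol{B}_{2\budg}(\reffmin; \wkpuo)$. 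By \cref{lem:weak-compactness} (which exploits reflexivity of $\wkpur$, guaranteed by \ref{hyp:sobp}, together with \ref{hyp:cset}) this ball is weakly compact and weakly metrizable, so after passing to a subsequence we may assume $\fmap_n \wconv \fmap^\ast$ weakly in $\wkpsets$.

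Next I would verify admissibility of $\fmap^\ast$. \cref{cor:wconv-uconv}, which relies on \ref{hyp:dset} and \ref{hyp:kqm}, upgrades the weak convergence to uniform convergence $\fmap_n \uconv \fmap^\ast$ on $\dset$; since each $\fmap_n$ takes values in the closed set $\cset$, so does $\fmap^\ast$. Weak lower semicontinuity of the Sobolev norm then gives $\cost(\fmap^\ast) \leq \liminf_n \cost(\fmap_n) \leq \budg$, so $\fmap^\ast \in \cflb$. Finally, \cref{thm:joint-continuity} applied with the stationary measure sequence $\cmeas_N \equiv \cmeas$ yields $\objfp(\fmap_n) \to \objfp(\fmap^\ast)$, whence $\objfp(\fmap^\ast) = \inf_{\cflb} \objfp$ and $\fmap^\ast$ is a minimizer. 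A maximizer is obtained by the same argument applied to a maximizing sequence, since $\objfp$ is in fact weakly \emph{continuous} on the weakly compact set $\cflb$.

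I do not expect any genuine obstacle in this argument: the heavy machinery has already been concentrated in \cref{lem:weak-compactness} (weak compactness of Sobolev balls, requiring $\sobp \in (1,\infty)$), \cref{cor:wconv-uconv} (the Sobolev--Morrey-type embedding together with Arzel\`a--Ascoli, requiring $\sobk\sobp > \ddim$), and \cref{thm:joint-continuity} (continuity of $\objfp$ under uniform convergence). The only two points meriting modest care are centering the ball in \cref{lem:weak-compactness} at a concrete element of $\wkpuo$ such as $\reffmin$ (needed because $\wkpuo$ is merely a convex subset rather than a linear subspace), and confirming that weak $\wkpuo$-limits of $\cset$-valued maps remain $\cset$-valued, which is handled cleanly by the weak-to-uniform upgrade and closedness of $\cset$.
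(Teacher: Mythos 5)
Your proposal is correct and follows essentially the same route as the paper: nonemptiness of $\cflb$ via $\reffmin$, weak compactness from \cref{lem:weak-compactness}, the weak-to-uniform upgrade from \cref{cor:wconv-uconv}, and continuity of $\objfp$ from \cref{thm:joint-continuity}. The paper phrases the conclusion abstractly (weak continuity plus weak compactness of $\cflb$ gives attainment of optima), whereas you spell out the direct method with a minimizing sequence and explicitly check admissibility of the weak limit via weak lower semicontinuity of the norm; this is the same argument, unpacked one level further, and your handling of the ball centering and of the maximizer case matches the intent of the paper's shorter proof.
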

\begin{proof}
  Using \ref{hyp:cset}, \ref{hyp:sobp}, the definition of $\refbudgmin$
  implies $\cflb$ is nonempty, and by \cref{lem:weak-compactness},
  $\cflb$ is weakly compact and weakly metrizable. From weak
  metrizability we get that weak continuity and weak sequential
  continuity are equivalent. By \ref{hyp:dset}, \ref{hyp:cset}, and
  \ref{hyp:kqm}, \cref{thm:joint-continuity} implies $\objfp$ is
  weakly sequentially continuous, hence it is weakly continuous on
  $\cflb$. So continuity-compactness gives that $\objfp$ achieves its
  optima on $\cflb$.
\end{proof}
Note that \cref{thm:existence} relies on the fact that $\cmeas$ is a
probability measure. However, it does not require \ref{hyp:p},
\ref{hyp:ac}. In any case, we will refer to solutions of
\xref{prob:hard-constraint} as \emph{$\budg$-optimizers}. We now
discuss some trivial cases of nonuniqueness arising when the optimizer
is nonconstant. We first show that for all $\budg$ sufficiently large,
the optimizers \emph{must} be nonconstant. To that end we define
another special constant function:
\begin{equation}
  \fnc \in \argmin_{\cpt \in \cset} \objfp(\cpt). \label{eq:fnc-def}
\end{equation}
Here $\reffnc$ (known as the \emph{$\objp$-mean}; see
\cite{cuesta1987,Cuesta1988Jul}) corresponds to the geometric median
of $\cmeas$ when $\objp = 1$ and the $\cmeas$-mean when $\objp = 2$.
It is unique for $\objp > 1$ as well as when $\objp=1$ as long as
$\cmeas$ is not essentially 1-dimensional \cite{Cuesta1988Jul}.
\begin{boxedremark}
  The theory becomes cleaner if one assumes $\reffmin = \reffnc$. In general
  $\reffmin = 0$ or $\reffmin \in \partial \cset$; on the other hand,
  \ref{hyp:ac} gives $\reffnc \in \cset^\circ$. So the only way to force
  $\reffmin = \reffnc$ is to make the translation $\cset \mapsto \cset -
  \reffnc$. Doing so is a nontrivial simplification that will
  qualitatively change $\budg$-optimizers for small $\budg$;
  nevertheless as $\budg \to \infty$ the differences between $\cset$
  and $\cset - \reffnc$ are no longer so important as the
  $0$\textsuperscript{th}-order term in $\cost$ becomes negligible. So
  the assumption is not severe if one is only treating asymptotics.
\end{boxedremark}
Let us define
\begin{equation}
  \bnc \coloneqq \cost(\reffnc). \label{eq:bnc-def}
\end{equation}
The subscript ``nc'' was chosen for ``non-constant'' in light of the
following:

\begin{proposition} \label{prop:nontrivial-optimizers-nonconstant}
  Suppose $\cset$ is convex (\eg\ via \ref{hyp:cset}) and $\cost(\fmap)$
  is a seminorm that is finite on smooth functions (\eg\ via
  \ref{hyp:sob}). Then if $\cmeas \neq \delta_{\reffnc}$, then for all
  $\budg > \refbnc$, $\budg$-optimizers are nonconstant.
\end{proposition}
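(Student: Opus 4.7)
My plan is to argue by contradiction. Fix $\budg > \refbnc$ and suppose some $\budg$-optimizer is a constant $\fmap \equiv c^* \in \cset$. Since $\cost(\reffnc) = \refbnc < \budg$, the constant $\reffnc$ is itself admissible, and by definition $\reffnc$ minimizes $\objfp$ over constants in $\cset$. Optimality of $c^*$ therefore forces $\objfp(c^*) = \objfp(\reffnc)$, so it suffices to produce a single \emph{nonconstant} $\fmap_\varepsilon \in \cflb$ strictly improving on $\objfp(\reffnc)$.

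To construct such a perturbation, I use $\cmeas \neq \delta_{\reffnc}$ to pick $\cpt^* \in \supp(\cmeas) \setminus \set{\reffnc}$ and set $\cdv \coloneqq \cpt^* - \reffnc$. Then I fix a smooth, nonconstant $\varphi : \dset \to [0,1]$ (which exists by \ref{hyp:dset}) and define
\[
  \fmap_\varepsilon(\dpt) \coloneqq \reffnc + \varepsilon\,\varphi(\dpt)\,\cdv = (1 - \varepsilon\varphi(\dpt))\reffnc + \varepsilon\varphi(\dpt)\cpt^*.
\]
For every $\varepsilon \in [0,1]$, each value is a convex combination of $\reffnc, \cpt^* \in \cset$, so $\fmap_\varepsilon(\dset) \subseteq \cset$ by convexity. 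The seminorm property of $\cost$ combined with finiteness of $\cost$ on smooth maps yields
\[
  \cost(\fmap_\varepsilon) \leq \cost(\reffnc) + \varepsilon\,\cost(\varphi \cdv) = \refbnc + \varepsilon\,\cost(\varphi \cdv),
\]
which is at most $\budg$ for every sufficiently small $\varepsilon > 0$, placing $\fmap_\varepsilon$ inside $\cflb$.

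For the strict-improvement step, I consider the open half-space $H_+ \coloneqq \set{\cpt \in \rcdim : \ip{\cpt - \reffnc, \cdv} > 0}$. Since $\cpt^* \in H_+ \cap \supp(\cmeas)$ and $H_+$ is open, $\cmeas(H_+) > 0$. For any $\cpt \in H_+$, set $t^* \coloneqq \ip{\cpt - \reffnc, \cdv}/\abs{\cdv}^2 > 0$: the minimizer of $s \mapsto \abs{\cpt - \reffnc - \varepsilon s \cdv}$ over $[0,1]$ is $s^* = \min(1, t^*/\varepsilon)$, and in either case the Pythagorean identity gives $\abs{\cpt - \reffnc - \varepsilon s^* \cdv} < \abs{\cpt - \reffnc}$. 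For $\cpt \notin H_+$ the infimum is attained at $s = 0$ and equals $\abs{\cpt - \reffnc}$. Raising to the $\objp$-th power and integrating against $\cmeas$ then produces $\objfp(\fmap_\varepsilon) < \objfp(\reffnc)$, contradicting the optimality of $c^*$. I do not anticipate a significant obstacle; the only delicate points are (i) keeping the segment from $\reffnc$ toward $\cpt^*$ inside $\cset$, which is handled by convexity, and (ii) locating a direction that points into the mass of $\cmeas$, which is provided by selecting $\cpt^* \in \supp(\cmeas) \setminus \set{\reffnc}$.
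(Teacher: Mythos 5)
Your argument is correct, and since the paper states this proposition without a proof it is a useful self-contained one; the mechanism you use (perturbing the constant $\reffnc$ along a segment toward a point of $\supp(\cmeas)$ and controlling $\cost$ via the seminorm triangle inequality plus finiteness on smooth maps) is exactly what the hypotheses were chosen to enable. One detail is worth tightening. You take $\varphi : \dset \to [0,1]$ ``smooth, nonconstant,'' but the subsequent half-space analysis minimizes over all $s \in [0,1]$ and, for $\cpt \notin H_+$, asserts the infimum is attained at $s = 0$. That step requires $0 \in \operatorname{range}(\varphi)$, i.e.\ that $\reffnc$ itself lies in the image of $\fmap_\varepsilon$. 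If instead $\operatorname{range}(\varphi) = [a,b]$ with $a > 0$, then for $\cpt$ with $\ip{\cpt - \reffnc, \cdv} < 0$ the nearest image point is $\reffnc + \varepsilon a \cdv$ and $\abs{\cpt - \reffnc - \varepsilon a \cdv} > \abs{\cpt - \reffnc}$, so the objective strictly \emph{worsens} on that half-space, and both the gain on $H_+$ and the loss on its complement are of order $\varepsilon$, so the sign of the net change is not automatic. Requiring $\varphi$ surjective onto $[0,1]$ (constructed as in \cref{cex:convexity-rn}) removes the issue; with that choice the image of $\fmap_\varepsilon$ contains $\set{\reffnc}$, Proposition \ref{prop:supset-better} immediately gives $\objfp(\fmap_\varepsilon) \leq \objfp(\reffnc)$, and your computation on $H_+$ upgrades this to strict inequality.
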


Notice that $\budg$-optimizers can be non-unique, and one may obtain
trivial counterexamples for $\sobp = 2$ when $\supp(\cmeas)$ has
symmetries. To that end first recall that if $\varphi : A \subseteq
\RR^N \to A$ is an isometry then $\varphi(a) = Ta + b$ where $T$ is an
orthogonal linear transformation; thus $\abs{D\varphi} = 1$ and
$\abs{D^\mind \varphi} = 0$ for higher $\mind$.
\begin{counterexample}[Nonuniqueness]
  \label{cex:nonuniqueness}
  Suppose $\cmeas \neq \delta_{\reffnc}$ and for some $\budg >
  \refbnc$ let $\fbudg$ be an $\budg$-optimizer. Suppose $\psi : \cset
  \to \cset$ is a nontrivial isometry with $\psi_\#(\cmeas) = \cmeas$,
  and let $\wt \fbudg = \psi \circ \fbudg$. Then $\objfp(\wt \fbudg) =
  \objfp(\fbudg)$ and (provided $\sobp = 2$) $\cost(\wt \fbudg) =
  \cost(\fbudg)$, but $\wt \fbudg \neq \fbudg$. Note that $\fbudg$
  being nonconstant is important for getting $\wt \fbudg \neq \fbudg$;
  otherwise $\fbudg$ could be the fixed point of $\psi$.
\end{counterexample}

\subsection{Properties of $\objell$}
Recall $\objell : \bp{\refbudgmin, \infty} \to \RR^{\geq 0}$ given in
\eqref{eqn:J-l}. In view of \cref{thm:joint-continuity}, the following
result is straightforward:
\begin{theorem}
  \label{thm:objell-properties}
  Take \ref{hyp:dset}, \ref{hyp:cset}, \ref{hyp:sobp}, and
  \ref{hyp:kqm}. Then $\objell$ is nonincreasing, bounded, and
  continuous.
\end{theorem}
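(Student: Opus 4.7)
Monotonicity is immediate: if $\budg_1 \leq \budg_2$ then $\set{\cost(\fmap) \leq \budg_1} \subseteq \set{\cost(\fmap) \leq \budg_2}$, so taking infima reverses the containment. Boundedness below by $0$ is trivial; for the upper bound, for any $\budg \geq \refbudgmin$ the constant function $\reffmin$ lies in $\cflb$, so $\objell(\budg) \leq \objfp(\reffmin) \leq \diam(\cset)^\objp$ by compactness of $\cset$ \ref{hyp:cset}.

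For continuity, fix $\budg \geq \refbudgmin$ and treat the two one-sided limits separately. For \emph{right-continuity}, let $\budg_n \decto \budg$ and apply \cref{thm:existence} to pick $\budg_n$-optimizers $\fmap_n$. Since $\cost(\fmap_n) \leq \budg_n \leq \budg_1$, \cref{lem:weak-compactness} yields a weakly convergent subsequence $\fmap_{n_k} \wconv \finf$ in $\wkpsets$; by \cref{cor:wconv-uconv} the convergence is in fact uniform. Weak lower semicontinuity of the Sobolev norm gives $\cost(\finf) \leq \liminf_k \cost(\fmap_{n_k}) \leq \budg$, so $\finf \in \cflb$; and \cref{thm:joint-continuity} yields $\objell(\budg_{n_k}) = \objfp(\fmap_{n_k}) \to \objfp(\finf) \geq \objell(\budg)$. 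Combined with monotonicity's $\limsup_n \objell(\budg_n) \leq \objell(\budg)$, this gives $\objell(\budg_n) \to \objell(\budg)$.

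For \emph{left-continuity}, let $\budg_n \incto \budg$, and assume $\budg > \refbudgmin$ (if $\budg = \refbudgmin$ there is nothing to prove). Let $\fbudg$ be a $\budg$-optimizer from \cref{thm:existence} and define the interpolant
\[
  \fmap_n \coloneqq (1 - \epsilon_n)\, \fbudg + \epsilon_n \reffmin, \qquad
  \epsilon_n \coloneqq \frac{\budg - \budg_n}{\budg - \refbudgmin} \in [0,1].
\]
Convexity of $\cset$ \ref{hyp:cset} ensures $\fmap_n$ takes values in $\cset$, and since $\cost$ is the Sobolev (semi)norm we get
\[
  \cost(\fmap_n) \leq (1 - \epsilon_n) \cost(\fbudg) + \epsilon_n \cost(\reffmin)
  \leq (1 - \epsilon_n) \budg + \epsilon_n \refbudgmin = \budg_n,
\]
so $\fmap_n \in \cfl{\budg_n}$. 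As $\epsilon_n \to 0$ we have $\fmap_n \uconv \fbudg$, hence by \cref{thm:joint-continuity}, $\objell(\budg_n) \leq \objfp(\fmap_n) \to \objfp(\fbudg) = \objell(\budg)$, and monotonicity supplies the matching lower bound. Combining both sides yields continuity.

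The main (mild) obstacle is the left-continuity step, which requires constructing feasible approximants at strictly smaller budget; I expect using the convex interpolation with $\reffmin$ (enabled by \ref{hyp:cset} and the seminorm structure of $\cost$) is the cleanest device. Right-continuity is essentially packaging the existence machinery together with weak lower semicontinuity of the norm and the joint continuity of $\objfp$.
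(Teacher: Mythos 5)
Your proposal is correct and follows essentially the same route as the paper: monotonicity and boundedness are handled identically, right-continuity via weak compactness of the sublevel set, weak lower semicontinuity of the Sobolev norm, and joint continuity of $\objfp$, and left-continuity via the same convex interpolation between a $\budg$-optimizer and the minimal-cost constant $\reffmin$. No gaps.
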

\begin{proof}
  The fact that $\objell$ being nonincreasing is immediate: For all
  $\refbudgmin \leq \budg_0 \leq \budg_1$, $\set{\cost(\fmap) \leq
    \budg_0} \subseteq \set{\cost(\fmap) \leq \budg_1}$ so
  $\objell(\budg_1) \leq \objell(\budg_0)$. For boundedness: The value
  $\objell(\budg)$ is trivially bounded below by $0$ and above by
  $\objell(\refbudgmin) = \objfp(\reffmin)$.

  Now we show $\objell$ is right continuous. Fix $\bz \geq \refbudgmin$
  and let $\set[]{\bjp}_{j=1}^\infty$ with $\bjp \decto \bz$. Since
  $\objell(\budg)$ is nonincreasing in $\budg$, $\objell(\bjp)$ is
  nondecreasing in $j$ and also bounded above by $\objell(\bz)$. Thus
  $\objell(\bjp)$ converges to some $\objellinfp \leq \objell(\bz)$.
  We claim $\objellinfp \geq \objell(\bz)$ too. For each $j$ let
  $\fjp$ be an $\bjp$-optimizer. Note $\set[]{\fjp}_{j=1}^\infty
  \subseteq \cfl{\bop}$, which by \cref{lem:weak-compactness}
  \ref{hyp:cset}, \ref{hyp:sobp} is weakly compact. Thus there exists
  a subsequence $\set[]{\fmap_{j_i}^+}_{i=1}^\infty$ converging weakly
  to some $\fmap_\infty^+ \in \cfl{\bop}$; by
  \cref{thm:joint-continuity} \ref{hyp:dset}, \ref{hyp:cset},
  \ref{hyp:sobp}, \ref{hyp:kqm} we see $\objfp(\fmap_\infty^+) =
  \objellinfp$. Finally since $\cost$ is defined via a norm it is
  weakly-l.s.c.\ whence $\cost(\fmap_\infty^+) \leq \bz$. In
  particular since $\objell(\bz)$ is the minimum value over
  $\set{\cost(\fmap) \leq \bz}$ we get $\objell(\bz) \leq
  \objfp(\fmap_\infty^+) = \objellinfp$. So $\objell$ is
  right-continuous as desired.

  Now we show $\objell$ is left continuous. Fix $\bz > \refbudgmin$
  and let $\set[]{\bjm}_{j=1}^\infty$ with $\bjm \incto \bz$. Let
  $\fz$ be a $\bz$-optimizer and for all $j$ define $t_j = \pn[]{\bjm
    - \refbudgmin}/\pn{\bz - \refbudgmin}$ and $\fjm = t_j \fmap_0 +
  (1-t_j) \reffmin$. Note that since $\cset$ is convex \ref{hyp:cset}
  and $\fjm$ is constructed via a convex combination, $\fjm(\dset)
  \subseteq \cset$. Also note that
  \[
    \cost(\fjm) \leq t_j \bz + (1-t_j) \refbudgmin = \bjm,
  \]
  whence $\objfp(\fjm) \geq \objell(\bjm) \geq \objell(\bz) =
  \objfp(\fmap_0)$. Since $\fjm \to \fmap_0$ strongly, as in the
  previous case weak continuity implies $\objfp(\fjm) \to
  \objfp(\fmap_0)$, whence $\objell(\bjm) \to \objell(\bz)$ as well.
  So $\objell(\budg)$ is both left-continuous and right-continuous;
  thus it is continuous.
\end{proof}

Supposing the constraint is \emph{effective} (\ie\ ``$\cost(\fmap) <
\infty$ implies $\objfp(\fmap) \neq 0$;'' see
\cref{prop:effectiveness}) we expect $\objell$ to be \emph{strictly
  decreasing} rather than just nonincreasing. This was shown for
$\cost(\fmap) = \arclen(\fmap)$ in \cite{delattre2020}, but as usual
the local modification argument does not carry over when $\sobk > 1$.
It would suffice to prove optimizers can occur only on
$\set{\cost(\fmap) = \budg}$; we were unable to find a proof of this,
but show in \cref{cor:saturating-optimizers} that at least
$\set{\cost(\fmap) = \budg}$ is always guaranteed to contain an
optimizer. In fact, this is a simple consequence of reparametrization
and does not depend on \ref{hyp:dset}--\ref{hyp:kqm}.

\begin{lemma}[Arbitrarily Bad Reparametrizations]
  \label{lem:arbitrarily-bad-reparametrizations}
  Let $\fmap$ be nonconstant with $\budg = \cost(\fmap)$. Then for all
  $\wt \budg > \budg$ there exists $\varphi : \dset \to \dset$ such
  that
  \begin{enumerate}[label=\roman*)]
    \item $\ftilde \coloneqq \fmap \circ \varphi \in \wkpsets$,
    \item $\ftilde(\dset) = \fmap(\dset)$, and
    \item $\cost(\wt \fmap) = \wt \budg$.
  \end{enumerate}
\end{lemma}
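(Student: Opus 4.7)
The plan is to construct a continuous one-parameter family of smooth self-maps $\{\varphi_s\}_{s \geq 0}$ of $\dset$ with $\varphi_0 = \id$ such that each $\varphi_s$ is surjective, the composition $\fmap \circ \varphi_s$ lies in $\wkpsets$ with Sobolev norm depending continuously on $s$, and $\cost(\fmap \circ \varphi_s) \to \infty$ as $s \to \infty$. Since $s \mapsto \cost(\fmap \circ \varphi_s)$ starts at $\budg$ at $s=0$ and diverges, the intermediate value theorem produces $s_*$ with $\cost(\fmap \circ \varphi_{s_*}) = \wt \budg$ for any prescribed $\wt \budg > \budg$. Setting $\varphi = \varphi_{s_*}$ gives (iii); surjectivity of $\varphi_{s_*}$ gives (ii); and (i) is automatic since $\varphi_{s_*}$ is smooth.

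The construction is localized. Because $\fmap$ is nonconstant and lies in $\wkpsets \subseteq \holdxc$ (via \ref{hyp:kqm}), its weak gradient is nonzero on a set of positive Lebesgue measure. By Lebesgue differentiation applied to $\abs{D\fmap}^\sobp$, pick a ball $B = B(\dpt_0, r) \Subset \dset^\circ$, a unit vector $e \in \rddim$, and $\delta > 0$ so that the set $E = \set{\dpt \in \tfrac{1}{2}B \MID \abs{\ip{e, D\fmap(\dpt)}} \geq \delta}$ has positive measure. Fix $\eta \in C_c^\infty(B)$ with $\eta \equiv 1$ on $\tfrac{1}{2}B$ and define
\[
  \varphi_s(\dpt) = \dpt + \frac{1}{N(s)} \sin\!\bigl(N(s)\,(\dpt - \dpt_0)\cdot e\bigr)\,\eta(\dpt)\,e,
\]
where $N : [0,\infty) \to [0,\infty)$ is continuous, nondecreasing, with $N(0)=0$ and $N(s) \to \infty$. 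The amplitude $1/N(s)$ of the perturbation stays bounded (indeed vanishes), so $\varphi_s$ is a smooth isotopy of the identity supported in $B$, hence a surjective self-diffeomorphism of $\dset$.

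The analysis then splits into continuity and divergence of $s \mapsto \cost(\fmap \circ \varphi_s)$. Continuity follows from the Fa\`a di Bruno chain rule: each $D^\mind(\fmap \circ \varphi_s)$ is a polynomial expression in derivatives of $\fmap$ evaluated at $\varphi_s$ (uniformly bounded by Sobolev embedding) and derivatives of $\varphi_s$ (continuous in $s$ in $L^\infty$), and dominated convergence gives continuity of the $L^\sobp$ norms. For divergence, isolate in the chain rule expansion the single term involving the top-order derivative of $\varphi_s$:
\[
  D^\sobk(\fmap \circ \varphi_s) = D\fmap(\varphi_s) \cdot D^\sobk \varphi_s + R_s,
\]
where $R_s$ collects terms in which $\varphi_s$ is differentiated at most $\sobk-1$ times. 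On $\tfrac{1}{2}B$ one computes $\abs{D^\sobk \varphi_s} \gtrsim N(s)^{\sobk-1}$ (the $\sobk$ spatial derivatives of $\sin(N\,\cdot)$ yield $N^\sobk$, damped by the amplitude $1/N$). Pulling $E$ back through $\varphi_s$ (which moves points by at most $1/N(s)$, hence preserves measure of $E$ up to $o(1)$), one obtains $\norm{D\fmap(\varphi_s) \cdot D^\sobk \varphi_s}_{L^\sobp}^\sobp \gtrsim \delta^\sobp \,\abs{E}\,N(s)^{(\sobk-1)\sobp}$, while $\norm{R_s}_{L^\sobp}^\sobp = O(N(s)^{(\sobk-1)\sobp - 1})$. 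Hence the cost diverges when $\sobk \geq 2$. For the edge case $\sobk = 1$, the same oscillatory construction no longer produces blow-up, and one instead replaces $\varphi_s$ with a radial contraction-expansion pair inside $B$ whose Jacobian $\abs{D\varphi_s}$ diverges in $L^\sobp$-norm (possible precisely because $\sobp > \ddim$ by \ref{hyp:kqm}); the chain rule then gives $\norm{D(\fmap \circ \varphi_s)}_{L^\sobp} \to \infty$ by the same localization argument.

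The main obstacle is the divergence step: the chain rule contains many cross terms, and one must verify they cannot conspire to cancel the leading $N^{(\sobk-1)\sobp}$ growth. The decisive inputs are (a) nonconstancy of $\fmap$, which supplies the uniform lower bound $\delta$ on a fixed positive-measure set $E$, and (b) the localization of the perturbation so that all $N$-dependence is confined to a region where this lower bound holds. With these in place the leading term strictly dominates the remainder, and combined with the easy continuity statement the intermediate value theorem closes the argument.
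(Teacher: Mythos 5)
Your meta-strategy is the same as the paper's: build a continuous family $\{\varphi_t\}$ of smooth surjective self-maps of $\dset$ with the identity at one end of the parameter range, show that $t \mapsto \cost(\fmap \circ \varphi_t)$ is finite, continuous, and diverges at the other end, and invoke the intermediate value theorem. But the construction is genuinely different. For $\sobk > 1$ the paper uses a \emph{radial} map $\psi_\varepsilon(\dpt) = \hat\dpt\,\smoothstep(|\dpt|/\varepsilon)$ (stretch a tiny ball onto the unit ball, flatten the annulus), blended with the identity as $\varphi_\varepsilon = (1-\varepsilon)\psi_\varepsilon + \varepsilon\,\id$; you instead use a fixed-support \emph{oscillatory} shear with shrinking amplitude and increasing frequency. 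Amusingly, for $\sobk = 1$ the roles are reversed: the paper switches to an oscillatory modification of $\psi_\varepsilon$, whereas you switch to a radial contraction--expansion. Either construction can be made to work, and your Fa\`a di Bruno bookkeeping (leading term $N^{(\sobk-1)}$, remainder of strictly lower order) is a reasonable way to make the divergence quantitative, which the paper just asserts. Your localization at a density point of $\{D\fmap \ne 0\}$ buys you the explicit lower bound $\delta$ that rules out cancellation, which the paper handles implicitly via the requirement that $\fmap$ be nonconstant near $\dz$.

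There is, however, a genuine gap in your construction that the paper is careful to avoid. The map $\varphi_s(\dpt) = \dpt + N^{-1}\sin\bigl(N(\dpt-\dpt_0)\cdot e\bigr)\eta(\dpt)\,e$ is \emph{not} a diffeomorphism: on $\tfrac12 B$, where $\eta \equiv 1$, one has $\partial_e\bigl\langle \varphi_s(\dpt), e\bigr\rangle = 1 + \cos\bigl(N(\dpt-\dpt_0)\cdot e\bigr)$, which vanishes on the hyperplanes $(\dpt-\dpt_0)\cdot e = (2k+1)\pi/N$, and the vanishing is cubic (so $J_{\varphi_s}^{-1}$ is non-integrable near those hyperplanes). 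Small amplitude does not make a map close to the identity in $C^1$; only a small \emph{derivative} of the perturbation does, and here the derivative of $N^{-1}\sin(N\cdot)$ is $\cos(N\cdot)$, which is of order $1$. This matters in two places. First, the claim that (i) is ``automatic since $\varphi_{s_*}$ is smooth'' fails: $D^\sobk \fmap$ is only in $L^\sobp$, and the Fa\`a di Bruno term $D^\sobk\fmap(\varphi_s)(D\varphi_s)^{\otimes\sobk}$ is controlled only after a change of variables weighted by $|J_{\varphi_s}|^{-1}$, which your $\varphi_s$ does not bound; $\fmap \circ \varphi_s \in \wkpsets$ is therefore not established. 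Second, the step ``pulling $E$ back through $\varphi_s$'' tacitly assumes $\varphi_s^{-1}$ preserves null sets and measure up to $o(1)$, which again needs a Jacobian bound. The fix is small: take amplitude $\theta/N(s)$ with fixed $\theta \in (0,1)$, so $\abs{D\varphi_s - I}_{L^\infty} \le \theta < 1$ and $\varphi_s$ really is a diffeomorphism with $J_{\varphi_s}$ and $J_{\varphi_s}^{-1}$ uniformly bounded; both the membership $\fmap\circ\varphi_s \in \wkpsets$ and the $\theta N^{\sobk-1}$ blow-up then go through. This is exactly what the paper's $\varepsilon\,\id$ convex-combination term accomplishes for the radial construction, where it keeps $J_{\varphi_\varepsilon} \ge \varepsilon^\ddim > 0$. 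As written, your argument has this hole; repaired, it is a valid and somewhat more quantitative alternative to the paper's proof.
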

\begin{proof}
  We have two cases: $\sobk > 1$ and $\sobk = 1$. For $\sobk > 1$:
  First, we exhibit a family of smooth surjections
  $\varphi_\varepsilon : B_1(0; \RR^m) \to B_1(0; \RR^m)$ where
  $\varepsilon \in \pb{0,1}$ such that $\varphi_1 = \id(\dpt)$ and for
  all multiindices $\abs{\mind} > 1$, $\lim_{\varepsilon \to 0^+}
  \norm{D^\mind \varphi_\varepsilon}_{L^\sobp} = \infty$. Let
  \[
    \smoothstep(t) =
    \begin{cases}
      0 & t \in (-\infty, 0] \\
      \displaystyle \frac{e^{-1/t}}{e^{-1/t} + e^{-1/(1-t)}}
        & t \in \pn{0,1} \\
      1 & t \in [1, \infty).
    \end{cases}
  \]
  For all $\varepsilon \in \pb{0,1}$ let $\psi_\varepsilon(\dpt) =
  \hat \dpt \smoothstep(\abs{\dpt}/\varepsilon)$.
  $\psi_\varepsilon(\dpt)$ is a smooth surjection and for $\abs{\mind}
  > 1$, $\lim_{\varepsilon \to 0^+} \norm{D^{\mind}
    \psi_\varepsilon}_{L^\sobp} = \infty$. Now let
  $\varphi_\varepsilon(\dpt) = (1-\varepsilon)\psi_\varepsilon(\dpt) +
  \varepsilon\id(\dpt)$. By construction $\varphi_\varepsilon$ is
  smooth. To see $\varphi_\varepsilon$ it is surjective write
  $\varphi_\varepsilon(\dpt) = \hat \dpt \pn[Big]{(1-\varepsilon)
    \abs{\psi_\varepsilon(\dpt)} + \varepsilon \abs{\dpt}}$ and note
  that for all $\varepsilon \in \pb{0,1}$, the scalar term is $0$ when
  $\abs{\dpt} = 0$, $1$ when $\abs{\dpt} =1$, and strictly increasing
  in $\abs{\dpt}$. Now we use the $\varphi_\varepsilon$ to define
  $\ftilde$. Let $\dpt \in \dset$ such that $\fmap$ is nonconstant at
  $\dz$. Fix $r> 0$ and define
  \[
    \ftilde_\varepsilon(\dpt) =
    \begin{cases}
      \fmap(\dpt) & \dpt \not \in B_r(\dz) \\
      \fmap\pn{\dz + r\varphi_\varepsilon\pn{\frac{\dpt -
      \dz}{r}}} & \dpt \in B_r(\dz).
    \end{cases}
  \]
  By smoothness and surjectivity of $\varphi_\varepsilon$ we have
  $\ftilde_\varepsilon \in \wkpsets$ and $\ftilde_\varepsilon(\dset) =
  \fmap(\dset)$. Note $\cost(\ftilde_\varepsilon)$ is continuous in
  $\varepsilon$ and $\lim_{\varepsilon\to 0^+}
  \cost(\ftilde_\varepsilon) = \infty$. Hence there exists a
  particular $\varepsilon$ yielding $\cost(\ftilde_\varepsilon) = \wt
  \budg$ as desired.

  For the case $\sobk = 1$, the same flavor of construction works if
  one modifies $\psi_\varepsilon$ such that its values oscillate
  between $0$ and $\partial B_1$ arbitrarily many times as
  $\varepsilon \to 0^+$.
\end{proof}
\begin{corollary}[Existence of Saturating Optimizers]
  \label{cor:saturating-optimizers}
  For all $\budg > \refbnc$, there exists $\fmap \in \set{\cost(\fmap)
    = \budg}$ with $\objfp(\fmap) = \objell(\budg)$.
\end{corollary}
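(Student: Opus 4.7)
The plan is a straightforward combination of \cref{thm:existence}, \cref{prop:nontrivial-optimizers-nonconstant}, and \cref{lem:arbitrarily-bad-reparametrizations}, exploiting the fact that $\objfp$ is a function of the image $\fmap(\dset)$ only, while $\cost$ is parametrization-dependent.

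First I would invoke \cref{thm:existence} to produce an optimizer $\fmap_0 \in \cflb$ with $\objfp(\fmap_0) = \objell(\budg)$. A priori we only know $\cost(\fmap_0) \leq \budg$; if equality already holds we are done, so assume $\cost(\fmap_0) < \budg$. Treating the main case $\cmeas \neq \delta_{\reffnc}$ (in the degenerate case $\objell(\budg) = 0$ and one just needs to name any non-constant $\fmap$ whose image contains $\reffnc$ with $\cost(\fmap) = \budg$, which can be built by connecting $\reffnc$ to any function of larger cost and using continuity of the norm $\cost$), \cref{prop:nontrivial-optimizers-nonconstant} together with $\budg > \refbnc$ guarantees that $\fmap_0$ is nonconstant.

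Next, I would apply \cref{lem:arbitrarily-bad-reparametrizations} with target cost $\wt \budg = \budg > \cost(\fmap_0)$. This yields a reparametrization $\ftilde = \fmap_0 \circ \varphi \in \wkpsets$ with $\ftilde(\dset) = \fmap_0(\dset)$ and $\cost(\ftilde) = \budg$. Because $\objfp(\fmap)$ is defined as an integral of $\inf_{\dpt \in \dset} \abs{\cpt - \fmap(\dpt)}^\objp$, it depends only on the image $\fmap(\dset)$, so $\objfp(\ftilde) = \objfp(\fmap_0) = \objell(\budg)$. Thus $\ftilde$ is the desired saturating optimizer.

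There is no genuine obstacle here: the heavy lifting has all been done by earlier results. The only subtlety is making sure we can legally invoke \cref{lem:arbitrarily-bad-reparametrizations}, which requires the optimizer to be nonconstant, and this is precisely why the hypothesis $\budg > \refbnc$ (rather than $\budg \geq \refbudgmin$) is needed. The edge case $\cmeas = \delta_{\reffnc}$ can be dispatched by a direct construction since then the objective vanishes identically on $\set{\reffnc \in \fmap(\dset)}$ and we only need to match a prescribed value of the norm $\cost$.
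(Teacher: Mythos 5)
Your proposal is correct and follows essentially the same route as the paper: take a $\budg$-optimizer, and if it does not already saturate the constraint, apply \cref{lem:arbitrarily-bad-reparametrizations} to reparametrize to cost exactly $\budg$ without changing the image (hence the objective). Your explicit treatment of the nonconstancy requirement via \cref{prop:nontrivial-optimizers-nonconstant} and of the degenerate case $\cmeas = \delta_{\reffnc}$ is a bit more careful than the paper's terse argument, but it is the same proof.
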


\begin{proof}
  Let $\fbudg$ be an $\budg$-optimizer. If $\cost(\fbudg) < \budg$
  then we can use \cref{lem:arbitrarily-bad-reparametrizations} to
  reparametrize to get $\fbt$ such that $\cost (\fbt) = \budg$ and
  $\fbt(\dset) = \fbudg(\dset)$, whence $\objfp(\fbt) = \objfp(\fbudg)
  = \objell(\budg)$.
\end{proof}

\subsection{Asymptotics for $\objell$}
\label{sec:asymptotics}
We discuss how $\objell (\budg)$ behaves as $\budg \to \infty$.

A first question is whether it is possible to achieve $\objell(\budg)
= 0$ with finite $\budg$. \cref{prop:effectiveness} gives sufficient
conditions for guaranteeing this does not occur. Note \ref{hyp:kqm} is
used only in getting \cref{prop:gen-sob}, while \ref{hyp:dset} appears in
the proof on its own.
\begin{proposition}[Effectiveness of $\cost(\fmap)$]
  \label{prop:effectiveness} Recall (see \eg\ \cite{Mattila2000})
  that we may extend the notion of Hausdorff dimension from subsets of
  $\rcdim$ to Borel measures by letting
  \[
    \hdim(\cmeas) = \inf \set{\hdim(A) \MID A \text{ is Borel and
      }\cmeas(A) > 0}.
  \]
  Take \ref{hyp:dset}, \ref{hyp:kqm} and let $C^{\holda, \holdr}(\dset;
  \cset)$ be the H\"older space guaranteed by \cref{prop:gen-sob}.
  Then if
  \begin{enumerate}
    \item $\holda \geq 1$ and  $\hdim(\cmeas) > \ddim$, or
    \item $\holda = 0$ and $\hdim\pn{\cmeas} > \ddim/\holdr$,
  \end{enumerate}
  then the constraint is effective (\ie\ for all $\fmap \in
  \wkpsets$, $\objfp(\fmap) > 0$) and as a consequence, for all $\budg
  \in \bp{\refbudgmin, \infty}$ we have $\objell(\budg) > 0$.
\end{proposition}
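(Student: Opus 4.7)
The plan is a contradiction argument combining the Sobolev embedding of \cref{prop:gen-sob} with the classical Lipschitz/H\"older bounds on Hausdorff dimension. Suppose $\objfp(\fmap) = 0$ for some $\fmap \in \wkpsets$. By \cref{prop:gen-sob} (applicable under \ref{hyp:dset} and \ref{hyp:kqm}), $\fmap$ admits a continuous representative in $\holdxc$, and since $\dset$ is compact \ref{hyp:dset}, for each $\cpt$ the map $\dpt \mapsto \abs{\cpt - \fmap(\dpt)}$ attains its minimum. Thus the integrand in $\objfp$ vanishes at $\cpt$ iff $\cpt \in \fmap(\dset)$, and so $\cmeas(\fmap(\dset)) = 1$. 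Because $\fmap(\dset)$ is the continuous image of a compact set, it is closed (hence Borel) with positive $\cmeas$-measure, and the definition of $\hdim(\cmeas)$ immediately yields $\hdim(\cmeas) \leq \hdim(\fmap(\dset))$.

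The heart of the proof is then an upper bound on $\hdim(\fmap(\dset))$. In Case (1), $\holda \geq 1$ makes $\fmap$ at least $C^1$ on $\dset$, hence Lipschitz: the $C^1$ boundary from \ref{hyp:dset} ensures $\dset$ is quasi-convex, so a uniform bound on $\abs{D\fmap}$ integrates along paths to a global Lipschitz constant. The textbook estimate $\haus^\ddim(\fmap(\dset)) \leq (\mrm{Lip}\,\fmap)^\ddim \haus^\ddim(\dset) < \infty$ then forces $\hdim(\fmap(\dset)) \leq \ddim$, contradicting $\hdim(\cmeas) > \ddim$. In Case (2), $\holda = 0$ and $\fmap$ is $\holdr$-H\"older on $\dset$; covering $\dset$ by balls of radius $r$ and noting that their images lie inside balls of radius $\lesssim r^\holdr$ gives the standard dimension-scaling estimate $\hdim(\fmap(\dset)) \leq \ddim/\holdr$, again contradicting the hypothesis.

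For the final clause, fix $\budg \in \bp{\refbudgmin, \infty}$ and invoke \cref{thm:existence} to obtain $\fmap^* \in \cflb \subseteq \wkpsets$ attaining $\objell(\budg)$; the previous paragraphs then give $\objell(\budg) = \objfp(\fmap^*) > 0$. The only step I would flag as deserving a careful sentence (rather than a difficulty per se) is the passage in Case (1) from a bound on $\abs{D\fmap}$ to a genuine Lipschitz estimate; this is standard for domains with $C^1$ boundary via quasi-convexity, but should not be invoked silently. Otherwise the proof is a direct assembly of \cref{prop:gen-sob}, the Lipschitz/H\"older dimension bounds, and \cref{thm:existence}.
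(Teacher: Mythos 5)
Your proposal is correct and follows essentially the same route as the paper: both arguments reduce to bounding $\hdim(\fmap(\dset))$ by $\ddim$ (case 1, via $C^1 \Rightarrow$ Lipschitz) or $\ddim/\holdr$ (case 2, via the standard Hölder covering estimate), and then observing this is incompatible with $\cmeas$ assigning full mass to $\fmap(\dset)$ given the lower bound on $\hdim(\cmeas)$, before finishing with \cref{thm:existence}. The paper compresses case 1 to a one-line ``since $\fmap \in C^1$ we see $\hdim(\fmap(\dset)) \le \ddim$'' and phrases the dimension contradiction contrapositively through $\cmeas(\supp(\cmeas)\setminus\fmap(\dset)) > 0$; your elaboration via quasi-convexity and a global Lipschitz constant is not wrong, but is also not strictly needed, since it suffices that $C^1$ maps on a compact set are locally Lipschitz and Hausdorff dimension is countably stable.
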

\begin{proof}
  Fix an arbitrary $\fmap \in \wkpsets$. In the first case, since
  $\fmap \in C^1$ we see $\hdim(\fmap(\dset)) \leq \ddim$, hence
  $\cmeas(\supp(\cmeas) \setminus \fmap(\dset)) > 0$ (otherwise,
  $\hdim(\cmeas) \leq \ddim$ a contradiction) and thus $\objfp(\fmap)
  > 0$. For the second case, denote the $d$-dimensional Hausdorff
  measure by $\haus^d$. Then a standard argument shows that there
  exists a constant $C$ depending only on $d, \holdr$ such that
  \[
    \haus^{d/\holdr} (\fmap(\dset)) \leq C \haus^d(\dset).
  \]
  In particular, \ref{hyp:dset} gives $\hdim(\dset) = \ddim$, whence for
  all $d > \ddim$ we have $\haus^{d/\holdr}(\fmap(\dset)) \leq C
  \haus^d(\dset) = 0$. Thus $\hdim(\fmap(\dset)) \leq \ddim/\holdr$
  and so $\cmeas(\supp(\cmeas) \setminus \fmap(\dset)) > 0$, whence
  $\objfp(\fmap) > 0$. Lastly, fixing $\budg \in \bp{\refbudgmin,
    \infty}$ and an $\budg$-optimizer $\fbudg \in \cflb$, the above
  gives $\objell(\budg) = \objfp(\fbudg) > 0$.
\end{proof}
When the constraint is effective, it's meaningful to ask about the
rate at which $\objell(\budg) \decto 0$ as $\budg \to \infty$.
\cref{prop:asymptotics} below provides a very coarse bound in this
direction. Our argument centers around constructing a degenerate
$\fmap$ with $\hdim(\fmap(\dset)) = 1$ independent of $\ddim$, hence
we imagine there should exist a tighter bound highlighting the
relationship between $\hdim(\fmap(\dset))$ and $\hdim(\supp(\cmeas))$
similarly to \cref{prop:effectiveness}. In any case, note that when
$\sobk = \objp = 1$ and $\sobp \to \infty$ \cref{prop:asymptotics}
recovers the bound \cite[Theorem~3.16]{buttazzo2002}, which inspired
the proof below.
\begin{proposition}
  \label{prop:asymptotics}
  Take \ref{hyp:dset}, \ref{hyp:cset}, and \ref{hyp:kqm}. Then there
  exists a constant $C$ depending only on $\dset$, $\cset$ such that
  for all $\varepsilon > 0$, taking $\budg > C
  \varepsilon^{(1-\cdim(\sobk + \sobp^{-1}))/\objp}$ yields
  $\objell(\budg) \leq \varepsilon$.
\end{proposition}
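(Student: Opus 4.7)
The plan is to construct an explicit $f \in \wkpsets$ whose image is $\delta$-dense in $\cset$, bound $\cost(f)$ in terms of $\delta$, and then set $\delta = \varepsilon^{1/\objp}$ to conclude.

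Since $\ddim < \cdim$, we have $\cdim \geq 2$. After rescaling (absorbed into $C$) we may assume $\cset \subseteq [0,1]^\cdim$. For small $\delta > 0$, let $\gamma : [0,1] \to \rcdim$ be a smooth ``snake'' curve traversing every axis-parallel segment of the grid $\delta \ZZ^\cdim \cap [0,1]^\cdim$, with corners smoothed on spatial scale $\delta$ via a fixed dilated bump profile, and parametrized at constant speed $v \equiv L$. A standard count gives arc length $L \sim \delta^{1-\cdim}$, number of smoothed corners $N \sim \delta^{1-\cdim}$, and total time at corners $N\delta/L \sim \delta$; moreover $\gamma([0,1])$ is $\delta$-dense in $[0,1]^\cdim \supseteq \cset$. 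By \ref{hyp:dset}, fix a smooth surjection $\psi : \dset \to [0,1]$ (\eg\ a rescaled coordinate projection) and set $f \coloneqq \gamma \circ \psi$.

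At each rounded corner the turning radius is $\delta$ and the speed is $L$, giving $|\gamma^{(j)}| \lesssim L^j/\delta^{j-1}$ for $j \geq 2$ (with $\gamma^{(j)}$ vanishing on straight segments), while $|\gamma'| \equiv L$ everywhere. Integrating over time yields
\[
    \|\gamma^{(j)}\|_{L^\sobp([0,1])} \lesssim \delta^{1 - j\cdim + 1/\sobp} \quad (j \geq 2), \qquad \|\gamma'\|_{L^\sobp} \lesssim \delta^{1-\cdim}, \qquad \|\gamma\|_{L^\sobp} \lesssim 1.
\]
Transferring to $f$ via $\psi$ (whose derivatives are bounded in terms of $\dset$), the dominant term gives $\cost(f) \lesssim \delta^{\alpha}$, where $\alpha = 1-\cdim$ if $\sobk = 1$ and $\alpha = 1/\sobp + 1 - \sobk\cdim$ if $\sobk \geq 2$. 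Meanwhile $f(\dset) = \gamma([0,1])$ is $\delta$-dense in $\cset$, so $\objfp(f) \leq \delta^\objp$.

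Setting $\delta = \varepsilon^{1/\objp}$ yields $\objfp(f) \leq \varepsilon$ and $\cost(f) \lesssim \varepsilon^{\alpha/\objp}$. Both $\alpha$ and the proposition's exponent $1 - \cdim(\sobk + 1/\sobp)$ are strictly negative (using $\sobk\cdim \geq 2 > 1 + 1/\sobp$), and a direct check shows $\alpha - (1 - \cdim(\sobk + 1/\sobp)) \geq \cdim/\sobp > 0$. Hence for $\varepsilon \in (0,1)$ and $C$ large enough (depending on $\dset, \cset, \sobk, \sobp, \objp$), any $\budg > C\varepsilon^{(1-\cdim(\sobk+1/\sobp))/\objp}$ exceeds $\cost(f)$, whence $f \in \cflb$ and $\objell(\budg) \leq \objfp(f) \leq \varepsilon$. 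The range $\varepsilon \geq 1$ is absorbed into $C$ via $\objell(\budg) \leq \objfp(\reffmin) \leq \diam(\cset)^\objp$.

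\textbf{Main obstacle.} The main technical step is rigorously justifying the corner estimate $|\gamma^{(j)}| \lesssim L^j/\delta^{j-1}$. This reduces to fixing an explicit smoothing profile (\eg\ a $C^\infty$-bump rescaled to connect orthogonal unit directions over a spatial window of length $\delta$) and verifying by direct computation that its $j$-th derivative scales as $\delta^{-(j-1)}$ times the speed factor $L^j$; the remainder is a routine scaling comparison.
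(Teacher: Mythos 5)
Your construction is a genuinely different route from the paper's and, where it applies, actually gives a \emph{tighter} cost bound than the paper's. The paper builds $\gamma$ by taking an $\varepsilon$-net of $\cset$, forming a spanning walk of the associated graph, and joining consecutive net points by $\smoothstep$-interpolants run over equal time intervals; this squeezes each traversal into time $1/N_\varepsilon$, so the speed vanishes at each net point and spikes in between, and the resulting bound is $\cost(\gamma)\lesssim\varepsilon N_\varepsilon^{\sobk + 1/\sobp}\lesssim\varepsilon^{1-\cdim(\sobk + 1/\sobp)}$. Your constant-speed boustrophedon concentrates all higher derivatives on the corner arcs (total time $\sim\delta$, amplitude $\sim\delta^{1-j\cdim}$), giving $\cost(\gamma)\lesssim\delta^{1/\sobp + 1 - \sobk\cdim}$ for $\sobk\geq 2$, whose exponent exceeds the paper's by $(1+\cdim)/\sobp$. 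This is a cleaner estimate and would sharpen the proposition; the rest of the bookkeeping (the change of variables $\delta = \varepsilon^{1/\objp}$, the sign checks, and absorbing the $\varepsilon\geq 1$ range into $C$) is correct.

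However, there is one genuine gap. After rescaling so that $\cset\subseteq[0,1]^\cdim$, your snake $\gamma$ lives in the \emph{cube}, so $f=\gamma\circ\psi$ does not in general take values in $\cset$ and hence is not admissible: the feasible set $\wkpsets$ consists of maps $\dset\to\cset$, so $f\notin\cflb$ and you cannot conclude $\objell(\budg)\leq\objfp(f)$. Projecting onto $\cset$ doesn't repair this, because the metric projection $P_\cset$ is only $C^{0,1}$ and destroys the $\wkp$ bound for $\sobk\geq 2$. The paper avoids this by choosing its $\varepsilon$-net inside $\cset$ and connecting consecutive points by straight chords, which lie in $\cset$ by convexity. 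A direct fix to your argument: clip each axis-parallel segment of the snake to $\cset$ (a subinterval, by convexity), and join the resulting endpoints by chords through $\cset$; this keeps the number of rounded corners at order $\delta^{1-\cdim}$ and the total length at the same order, so your estimates survive — but as written, the containment step is missing and must be added before the proof is complete.
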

\begin{proof}
  Fix some $\varepsilon > 0$. We provide a sketch for the construction
  of an $\fmap$ with $\cost(\fmap) < \infty$ such that $\objfp(\fmap)
  \leq \varepsilon^\objp$. To that end let $M_\varepsilon$ be the
  $\varepsilon$-covering number of $\cset$ and let
  $\set[]{\cpt_j^\varepsilon}$ be an $\varepsilon$-covering of $\cset$
  by $M_\varepsilon$ points. We may abstract
  $\set[]{\cpt_j^\varepsilon}$ as a graph $G_\varepsilon$ with
  $M_\varepsilon$ vertices and edges between those $\cpt_j, \cpt_{j'}$
  with $d(\cpt_j, \cpt_{j'}) < 2\varepsilon$. Call a walk on
  $G_\varepsilon$ \emph{spanning} if it visits every vertex at least
  once. Since $\cset$ is connected \ref{hyp:cset}, $G_\varepsilon$ is
  connected, so there exists a minimal spanning tree $T_\varepsilon$.
  A simple inductive argument then shows that for $M_\varepsilon \geq
  2$, $T_\varepsilon$ has a spanning walk of length at most
  $N_\varepsilon \leq 2M_{\varepsilon} - 3$; denote this walk $\{v_1,
  v_2, \cdots, v_{N_\varepsilon}\}$.

  As in the proof of \cref{lem:arbitrarily-bad-reparametrizations} let
  $\smoothstep$ denote a smooth step function; note that $\smoothstep
  \in \wkp([0,1]; \RR)$. Now subdivide $[0,1]$ into $N_\varepsilon$
  equal subintervals $I_j$ and define a piecewise-smooth curve
  $\gamma$ by
  \[
    \gamma(t) =
      v_{j} + (v_{j+1} - v_j) \smoothstep(N_\varepsilon(t - t_j)) \ \
      \text{for } t \in I_j, \ \ j=1, \cdots, N_\varepsilon.
  \]
  Then since $\cset$ is convex \ref{hyp:cset}, $\gamma \in W^{\sobk,
    \sobp}([0,1]; \cset)$ with
  \begin{align*}
    \cost(\gamma)
    &\leq \pn[]{\diam(\cset)^\sobp + N_\varepsilon
      \varepsilon^\sobp \norm{\smoothstep(N_\varepsilon t)}_{W^{\sobk,
      \sobp}([0,1])}^\sobp}^{1/\sobp}.
      \shortintertext{Note that
      $\norm{\smoothstep(at)}_{W^{\sobk, \sobp}([0,1]; \RR)}$ is
      $O(a^\sobk)$ as $a \to \infty$; denote the constant by $C_0$ and
      note it is independent of $\dset, \cset$. As
      $\varepsilon \to 0$ we get $N_\varepsilon \to \infty$, whence}
    &\leq\pn[]{\diam(\cset)^\sobp + C_0 \varepsilon^{\sobp}
      N_\varepsilon^{\sobk \sobp + 1}}^{1/\sobp} \\
    &\leq C_1 \varepsilon N_\varepsilon^{\sobk + 1/\sobp},
  \end{align*}
  where in the last line we have used the fact that the second term
  dominates as $\varepsilon \to 0$. Note $C_1$ depends only on
  $\cset$. Since $N_\varepsilon \leq 2M_\varepsilon - 3$, a standard
  covering number bound gives $\cost(\fmap) \leq C_2 \varepsilon^{1-
    \cdim(\sobk + (1/\sobp))}$, with $C_2$ depending only on $\cset$.
  Define a smooth function $\varphi : \dset \to [0,1]$ such that
  $\norm{\varphi}_{\wkp(\dset; [0,1])}$ depends only on $\dset$. Then
  $\fmap \coloneqq \gamma \circ \varphi \in \wkpsets$ with
  $\objfp(\fmap) \leq \varepsilon^\objp$ and for some $C$ depending
  only on $\dset, \cset$ we have $\cost(\fmap) \leq C\varepsilon^{1 -
    \cdim(\sobk + \sobp^{-1})}$, from which the claim follows.
\end{proof}
As an alternative approach to the sort of argument in
\cref{prop:asymptotics}, one might consider starting with an $\fmap
\not \in \wkpsets$ but with $\objfp(\fmap) = 0$ and finding ways to
approximate it with elements of $\wkpsets$.
\begin{proposition}[Smoothing $\cmeas$-Fillings]
  \label{prop:mollification}
  Take \ref{hyp:dset}, \ref{hyp:cset}, and \ref{hyp:kqm}. Let $\finf :
  \dset \to \cset$ such that $\supp(\cmeas) \subseteq \fmap(\dset)$.
  Let $\varphi : \dset \to \RR$ be a mollifier and define
  $\varphi_\theta(\dpt) \coloneqq \theta^{-\ddim}
  \varphi(\dpt/\theta)$. Suppose $\fmap_N \uconv \fmap_\infty$ (note
  that one could take $\fmap_N = \finf$ for all $N$) and for all
  $\theta > 0$, define $\fthetn = \fn * \mthet$. Then $\fthetn \in
  \wkpsets$ and
  \[
    \lim_{\substack{\theta \to 0 \\ N \to \infty}} \objfp(\fthetn) =
    \objfp(\finf) = 0.
  \]
\end{proposition}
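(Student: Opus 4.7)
The plan is to verify the conclusion in four steps: (i) show $\objfp(\finf) = 0$, (ii) verify that each $\fthetn$ lies in $\wkpsets$, (iii) establish uniform convergence $\fthetn \uconv \finf$ as $\theta \to 0^+$ and $N \to \infty$, and (iv) invoke the joint continuity result \cref{thm:joint-continuity} to conclude $\objfp(\fthetn) \to \objfp(\finf) = 0$.

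For (i), the hypothesis $\supp(\cmeas) \subseteq \finf(\dset)$ means that for every $\cpt \in \supp(\cmeas)$ there exists $\dpt \in \dset$ with $\finf(\dpt) = \cpt$, so the integrand $\inf_{\dpt \in \dset}\abs{\cpt - \finf(\dpt)}^\objp$ vanishes $\cmeas$-a.e., hence $\objfp(\finf) = 0$. For (ii), because $\fn$ is only defined on $\dset$ I would first extend $\fn$ to a continuous $\widetilde\fn : \rddim \to \cset$ by precomposing with a Lipschitz retraction $\rddim \to \dset$ (which exists since $\dset$ has $C^1$ boundary by \ref{hyp:dset}), or equivalently by taking a McShane extension followed by the closest-point projection $\projc$ onto the convex compact set $\cset$ (guaranteed by \ref{hyp:cset}). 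Then $\fthetn(\dpt) = \int_{\rddim} \widetilde\fn(\dpt - \theta \ipt)\, \varphi(\ipt)\, d\ipt$ is a convex combination of elements of $\cset$, since $\varphi \geq 0$ with $\int \varphi = 1$ and $\cset$ is convex; hence $\fthetn(\dset) \subseteq \cset$. Standard properties of convolution give $\fthetn \in C^\infty(\dset; \rcdim)$, so compactness of $\dset$ yields $\fthetn \in \wkpsets$ for every $\sobk, \sobp$.

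For (iii), I split
\[
  \norm{\fthetn - \finf}_\infty \leq \norm{\widetilde{\fn * \mthet} - \widetilde{\finf * \mthet}}_\infty + \norm{\widetilde{\finf} * \mthet - \finf}_\infty.
\]
The first term is bounded by $\norm{\fn - \finf}_\infty$ (since $\int \mthet = 1$), which vanishes as $N \to \infty$ by the uniform-convergence hypothesis $\fn \uconv \finf$. The second term vanishes as $\theta \to 0^+$ by the standard mollifier approximation lemma applied to the continuous extension $\widetilde\finf$ restricted to the compact $\dset$. Hence in the double limit $\theta \to 0^+, N \to \infty$ we have $\fthetn \uconv \finf$. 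Applying \cref{thm:joint-continuity} (with the fixed measure $\cmeas$) completes the proof: $\objfp(\fthetn) \to \objfp(\finf) = 0$.

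The main obstacle is the technical one of handling convolution near $\partial \dset$: the mollified map must live in $\cset$, be smooth on $\dset$, and still converge uniformly to $\finf$. This is resolved cleanly by the continuous extension plus projection argument above, which exploits both \ref{hyp:dset} and the convexity of $\cset$ in \ref{hyp:cset}; the remainder of the argument is essentially routine given \cref{thm:joint-continuity}.
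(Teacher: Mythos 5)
Your proof is correct and, modulo packaging, takes the route the paper intends: the paper's terse "proof" says to apply Moore--Osgood as in \cref{thm:joint-continuity}, while you instead establish the double-limit uniform convergence $\norm{\fthetn - \finf}_\infty \to 0$ directly and then invoke the (continuity-in-$\fmap$) half of \cref{thm:joint-continuity}. These are logically equivalent; your version is arguably cleaner since it avoids the iterated-limit bookkeeping and makes the uniform-in-$\theta$ bound $\norm{(\fn-\finf)*\mthet}_\infty \le \norm{\fn - \finf}_\infty$ do all the work. You also address a genuine omission in the paper's statement, namely that the convolution $\fn * \mthet$ is not defined near $\partial\dset$ without an extension; the extend-then-project idea is the right fix and gives both $\fthetn(\dset) \subseteq \cset$ and $\fthetn \in C^\infty$.

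Two small cautions, though. First, the specific tools you name are slightly off: a Lipschitz retraction $\rddim \to \dset$ need not exist under \ref{hyp:dset} alone (the $C^1$ boundary only gives a tubular retraction near $\partial\dset$, and for, say, a thickened sphere in $\RR^3$, which is compact, simply connected, and has $C^1$ boundary, no retraction of any regularity exists since $\dset$ is not contractible), and the McShane extension applies to Lipschitz, not merely bounded, functions. The intended construction should be a Tietze extension of $\fn$ to a continuous map $\rddim \to \rcdim$ followed by $\projc$; since $\projc$ is $1$-Lipschitz and fixes $\widetilde\finf$ pointwise, you still get $\norm{\widetilde\fn - \widetilde\finf}_\infty \le \norm{\fn - \finf}_\infty$. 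Second, the mollifier-approximation step $\norm{\widetilde\finf * \mthet - \finf}_{L^\infty(\dset)} \to 0$ requires $\finf$ (hence $\widetilde\finf$) to be continuous; the proposition does not state this explicitly, but it is tacitly required for the claim to hold at all (if $\finf$ places a single isolated point of $\supp(\cmeas)$ over a Lebesgue-null set in $\dset$, the mollifications never approach it). This is an imprecision inherited from the paper, not an error of yours.
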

The proof is a direct application of the Moore-Osgood theorem similar
to the argument for \cref{thm:joint-continuity}.

\begin{figure}[H]
  \centering
  \begin{subfigure}[t]{.24\linewidth}
    \centering
    \includegraphics[scale=.25]{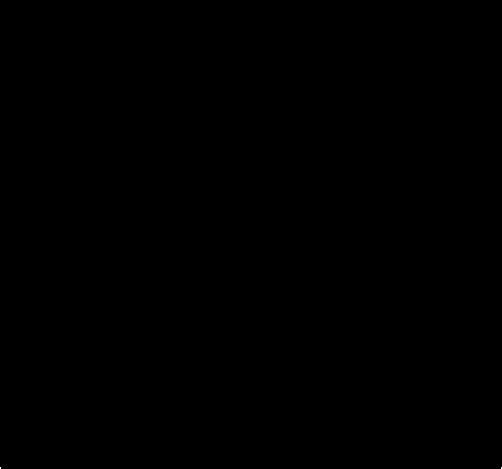}
    \caption{$\theta = 0$}
  \end{subfigure}
  \begin{subfigure}[t]{.24\linewidth}
    \centering
    \includegraphics[scale=.25]{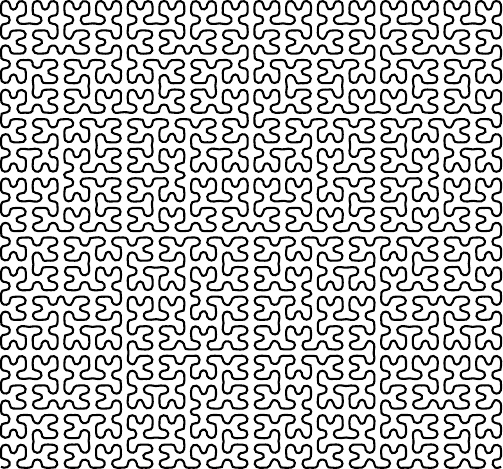}
    \caption{$\theta = 2^4/2^{16}$}
  \end{subfigure}
  \begin{subfigure}[t]{.24\linewidth}
    \centering
    \includegraphics[scale=.25]{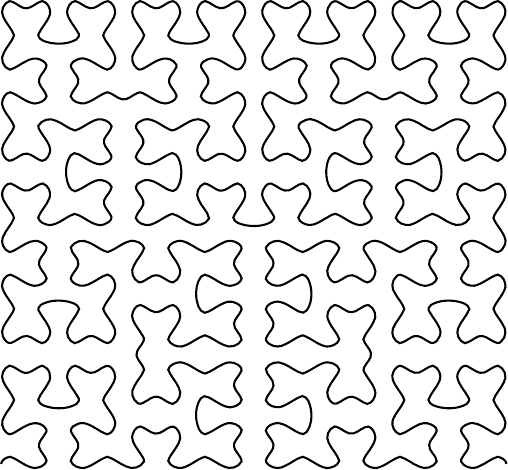}
    \caption{$\theta = 2^7/2^{16}$}
  \end{subfigure}
  \begin{subfigure}[t]{.24\linewidth}
    \centering
    \includegraphics[scale=.25]{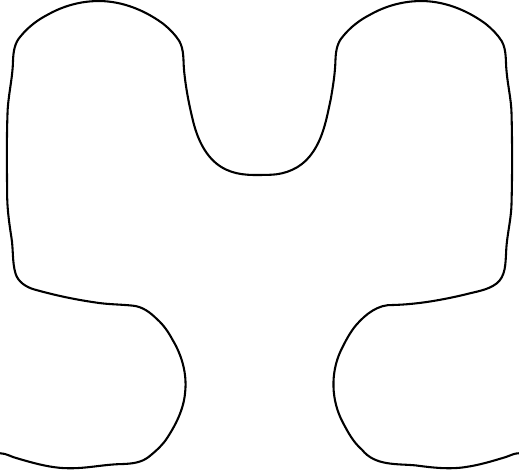}
    \caption{$\theta = 2^{12}/2^{16}$}
  \end{subfigure}
  \caption{Mollifying an $8$\textsuperscript{th}-order Hilbert curve
    by bump functions of varying width $\theta$.}
  \label{fig:mollifying-hc}
\end{figure}

\section{The ``Gradient'' of $\objfp$: The Barycenter Field.}
\label{sec:local-strategies}

We now turn to the ``gradient'' of the functional $\objfp$, which is
relevant, in particular, to local improvement of $\objfp(\fmap)$. For
this purpose, we define a vector field we call the \emph{barycenter
  field}, which is essentially a succinct way of encoding the first
variation of $\objfp$. Note that for now we will denote the barycenter
field simply by $\fbsymb$; later notation will include subscripts to
reflect dependencies on how one chooses to tiebreak ambiguities in the
closest-point projection. In any case, we discuss relationships
between $\fbsymb$ and existing work in
\cref{sec:barycenter-field-comparison} below; for now we summarize the
main takeaways about $\fbsymb$.

In general, $\fbsymb$ is defined on $\dset$ and depends on the choice
of variational perturbation $\pert$. The first main result of the
section, \cref{thm:fbary-grad-J}, verifies under mild hypotheses that
this $\fbsymb$ indeed encodes the first variation of $\objfp(\fmap)$.
Importantly, \cref{thm:fbary-grad-J} does not require any regularity
of $\fmap$, $\pert$ beyond continuity, and when $\objp > 1$, we need
no hypotheses on $\cmeas$ beyond $\cmeas \in \pmeas(\cset)$. When
$\objp = 1$ we must add a hypothesis that a set of ``problem points''
in the image set $\iset = \fmap(\dset)$ is $\cmeas$-null. This is
necessary due to the way we have defined $\fbsymb$, reflecting the
lower regularity of the $\objp = 1$ case.

The $\dset$-parametrized form of $\fbsymb$ used in
\cref{thm:fbary-grad-J} is an important theoretical tool. However, the
generality of the hypotheses necessitates defining $\fbsymb$ through
multiple implicit minimizations, some of which can make simulations
computationally intractable. Thus we also consider a simplified case:
When the set of self-intersections of $\fmap$ are null with respect to
a certain measure we may instead parametrize $\fbsymb$ by the image
$\iset = \fmap(\dset)$, and a further hypothesis that the
closest-point projection is $\cmeas$-a.e.\ well-defined allows us to
reduce $\fbsymb$ to the Frech\'et derivative of $\objfp$
(\cref{cor:fbary-grad-J-y}).

However, from a theoretical standpoint, even in the
$\iset$-parametrized case it is nontrivial to use $\fbsymb$ to derive
local improvements to a given $\fmap$. This is because $\fbsymb$ often
lacks regularity---it can be discontinuous even when $\iset$ is a
$C^1$-submanifold of $\cset$ (\cref{cex:Y-c1-fbary-discont})---and so
$\fmap + \varepsilon \fbsymb$ might not remain in the Sobolev class.
Nonetheless, in the second main result of the section
(\cref{thm:smooth-improvement}) we show that as long as $\fbsymb$ is
``nontrivial'' in some sense then there exists a smooth, local
perturbation improving $\objfp(\fmap)$. We conjecture that every
$\budg$-optimizer has this nontriviality property, whence the
monotonicity in \cref{thm:objell-properties} would become strict.
However, we were not able to find a proof.

The layout of the remainder of the section is as follows. In
\cref{sec:barycenter-field-comparison} we discuss similar ideas in the
literature, with a particular eye toward motivating the relaxed
hypotheses of \cref{thm:fbary-grad-J}. In \cref{sec:barycenter-field}
we introduce some measurable selection tools that are necessary for
defining $\fbsymb$ in this relaxed case and also list some basic
properties of $\fbsymb$, including its lack of regularity.
\cref{sec:fbary-negative-gradient} establishes the relationship
between $\fbsymb$ and the first variation of $\objfp$, and
\cref{sec:local-improve} shows that although $\fbsymb$ might lack
regularity, when $\fbsymb$ is ``nontrivial'' we may still use it to
obtain smooth, local improvements to $\objfp(\fmap)$.

\subsection{Motivation for The Barycenter Field and Comparison to
  Previous Techniques}
\label{sec:barycenter-field-comparison}

First, one might wonder why it is even necessary to appeal to the
first variation of $\objfp(\fmap)$ in order to obtain local
improvements. For example, suppose $\fmap$ is a curve (\ie\ $\ddim =
1$): Is it not possible to simply ``make the curve longer?''

If the special constant function $\reffmin$ defined in
\eqref{eq:fmin-def} is in $(\supp(\cmeas))^\circ$ and if $\budg \ll
\diam(\cset)$ then this will work. However, the general case is more
difficult. For $\budg$ sufficiently large it becomes possible that
$\iset \cap \partial \supp(\cmeas) \neq \varnothing$ in such a way
that extending $\fmap$ past its endpoints might yield no improvement
to $\objfp$ (for example, take $\cset = B_1(0)$, $\cmeas =
\mrm{Unif}_\cset$, and $\fmap$ a parametrization of a diameter).
Importantly, in such cases, the inclusion of higher-order terms in our
Sobolev constraint might prevent us from adding a ``sharp turn'' to
send the extended $\fmap$ back inside $\supp(\cmeas)$. Thus, to
analyze the Sobolev case we turn to the barycenter field $\fbsymb$.

As stated earlier, $\fbsymb$ is essentially just a succinct way of
encoding the first variation of $\objfp$ at $\fmap$ with respect to a
perturbation $\pert$; denote this by $\firstvar$. Note that
$\firstvar$ has been considered before in the literature, see \eg\
\cite{buttazzo-mainini-stepanov2009,gerber_dimensionality_2009,kirov-slepcev-2017,delattre2020}
and the references therein. However, we believe the importance of
studying $\firstvar$ in detail (and in particular, examining how it
may be used to construct local modifications) has gone largely
overlooked, as such analysis was unnecessary when using simpler
constraints like $\cost(\fmap) = \arclen(\fmap)$.

A common challenge in analyzing $\firstvar$ comes from the implicit
dependence of $\objfp$'s integrand $\inf_{\dpt} \abs{\cpt -
  \fmap(\dpt)}^\objp$ on the ``domain-projected'' closest-point map
$\pfd : \cset \to \dset$ given by
\begin{align}\label{eqn:pi-X-hat}
  \pfd(\cpt) \in \argmin_{\dpt \in \dset} \abs{\cpt - \fmap(\dpt)}.
\end{align}
This is problematic because $\pfd$ is not always well-defined due to
possible multiple minima. Denoting, as usual, $\iset = \fmap(\dset)$,
the two failure modes are made apparent by writing $\pfd = \fmap^{-1}
\circ \pfi$, where
\begin{align}\label{eqn:pi-Y-hat}
  \pfi(\cpt) \in  \argmin_{\ipt \in \iset} \abs{\cpt - \fmap(\ipt)}.
\end{align}
Namely,
\begin{enumerate}[leftmargin=4.5em, label=(FM\arabic*)]
  \item \label{fm:ambf} $\pfi$ fails to be uniquely-determined,
    and/or
  \item \label{fm:nonf} $\fmap^{-1}$ fails to be uniquely-determined.
\end{enumerate}
While these cases cause no problems when computing $\objfp(\fmap)$
itself, $\firstvar$ turns out to be sensitive to both.

To avoid \ref{fm:ambf} it is common to make some simplifying
assumptions: For example \ref{hyp:ac} \cite{kirov-slepcev-2017}; $\cmeas$
being null on $\set{A \subseteq \cset \MID \hdim(A) \leq \cdim - 1}$
\cite{buttazzo-mainini-stepanov2009}; or (via a clever invariant)
taking $\objp = 2$ and considering $\firstvar$ only for optimal
$\fmap$ \cite{delattre2020}. The first two simplifications have the
disadvantage of precluding analyzing $\firstvar$ when $\cmeas$ is
discrete (as is often the case in applications) and the third
precludes analyzing local modifications to suboptimal solutions (which
is useful in computational methods if one starts with a random
``seed'' function $\fmap$ and wants to ``evolve'' it in the direction
of the barycenter field to produce a better solution).

Similarly, to avoid \ref{fm:nonf} one can try to first establish that
optimizers must be injective and then restrict one's subsequent
analysis to just optimizers. However, in addition to yielding the same
difficulties as the third case above, the arguments in this vein are
nontrivial. Indeed, in the case $(\cost(\fmap), \ddim) =
(\arclen(\fmap), 1)$, the best result we are aware of establishes
injectivity only when $\objp = 2$ \cite{delattre2020}, so a similar
argument with the more-complicated Sobolev constraint does not seem
promising.

To address these concerns, in this section we show that (except
perhaps for certain configurations in the $\objp = 1$ case) one may
use the tools of measurable selections to encode $\firstvar$
succinctly even when \ref{fm:ambf} and \ref{fm:nonf} occur
simultaneously. Compared with previous results, this allows us to
(again except perhaps when $\objp=1$) extend the tools of calculus to
the whole constraint set $\set{\cost(\fmap) < \infty}$, and
importantly, to explicitly construct improvements to $\objfp$ at
nonstationary points. We reiterate that while our motivation came from
the peculiarities of the Sobolev constraint, our analysis of
$\firstvar$ in \cref{thm:fbary-grad-J} requires only continuity of
$\fmap$, $\pert$, and hence applies to common costs like $\cost(\fmap)
= \arclen(\fmap)$ as well; similarly, the local improvement result
\cref{thm:smooth-improvement} holds for any cost that satisfies
$\cost(\fmap + \pert) \lesssim \cost(\fmap) + \norm{\pert}_{C^\infty}$
for $\pert \in C^\infty$.

\subsection{The Barycenter Field}
\label{sec:barycenter-field}
First we introduce some preliminary concepts that will be essential to
our analysis.

\subsubsection{Measurable Selection and Disintegration}
\label{sec:disintegration-details}

As usual, we use the shorthand $\iset = \fmap(\dset)$. Recall
closest-point projection maps to $\dset$ and $\iset$, respectively,
given in \eqref{eqn:pi-X-hat} and \eqref {eqn:pi-Y-hat}. As detailed
in \ref{fm:ambf}, \ref{fm:nonf}, $\pfd$ and $\pfi$ generally fail to
be well-defined in some parts of $\cset$; thus we begin by
constructing measurable extensions. Define the set-valued functions
$\pfds : \cset \to 2^\dset$ and $\pfis : \cset \to 2^\iset$ by
\[
  \pfds(\cpt) = \argmin_{\dpt \in \dset}  \abs{\cpt - \fmap(\dpt)}
  \qquad \text{and} \qquad
  \pfis(\cpt) = \argmin_{\ipt \in \iset}  \abs{\cpt - \ipt}
\]
Following the terminology of \cite{hastie1989} we define the
\emph{ambiguity set} of $\fmap$ to be
\begin{equation}
  \ambf = \set{\cpt \in \cset \MID \pfis(\cpt) \text{ is not a
      singleton}}. \label{eq:ambf-def}
\end{equation}
Note, $\refambf$ is exactly the set of points $\cpt$ where \ref{fm:ambf}
occurs.

The set $\refambf$ is closely related to the notion of the \emph{medial
  axis}, which is commonly defined as follows: Given an open $\medset
\subseteq \rcdim$ with $\partial \medset \neq \varnothing$, the
\emph{medial axis} of $E$ is
\[
  \textstyle \medax(\medset) = \set{\medpt \in \medset \MID
    \argmin_{\medpt' \in \partial \medset} \abs{\medpt - \medpt'}
    \text{ is not unique}}.
\]
Thus we see $\refambf = \cset \cap \medax(\rcdim \setminus \iset)$.
$\medax(\medset)$ is also called the \emph{ambiguous locus} or
\emph{skeleton} \cite{choi1997,Hajlasz2021,miura2016}. In some
applications $\medax(\medset)$ is treated interchangeably with its
closure $\ol{\medax(\medset)}$ (called the \emph{ridge set} or
\emph{cut locus}), though this is technically improper in the abstract
setting as $\ol{\medax(\medset)}$ can behave much more pathologically
than $\medax(\medset)$ in cases where $\partial \medset$ lacks
regularity \cite{miura2016}. In any case, $\medax(\rcdim \setminus
\iset)$ is guaranteed to be Lebesgue-null \cite{Hajlasz2021}, which is
important in the second part of \cref{cor:pf-meas} below.

We now recall a result that will serve as the basic tool for our
measurable selection statements. A proof may be found in \cite[Prop.\
7.33]{stochoptcont}.

\begin{proposition}[Measurable selection]
  \label{prop:measurable-selection}

  Suppose $\cset$ is a metrizable space, $\optset$ a compact
  metrizable space, $\seldom$ a closed subset of $\cset \times
  \optset$, and $\lscf : \seldom \to \RR \cup \set{-\infty, \infty}$ a
  lower semicontinuous function. For all $\cpt$ let $\seldomc =
  \set{\optpt \MID (\cpt, \optpt) \in \seldom}$ and let $h^* :
  \mrm{proj}_{\cset}(D) \to \RR \cup \set{-\infty, \infty}$ be given
  by
  \[
    \lscfm(\cpt) = \min_{\optpt \in \seldomc} h(\cpt, \optpt).
  \]
  Then $\projc(\seldom)$ is closed in $\cset$,  $\lscfm$ is lower
  semicontinuous, and there exists a Borel-measurable function
  $\varphi : \projc(\seldom) \to \optset$ such that $\graph(\varphi)
  \subseteq \seldom$ and for all $\cpt \in \projc(\seldom)$
  \[
    \lscf (\cpt, \varphi(\cpt)) = \lscfm(\cpt).
  \]
  In this case we call $\varphi$ a \emph{measurable selection} of the
  set-valued function $\cpt \mapsto \seldomc$.
\end{proposition}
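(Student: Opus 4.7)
My plan is to address the three conclusions in turn: closedness of $\projc(\seldom)$, lower semicontinuity of $\lscfm$, and the existence of a Borel-measurable selection. The first two reduce to compactness–lsc arguments; the selection is the substantive point.

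For closedness of $\projc(\seldom)$, compactness of $\optset$ makes the projection $\cset\times\optset\to\cset$ a closed map, so the closed set $\seldom$ has closed image. For lower semicontinuity of $\lscfm$, given a sequence $\cpt_n\to\cpt$ in $\projc(\seldom)$, each slice $\seldomc_n$ is a closed (hence compact) subset of $\optset$, so the lsc function $\lscf(\cpt_n,\cdot)$ attains its minimum at some $\optpt_n\in\seldomc_n$. By compactness of $\optset$, I would pass to a subsequence $\optpt_{n_j}\to\optpt^*$; closedness of $\seldom$ yields $\optpt^*\in\seldomc$, and lsc of $\lscf$ gives
\[
  \lscfm(\cpt)\leq \lscf(\cpt,\optpt^*)\leq \liminf_j \lscf(\cpt_{n_j},\optpt_{n_j}) = \liminf_j \lscfm(\cpt_{n_j}),
\]
and a standard subsequence trick upgrades this to lsc along the full sequence.

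For the Borel selection, my plan is to invoke the Kuratowski--Ryll-Nardzewski selection theorem on the argmin multifunction
\[
  \Gamma(\cpt) \coloneqq \set{\optpt\in\seldomc : \lscf(\cpt,\optpt)=\lscfm(\cpt)}.
\]
Each $\Gamma(\cpt)$ is nonempty (by the preceding paragraph) and closed (as the intersection of $\seldomc$ with a sublevel set of the lsc function $\lscf(\cpt,\cdot)$), and $\optset$ is Polish since it is compact metrizable, so it remains only to verify the Effros (``weak'') measurability condition: for each open $U\subseteq\optset$, the set $\set{\cpt:\Gamma(\cpt)\cap U\neq\varnothing}$ is Borel in $\projc(\seldom)$.

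The main obstacle is this last condition, which I would handle by approximating $U$ from within by compact sets. Fixing a compatible metric $d$ on $\optset$ and writing $U = \bigcup_k K_k$ with $K_k = \set{\optpt\in\optset : d(\optpt,\optset\setminus U)\geq 1/k}$, I would reapply the lsc argument above to each $\seldom\cap(\cset\times K_k)$ in place of $\seldom$ to obtain lsc auxiliary functions $\lscfm_{K_k}(\cpt) = \min\set{\lscf(\cpt,\optpt):\optpt\in\seldomc\cap K_k}$ (set to $+\infty$ on empty slices), and then observe that
\[
  \set{\cpt:\Gamma(\cpt)\cap U\neq\varnothing} = \bigcup_k \set{\cpt : \lscfm_{K_k}(\cpt) = \lscfm(\cpt)},
\]
a countable union of Borel sets. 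The ``$\supseteq$'' direction is immediate; the ``$\subseteq$'' direction uses the fact that any $\optpt^*\in\Gamma(\cpt)\cap U$ lies in $K_k$ for all sufficiently large $k$. This bookkeeping is exactly what the Kuratowski--Ryll-Nardzewski theorem packages, and in practice the cleanest route is simply to cite it as the authors do via \cite{stochoptcont}.
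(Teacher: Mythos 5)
The paper's own proof is a bare citation to \cite[Prop.\ 7.33]{stochoptcont} (Bertsekas--Shreve), so any argument you supply is by definition a different route. Your plan---apply Kuratowski--Ryll-Nardzewski to the argmin multifunction $\Gamma(\cpt)=\set{\optpt\in\seldomc : \lscf(\cpt,\optpt)=\lscfm(\cpt)}$, check nonemptiness and closedness of values directly, and verify Effros measurability by exhausting each open $U$ by the compacta $K_k$---is the classical alternative and it essentially works. Closedness of $\projc(\seldom)$ via properness of the projection, and lower semicontinuity of $\lscfm$ via extracting a convergent subsequence of minimizers, are correct as written.

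One step you wave off too quickly: the ``$\supseteq$'' inclusion in
\[
  \set{\cpt:\Gamma(\cpt)\cap U\neq\varnothing} = \bigcup_k \set{\cpt : \lscfm_{K_k}(\cpt) = \lscfm(\cpt)}
\]
is not ``immediate,'' and in fact fails as stated. The proposition allows $\lscf$ to take the value $+\infty$; if $\lscf(\cpt,\cdot)\equiv+\infty$ on all of $\seldomc$ while $\seldomc\cap U=\varnothing$, then your ``$+\infty$ on empty slices'' convention gives $\lscfm_{K_k}(\cpt)=+\infty=\lscfm(\cpt)$ for every $k$, yet $\Gamma(\cpt)=\seldomc$ misses $U$. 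The fix is to replace each set on the right by $\set{\cpt:\seldomc\cap K_k\neq\varnothing\text{ and }\lscfm_{K_k}(\cpt)=\lscfm(\cpt)}$; this is still Borel, because $\set{\cpt:\seldomc\cap K_k\neq\varnothing}=\projc\pn{\seldom\cap(\cset\times K_k)}$ is closed by the same compact-projection argument, and both $\lscfm_{K_k}$ and $\lscfm$ are lsc hence Borel. With that patch ``$\supseteq$'' goes through: a minimizer of $\lscf(\cpt,\cdot)$ over the nonempty compact set $\seldomc\cap K_k$ lies in $U$ and achieves $\lscfm(\cpt)$, so it witnesses $\Gamma(\cpt)\cap U\neq\varnothing$. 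Your ``$\subseteq$'' argument and the reduction to Kuratowski--Ryll-Nardzewski are otherwise sound.
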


Applying this we can find measurable selections for the projections
$\pfds$, $\pfis$.

\begin{corollary}
  \label{cor:pf-meas}

  Suppose $\dset$ is compact \ref{hyp:dset} and $\fmap \in C(\dset;
  \cset)$ (\eg\ via \cref{prop:gen-sob}). Then there exist
  Borel-measurable selections $\pfd$ of $\pfds$ and $\pfi$ of $\pfis$
  that are well-defined on all of $\cset$. Furthermore, under
  \ref{hyp:ac}, $\pfi$ is $\cmeas$-essentially unique.
\end{corollary}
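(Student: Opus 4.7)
The plan is to obtain both selections by two direct applications of \cref{prop:measurable-selection} and then invoke Lebesgue-nullity of the medial axis to handle the uniqueness claim.

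For $\pfd$, I take $\optset = \dset$ (compact metrizable by \ref{hyp:dset}), $\seldom = \cset \times \dset$ (trivially closed in $\cset \times \optset$ so that $\seldomc = \dset$ for every $\cpt$, and in particular $\projc(\seldom) = \cset$), and the cost
\[
  \lscf(\cpt, \dpt) = \abs{\cpt - \fmap(\dpt)},
\]
which is continuous---and so in particular lower semicontinuous---by continuity of $\fmap$. \cref{prop:measurable-selection} then yields a Borel-measurable $\pfd : \cset \to \dset$ with $\lscf(\cpt, \pfd(\cpt)) = \lscfm(\cpt) = \min_{\dpt \in \dset} \abs{\cpt - \fmap(\dpt)}$ for every $\cpt \in \cset$, i.e.\ $\pfd(\cpt) \in \pfds(\cpt)$. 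For $\pfi$, the argument is identical after replacing $\optset$ by $\iset = \fmap(\dset)$; since $\dset$ is compact and $\fmap$ is continuous, $\iset \subseteq \rcdim$ is compact and metrizable, and $\lscf(\cpt, \ipt) = \abs{\cpt - \ipt}$ is again continuous on the closed set $\cset \times \iset$, so the proposition produces a Borel-measurable $\pfi : \cset \to \iset$ with $\pfi(\cpt) \in \pfis(\cpt)$ for all $\cpt$.

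For the essential-uniqueness claim under \ref{hyp:ac}, recall from the discussion preceding \cref{prop:measurable-selection} that the set where $\pfi$ can fail to be uniquely determined is precisely the ambiguity set $\refambf = \cset \cap \medax(\rcdim \setminus \iset)$. As noted in the text, $\medax(\rcdim \setminus \iset)$ is Lebesgue-null by the cited result of \cite{Hajlasz2021}, and \ref{hyp:ac} gives $\cmeas \ll \leb$, so $\cmeas(\refambf) = 0$. Since any two Borel selections of $\pfis$ must coincide pointwise off of $\refambf$, any two such selections agree $\cmeas$-a.e., which is the desired essential uniqueness.

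The only real content here is the verification of the hypotheses of \cref{prop:measurable-selection}, and the one slightly delicate point is confirming that the set $\seldom$ is closed and that the function $\lscf$ is defined on all of $\cset$---both of which follow immediately from compactness of $\dset$ (resp.\ $\iset$) and continuity of $\fmap$. No additional regularity hypothesis on $\cmeas$, $\fmap$, or $\cset$ is needed for existence; the absolute continuity hypothesis \ref{hyp:ac} enters only in the last step to upgrade pointwise agreement off the medial axis to $\cmeas$-essential uniqueness.
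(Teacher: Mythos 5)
Your proof is correct and uses essentially the same tools as the paper's; the only deviation is in how $\pfi$ is constructed. The paper, having obtained $\pfd$ via \cref{prop:measurable-selection}, simply sets $\pfi(\cpt) \coloneqq \fmap(\pfd(\cpt))$: this is automatically Borel (continuous $\circ$ Borel) and is a selection of $\pfis$ because $\fmap(\pfd(\cpt))$ realizes the minimal distance $\min_{\ipt \in \iset}\abs{\cpt-\ipt}$. You instead run \cref{prop:measurable-selection} a second time with $\optset = \iset$, which is perfectly legitimate since $\iset = \fmap(\dset)$ is compact metrizable and $\abs{\cpt-\ipt}$ is continuous on $\cset \times \iset$. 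The paper's route is slightly more economical and has the side benefit of producing a $\pfi$ that is by construction compatible with $\pfd$ (i.e.\ $\pfi = \fmap \circ \pfd$), a relationship that is convenient in later arguments; your route produces two a priori unrelated selections, which the statement of the corollary does not actually require. The essential-uniqueness step is identical in both: $\refambf \subseteq \medax(\rcdim \setminus \iset)$ is Lebesgue-null by \cite{Hajlasz2021}, and \ref{hyp:ac} then gives $\cmeas(\refambf) = 0$, so all selections of $\pfis$ agree $\cmeas$-a.e.
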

\begin{proof}
  Existence of $\pfd$ follows immediately from
  \cref{prop:measurable-selection} with $\seldom = \cset \times \dset$
  and $\lscf(\cpt, \dpt) = \abs{\cpt - \fmap(\dpt)}$; we then obtain
  $\pfi$ by taking $\pfi(\cpt) \coloneqq \fmap(\pfd(\cpt))$. For the
  second part, noting $\refambf \subseteq \medax(\rcdim \setminus \iset)$
  we see $\refambf$ is Lebesgue null \cite{Hajlasz2021}, whence under
  \ref{hyp:ac} we get $\cmeas(\refambf) = 0$. So $\pfi$ is
  $\cmeas$-essentially unique.
\end{proof}
In light of \cref{cor:pf-meas} we define
\begin{equation}
  \label{eq:pfdset-def}
  \pfdset = \set{\pfd \MID \pfd \text{ is a measurable selection of
    } \pfds}
\end{equation}
and
\begin{equation}
  \label{eq:pfiset-def}
  \pfiset = \set{\pfi \MID \pfi \text{ is a measurable selection of
    } \pfis}.
\end{equation}
With this we now apply disintegration of measures to decompose
$\cmeas$ by $\pfd \in \refpfdset$ into $(\set[]{\cmdpthm}_{\dpt \in
  \dset}, \cpushdh)$, and by $\pfi \in \refpfiset$ into
$(\set[]{\cmipthm}_{\ipt \in \iset}, \cpushih)$, where
$\cpushdh$-a.s.\ $\supp(\cmdpthm) = \pfdinv(\dpt)$ and
$\cpushih$-a.s.\ $\supp(\cmipthm) = \pfiinv(\ipt)$. Thus we may
express $\objfp$ as
\begin{align}
  \objfp(\fmap)
  &= \int_{\dset}
    \bk[bigg]{\int_{\pfd^{-1}(\dpt)} \abs{\cpt - \fmap(\dpt)}^\objp \dd
    \cmdpthm(\cpt)} \dd \cpushdh(\dpt) \label{eq:disintegration-dset} \\
  &= \int_{\fmap(\dset)} \bk[bigg]{\int_{\pfi^{-1}(\ipt)} \abs{\cpt -
    \ipt}^\objp \dd \cmipthm(\cpt)} \dd
    \cpushih(\ipt), \label{eq:disintegration-iset}
\end{align}
with the representation in \eqref{eq:disintegration-iset} being unique
if one takes \ref{hyp:ac}.

\subsubsection{The Barycenter Field and Its Properties}
\label{sec:barycenter-field-and-properties}
\begin{definition}[Barycenter Field]
  \label{def:fbary}
  Take \ref{hyp:p}. For fixed $\pfd \in \refpfdset$ we define
  \[
    \fbdh(\dpt) = \int_{\pfdinv(\dpt)} \objp \abs{\cpt -
      \fmap(\dpt)}^{\objp - 2} (\cpt - \fmap(\dpt)) \dd \cmdpt(\cpt),
  \]
  taking the convention that $\abs{v}^{\objp - 2} v = 0$ when $v = 0$.
  Similarly, for fixed $\pfi \in \refpfiset$ we let $\fbih(\ipt) =
  \int_{\pfiinv(\ipt)} \objp \abs{\cpt - \ipt}^{\objp -2} (\cpt -
  \ipt) \dd \cmipt(\cpt)$.
\end{definition}
\begin{boxedremark}
  1) $\fbdh$, $\fbih$ are Borel-measurable.
  2) Note that the integrands of $\fbdh$ and $\fbih$ are just the
  derivatives of the integrands in \eqref{eq:disintegration-dset},
  \eqref{eq:disintegration-iset}; this is the standard outcome we
  expect in the context of gradient flows.
  3) Whenever $\fbdh$, $\fbih$ are the same independent of the choice
  of $\pfd, \pfi$ (see \cref{cor:fbary-grad-J-y}), we will simply
  write $\fbd$, $\fbi$, respectively to denote the parametrizations
    in terms of $\dset$, $\iset$.
\end{boxedremark}
We now examine some basic geometric properties of the barycenter
field. In general, these are most easily visualized for $\fbi$. One
should also note that some statements like
\cref{prop:fiber-characterization} hold only for $\fbi$ and lack
analogues for $\fbd$ on account of the fact that $\fmap^{-1}$ can be
discontinuous when it is not well-determined.

We recall two definitions that will be helpful. Let $\tangset
\subseteq \rcdim$ and let $\tangpt \in \tangset$. Then the
\emph{tangent cone} at $\tangpt$ (denoted via the abuse of notation
$\taa$) is given by
\[
  \taa = \set{v \in \rcdim \MID \textstyle v = 0 \text{ or } \exists
    \text{ a seq.\ } \set{a_j} \subseteq A \setminus \set{a} \st a_j
    \to a \text { and } \frac{a_j - a}{\abs{a_j - a}} \to \widehat v}
\]
where $\widehat v$ denotes the unit vector $\frac{v}{|v|}$. Now let
$\pcset \subseteq \rcdim$. Then the \emph{polar cone} of $\pcset$ is
given by
\[
  \pcone(\pcset) = \set{v \in \rcdim \MID \forall \pcpt \in \pcset,\
    \ip{v, \pcpt} \leq 0}.
\]
With these in hand, we can characterize the fibers of $\pfi$ more
fully. Note we denote Minkowski addition by $+$.
\begin{proposition}
  \label{prop:fiber-characterization}
  Let $\ipt \in \iset$ be arbitrary. Then $\pfiinv(\ipt)$ is convex,
  and $\pfiinv(\ipt) \subseteq \ipt + \pcone(\tyy)$.
\end{proposition}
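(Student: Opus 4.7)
The plan is to reduce both claims to elementary properties of the Voronoi-type set $S(\ipt) \coloneqq \set{\cpt \in \cset \MID \abs{\cpt - \ipt} \leq \abs{\cpt - \ipt'} \text{ for all } \ipt' \in \iset}$. Since $\pfi$ is a measurable selection of $\pfis$, every $\cpt \in \pfiinv(\ipt)$ satisfies $\ipt \in \pfis(\cpt)$, whence $\pfiinv(\ipt) \subseteq S(\ipt)$; and with the natural tiebreaking on the ambiguity set $\ambf$ one has equality. I read the proposition as a statement about $S(\ipt)$ and will work with it throughout.

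For convexity: for each $\ipt' \in \iset$, the inequality $\abs{\cpt - \ipt}^2 \leq \abs{\cpt - \ipt'}^2$ expands and cancels $\abs{\cpt}^2$ to the affine constraint
\[
  2 \ip{\cpt, \ipt' - \ipt} \leq \abs{\ipt'}^2 - \abs{\ipt}^2,
\]
which cuts out a closed half-space $H_{\ipt, \ipt'} \subseteq \rcdim$. Hence $S(\ipt) = \bigcap_{\ipt' \in \iset} H_{\ipt, \ipt'}$ is an intersection of convex sets, and therefore convex.

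For the polar-cone inclusion, I would fix $\cpt \in S(\ipt)$ and a nonzero $v \in \tyy$ (the $v = 0$ case is trivial). By definition of the tangent cone, pick a sequence $\ipt_j \in \iset \setminus \set{\ipt}$ with $\ipt_j \to \ipt$ and $(\ipt_j - \ipt)/\abs{\ipt_j - \ipt} \to \widehat v \coloneqq v/\abs{v}$. Substituting $\ipt' = \ipt_j$ into the defining inequality of $S(\ipt)$ and expanding gives
\[
  2 \ip{\cpt - \ipt, \ipt_j - \ipt} \leq \abs{\ipt_j - \ipt}^2 ;
\]
dividing by $\abs{\ipt_j - \ipt}$ and sending $j \to \infty$ yields $\ip{\cpt - \ipt, \widehat v} \leq 0$, so $\ip{\cpt - \ipt, v} \leq 0$ for every $v \in \tyy$, i.e., $\cpt - \ipt \in \pcone(\tyy)$.

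The main subtlety is purely notational, in the convexity step: a measurable selection with arbitrary tiebreaking on $\ambf$ may produce literal fibers that are \emph{not} convex (\eg\ two Voronoi-adjacent points in $\rcdim$ with the separating hyperplane split into an awkward pair of half-planes), so one must interpret $\pfiinv(\ipt)$ as the full Voronoi cell $S(\ipt)$, or equivalently restrict attention to selections compatible with the Voronoi decomposition. The polar-cone inclusion, by contrast, inherits automatically from $\pfiinv(\ipt) \subseteq S(\ipt)$ and holds for any selection.
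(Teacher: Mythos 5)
Your proof is correct and follows essentially the same route as the paper: convexity comes from writing the fiber as an intersection of half-spaces cut out by the bisector inequalities $\abs{\cpt - \ipt} \leq \abs{\cpt - \ipt'}$, and the polar-cone inclusion comes from testing those inequalities against a tangent-cone sequence $\ipt_j \to \ipt$ and passing to the limit. Your expand-and-cancel manipulation in the second step is algebraically the same computation as the paper's rewriting via the midpoint and unit normal. The one thing you add, correctly, is the tiebreaking caveat: the paper's phrase ``changing the $<$ to an $\leq$ as needed'' does not by itself ensure convexity, since an arbitrary measurable selection can split a single bisector between two cells and leave the literal fiber $\pfiinv(\ipt)$ non-convex. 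Reading the convexity claim as a statement about the closed Voronoi cell $S(\ipt)$ --- equivalently, restricting to Voronoi-compatible selections, which always exist --- is the right way to make the statement literally hold, and you correctly note that the polar-cone inclusion is indifferent to this choice because $\pfiinv(\ipt) \subseteq S(\ipt)$ for every selection.
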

\begin{proof}
  For all $\ipt_0, \ipt_1 \in \iset$ with $\ipt_0 \neq \ipt_1$, define
  the half-space $\halfspace = \{u \in \rcdim \MID \abs{u - \ipt_0} <
  \abs{u - \ipt_1}\}$, changing the $<$ to an $\leq$ as needed to
  facilitate tiebreaking when $u \in \refambf$ and $\ipt_0, \ipt_1 \in
  \pfis(u)$. Then we have $\pfiinv(\ipt_0) = \bigcap_{\ipt_1 \neq
    \ipt_0} \halfspace \cap \cset$, which is an intersection of convex
  sets and hence convex.

  For the second part, let $\cpt \in \pfiinv(\ipt_0)$ and $v_0 \in
  T_{\ipt_0} \iset$ be arbitrarily chosen. We want to show $\ip{\cpt -
    \ipt_0, v_0} \leq 0$. If $v_0 = 0$ then the claim is trivial.
  Hence suppose $v_0 \neq 0$. Then there exists a sequence $\ipt_j \to
  \ipt_0$ such that $\frac{\ipt_j - \ipt}{\abs{\ipt_j - \ipt}} \to
  \hat v_0$. Let $\halfptj = (\ipt_j + \ipt_0)/2$ and $\vjhat =
  (\ipt_j - \ipt_0)/\abs{\ipt_j - \ipt_0}$. One may rewrite
  $\halfspacej = \set{u \in \rcdim \MID \ip[]{u - \halfptj, \vjhat} <
    0}$, again replacing the $<$ with $\leq$ as needed to facilitate
  tiebreaking. For each $j$, since $\cpt \in \halfspacej$ we get
  $\ip[]{\cpt - \halfptj, \vjhat} < 0$. As $j \to \infty$ we see
  $\halfptj \to \ipt_0$ and $\vjhat \to \hat v_0$, thus $\ip[]{\cpt -
    \ipt_0, \hat v_0} \leq 0$. Since $v_0$ was arbitrary it follows
  that $\cpt - \ipt_0 \in \pcone(T_{\ipt_0} \iset)$ from which the
  claim follows.
\end{proof}
In particular, \cref{prop:fiber-characterization} implies that for all
$\dpt$, $\fbd(\dpt) \in \fmap(\dpt) + \pcone(T_{\fmap(\dpt)} \iset)$;
similarly, for all $\ipt$, $\fbi(\ipt) \in \ipt + \pcone(T_\ipt
\iset)$. However, note that if $\fmap$ fails to be injective, then
unlike $\pfiinv(\ipt)$, the fiber $\pfdinv(\dpt)$ could fail to be
convex. This is because a priori we can only say $\pfiinv(\ipt) =
\bigsqcup_{\dpt \in \finv(\ipt)} \pfdinv(\dpt)$; there is no structure
enforced on exactly how the $\cpt \in \pfiinv(\ipt)$ are partitioned
between the $\pfdinv(\dpt)$.

Next, one might ask whether we can expect any regularity from $\fbd,
\fbi$. Generally, the answer is no. Trivial examples arise when
$\cmeas$ is singular, for example $\cmeas = \delta_\cpt$ for some
$\cpt \not \in \iset$. When $\cmeas$ is absolutely continuous, easy
examples can be obtained when the fibers $\pfdinv(\dpt)$
(respectively, $\pfiinv(\ipt)$) have nonconstant Hausdorff dimension,
such as when $\fmap(\partial \dset) \subseteq \supp(\cmeas)^\circ$ and
$\fmap^{-1}$ is well-defined on $\fmap(\partial \dset)$. For instance,
consider $\cmeas$ uniform on $\ol{B_1(0; \rcdim)}$ with $\fmap :
\ol{B_1(0; \rddim)} \to \ol{B_1(0; \rcdim)}$ given by $\fmap(\dpt) =
\frac{9}{10} \iota(\dpt)$ where $\iota$ is the inclusion map. Then
$\fbd \equiv 0$ on $B_1(0; \rddim)$ but points radially outward on
$\partial \ol{B_1(0; \rddim)}$; analogously for $\fbi$ on $B_{.9}(0;
\rcdim)$ and $\partial \ol{B_{.9}(0; \rcdim)}$.

However, there are actually trivial counterexamples with $\fmap \in
C^\infty$ even when $\partial \fmap(\dset) \subseteq \partial
\supp(\cmeas)$. This might sound surprising at first but becomes less
so once one remembers that smoothness of $\fmap$ as a map is not the
same thing as smoothness of $\fmap(\dset)$ as a manifold. An
elementary counterexample can be constructed by taking a smooth
parametrization of the graph of $\dpt \mapsto \abs{\dpt}$ as we now show.
\begin{counterexample}[$\fmap \in C^\infty$; Barycenter Field discontinuous]
  \label{cex:f-smooth-fbary-discont}
  Let $\cset$ be a diamond in $\RR^2$ with vertices $(0,-2), (-2, 0),
  (0,2)$, and $(2,0)$ endowed with $\cunif$. Recall that in the proof
  of \cref{lem:arbitrarily-bad-reparametrizations} we used
  $\smoothstep(\dpt)$ to denote a smooth step function. We similarly
  define a smooth ``dip'' function $\smoothdip : \RR \to [0,1]$ by
  \[
    \smoothdip(\dpt) =
    \begin{cases}
      -\smoothstep(\dpt + 1) + 1 & \dpt \in (-\infty, 0] \\
      \ph \smoothstep(\dpt) & \dpt \in [0, \infty).
    \end{cases}
  \]
  Observe $\smoothdip \equiv 1$ on $\RR \setminus [-1,1]$, and that
  its derivatives of all orders vanish at $0$. Finally, let $\fmap :
  [-1,1] \to \cset$ be given by $\fmap(\dpt) = (\smoothdip(\dpt)
  \sgn(\dpt),\ \smoothdip(\dpt))$. Then $\fmap \in C^\infty$ but
  $\fbd$ (respectively, $\fbi$) is discontinuous at $0$ (respectively,
  $(0,0)$).
\end{counterexample}
Much more surprising is that even $C^1$-ness of $\fmap(\dset)$ as a
manifold is insufficient to guarantee continuity of $\fbd$, $\fbi$.
\begin{counterexample}[$\fmap(\dset)$ a $C^1$-manifold; Barycenter
  Field discontinuous]
  \label{cex:Y-c1-fbary-discont}
  Let $\dset = [-1,1]$, $\cset = [-1,1]^2$, and $\sobk=1$.  Define
  \[
    \fmap_2(t) =
    \begin{cases}
      t^3 \sin(1/t) & t < 0 \\
      0 & t \geq 0
    \end{cases}
  \]
  and let $\fmap : \dset \to \cset$ be given by $\fmap(t) = (t,
  \fmap_2(t))$. Observe that $\fmap$ is injective and $C^1$ with
  $\fmap'(t) \neq 0$, so $\fmap([-1,1])$ is a $C^1$-manifold.
  \begin{figure}[H]
    \centering
    \includegraphics[scale=.8]{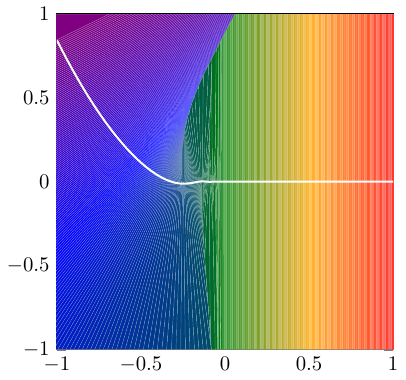}
    \caption{$\fmap$ and the $\pfiinv(\ipt)$ visualized.}
    \label{fig:topsin-fibs-visualized}
  \end{figure}
  We claim $\fbi$ is discontinuous at $(0,0)$; in particular $\lim_{t
    \to 0} \abs{\fbi(t)}$ does not exist. To that end note $\cset
  \setminus \iset$ has two connected components; call the upper
  $\cset_1$ and the lower $\cset_2$. For each $\ipt \in \fmap(\dset)$
  let $L_1(t) = \mrm{Length}(\pfiinv(\ipt) \cap \cset_1)$ and $L_2(t)
  = \mrm{Length}(\pfiinv(\ipt) \cap \cset_2)$. For concision, we use
  the notation $\ipt_t = \fmap(t)$.

  From now on $t_0 \in \pn{-1,0}$ will always denote a local maximum
  of $\fmap_2$. Note that because $\fmap_2''(t_0) < 0$,
  $\cmeas_{\ipt_{t_0}}$ is nonuniform on $\pfiinv(\ipt_{t_0})$, and in
  particular $\abs{\fbi(\ipt_{t_0})} > L_1(t_0) - L_2(t_0)$. So, to
  show $\fbi$ is discontinuous at $0$ it suffices to show that as $t_0
  \to 0$, $L_2(t_0) \to 0$ while $L_1(t_0)$ is bounded below by some
  $c_0 > 0$. To treat $L_2(t_0)$: Let $R(t)$ denote the radius of
  curvature, where defined. We have
  \begin{align*}
    R(t) = \abs{\frac{\pn[]{1 + \dot \fmap_2(t)^2}^{3/2}}{\ddot
    \fmap_2(t)}}
    &= \abs{\frac{\pn[]{1 + \pn{t \pn{3t\sin(1/t) -
      \cos(1/t)}}^{2}}^{3/2}}{\frac{6t^2-1}{t} \sin(1/t) - 4
      \cos(1/t)}}.
  \end{align*}
  $\iset$ is locally a $C^2$-manifold at $\fmap(t_0)$, so by
  \cite[Thm.\ 1.2]{miura2016}, $L_2(t_0) \leq R(t_0)$. Since $\dot
  \fmap_2(t_0) = 0$, for small $t_0$ one may verify $R(t_0) \approx
  \abs{\frac{t_0}{\sin(1/t_0)}} \approx \abs{t_0}$, so $\lim_{t_0 \to
    0} L_2(t_0) = 0$.

  Now we claim $\lim_{t_0 \to 0} L_1(t_0) > 0$; numerically it appears
  the value is $\approx 0.6671$. Hence for simplicity let $c_0 = 1/2$
  and consider $\cpt = (t_0, c_0)$. We claim $\pfi(\cpt) =
  \fmap(t_0)$.
  \begin{figure}[H]
    \centering
    \includegraphics[scale=.35]{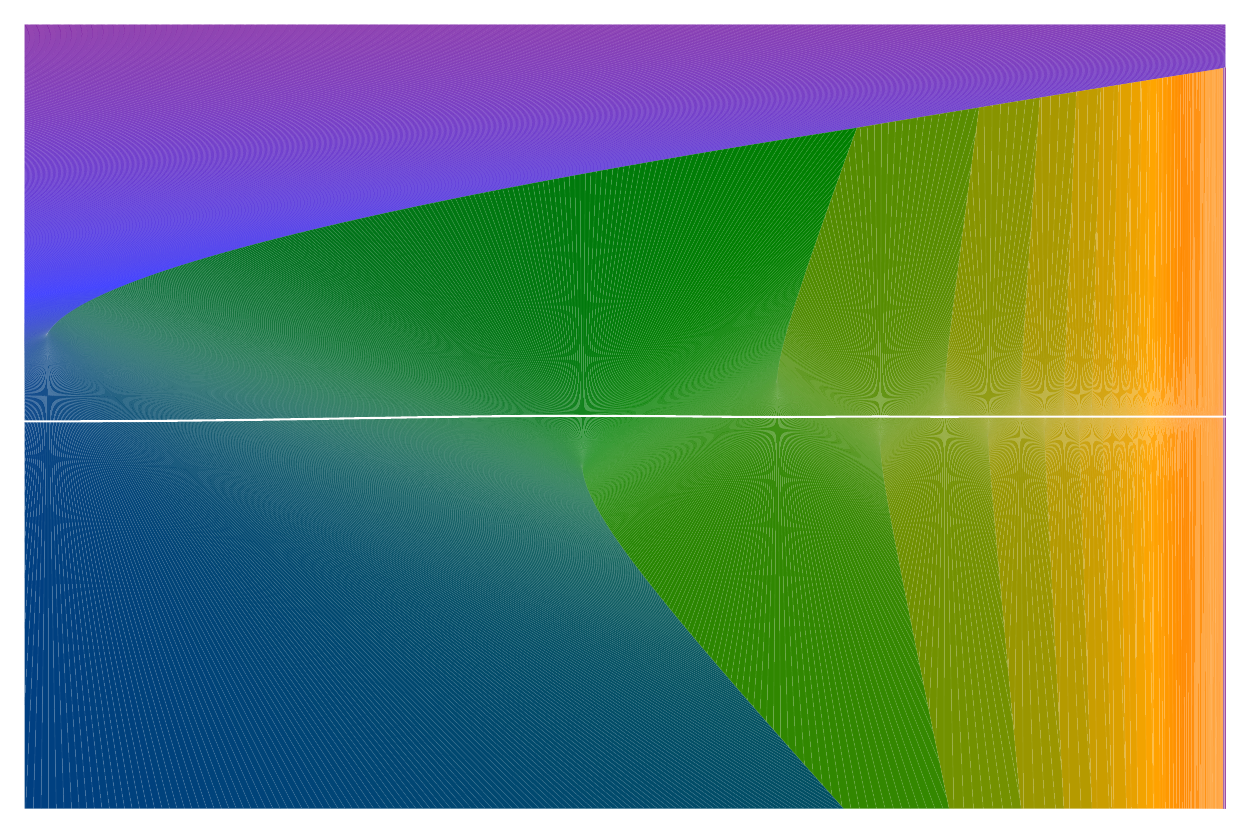}
    \caption{Closeup of $\pfiinv(\ipt)$ for $-1 \ll t < 0$ (axes not
      scaled equally).}
  \end{figure}
  To see this note $\pfi(\cpt)$ minimizes $d^2(\cpt, \fmap(t)) = (t -
  t_0)^2 + (\fmap_2(t) - c_0)^2$. Thus either $t=\pm 1$ or $0 = 2(t -
  t_0) + t \cos(1/t) + O(t^2)$. By direct comparison, one may verify
  that we can rule out $t = \pm 1$ as well as $t > 0$. Next one may
  show that for small $t_0$ the $2(t - t_0)$ term dominates, whence
  critical points will occur in some neighborhood $N_{t_0}$ of $t_0$
  with $\diam(N_{t_0}) = O(\abs{t_0})$. For some $\varepsilon$
  sufficiently small, $t_0$ is the only critical point in
  $B_\varepsilon(t_0)$. Now suppose $t'$ is a critical point not in
  $B_\varepsilon(t_0)$. We must have $t' < t_0$ or else $d^2(\cpt,
  \fmap(t')) > d^2(\cpt, \fmap(t_0))$. Define $\delta = \abs{t' -
    t_0}$; observe $\delta \in O(\abs{t_0})$ since $\diam(N_{t_0})$
  is. Now
  \begin{align*}
    \abs{\fmap(t') - \fmap(t_0)}
    &= \abs{t'^3 \sin(1/t') - t_0^3 \sin(1/t)} \\
    &\leq \abs{(t_0 - \delta)^3 \sin(1/t') - t_0^3 \sin(1/t_0)} \\
    &= O(t_0^3).
  \end{align*}
  Thus
  \begin{align*}
    d^2(\cpt, \fmap(t'))
    &= (t' - t_0)^2 + (\fmap(t') - c_0)^2 \\
    &= \delta^2 + (\fmap(t') - \fmap(t_0) + \fmap(t_0) - c_0)^2 \\
    &= \delta^2 + (\fmap(t_0) - c_0)^2 + 2 (\fmap(t_0) - c_0)
      (\fmap(t') - \fmap(t_0)) + (\fmap(t') - \fmap(t_0))^2 \\
    &= \delta^2 + d^2(\cpt, \fmap(t_0)) + O(t_0^3).
  \end{align*}
  So as $t_0 \to 0$ we see $d^2(\cpt, \fmap(t')) > d^2(\cpt,
  \fmap(t_0))$. Finally, comparing to $d^2(\cpt, \fmap(-1))$ and
  $d^2(\cpt, \fmap(1))$ we see $\fmap(t_0)$ achieves the global
  minimum.

  Thus we see for points of local maxima $t_0$, $\fbi(\ipt_{t_0}) \not
  \to 0$ as $t_0 \to 0^-$. By contrast, $\lim_{t\to 0^+} \fbi(\ipt_t)
  = 0$. So $\fbi$ is discontinuous at $(0,0)$, despite the fact that
  $\iset$ is a local $C^1$-manifold there. Since $\fmap$ is injective
  and $\cmeas \ll \leb$, $\fbi$ and $\fbd$ are equivalent and so
  $\fbd$ is similarly discontinuous.
\end{counterexample}

\subsection{$F_{\hat{\pi}_X}$ as a (Negative) Gradient}
\label{sec:fbary-negative-gradient}
We now move to proving the first main result of this section. As
mentioned previously the $\objp = 1$ case is special, which reflects
the fact that the integrand $\abs{\cdot}$ of $\objf_1$ is only
Lipschitz, as compared to the $C^1$ integrand $\abs{\cdot}^\objp$ of
$\objfp$ for $\objp > 1$.
\begin{theorem}
  \label{thm:fbary-grad-J}
  Take \ref{hyp:dset}, \ref{hyp:cset}, and \ref{hyp:p}. Fix $\fmap \in
  C(\dset; \cset)$ and $\pert \in C(\dset; \rcdim)$, and for all
  $\varepsilon > 0$ let $\feps \coloneqq \fmap + \varepsilon \pert$
  and $\iset = \fmap (\dset)$, $\iseps=\feps(\dset)$. If $\objp = 1$,
  let $\iset_{\rm out} = \set{\ipt \in \iset \MID \liminf_{\varepsilon
      \to 0} 1_{\iseps}(\ipt) = 0}$, and take the extra hypothesis
  $\cmeas(\iset_{\rm out}) = 0$. Recall the definition of $\refpfdset$
  from \eqref{eq:pfdset-def}. Then
  \begin{equation}
    \label{eq:grad-J-x}
    \lim_{\varepsilon \to 0}
    \frac{\objfp(\feps) - \objfp(\fmap)}{\varepsilon} =
    \min_{\pfd \in \refpfdset} \int_\dset
    \ip[]{-\fbdh(\dpt), \pert(\dpt)} \dd
    \cpushdh(\dpt).
  \end{equation}
\end{theorem}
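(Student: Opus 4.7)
The plan is an envelope-theorem-style argument. For each fixed $\cpt$ I will estimate the pointwise difference $\inf_\dpt |\cpt - \feps(\dpt)|^\objp - \inf_\dpt |\cpt - \fmap(\dpt)|^\objp$ both above and below by substituting suitable measurable choices of $\dpt$, then apply a first-order Taylor expansion of $v \mapsto |v|^\objp$ and pass the limit $\varepsilon \to 0$ through the $\cmeas$-integral via dominated convergence (for the upper bound) and Fatou's lemma (for the lower bound). The $\objp > 1$ case will run smoothly since $|\cdot|^\objp$ is $C^1$; the $\objp = 1$ case needs the extra hypothesis $\cmeas(\iset_{\rm out}) = 0$ to absorb the nondifferentiability of $|\cdot|$ at the origin.

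For the upper bound I fix an arbitrary $\pfd \in \refpfdset$ and plug $\dpt = \pfd(\cpt)$ into the infimum defining $\objfp(\feps)$ as a feasible (but not necessarily optimal) candidate. Since $\pfd(\cpt) \in \argmin_{\dpt'}|\cpt - \fmap(\dpt')|$,
\begin{equation*}
  \inf_\dpt |\cpt - \feps(\dpt)|^\objp - \inf_\dpt |\cpt - \fmap(\dpt)|^\objp \leq |\cpt - \feps(\pfd(\cpt))|^\objp - |\cpt - \fmap(\pfd(\cpt))|^\objp.
\end{equation*}
For $\objp > 1$ I will write the right-hand side as $\int_0^1 \frac{d}{dt}|\cpt - \fmap(\pfd(\cpt)) - t\varepsilon\pert(\pfd(\cpt))|^\objp \dd t$, divide by $\varepsilon$, and apply dominated convergence (the integrand is uniformly bounded by compactness of $\cset$ and continuity of $\pert$) to obtain
\begin{equation*}
  \limsup_{\varepsilon \to 0} \frac{\objfp(\feps) - \objfp(\fmap)}{\varepsilon} \leq -\int_\cset \objp |\cpt - \fmap(\pfd(\cpt))|^{\objp - 2}\ip{\cpt - \fmap(\pfd(\cpt)), \pert(\pfd(\cpt))} \dd \cmeas(\cpt),
\end{equation*}
which after disintegrating $\cmeas$ along $\pfd$ becomes $\int_\dset \ip{-\fbdh(\dpt), \pert(\dpt)} \dd \cpushdh(\dpt)$. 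Since this holds for every $\pfd \in \refpfdset$, the limsup is bounded above by the claimed minimum.

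For the lower bound I will use a measurable selection $\pfd^\varepsilon$ of $\cpt \mapsto \argmin_\dpt |\cpt - \feps(\dpt)|$ (guaranteed by \cref{cor:pf-meas}) to obtain the reverse pointwise inequality. The main obstacle here is that the family $\{\pfd^\varepsilon\}$ need not converge along any single subsequence, and even the pointwise subsequential limits furnished by compactness of $\dset$ need not be measurable in $\cpt$. I will bypass this via \cref{prop:measurable-selection}: set $\seldom = \set{(\cpt, \dpt) \MID \dpt \in \argmin_{\dpt'}|\cpt - \fmap(\dpt')|} \subseteq \cset \times \dset$ (closed by continuity) and $\lscf(\cpt, \dpt) = -\objp |\cpt - \fmap(\dpt)|^{\objp - 2}\ip{\cpt - \fmap(\dpt), \pert(\dpt)}$, continuous on $\seldom$ for $\objp > 1$; the proposition then furnishes a Borel selection $\pfd^* \in \refpfdset$ with $\lscf(\cpt, \pfd^*(\cpt)) = \min_{\dpt \in \seldom_\cpt} \lscf(\cpt, \dpt)$. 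For each $\cpt$, compactness of $\dset$ together with $\feps \uconv \fmap$ yields a pointwise subsequential limit $\dpt^*(\cpt) \in \seldom_\cpt$ of $\pfd^{\varepsilon_n}(\cpt)$ (the containment uses that $\inf_\dpt|\cpt - \feps(\dpt)|\to \inf_\dpt|\cpt - \fmap(\dpt)|$ uniformly), giving the pointwise bound $\liminf_\varepsilon(\cdot)/\varepsilon \geq \lscf(\cpt, \dpt^*(\cpt)) \geq \lscf(\cpt, \pfd^*(\cpt))$. Fatou's lemma then produces
\begin{equation*}
  \liminf_{\varepsilon \to 0} \frac{\objfp(\feps) - \objfp(\fmap)}{\varepsilon} \geq \int_\cset \lscf(\cpt, \pfd^*(\cpt)) \dd \cmeas = \int_\dset \ip{-\fbdh(\dpt), \pert(\dpt)} \dd \cpushdh(\dpt),
\end{equation*}
with $\fbdh, \cpushdh$ now formed using $\pfd^*$, which is a fortiori bounded below by the claimed minimum and closes the $\objp > 1$ case.

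For $\objp = 1$ the Taylor argument collapses precisely at $\cpt$ with $|\cpt - \fmap(\pfd(\cpt))| = 0$, i.e., $\cpt \in \iset$; on $\cset \setminus \iset$ the argument above is unchanged. For $\cpt \in \iset \setminus \iset_{\rm out}$, by definition $\cpt \in \iseps$ for all sufficiently small $\varepsilon$, so the difference of the two $\inf$s vanishes eventually and contributes nothing to the limit; the hypothesis $\cmeas(\iset_{\rm out}) = 0$ discards the problematic remainder. The hardest step, in my view, is the lower bound---extracting a measurable selection from a family that converges only pointwise along varying subsequences---and this is precisely where \cref{prop:measurable-selection} becomes essential, allowing us to dispense with the $\cmeas$-regularity hypotheses used in \cite{buttazzo-mainini-stepanov2009, kirov-slepcev-2017} and the $\objp=2$-specific invariants exploited in \cite{delattre2020}.
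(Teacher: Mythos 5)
Your proof is correct and follows essentially the same route as the paper's: the same envelope-theorem structure (substitute $\pfd(\cpt)$ for the upper bound, track the $\varepsilon$-argmin and extract subsequential limits by compactness for the lower bound), the same FTC/Taylor expansion of $|\cdot|^\objp$, the same appeal to the measurable-selection result (\cref{prop:measurable-selection}) to produce a $\pfd^* \in \refpfdset$ achieving $\lscfm$ and thereby realizing the $\min$ on the right-hand side, and the same casework on $\cset\setminus\iset$, $\iset_{\rm in}$, and $\iset_{\rm out}$ for $\objp = 1$. The only difference is organizational: the paper establishes a pointwise two-sided squeeze $\lscfepsm(\cpt) + o(1) \leq \delta_\varepsilon(\cpt) \leq \lscfm(\cpt) + o(1)$ on the difference quotient and then applies dominated convergence once, whereas you bound $\limsup$ of the integral (via DCT applied to the fixed-$\pfd$ upper envelope) and $\liminf$ of the integral (via Fatou applied to the pointwise lower bound) separately; both yield the same conclusion.
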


\begin{proof}[Proof of \cref{thm:fbary-grad-J}]
  For concision write
  \begin{align*}
    \frac{ (\objfp(\feps) - \objfp(\fmap))}{\varepsilon}
    =& \int_\cset \delta_\varepsilon (\cpt) \dd \cmeas (\cpt), \text{
       where} \\
    \delta_\varepsilon (\cpt)
    \coloneqq& \frac{1}{\varepsilon}\left[
               \inf_{\dpt_{\varepsilon} \in \dset} \abs{\cpt -
               \feps(\dpt_{\varepsilon})}^\objp - \inf_{\dpt \in
               \dset} \abs{\cpt - \fmap(\dpt)}^\objp\right].
  \end{align*}
  Also, let
  \[
    \lscf(\cpt, \dpt) =
    \begin{cases}
      \ip[]{-\objp \abs{\cpt - \fmap(\dpt)}^{\objp -
      2} (\cpt - \fmap(\dpt)),\ \pert(\dpt)}
      & \text{if } \cpt \neq \fmap(\dpt), \text{ and} \\
      0
      & \text{if } \cpt = \fmap(\dpt).
    \end{cases}
  \]
  As we will show, the formula for $\lscf$ is related to
  $\ip[]{-\fbdh(\dpt), \pert(\dpt)}$ in \eqref{eq:grad-J-x}. Notice
  that $\lscf$ depends on $\pert$.

  \textbf{Case 1:} Suppose $\objp > 1$. We will apply
  \cref{prop:measurable-selection} to obtain a measurable selection of
  $\pfds$. To that end, define $\seldom = \set[]{(\cpt, \dpt) \MID
    \dpt \in \pfds(\cpt)}$, $\seldomc = \set[]{\dpt \MID (\cpt, \dpt)
    \in \seldom}$, and let $$\lscfm(\cpt) = \min_{\seldomc}
  \lscf(\cpt, \dpt).$$ Note that since $\objp > 1$, $\lscf$ is
  continuous, and so l.s.c. Moreover, it is easy to verify that
  $\seldom$ is closed. Thus \cref{prop:measurable-selection} gives a
  measurable selection $\pfdm$ of $\pfds$ with
  \[
    \lscf(\cpt, \pfdm(\cpt)) = \lscfm(\cpt).
  \]
  Notice that by the definition of $\pfdm(\cpt)$, disintegrating
  $\int_\cset \lscf(\cpt, \pfdm(\cpt) ) \dd \cmeas$ gives the
  right-hand side of\eqref{eq:grad-J-x}:
  \begin{align}\label{eq:h-*-min-pi}
   \int_\cset \lscfm(\cpt) \dd\cmeas =   \int_\cset \lscf(\cpt, \pfdm(\cpt)) \dd \cmeas =
    \min_{\pfd \in \refpfdset} \int_\dset
    \ip[]{-\fbdh(\dpt), \pert(\dpt)} \dd
    \cpushdh(\dpt).
  \end{align}

  For the rest it suffices to show that
  \begin{equation*}
    \text{for $\cmeas$-a.e.\ $\cpt$, }
    \lim_{\varepsilon\to 0} \delta_\varepsilon (\cpt) =  \lscfm(\cpt).
  \end{equation*}
  (Then, we can apply the dominated convergence theorem to get
  \eqref{eq:grad-J-x}.)

  Analogously to the construction of $\pfdm$, define the set-valued
  map $\pfdepss(\cpt) = \argmin_{\dpt \in \dset} \abs{\cpt -
    \feps(\dpt)}$ and let $\seldomeps = \set{(\cpt, \dpt) \MID \dpt
    \in \pfdepss(\cpt)}$. Then \cref{prop:measurable-selection} yields
  a selection $\pfdepsm$ of $\pfdepss$ with $\lscf(\cpt,
  \pfdepsm(\cpt)) = \lscfepsm(\cpt)$.

  For concision define
  \begin{align*}
    \gmap(\cpt) = \cpt - \fmap(\pfdm(\cpt))
    \quad \text{and} \quad
    \geps(\cpt) = \cpt - \fmap(\pfdepsm(\cpt)).
  \end{align*}
  Since $\pfds$, $\pfdepss$ were defined via closest-point projection
  onto $\iset$, $\iseps$, and since $\pfdm$, $\pfdepsm$ are selections
  of $\pfds$, $\pfdepss$, respectively, it follows that
  \begin{align*}
    \abs{\gmap} \le \abs{\geps} \quad \text{and} \quad \abs{\geps -
    \varepsilon \pert(\pfdepsm)}\le \abs{\gmap - \varepsilon
    \pert(\pfdm)}.
  \end{align*}
  Therefore,
  \begin{equation}
    \underbrace{\frac{\abs[]{\geps - \varepsilon
          \pert(\pfdepsm)}^{\objp} -
        \abs{\geps}^{\objp}}{\varepsilon}}_{\textrm{(I)}}
    \leq \delta_\varepsilon \leq
    \underbrace{\frac{\abs{\gmap - \varepsilon \pert(\pfdm)}^\objp -
        \abs{\gmap}^{\objp}}{\varepsilon}}_{\textrm{(II)}}.
    \label{eq:lower-upper-bounds-delta}
  \end{equation}

  We now rewrite the (I) and (II) in terms of (respectively)
  $\lscfepsm(\cpt)$ and $\lscfm(\cpt)$, up to an error term that
  vanishes as $\varepsilon \to 0$. Note that for all $a,b \in \RR^n$
  with $a \neq 0$, Taylor expanding $\varepsilon \mapsto \abs{a -
    \varepsilon b}^\objp = \pn{\sum_i (a_i - \varepsilon
    b_i)^2}^{\objp/2}$ about $\varepsilon = 0$ yields
  \begin{align*}
    \abs{a - \varepsilon b}^{\objp}
    = \abs{a}^\objp + \varepsilon \cdot {\ip[big]{-\objp
      \abs{a}^{\objp - 2} a, b}} +
      o(\varepsilon).
  \end{align*}
  Thus if $\geps(\cpt)\neq 0$ then by the definition of $\lscf$,
  $\textrm{(I)} = \lscf(\cpt, \pfdepsm) + o(1)$, and similarly if
  $\gfunc(\cpt) \neq 0$ then $\textrm{(II)} = \lscf(\cpt, \pfdm) +
  o(1)$.

  On the other hand when $a = 0$ we get $\abs{a - \varepsilon
    b}^{\objp} = \varepsilon^\objp \abs{b}^{\objp}$; thus if
  $\geps(\cpt) = 0$ then $\textrm{(I)} = \varepsilon^{\objp - 1}
  \abs[]{\pert(\pfdepsm)}^\objp$ and if $\gfunc(\cpt) = 0$ then
  $\textrm{(II)} = \varepsilon^{\objp - 1} \abs{\pert(\pfdm)}^\objp$.
  Observe that since $\pert$ is bounded \ref{hyp:dset} and $\objp > 1$,
  both of these expressions are $o(1)$.

  So, in either case, $\textrm{(I)} = \lscf(\cpt, \pfdepsm) + o(1)$
  and $\textrm{(II)} = \lscf(\cpt, \pfdm) + o(1)$. Thus from
  \eqref{eq:lower-upper-bounds-delta} we get
  \begin{equation}
    \lscfepsm(\cpt) + o(1)
    \leq \delta_\varepsilon (\cpt)
    \leq \lscfm(\cpt) + o(1). \label{eq:pre-squeeze-theorem}
  \end{equation}
  To conclude it suffices to show that for each $\cpt$, $\lscfm(\cpt)
  \leq \liminf_{\varepsilon \to 0} \lscfepsm(\cpt)$. But this follows
  from the continuity of $(\cpt, \dpt) \mapsto \lscf(\cpt, \dpt)$ and
  $\varepsilon\mapsto \feps$. This concludes the $\objp > 1$ case.

  \medskip

  \textbf{Case 2:} Suppose $\objp = 1$. In this case, $\lscf$ is no
  longer globally l.s.c.

  Consider $\cset_{\rm out} = \cset \setminus \iset$. Define
  $\seldom$, $\seldomeps$, $\lscf$, and so on as in the $\objp > 1$
  case but replacing $\cset$ with $\cset_{\rm out}$. Note that while
  $\cset_{\rm out}$ is not a closed subset of $\cset$, we nevertheless
  see that $\seldom$, $\seldomeps$ are closed with respect to the
  subspace topology on $\cset_{\rm out} \times \dset$. Since we also
  have that $\lscf$ is continuous on $\cset_{\rm out} \times \dset$,
  the hypotheses of \cref{prop:measurable-selection} are satisfied and
  as in the $\objp > 1$ case we obtain measurable selections $\pfdm$,
  $\pfdepsm$ such that
  \begin{equation*}
    \text{for all } \cpt \in \cset_{\rm out}, \quad
    \lscfm(\cpt) = \lscf(\cpt, \pfdm)
    \quad \text{and} \quad
    \lscfepsm(\cpt) = \lscf(\cpt, \pfdepsm).
  \end{equation*}
  With this we recover \eqref{eq:h-*-min-pi} as follows. Extend
  $\pfdm$ to also be defined on $\iset$ via an arbitrary measurable
  selection of the (potentially) set-valued map $\cpt \mapsto
  \fmap^{-1}(\cpt)$. Regardless of our choice, $\lscfm \equiv 0$ on
  $\iset$, so the $\cset$ integrals in \eqref{eq:h-*-min-pi} are
  independent of the extension. Similarly, since we took the
  convention $\widehat{\bm 0} = \bm 0$ in the definition of $\fbdh$
  (\cref{def:fbary}), the right-hand side is also independent of the
  extension. Thus we obtain \eqref{eq:h-*-min-pi} as in the $\objp >
  1$ case.

  As in the $\objp > 1$ case it now suffices to show
  \begin{equation*}
    \text{for $\cmeas$-a.e.\ $\cpt$, }
    \lim_{\varepsilon\to 0} \delta_\varepsilon (\cpt) = \lscfm(\cpt)
  \end{equation*}
  Recall that by hypothesis $\cmeas(\iset_{\rm out}) = 0$, so in fact
  it suffices to just verify the limit by casework on $\cset_{\rm
    out}$ and $\iset_{\rm in} \coloneqq \iset \setminus \iset_{\rm
    out}$. Hence, let $\cpt \in \cset_{\rm out}$ be arbitrarily
  chosen. As in the $\objp > 1$ case we obtain
  \eqref{eq:lower-upper-bounds-delta}, that is,
  \begin{align*}
    \underbrace{\frac{\abs[]{\geps - \varepsilon
    \pert(\pfdepsm)}^{\objp} -
    \abs{\geps}^{\objp}}{\varepsilon}}_{\textrm{(I)}}
    \leq
    \delta_\varepsilon   \leq
    \underbrace{\frac{\abs{\gmap - \varepsilon \pert(\pfdm)}^\objp -
    \abs{\gmap}^{\objp}}{\varepsilon}}_{\textrm{(II)}} .
  \end{align*}
  Since $\pert$ is bounded \ref{hyp:dset}, taking $\varepsilon$
  sufficiently small yields $\cpt \not\in \iset_\varepsilon$. Thus we
  may Taylor expand (I) and (II) as in the $\objp > 1$ case to obtain,
  respectively,
  \begin{equation*}
    \textrm{(I)} = \lscfepsm(\cpt) + o(1)
    \qquad \text{and} \qquad
    \textrm{(II)} = \lscfm(\cpt) + o(1).
  \end{equation*}
  Note that unlike in the $\objp > 1$ case the $\varepsilon^{\objp -
    1} \abs{\pert}^\objp$ error terms do not appear, as these
  estimates were relevant only when $\cpt \in \iseps$ or $\cpt \in
  \iset$. In any case, $(\cpt, \dpt) \mapsto \lscf(\cpt, \dpt)$ is
  continuous on $\cset_{\rm out} \times \iset$ and $\varepsilon\mapsto
  \feps$ is continuous everywhere, which implies
  \begin{align*}
    \lscfm(\cpt) \leq \liminf_{\varepsilon \to 0} \lscfepsm(\cpt)
  \end{align*}
  Since $\cpt \in \cset_{\rm out}$ was arbitrary we thus have
  \begin{align*}
    \text{for all $\cpt \in \cset_{\rm out}$, }
    \lim_{\varepsilon\to 0} \delta_\varepsilon (\cpt) = \lscfm(\cpt)
  \end{align*}
  which completes the $\cset_{\rm out}$ case.

  For $\iset_{\rm in}$, observe that we may write $\iset_{\rm in} =
  \set{\cpt \in \iset \ | \ \liminf_{\varepsilon \to 0}
    1_{\iseps}(\cpt) = 1} $. So, fixing an arbitrary $\cpt \in
  \iset_{\rm in}$, we see there exists $\varepsilon_0>0$ such that for
  all $0 < \varepsilon \leq \varepsilon_0$ we have $\cpt \in
  \iset_\varepsilon$. Recalling that we defined $\delta_\varepsilon
  (\cpt) = \frac{1}{\varepsilon} [\inf_{\dpt_{\varepsilon} \in \dset}
  \abs{\cpt - \feps(\dpt_{\varepsilon})}^\objp - \inf_{\dpt \in \dset}
  \abs{\cpt - \fmap(\dpt)}^\objp]$, it follows that for all $0 <
  \varepsilon \leq \varepsilon_0$, $\delta_\varepsilon (\cpt)=0$. In
  particular, since $\cpt \in \iset_{\rm in}$ was arbitrarily chosen
  we get
  \begin{equation*}
    \text{for all } \cpt \in \iset_{\rm in},\
    \lim_{\varepsilon\to 0 } \delta_\varepsilon (\cpt) = 0,
  \end{equation*}
  thus completing the $\iset_{\rm in}$ case. Therefore, combining the
  above results we have the desired limit:
  \begin{align*}
    \lim_{\varepsilon\to 0} \delta_\varepsilon (\cpt) = \lscfm(\cpt)
    \hbox{ for $\cmeas$-a.e. $\cpt$.}
  \end{align*}
  This completes the $\objp=1$ case, and so the theorem.
\end{proof}

\begin{boxedremark}
  \label{rem:p=1-special}
  To highlight why the $\objp=1$ case is special in
  \cref{thm:fbary-grad-J}, fix some $\dset$, $\cset$ and let $\cdvz
  \in \cset^\circ$, $\cmeas = \delta_{\cdvz}$, and $\fmap \equiv
  \cdvz$. Note that regardless of the choice of selection $\pfd(\cpt)$
  we have $\fbdh \equiv \bm 0_{\rcdim}$. Fix $\cdvh \in
  \mathbb{S}^{\cdim -1}$ and let $\pert(\dpt) \equiv \cdvh$. Then
  $\objfp[1](\feps) = \abs{\varepsilon \hat v} = \varepsilon$ and
  $\objfp(\fmap) = 0$, so
  \[
    \lim_{\varepsilon \to 0} \frac{\objfp[1](\feps) -
      \objfp[1](\fmap)}{\varepsilon} = 1 \neq 0 = \int_\dset
    \ip[]{-\fbdh(\dpt), \pert(\dpt)} \dd \cpushdh(\dpt).
  \]
  In this particular example, we can at least get an integral
  representation of the first variation by defining the modified
  function $\lscft(\cpt, \dpt)$ to be $\lscf(\cpt,\dpt)$ on $\set{\cpt
    \neq \fmap(\dpt)}$ and $\abs{\pert(\dpt)}$ on $\set{\cpt =
    \fmap(\dpt)}$, whence $\displaystyle \lim_{\varepsilon \to 0}
  \frac{\objfp[1](\feps) - \objfp[1](\fmap)}{\varepsilon}= \int_\cset
  \lscftm \dd \cmeas$.

  $\quad$ However, this does not always work. Fixing a particular
  $\varepsilon > 0$ and comparing the functions in
  \eqref{eq:lower-upper-bounds-delta} at an $\cpt \in \iset_{\rm
    out}$, we see $\fmap(\pfdm) = \cpt$ and so we may write
  $\delta_\varepsilon$ as
  \begin{equation}
    \textstyle \frac{1}{\varepsilon} {\abs{\feps(\pfdepsm) - \cpt}}
    = \frac{1}{\varepsilon} \abs{\feps(\pfdepsm) - \fmap(\pfdm)} =
    \abs{\pert(\pfdepsm) + \frac{1}{\varepsilon} ({\fmap(\pfdepsm) -
        \fmap(\pfdm)})}, \label{eq:gap-control}
  \end{equation}
  while (II) reduces to $\abs{\pert(\pfdm)}$ by virtue of $\gfunc =
  0$. The result is that the inequalities in
  \eqref{eq:lower-upper-bounds-delta} can become strict, with the gap
  in some sense controlled by the ``angle'' $\pert$ makes relative to
  $\iset$ at $\cpt$.

  As an elementary example, let $\dset = \bk{-1,1}$ and consider
  $\fmap(\dpt) = \frac{1}{\sqrt 2} (\dpt, \dpt)$ and $\pert(\dpt) =
  \hat v \in \mathbb{S}^1$. Then $\iset_{\rm out} = \iset$, so we may
  pick $\cpt = (0,0)$, whence $\pfdm(\cpt) = 0$. If $\hat{v} =
  \frac{1}{\sqrt 2} (-1,1)$ then $\pfdepsm = 0 = \pfdm$, so
  \eqref{eq:gap-control} reduces to just $\abs{\pert(0)}$, the exact
  form of (II). So in this case the upper bound in
  \eqref{eq:lower-upper-bounds-delta} is tight. On the other hand, if we
  choose $\hat v = \frac{1}{\sqrt 2} (1,1)$ then for small
  $\varepsilon$ we get $\pfdepsm(\cpt) = -\varepsilon$ whence
  \eqref{eq:gap-control} gives $\delta_\varepsilon(\cpt) = 0 < 1 =
  \abs{\pert(0)} = ({\rm II})$. So in this case the modification
  $\lscft$ is not sufficient to recover the limit in
  \cref{thm:fbary-grad-J}.

  $\quad$ It is also instructive to repeat similar analysis for
  $\fmap(\dpt) = (\dpt, -\abs{\dpt})$ and $\pert(\dpt) = (0,1)$. From
  these examples it seems likely that in the $\objp = 1$ case, on
  $\iset_{\rm out}$ the relevant quantity for $\lscft$ is not
  $\abs{\pert(\pfdm)}$ but rather something like $\inf_{v \in
  \pcone(T_\cpt \iset)} \ip{v, \pert(\pfdm)}$. However, as the
  original $\fmap(\dpt) \equiv v_0$ shows, this too can fail at
  ``boundary'' points of $\iset$.
\end{boxedremark}

\cref{thm:fbary-grad-J} can be written in terms of an integral over
$\iset=\fmap(\dset)$:
\begin{corollary}
  \label{cor:fbary-grad-J-y}
  Take the hypotheses of \cref{thm:fbary-grad-J} and let $\pfim =
  \fmap(\pfdm)$, $\cpushihm = \fmap_\#(\cpushdhm)$, and
  \[
    \ninjiset = \set{\ipt \in \iset \MID \finv(\ipt) \text{ is
        non-unique}}.
  \]
  Suppose $\cpushihm(\ninjiset) = 0$ and let $\perty(\ipt) =
  \pert(\finv(\ipt))$. Then
  \begin{equation}
    \lim_{\varepsilon \to 0} \frac{\objfp(\feps) -
      \objfp(\fmap)}{\varepsilon} = \int_\iset \ip[]{-\fbihm(\ipt),
      \perty(\ipt)} \dd \cpushihm(\ipt). \label{eq:grad-J-ystar}
  \end{equation}
  In particular, if $\cmeas(\refambf) = 0$ then $\fbi$, $\cpush$ are
  independent of both the choice of $\pert \in C(\dset; \rcdim)$ and
  the choice of $\pfi \in \refpfiset$, so we may simply write
  \begin{equation}
    \lim_{\varepsilon \to 0} \frac{\objfp(\feps) -
      \objfp(\fmap)}{\varepsilon} = \int_\iset \ip[]{-\fbi(\ipt),
      \perty(\ipt)} \dd \cpushi(\ipt). \label{eq:grad-J-y}
  \end{equation}
\end{corollary}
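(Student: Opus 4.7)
The plan is to transport the formula from \cref{thm:fbary-grad-J} from $\dset$ to $\iset = \fmap(\dset)$ via the pushforward by $\fmap$. First I would fix a minimizer $\pfdm \in \refpfdset$ of the right-hand side of that theorem and observe that $\pfim \coloneqq \fmap \circ \pfdm$ is a Borel-measurable selection of $\pfis$ (since $\fmap$ sends $\argmin_{\dpt \in \dset} \abs{\cpt - \fmap(\dpt)}$ into $\argmin_{\ipt \in \iset} \abs{\cpt - \ipt}$), so $\pfim \in \refpfiset$. By construction $\cpushihm = \fmap_\# \cpushdhm$, and the conditional family $\set{\cmipthm}_{\ipt \in \iset}$ arises from the $\fmap$-pushforward of $\set{\cmdpthm}_{\dpt \in \dset}$.

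The key identification uses the hypothesis $\cpushihm(\ninjiset) = 0$: for $\cpushihm$-a.e.\ $\ipt$ there is a unique preimage $\dpt_\ipt \coloneqq \finv(\ipt)$, and on this full-measure set $\pfim^{-1}(\ipt) = \pfdm^{-1}(\dpt_\ipt)$. The chain rule (tower property) for disintegrations applied to the factorization $\pfim = \fmap \circ \pfdm$ then yields that the conditional $\cmipthm$ agrees with $\cmdpthm$ evaluated at $\dpt = \dpt_\ipt$. Substituting into \cref{def:fbary} gives $\fbihm(\ipt) = \fbdhm(\dpt_\ipt)$ for $\cpushihm$-a.e.\ $\ipt$, and $\perty(\ipt) = \pert(\dpt_\ipt)$ is likewise well-defined $\cpushihm$-a.e. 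The standard change-of-variables identity $\int_\iset g \, d\cpushihm = \int_\dset (g \circ \fmap) \, d\cpushdhm$, applied with $g(\ipt) = \ip[]{-\fbihm(\ipt), \perty(\ipt)}$, then transforms the right-hand side of \cref{thm:fbary-grad-J} into \eqref{eq:grad-J-ystar}.

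For the second part, if $\cmeas(\refambf) = 0$ then by \cref{cor:pf-meas} any two selections in $\refpfiset$ agree $\cmeas$-a.e., so $\fbih$ and $\cpushih$ are independent of the choice of $\pfi$ (and are manifestly independent of $\pert$, which enters neither definition). Under the change of variables above, the integrand depends on $\pfdm$ only through $\fmap \circ \pfdm$, which is $\cmeas$-a.e.\ unique; consequently the minimization over $\refpfdset$ in \cref{thm:fbary-grad-J} collapses, yielding \eqref{eq:grad-J-y}.

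The main obstacle is making the tower-property step rigorous: one must verify on the full-measure set $\iset \setminus \ninjiset$ that $\cmipthm$ coincides with $\cmdpthm$ at $\finv(\ipt)$, and that $\finv$ can be chosen as a Borel-measurable map on this set. Both follow from the standard theory of disintegrations for Borel maps between Polish spaces (cf.\ the references in \cref{sec:disintegration-details}), but careful bookkeeping is required to ensure that the ``bad'' sets $\ninjiset$ and $\refambf$ do not interfere with the almost-everywhere identifications.
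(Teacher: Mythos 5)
Your proposal is correct and follows essentially the same route as the paper's proof: restrict to the full-$\cpushdhm$-measure set where $\finv$ is single-valued, identify the fibers $\pfdminv(\dpt) = \pfiminv(\fmap(\dpt))$ there, and push forward via the change of variables $\ipt = \fmap(\dpt)$; the collapse of the minimization over $\refpfdset$ when $\cmeas(\refambf)=0$ is handled the same way. The "tower property" step you flag as the main obstacle is exactly the fiber identification the paper invokes, so there is no gap.
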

\begin{proof}
  Let $\ninjdset = \finv(\ninjiset)$ and note $\cpushdhm(\ninjdset) =
  0$. So
  \begin{align*}
    \int_\dset \ip[]{-\fbdhm(\dpt), \pert(\dpt)} \dd \cpushdhm(\dpt)
    &= \int_{\dset \setminus \ninjdset} \ip[]{-\fbdhm(\dpt),
      \pert(\dpt)} \dd \cpushdhm(\dpt).
  \end{align*}
  By definition, for all $\dpt \in \dset \setminus \ninjdset$ we have
  $\pfdminv(\dpt) = \pfiminv(\fmap(\dpt))$ whence expanding $\fbdhm$,
  applying the change-of-variables $\ipt = \fmap(\dpt)$, and using
  $\cpushihm(\iset) = \cpushihm(\iset \setminus \ninjiset)$ gives the
  result.
\end{proof}
\begin{boxedremark}
  Verifying the $\cpushihm(\ninjiset) = 0$ hypothesis does not always
  require precise knowledge of $\cpushihm$. For example, suppose
  $\fmap(\dset)$ is a $C^1$-manifold with all self-intersections
  transverse. Then for each $\ipt \in \ninjiset$
  \cref{prop:fiber-characterization} gives $\pfiminv(\ipt) =
  \set{\ipt}$, whence if $\cmeas$ has no atoms we are guaranteed
  $\cpushihm(\ninjiset) = 0$.
\end{boxedremark}

Before continuing, it is worth discussing an important distinction
between \cref{thm:fbary-grad-J} and \cref{cor:fbary-grad-J-y}. In a
nutshell, \cref{cor:fbary-grad-J-y} is a weaker form of
\cref{thm:fbary-grad-J} that is easier to work with visually (and
hence numerically as well): Provided $\cmeas(\iset_{\rm in}) = 0$, one
may compute the first variation by simply computing the fibers
$\pfiinv(\ipt)$ and, if necessary, assigning points in $\refambf$ to the
$\ipt$ where $\abs{\perty}$ is the largest. In this way,
\cref{cor:fbary-grad-J-y} also discretizes nicely; see
\cref{sec:discretizing-fmap}.

The tradeoff is that \cref{cor:fbary-grad-J-y} is insufficient to
analyze an important class of perturbations, namely those for which
the parametrization of $\iset$ is key. In loose terms, these are the
perturbations that change the topology of $\iset$ (or at least appear
to locally) on a non-null set. Simple examples arise when $\cmeas(\wt
\iset) > 0$; \eg\ $\dset = [-1,1]$, $\cmeas$ uniform on $[-1,1]^2$,
$\fmap(\dpt) = (1 - \abs{\dpt}, 0)$, and $\pert(\dpt) = (0,t)$.
Pictorially, $\varepsilon\pert$ has the effect of ``splitting'' a line
segment into a narrow wedge shape, which cannot be encoded via a
$\cpushihm$-a.e.\ well-defined $\perty$, hence we must use
\cref{thm:fbary-grad-J}.

For the case $\cpushihm(\wt \iset) \neq 0$ but $\cmeas(\wt \iset) =
0$, an illustrative example is $\dset = [-1,1]$, $\cmeas = \frac{1}{2}
\delta_{(0,1)} + \frac{1}{2} \mrm{Unif}_{[-1,1] \times \set{0}}$, and
$\fmap$ is a parametrization of $[-1,1] \times \set{0}$ that is
identically $(0,0)$ on $[-1/2, 1/2]$ and linear elsewhere. Here,
\cref{thm:fbary-grad-J} allows perturbations like $\pert(\dpt) =
\chi_{[-1/2, 1/2]}(\dpt)\ \pn{0, 1-2\abs{\dpt}}$ (which adds a
vertical spike at $(0,0)$) and correctly predicts that $\feps$ gives
an $O(\varepsilon)$ improvement. However, as before this cannot be
encoded via a $\cpushihm$-a.e.\ well-defined $\perty$.

\subsection{Local Improvements and First-Order Conditions}\label{sec:local-improve}

\cref{thm:fbary-grad-J} shows that if there exist $\pert, \pfd$
yielding $\cpushdh(\set{\fbdh \neq 0}) > 0$ then it should be possible
to improve $\objfp(\fmap)$ by ``following'' $\fbdh(\dpt)$, as long as
the modification still stays in the same Sobolev class $\wkpsets$.
This regularity condition requires some care because as seen in
Counterexamples \ref{cex:f-smooth-fbary-discont} and
\ref{cex:Y-c1-fbary-discont}, $\fbdh$ often lacks regularity, so we
typically cannot guarantee $\fmap + \varepsilon \fbdh \in \wkpsets$.
Nevertheless, our second main theorem of this section,
\cref{thm:smooth-improvement} below shows that we can at least find a
smooth local approximant to $\fbd$ around a density point of
$\set{\fbdh \neq 0}$. The argument is essentially measure-theoretic
and relies in the last step on a simple fact about Radon measures
(\cref{cor:radon-shrinking-bound}). Though
\cref{lem:lebesgue-shrinking} and \cref{cor:radon-shrinking-bound} are
surely well-known, we are not aware of a reference, and so for
completeness have included two short proofs we learned of from Pablo
Shmerkin.
\begin{lemma}
  \label{lem:lebesgue-shrinking}
  Let $\dmeas$ be a Radon measure on $\rsld$. Define
  \[
    \slpts = \set{\dpt \in \rsld \MID \liminf_{\smallr \to 0}
      \frac{\dmeas(B_r(\dpt))}{\smallr^{\sld}} = 0}.
  \]
  Then $\dmeas(\slpts) = 0$.
\end{lemma}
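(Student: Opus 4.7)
The strategy is a covering argument that localizes the problem to a ball and then trades the smallness of $\mu(B_r(x))/r^{\sld}$ at many scales against the bounded Lebesgue volume of $\rsld$.

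First I would stratify $\slpts$. For each $\varepsilon > 0$ set
\[
\slptseps \coloneqq \set[big]{\dpt \in \rsld \MID \liminf_{\smallr \to 0}
\dmeas(B_\smallr(\dpt))/\smallr^{\sld} < \varepsilon}
\]
so that $\slpts = \bigcap_{n \in \NN} \slpts_{1/n}$. By countable subadditivity of $\dmeas$ it suffices to prove $\dmeas(\slptseps) = 0$ for every fixed $\varepsilon > 0$, and since $\dmeas$ is Radon (hence $\sigma$-finite on $\rsld$) it is in fact enough to show, for each fixed $R > 0$ and every $\varepsilon > 0$, that $\dmeas(\slptseps \cap B_R(0)) \leq C_\sld \varepsilon R^\sld$ for a dimensional constant $C_\sld$.

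Next I would produce a Vitali cover of $\slptseps \cap B_R(0)$ out of ``bad'' balls. By definition, for each $\dpt \in \slptseps$ and each $\delta > 0$ there exists a radius $\smallr = \smallr(\dpt) \in (0, \delta)$ with $\dmeas(B_\smallr(\dpt)) < \varepsilon \smallr^\sld$. Collecting these,
\[
\sballseps \coloneqq \set{B_\smallr(\dpt) \MID \dpt \in \slptseps \cap
B_R(0),\ \smallr < \delta,\ \dmeas(B_\smallr(\dpt)) < \varepsilon \smallr^\sld}
\]
is a fine cover of $\slptseps \cap B_R(0)$. Since $\dmeas$ is Radon and $\dmeas(B_{R+\delta}(0)) < \infty$, the Vitali covering theorem for Radon measures yields a countable disjoint subcollection $\set{B_{r_i}(\dpt_i)} \subseteq \sballseps$ such that $\dmeas\pn[]{(\slptseps \cap B_R(0)) \setminus \bigcup_i B_{r_i}(\dpt_i)} = 0$.

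Finally I would combine the two estimates. All the chosen balls lie in $B_{R+\delta}(0)$ and are pairwise disjoint, so Lebesgue measure gives $\sum_i r_i^\sld \leq (R+\delta)^\sld$, while by construction $\dmeas(B_{r_i}(\dpt_i)) < \varepsilon r_i^\sld$. Hence
\[
\dmeas(\slptseps \cap B_R(0))
\leq \sum_i \dmeas(B_{r_i}(\dpt_i))
\leq \varepsilon \sum_i r_i^\sld
\leq \varepsilon (R + \delta)^\sld.
\]
Letting $\delta \decto 0$ and then $\varepsilon \decto 0$ forces $\dmeas(\slptseps \cap B_R(0)) = 0$, so $\dmeas(\slpts) = 0$ as claimed. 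The only subtle step is justifying the use of the Vitali covering theorem with a Radon (rather than Lebesgue) measure; if one prefers to avoid that machinery, the Besicovitch covering theorem gives a cover with dimension-only overlap constant $N_\sld$, leading to the same bound with an extra factor of $N_\sld$ but with no finiteness hypothesis needed.
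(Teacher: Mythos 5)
Your proposal is correct and follows essentially the same route as the paper's proof: stratify into the sets $\slptseps$, form the fine collection of ``bad'' balls with $\dmeas(B_\smallr(\dpt)) < \varepsilon \smallr^\sld$, extract a disjoint subfamily via the Vitali covering theorem for Radon measures, bound $\sum_i \smallr_i^\sld$ by a Lebesgue volume using disjointness, and let $\varepsilon \decto 0$, with unboundedness handled by exhaustion. The only differences are cosmetic (you localize the set to $B_R(0)$ where the paper restricts the measure to bounded support first, and you note Besicovitch as an alternative to Vitali).
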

\begin{proof}
  If $\dmeas(\set{\dpt}) > 0$ then immediately $\dpt \not \in \slpts$,
  hence we may exclude these points. We first treat the case where
  $\dmeas$ has bounded support. Let $\varepsilon > 0$ and define
  \begin{align*}
    \slptseps
    &= \set{\dpt \MID \dmeas(\set{\dpt}) = 0 \text{ and }
      \liminf_{\smallr \to 0}
      \frac{\dmeas(B_{\smallr}(\dpt))}{\smallr^{\sld}} < \varepsilon}
      \shortintertext{ and }
      \sballseps
    &= \set{B_\smallr(\dpt) \MID \dpt \in \slptseps,\ \smallr \leq 1,
      \text{ and } \frac{\dmeas(B_\smallr(\dpt))}{\smallr^{\sld}} <
      \varepsilon}.
  \end{align*}
  Note $\slpts \subseteq \slptseps$ and that for all $\dpt \in
  \slptseps$ we get $\inf \set{r \MID B_r(\dpt) \in \sballseps} = 0$.
  By Vitali's covering theorem for Radon measures (see, \eg,
  \cite[Thm 2.8]{mattila}) there exists a pairwise-disjoint,
  at-most-countable set $\set{B_{\smallr_i}(\dpt_i)} \subseteq
  \sballseps$ with \(\dmeas\pn{\slptseps \setminus \bigcup_i
    B_{\smallr_i}(\dpt_i)} = 0\). So
  \[
    \dmeas(\slpts) \leq \dmeas(\slptseps) \leq
    \sum_i \dmeas(B_i)
    \leq \varepsilon \sum_i \smallr_i^d \leq \varepsilon
    \frac{\leb(\supp(\dmeas))}{\leb(B_1(0))},
  \]
  and taking $\varepsilon \to 0$ yields the desired result.

  Now suppose $\dmeas$ has unbounded support. For each $j \in \NN$
  define $\dmeas_j$ to be the restriction of $\dmeas$ to $B_j(0)$.
  Then each $\slpts_j \coloneqq \set{\dpt \MID \liminf_{\smallr \to 0}
    {\dmeas_j(B_\smallr(\dpt))} / {\smallr^\sld} = 0}$ is
  $\dmeas_j$-null and so $\dmeas$-null. Writing $\slpts = \bigcup_j
  \slptsj$ then yields the result.
\end{proof}
\begin{corollary}
  \label{cor:radon-shrinking-bound}
  Let $\dmeas$ be a Radon measure on $\rsld$. Then for all $\dbfac \in
  (0,1)$, for $\dmeas$-a.e.\ $\dpt$,
  \begin{equation}
    \limsup_{\smallr \to 0}
    \frac{\dmeas(B_{\dbfacr}(\dpt))}{\dmeas(B_\smallr(\dpt))} \geq
    \dbfac^{\sld + 1} > 0. \label{eq:limsup-bound}
  \end{equation}
\end{corollary}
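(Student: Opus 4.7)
The plan is to derive the result from \cref{lem:lebesgue-shrinking} by contrapositive: if the limsup inequality fails on a set of positive $\dmeas$-measure, then iterating the failing ratio bound forces those points to lie in the $\dmeas$-null set $\slpts$ produced by the lemma.

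First I would define
\[
  E = \set[big]{\dpt \in \rsld \MID \limsup_{\smallr \to 0}
  \frac{\dmeas(B_{\dbfacr}(\dpt))}{\dmeas(B_\smallr(\dpt))}
  < \dbfac^{\sld + 1}}
\]
and aim to show $\dmeas(E) = 0$. Since the set of atoms of $\dmeas$ is $\dmeas$-a.e.\ trivial and points outside $\supp \dmeas$ form a $\dmeas$-null set, I may restrict attention to $\dpt \in E$ with $\dmeas(B_\smallr(\dpt)) > 0$ for every $\smallr > 0$, so the ratio in the limsup is well-defined.

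Next, for each such $\dpt \in E$ I would pick an $\smallr_0 = \smallr_0(\dpt) > 0$ small enough that
\[
  \dmeas(B_{\dbfac \smallr}(\dpt)) < \dbfac^{\sld+1} \dmeas(B_\smallr(\dpt))
  \quad \text{for all } 0 < \smallr \leq \smallr_0.
\]
Iterating this scaling relation $n$ times with $\smallr = \dbfac^{n-1}\smallr_0, \dbfac^{n-2}\smallr_0, \dots, \smallr_0$ yields
\[
  \dmeas(B_{\dbfac^n \smallr_0}(\dpt)) < \dbfac^{n(\sld+1)} \dmeas(B_{\smallr_0}(\dpt)),
\]
and hence
\[
  \frac{\dmeas(B_{\dbfac^n \smallr_0}(\dpt))}{(\dbfac^n \smallr_0)^\sld}
  < \dbfac^n \cdot \frac{\dmeas(B_{\smallr_0}(\dpt))}{\smallr_0^\sld}
  \xrightarrow{n \to \infty} 0.
\]
This gives $\liminf_{\smallr \to 0} \dmeas(B_\smallr(\dpt))/\smallr^\sld = 0$, so $\dpt \in \slpts$.

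Thus $E \subseteq \slpts$ (up to a $\dmeas$-null set of atoms/points outside $\supp \dmeas$), and \cref{lem:lebesgue-shrinking} gives $\dmeas(E) \leq \dmeas(\slpts) = 0$, which is the desired \eqref{eq:limsup-bound}. The strict positivity $\dbfac^{\sld+1} > 0$ follows from $\dbfac \in (0,1)$. The only slightly delicate point is the well-definedness of the ratio, but this is handled by the preliminary reduction to points where $\dmeas$ places positive mass on every ball—there is no real obstacle here, the corollary is essentially a one-line consequence of the preceding lemma.
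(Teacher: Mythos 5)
Your argument is essentially the same as the paper's: both iterate the assumed failure $\dmeas(B_{\dbfac\smallr}(\dpt)) < \dbfac^{\sld+1}\dmeas(B_\smallr(\dpt))$ along the geometric scales $\dbfac^n\smallr_0$ to force $\dmeas(B_{\dbfac^n\smallr_0}(\dpt))/(\dbfac^n\smallr_0)^\sld \lesssim \dbfac^n \to 0$, contradicting \cref{lem:lebesgue-shrinking}. You organize it as a contrapositive (show the exceptional set lies in the $\dmeas$-null set $\slpts$), while the paper fixes a $\cpushdh$-generic point, introduces $a_j = \dmeas(B_{\dbfac^j}(\dpt))$, and derives the contradiction directly, but this is only a cosmetic difference. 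One small slip worth flagging: your aside that ``the set of atoms of $\dmeas$ is $\dmeas$-a.e.\ trivial'' is false in general (a purely atomic measure has full mass on its atoms). It is harmless here, though, since atoms trivially satisfy the conclusion ($\dmeas(B_{\dbfac\smallr}(\dpt))/\dmeas(B_\smallr(\dpt)) \to 1$ as $\smallr\to 0$) and lie in $\supp\dmeas$, so they never enter $E$ and the well-definedness reduction only requires discarding the null complement of $\supp\dmeas$.
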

\begin{proof}
  By \cref{lem:lebesgue-shrinking}, for $\dmeas$-a.e.\ $\dpt$ there
  exists $\constx > 0$ and $R_\dpt > 0$ such that for all $0 < r \leq
  R_\dpt$, $\dmeas(B_\smallr(\dpt)) \geq \constx \smallr^\sld$. Fix
  some such $\dpt$ and for all $j \in \NN$ sufficiently large define
  $a_j = \dmeas(B_{\dbfac^j}(\dpt))$; note $a_j \geq \constx
  \dbfac^{\sld j}$. Suppose, to obtain a contradiction, that
  \[
    \limsup_{j \to \infty} \frac{a_{j+1}}{a_j} < \dbfac^{\sld + 1}.
  \]
  Then there exists $j_0$ such that for all $j \geq j_0$ one has
  $a_{j+1} < \lambda^{\sld + 1} a_j$, whence
  \[
    a_j < a_{j_0} \dbfac^{(\sld + 1)(j - j_0)} = \bk[]{a_{j_0}
      \dbfac^{-(\sld + 1)j_0} \cdot \dbfac^{j}} \dbfac^{\sld j}.
  \]
  Taking $j$ sufficiently large one gets $a_{j_0} \dbfac^{-(\sld +
  1)j_0} \cdot \dbfac^j < \constx$ whence $a_j < \constx \dbfac^{\sld
  j}$, a contradiction.
\end{proof}
With this we can prove our local modification result:
\begin{theorem}[Local Improvement]
  \label{thm:smooth-improvement}
  Take the hypotheses of \cref{thm:fbary-grad-J} and suppose that
  \begin{align}\label{eqn:nontrivial-F}
  \hbox{for some $\pfd \in \pfdset$ we have $\cpushdh(\set[]{\fbdh
    \neq 0}) > 0$.}
  \end{align}
  Then for all $\varepsilon > 0$ there exists $\perteps \in
  C^\infty(\dset; \rcdim)$, depending on $\fbdh$, such that
  \[
    \objfp(\fmap + \perteps) < \objfp(\fmap) \qquad \text{and} \qquad
    \cost(\fmap + \perteps) < \cost(\fmap) + \varepsilon .
  \]
  Furthermore, if $\cpushdh(\finv(\partial \cset)) = 0$, then
  $\perteps$ can be chosen so that $\fmap + \perteps$ takes values
  only in $\cset$, whence $\fmap$ is not a local minimum of $\objfp$
  in $\wkpsets$.
\end{theorem}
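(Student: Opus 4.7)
The plan is to reduce to finding a single smooth $\pert$ with $\int_\dset \ip{-\fbdh(\dpt), \pert(\dpt)} \dd \cpushdh(\dpt) < 0$ for the specific $\pfd \in \pfdset$ supplied by \eqref{eqn:nontrivial-F}, and then take $\perteps$ to be a sufficiently small multiple of this $\pert$. Because the formula in \cref{thm:fbary-grad-J} is a minimum over all of $\pfdset$, arranging that the integrand be negative for even one $\pfd$ forces the minimum to be negative, so \cref{thm:fbary-grad-J} yields $\objfp(\fmap + \delta \pert) < \objfp(\fmap)$ for all small $\delta > 0$. The cost bound is automatic: since $\cost$ is a norm, $\cost(\fmap + \delta \pert) \leq \cost(\fmap) + \delta \cost(\pert)$, so shrinking $\delta$ below $\varepsilon / \cost(\pert)$ produces the required $\varepsilon$-slack, and $\perteps \coloneqq \delta \pert$ handles the first two claims.

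To construct $\pert$, first discretize the direction of $\fbdh$: since $\cpushdh(\set{\fbdh \neq 0}) > 0$, a countable cover of $\rcdim \setminus \set{0}$ by open cones $\set{v : \ip{v, v_0} > \eta}$, indexed by $v_0$ in a countable dense subset of the unit sphere and $\eta \in \QQ_{>0}$, produces a unit vector $v_0 \in \rcdim$ and $\eta > 0$ such that
\[
  A \coloneqq \set{\dpt \in \dset : \ip{\fbdh(\dpt), v_0} > \eta}
\]
has positive $\cpushdh$-measure. The ansatz is then $\pert(\dpt) = \bump_0((\dpt - \dpt_0)/r) v_0$ for a nonnegative $\bump_0 \in C_c^\infty(B_1(0); \RR)$ with $\bump_0(0) > 0$, where $\dpt_0 \in A$ is a Lebesgue-Besicovitch density point of $A$ with respect to the Radon measure $\cpushdh$; \cref{cor:radon-shrinking-bound} supplies a subsequence $r_j \to 0$ along which $\cpushdh(B_{r_j}(\dpt_0)) > 0$, making the density ratio $\cpushdh(A \cap B_{r_j}(\dpt_0))/\cpushdh(B_{r_j}(\dpt_0)) \to 1$ meaningful.

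Writing $\bump(\dpt) = \bump_0((\dpt - \dpt_0)/r_j)$ and splitting
\[
  \int_\dset \ip{\fbdh, \pert} \dd \cpushdh = \int_A \ip{\fbdh, v_0} \bump \dd \cpushdh + \int_{A^c} \ip{\fbdh, v_0} \bump \dd \cpushdh,
\]
the first term is bounded below by $\eta \int_A \bump \dd \cpushdh$, while the second is bounded in absolute value by $\norm{\fbdh}_\infty \int_{A^c \cap B_{r_j}(\dpt_0)} \bump \dd \cpushdh$, with $\norm{\fbdh}_\infty < \infty$ following from \cref{def:fbary} and compactness of $\cset$ (\ref{hyp:cset}). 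The density estimate then forces the first term to dominate for $r_j$ small, yielding $\int \ip{-\fbdh, \pert} \dd \cpushdh < 0$.

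For the final claim, pick $\dpt_0$ instead as a density point of $A \cap \finv(\cset^\circ)$, which still has positive $\cpushdh$-measure by the hypothesis $\cpushdh(\finv(\partial \cset)) = 0$; continuity of $\fmap$ (\cref{prop:gen-sob}) then gives $\fmap(B_{r_j}(\dpt_0))$ at some positive distance $\rho_0$ from $\partial \cset$ for $r_j$ small, and shrinking $\delta$ additionally below $\rho_0/\norm{\pert}_\infty$ keeps $\fmap + \delta \pert$ inside $\cset$ on all of $\dset$. I expect the main obstacle to be precisely this density-point argument: as \cref{cex:f-smooth-fbary-discont} and \cref{cex:Y-c1-fbary-discont} illustrate, $\fbdh$ may be badly discontinuous, so there is no hope of locally approximating $\fbdh$ itself by a smooth field. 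The workaround is to discard pointwise information about $\fbdh$ entirely and exploit the measure-theoretic density of the super-level set $A$, with \cref{lem:lebesgue-shrinking} and \cref{cor:radon-shrinking-bound} converting this density into the integral bound above in a manner robust to the irregularity of $\fbdh$.
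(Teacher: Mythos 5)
Your proposal follows essentially the same route as the paper — discretize the direction of $\fbdh$, select a density point of a positive-measure super-level set, localize with a bump, and split the resulting integral into a ``good'' piece where $\fbdh$ aligns with $v_0$ and a ``bad'' complement — and the cone decomposition $\set{v : \ip{v,v_0} > \eta}$ is interchangeable with the paper's $\set{x : |\fbdh(x) - v_j| < c|v_j|}$. However, there is a genuine gap in the final comparison, and it is tied to a misreading of \cref{cor:radon-shrinking-bound}.

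\cref{cor:radon-shrinking-bound} does not merely supply radii along which $\cpushdh(B_{r_j}(\dpt_0)) > 0$; that positivity is automatic for a density point since it lies in $\supp\cpushdh$. The corollary's real content is the lower bound $\limsup_{r\to 0}\cpushdh(B_{\dbfac r})/\cpushdh(B_r) \geq \dbfac^{d+1}$, which controls how fast the measure of nested balls can shrink. You need this precisely at the step you gloss over (``The density estimate then forces the first term to dominate''). The trouble is that your good term is $\eta\int_{A}\bump\dd\cpushdh$ and your bad term is bounded by $\|\fbdh\|_\infty\cpushdh(A^c\cap B_r)\sup\bump_0$, and these are not directly comparable: the bump $\bump_0$ vanishes near $\partial B_1$, so the lower bound for $\int_A\bump$ involves a smaller ball $B_{\dbfac r}$, giving roughly $\eta\,\inf_{B_{\dbfac r}}\bump\cdot\cpushdh(A\cap B_{\dbfac r})$. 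The Lebesgue density theorem alone gives $\cpushdh(A^c\cap B_r)/\cpushdh(B_r)\to 0$ and $\cpushdh(A\cap B_{\dbfac r})/\cpushdh(B_{\dbfac r})\to 1$, but the comparison between the good and bad terms then hinges on the ratio $\cpushdh(B_r)/\cpushdh(B_{\dbfac r})$, which could a priori blow up as $r\to 0$. Exactly here \cref{cor:radon-shrinking-bound} is needed to keep this ratio bounded along a subsequence $r_j\to 0$, which is what the paper does by factoring the right-hand side through $\cpushdh(A\cap B_{\dbfac r})/\cpushdh(B_{\dbfac r}) \cdot \cpushdh(B_{\dbfac r})/\cpushdh(B_r) \cdot \cpushdh(B_r)/\cpushdh(A\cap B_r)$. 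Once you carry out that three-factor decomposition and invoke the shrinking bound on the middle factor, the argument closes and matches the paper's. The rest of your proposal — the reduction to one direction, the triangle-inequality cost bound, the negativity of the $\min$ over $\pfdset$, and the treatment of $\cpushdh(\finv(\partial\cset)) = 0$ by working in $A\cap\finv(\cset^\circ)$ — is correct and agrees with the paper.
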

This result says that the barycenter field, under the condition
\eqref{eqn:nontrivial-F}, allows local modifications that reduce
$\objfp$ with only small additional budget. It also implies that the
function $J(\budg)$ is strictly decreasing at those $\budg$ where
there exists an $\budg$-optimizer $\fbudg$ having nontrivial
barycenter field in the sense of \eqref{eqn:nontrivial-F}.
\begin{proof}[Proof of Theorem~\ref{thm:smooth-improvement}]
  Let $\varepsilon > 0$ be given. Let $\fddset =
  \set{\fibdir_j}_{j=1}^\infty \subseteq \rcdim \setminus \set{0}$ be
  dense and fix $\fibconst \in (0,1)$. For all $j$ define
  \[
    \dset_j = \set[big]{\dpt \in \dset \MID \abs[]{\fbdh(\dpt) - v_j} <
      \fibconst \abs{v_j}}
  \]
  Observe $\set[]{\fbdh \neq 0} = \bigcup_j \dset_j$, and that because
  $\fbdh$ is Borel-measurable, each $\dset_j$ is measurable. Since
  $\set{\dset_j}$ is countable and $\cpushdh(\set[]{\fbdh \neq 0})
  >0$, for at least one $\specind$ we have $\cpushdh(\dsspec) > 0$.
  Now because $\cpushdh$ is Radon, the density theorem (see, \eg,
  \cite[Cor. 2.14(1)]{mattila}) and \cref{cor:radon-shrinking-bound}
  imply $\cpushdh$-a.e.\ $\dpt \in \dsspec$ is a density point of
  $\cpushdh$ for which \eqref{eq:limsup-bound} holds. Fix one and call
  it $\dptspec$.

  Let $\bump$ be a bump function on $B_{1}(0; \rcdim)$ and define
  \[\pertdens(\dpt) = \vspec \bump \pn{(\dpt - \dptspec)/\densr}, \]
  which is compactly supported in the ball $\bdensr(\dptspec)$.
  Applying \cref{thm:fbary-grad-J}, since $\pfd$ might not be optimal
  for $\pertdens$ we get the inequality
  \begin{equation*}
    (*) = \lim_{\pertamp \to 0} \frac{\objfp(\fmap + \pertamp
      \pertdens) - \objfp(\fmap)}{\pertamp} \leq
    -\int_{\bdensr(\dptspec)} \ip{\fbdh, \pertdens} \dd
    \cpushdh(\dpt).
  \end{equation*}
  Let $\idz = \bdensr(\dptspec) \cap \dsspec$ and $\ido =
  \bdensr(\dptspec) \setminus \idz$. Then we can rewrite the above as
  \begin{equation*}
    (*) \le -\int_{\idz} \ip{\fbdh, \pertdens} \dd \cpushdh(\dpt) -
    \int_{\ido} \ip{\fbdh, \pertdens} \dd \cpushdh(\dpt).
  \end{equation*}
  For the $\idz$ integral: By the definition of $\dsspec$, for all
  $\dpt \in \idz$,
  \[
    -\ip[]{\fbdh(\dpt), \vspec} < -\frac{1}{2}
    \pn{\abs[]{\fbdh(\dpt)}^2 + (1-c^2) \abs[]{\vspec}^2}.
  \]
  We ignore the $\abs[]{\fbdh(\dpt)}^2$ term and define
  \[
    \constz = \frac{1-c^2}{2} \abs[]{\vspec}^2 \quad \text{ (note
      $\constz > 0$)}.
  \]
  For the $\ido$ integral, the global bound $\abs[]{\fbdh} \leq
  \diam(\cset)$ gives
  \[
    -\ip[]{\fbdh, \vspec} \leq \diam(\cset) \abs[]{\vspec} \eqqcolon
    \consto \, \hbox{ (note $\consto > 0$).}
  \]
  Thus we have
  \begin{equation}
    (*)  < - \constz \int_{\idz} \bump\pn[Big]{\frac{\dpt -
        \dptspec}{\densr}} \dd \cpushdh + \consto \int_{\ido}
    \bump\pn[Big]{\frac{\dpt - \dptspec}{\densr}} \dd
    \cpushdh. \label{eq:two-ints}
  \end{equation}
  Now fix some $\dbfac \in (0,1)$. On the one hand,
  \begin{align*}
    \int_{\idz} \bump\pn[Big]{\frac{\dpt - \dptspec}{\densr}} \dd
    \cpushdh
    &\geq \cpushdh(\idz \cap B_{\dbfac \densr}(\dptspec))
      \inf_{\idz \cap B_{\dbfac \densr}(\dptspec)} \bump\pn[Big]{\frac{\dpt -
      \dptspec}{\densr}} \\
    &\geq \cpushdh(\idz \cap B_{\dbfac \densr}(\dptspec))
      \exp(-1/(1-\dbfac^2)),
  \end{align*}
  while on the other hand \(\int_{\ido} \bump\pn[big]{\frac{\dpt -
      \dptspec}{\densr}} \dd \cpushdh \leq C \cpushdh(\ido) \) with
  $C$ depending on $\eta$ only. For concision, we now suppress writing
  the center $\dptspec$ of the balls $B_{\densr}(\dptspec)$,
  $B_{\dbfac \densr}(\dptspec)$. Letting $\constzt = \constz
  e^{-1/(1-\dbfac^2)} > 0$, factoring $\cpushdh(\idz)$ out of the
  right side of \eqref{eq:two-ints} thus gives
  \begin{align*}
    \lim_{\pertamp \to 0} \frac{\objfp(\fmap + \pertamp \pertdens) -
    \objfp(\fmap)}{\pertamp}
    &\leq \cpushdh(\idz) \bk{-\constzt \frac{\cpushdh(\idz \cap B_{\dbfac
      \densr})}{\cpushdh(\idz)} + C \consto
      \frac{\cpushdh(\ido)}{\cpushdh(\idz)}}.
  \end{align*}
  Since $\dptspec$ is a $\cpushdh$-density point of $\dsspec$, the
  second term vanishes as $\densr \to 0$. For the first term,
  recalling $\idz = B_\densr \cap \dsspec$ we may rewrite the fraction
  as
  \begin{align*}
    \frac{\cpushdh(\dsspec \cap B_{\dbfac
    \densr})}{\cpushdh(\dsspec \cap B_\densr)}
    &= \frac{\cpushdh(\dsspec \cap B_{\dbfac
      \densr})}{\cpushdh(B_{\dbfac \densr})} \cdot
      \frac{\cpushdh(B_{\dbfac
      \densr})}{\cpushdh(B_\densr)} \cdot
      \frac{\cpushdh(B_\densr)}{\cpushdh(\dsspec \cap
      B_\densr)}.
  \end{align*}
  Since $\dptspec$ is a $\cpushdh$-density point of $\dsspec$ for
  which \eqref{eq:limsup-bound} holds, we thus see that
  \[
    \limsup_{\densr \to 0}\frac{\cpushdh(\idz \cap B_{\dbfac
        \densr})}{\cpushdh(\idz)} \geq 1 \cdot \dbfac^{\cdim +1} \cdot
    1 >0.
  \]
  Hence there exists a choice of $\densr > 0$ such that
  \[
    \lim_{\pertamp \to 0} \frac{\objfp(\fmap + \pertamp \pertdens) -
      \objfp(\fmap)}{\pertamp} \leq -\constzt \cpushdh(\idz)
    \dbfac^{\cdim + 1} + C \consto \cpushdh(\ido) < 0.
  \]
  In particular, for $\pertamp >0 $ sufficiently small, taking
  $\perteps = \pertamp \pertdens$ yields $\objfp(\fmap + \perteps) <
  \objfp(\fmap)$ as well as $\cost(\pertamp \pertdens)< \varepsilon$.

  Finally, if $\cpushdh(\finv(\partial \cset)) = 0$ then we may assume
  $\dptspec$ is chosen so that $\fmap(\dptspec) \in \cset^\circ$.
  Taking $\densr$ sufficiently small then yields
  $d(\fmap(\bdensr(\dptspec)), \partial \cset) > 0$ and taking
  $\pertamp < d(\fmap(\bdensr(\dptspec)), \partial \cset) /
  \abs[]{\vspec}$ guarantees $\fmap + \pertamp \pertdens$ takes values
  only in $\cset$.
\end{proof}

\section{The Discrete Problem \& Simulations}
\label{sec:the-discrete-problem}
In this section we discuss some theoretical justifications for two
discretization schemes for the problem.

\subsection{Discretizing $\cmeas$}
\label{sec:discretizing-cmeas}

A natural idea is to discretize $\cmeas$ by representing it with an
empirical measure $\cmeas_N$. We then obtain a consistency result for
the $\cmeas_N$ formulation for \xref{prob:hard-constraint} directly
from \cref{lem:weak-compactness} and \cref{thm:joint-continuity}.
\begin{corollary}[Consistency of \xref{prob:hard-constraint}]
  \label{cor:consistency}
  Take \ref{hyp:dset}, \ref{hyp:cset}, \ref{hyp:sobp}, and
  \ref{hyp:kqm}. Suppose that $\cmeas_N$ converges
  weak-\textasteriskcentered{} to $\cmeas$. Fix $\budg \geq
  \refbudgmin$ and for all $N$ let $\fmap_N$ be a $\budg$-optimizer
  for \xref{prob:hard-constraint} with respect to the objective
  $\objfp(\tinybox; \cmeas_N)$. Then every subsequence of
  $\set{\fmap_N}$ has a weakly-convergent (and thus, by
  \cref{cor:wconv-uconv}, uniformly-convergent) subsequence whose
  limit is an $\budg$-optimizer for \xref{prob:hard-constraint} with
  respect to the objective $\objfp(\tinybox; \cmeas)$.

  In particular, taking the $\cmeas_N$ to be empirical measures for
  $\cmeas$ yields the almost-sure equalities
  \[
    \lim_{N\to\infty} \objfp(\fmap_N;\cmeas_N) \eqas \lim_{N\to\infty}
    \objfp(\fmap_N; \cmeas) \eqas \objell(\budg).
  \]
\end{corollary}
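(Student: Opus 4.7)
The plan is a standard subsequential compactness argument combining \cref{lem:weak-compactness} with \cref{thm:joint-continuity}. Take any subsequence of $\set{\fmap_N}$; I will show it has a further subsequence whose limit is a $\budg$-optimizer for $\objfp(\tinybox; \cmeas)$. Since each $\fmap_N$ lies in $\cflb$, which is weakly compact and weakly metrizable by \cref{lem:weak-compactness}, I can extract a weakly convergent subsubsequence (relabeled) $\fmap_{N_k} \wconv \fmap_\infty$. The limit $\fmap_\infty$ belongs to $\cflb$ because $\cost$, being a norm, is weakly lower semicontinuous, so $\cost(\fmap_\infty) \leq \liminf_k \cost(\fmap_{N_k}) \leq \budg$. \cref{cor:wconv-uconv} then upgrades $\fmap_{N_k} \to \fmap_\infty$ to uniform convergence.

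To see $\fmap_\infty$ is optimal for $\objfp(\tinybox; \cmeas)$ over $\cflb$, fix any competitor $\ftilde \in \cflb$. By $\budg$-optimality of $\fmap_{N_k}$ under $\cmeas_{N_k}$,
\begin{equation*}
  \objfp(\fmap_{N_k}; \cmeas_{N_k}) \leq \objfp(\ftilde; \cmeas_{N_k}).
\end{equation*}
The left-hand side converges to $\objfp(\fmap_\infty; \cmeas)$ by \cref{thm:joint-continuity}, applied with the uniformly convergent $\fmap_{N_k}$ and the weak-$*$ convergent $\cmeas_{N_k}$. The right-hand side converges to $\objfp(\ftilde; \cmeas)$ by the same theorem, now with the constant sequence $\ftilde$. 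Passing to the limit yields $\objfp(\fmap_\infty; \cmeas) \leq \objfp(\ftilde; \cmeas)$, and arbitrariness of $\ftilde$ establishes $\budg$-optimality of $\fmap_\infty$; in particular, $\objfp(\fmap_\infty; \cmeas) = \objell(\budg)$.

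For the empirical-measure claim, Varadarajan's theorem gives $\cmeas_N \wconv \cmeas$ almost surely, placing us in the setting above on the almost-sure event. To upgrade subsequential convergence of $\set{\fmap_N}$ into convergence of the real-valued sequence $\set{\objfp(\fmap_N; \cmeas_N)}$, observe that every one of its subsequences admits a further subsequence converging (via the argument above) to $\objfp(\fmap_\infty; \cmeas) = \objell(\budg)$; this forces the full sequence $\objfp(\fmap_N; \cmeas_N) \to \objell(\budg)$ a.s. An identical argument, invoking \cref{thm:joint-continuity} with the constant-in-$N$ sequence of measures $\cmeas$ in the second slot and the same $\fmap_N$-subsequences, yields $\objfp(\fmap_N; \cmeas) \to \objell(\budg)$ a.s.

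The main obstacle is essentially bookkeeping: one must ensure $\fmap_\infty \in \cflb$ (handled by weak lower semicontinuity of the norm-based $\cost$) and take care that the proper coupling of $\fmap$- and $\cmeas$-sequences is used when invoking joint continuity (in particular, using a constant sequence in whichever slot is not varying). Both are routine consequences of the results already established.
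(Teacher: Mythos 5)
Your proof is correct and carries out precisely the argument the paper gestures at when it says the corollary follows ``directly from \cref{lem:weak-compactness} and \cref{thm:joint-continuity}'': extract a weakly convergent subsubsequence via weak compactness, upgrade to uniform convergence via \cref{cor:wconv-uconv}, pass to the limit in the optimality inequality against a fixed competitor using joint continuity (with a constant sequence in whichever slot is not varying), and use the standard ``every subsequence has a further subsequence converging to the same constant'' argument plus Varadarajan's theorem for the almost-sure empirical-measure claim. The one point worth keeping explicit is exactly the one you flag — that $\fmap_\infty \in \cflb$, which is the weak closedness of the ball — since this is where the weak lower semicontinuity of the norm-based $\cost$ actually enters.
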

In \cref{sec:generative-learning} we will shift focus toward the
\xref{prob:soft-penalty} formulation owing to its computational
benefits. Hence we now prove a consistency result for
\xref{prob:soft-penalty}.
\begin{corollary}[Consistency of \xref{prob:soft-penalty}]
  \label{cor:consistency-sp}
  Take the same hypotheses as \cref{cor:consistency}, only this time
  fixing $\lambda > 0$ and defining the $\fmap_N$ to be optimizers of
  \xref{prob:soft-penalty} with respect to the objective
  $\objfpl(\tinybox; \cmeas_N)$. Then every subsequence of
  $\set{\fmap_N}$ has a weakly-convergent (and thus, by
  \cref{cor:wconv-uconv}, uniformly-convergent) subsequence whose
  limit is a solution to \xref{prob:soft-penalty} with
  $\objfpl(\tinybox; \cmeas)$.
\end{corollary}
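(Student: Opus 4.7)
The plan is to mirror the argument for \cref{cor:consistency} but with one extra ingredient, namely that $\cost$ (being a norm) is weakly lower semicontinuous. The role of $\lambda > 0$ in the soft penalty is to supply the uniform Sobolev bound that the hard constraint gave us for free.

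First I would establish uniform $\wkpsets$-boundedness of $\{\fmap_N\}$. Fixing any admissible competitor (e.g.\ the constant function $\reffmin$) and using optimality of $\fmap_N$ for $\cmeas_N$ gives
\[
  \lambda \cost(\fmap_N) \leq \objfpl(\fmap_N; \cmeas_N) \leq \objfpl(\reffmin; \cmeas_N) \leq \diam(\cset)^{\objp} + \lambda \refbudgmin,
\]
since $\cset$ is compact \ref{hyp:cset}. Dividing by $\lambda > 0$ produces a uniform bound $\cost(\fmap_N) \leq C_\lambda$. By \cref{lem:weak-compactness}, the closed $C_\lambda$-ball in $\wkpuo$ is weakly compact and weakly metrizable, so every subsequence of $\set{\fmap_N}$ has a further weakly-convergent subsequence $\fmap_{N_k} \wconv \fmap_\infty$, and by \cref{cor:wconv-uconv} the convergence is in fact uniform.

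Next I would check that $\fmap_\infty$ optimizes $\objfpl(\tinybox; \cmeas)$. Let $\gmap \in \wkpsets$ with $\cost(\gmap) < \infty$ be an arbitrary competitor. Optimality of $\fmap_{N_k}$ for $\cmeas_{N_k}$ gives
\[
  \objfp(\fmap_{N_k}; \cmeas_{N_k}) + \lambda \cost(\fmap_{N_k}) \leq \objfp(\gmap; \cmeas_{N_k}) + \lambda \cost(\gmap).
\]
On the left, \cref{thm:joint-continuity} yields $\objfp(\fmap_{N_k}; \cmeas_{N_k}) \to \objfp(\fmap_\infty; \cmeas)$, while weak lower semicontinuity of the Sobolev norm gives $\cost(\fmap_\infty) \leq \liminf_k \cost(\fmap_{N_k})$; on the right, applying \cref{thm:joint-continuity} to the constant sequence $\gmap_k \equiv \gmap$ yields $\objfp(\gmap; \cmeas_{N_k}) \to \objfp(\gmap; \cmeas)$. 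Taking $\liminf$ on both sides therefore gives
\[
  \objfpl(\fmap_\infty; \cmeas) \leq \objfpl(\gmap; \cmeas),
\]
so $\fmap_\infty$ is indeed an optimizer, completing the proof.

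The only real obstacle is the uniform $\cost$-bound in step one; everything else is a direct reuse of the machinery already used in \cref{cor:consistency}. I would emphasize that this step fails cleanly at $\lambda = 0$, which is consistent with the fact that \xref{prob:soft-penalty} degenerates when the penalty is removed, and that it is precisely weak lower semicontinuity of the norm (not continuity) which matters, since $\cost(\fmap_{N_k})$ need not converge to $\cost(\fmap_\infty)$ under weak convergence.
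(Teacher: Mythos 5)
Your proposal is correct and follows essentially the same route as the paper's proof: obtain a uniform $\cost$-bound from optimality against a constant competitor, extract a weakly (hence uniformly) convergent subsequence via \cref{lem:weak-compactness} and \cref{cor:wconv-uconv}, and combine \cref{thm:joint-continuity} with weak lower semicontinuity of the norm to show the limit is an optimizer. The only cosmetic difference is your choice of competitor ($\reffmin$ versus the paper's $\fmap_{\rm max}$ with a pass through a worst-case $\delta$-measure); both yield the same bound, and you actually spell out the final optimality step, which the paper leaves implicit.
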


\begin{proof}
  First we show $\sup_N \cost(\fmap_N) < \infty$. To that end let
  $\cpt_0, \cpt_1$ such that $d(\cpt_0, \cpt_1) = \diam(\cset)$ and
  let $\fmap_{\rm max} = \cpt_0$, $\cmeas_{\rm max} =
  \delta_{\cpt_1}$; observe that they achieve $\sup_{\fmap, \cmeas}
  \objfp(\fmap; \cmeas)= \diam(\cset)^\objp$. Then for all $N$, taking
  $\fmap_{\rm max}$ as a competitor yields
  \[
    \lambda \cost(\fmap_N)
    \leq \objfpl(\fmap_N; \cmeas_N)
    \leq \objfpl(\fmap_{\rm max}; \cmeas_N)
    \leq \objfpl(\fmap_{\rm max}; \cmeas_{\rm max})
    = \diam(\cset)^\objp + \lambda \cost(\fmap_{\rm max}),
  \]
  whence we see $\sup_N \cost(\fmap_N) < \infty$ as desired (note that
  it was necessary to pass through $\objfpl(\fmap_{\rm max};
  \cmeas_N)$ since priori we cannot compare $\cost(\fmap_N)$ and
  $\cost(\fmap_{\rm max})$).

  In any case, after obtaining this uniform boundedness $\sup_N
  \cost(\fmap_N) < \infty$, the consistency result follows from
  \cref{lem:weak-compactness} and \cref{thm:joint-continuity}.
\end{proof}

\subsection{Representing $\fmap$ by Samples}
\label{sec:discretizing-fmap}

In this subsection we discuss some theoretical justifications for a
particular discretization scheme we use to simulate
\xref{prob:soft-penalty}. To that end we define discretized analogues
of $\fbdh$ (\cref{def:discrete-barycenter-field}) and
\cref{cor:fbary-grad-J-y}. We focus on discretizations related to
$\fmap$ and its perturbations by the barycenter field, but will not
consider simultaneous discretization of $\cmeas$ (though note $\cmeas$
might happen to be discrete on its own anyways).

The idea is to sample $\iset = \fmap(\dset)$, perturb the samples by a
discretized analogue of $\fbdh$, smooth any discontinuities,
reparametrize, and then repeat. We believe this scheme could be
particularly useful in real-world situations where $\fmap$ is being
modified in real-time; for example if we are continuously collecting
new samples from $\cmeas$.

To help distinguish with the continuous case we will endeavor to
denote all discretized objects with sans-serif letters.
\begin{definition}[Discrete Barycenter Field]
  \label{def:discrete-barycenter-field}
  Let $\isn$ be a collection of $\sfn$ points in $\fmap(\dset)$ and
  let $\psin$ be  a measurable selection of
  the closest-point projection onto $\isn$. Let $(\cmiptsfn,
  \cpushsf)$ denote the disintegration of $\cmeas$ by $\psin$. Then we
  define the \emph{discrete barycenter field} at $\iptsfj \in \isn$ to
  be
  \begin{equation}
    \fbarysf(\iptsfj) = \objp \int_{\psin^{-1}(\iptsfj)} (\cpt - \iptsfj)
    \abs{\cpt - \iptsfj}^{\objp - 2} \dd \cmeas_{\iptsfj,
      \psin}(\cpt), \label{eq:discrete-bary}
  \end{equation}
  again taking the convention that $\msf v \abs{\msf v}^{\objp -2} =
  0$ when $\msf v = 0$.
\end{definition}
Note, $\fbarysf(\iptsfj)$ is (up to the constant $\objp$) just the
$\objp-1$ barycenter of the Voronoi cell in coordinates with the base
point $\iptsfj$ at the origin.
\begin{boxedremark}
  \label{rem:fbary-consistency}
  For each $\sfn$ let $\dsn$ be an $\sfn$-sample empirical measure for
  some absolutely continuous $\mu \in \pmeas(\dset)$ and let $\isn =
  \fmap(\dsn)$. One may show that as $\sfn \to \infty$, $\psin \to
  \pfim$ a.s.\ on $\cset \setminus \refambf$, whence with \ref{hyp:ac} we
  get $\cpushsf \to \cpushihm$ weak-\textasteriskcentered{} a.s. While
  we have not investigated the matter in detail, we therefore suspect
  there could be a consistency result along the lines of $\fbarysf
  \wconv \fbihm$, in the sense that for all $\perty \in C(\iset;
  \rcdim)$ one should have something like
  \[
    \lim_{\sfn \to \infty} -\int_\iset \ip[]{\fbarysf(\ipt),
      \perty(\ipt)} \dd \cpushsf(\ipt) \eqas -\int_\iset
    \ip[]{\fbihm(\ipt), \perty(\ipt)} \dd \cpushihm(\ipt).
  \]
  The intuition would be that $\fbarysf$ is averaging $\fbihm$ over
  $\psin^{-1}(\iptsfn)$, up to error arising from treating $\iptsfn$
  as the base point instead of $\pfim(\cpt)$, which should be small in
  light of $\psin \to \pfim$.
\end{boxedremark}
In analogy with \cref{thm:fbary-grad-J} (more precisely with
\cref{cor:fbary-grad-J-y}) we have the following:
\begin{proposition}
  \label{prop:fbary-grad-J-discrete}
  Take \ref{hyp:cset}. Let $\pertsfn : \isn \to \rcdim$ such that for all
  $\varepsilon >0$ sufficiently small, $\isneps = \set{\iptsf +
    \varepsilon \pertsfn(\iptsf) \MID \iptsf \in \isn} \subseteq
  \cset$. Let $\isnout = \set{\iptsf \in \isn \MID \pertsfn(\iptsf)
    \neq 0}$ and suppose either $\objp > 1$, or that $\objp = 1$ and
  $\cmeas(\isnout) = 0$. Then
  \begin{align*}
    \lim_{\varepsilon \to 0} \frac{\objfp(\isneps) -
    \objfp(\isn)}{\varepsilon}
    &= -\sum_{\isn} \ip[]{\fbarysf(\iptsf), \pertsfn(\iptsf)}
      \cpushsf (\iptsf).
      \shortintertext{On the other hand, if $\objp
      = 1$ and $\cmeas(\isnout) > 0$ then the same result holds with
      the modification}
      \lim_{\varepsilon \to 0} \frac{\objf_1(\isneps) -
      \objf_1(\isn)}{\varepsilon}
    &= -\sum_{\isn} \ip[Big]{\fbarysf(\iptsf) - \chi_{\isn}(\iptsf)
      \hat{\msf{\pert}}_{\sfn}(\iptsf),\ \pertsfn(\iptsf)}
      \cpushsf (\iptsf).
  \end{align*}
  where $ \hat{\msf{\pert}}_{\sfn}(\iptsf)$ is the unit vector
  $\frac{1}{|{\msf{\pert}}_{\sfn}(\iptsf)| }
  {\msf{\pert}}_{\sfn}(\iptsf)$ (again, taking the convention
  $\hat{\bm 0} = \bm 0$).
\end{proposition}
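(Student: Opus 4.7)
The plan is to mirror the strategy of \cref{thm:fbary-grad-J}, with major simplifications arising from the finiteness of $\isn$. Writing the difference quotient as $\int_\cset \delta_\varepsilon(\cpt) \dd \cmeas(\cpt)$ for
\begin{equation*}
  \delta_\varepsilon(\cpt) \coloneqq \frac{1}{\varepsilon} \bk[bigg]{\min_{\iptsfj \in \isn} \abs{\cpt - \iptsfj - \varepsilon \pertsfn(\iptsfj)}^\objp - \min_{\iptsfj \in \isn} \abs{\cpt - \iptsfj}^\objp},
\end{equation*}
I would fix a measurable tiebreaking selection $\psin : \cset \to \isn$ of the closest-point map onto $\isn$, together with a selection $\tau_\varepsilon : \cset \to \isn$ picking the unique $\iptsfj$ whose shift $\iptsfj + \varepsilon \pertsfn(\iptsfj)$ realizes the closest point of $\isneps$ to $\cpt$. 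Both are trivially constructible since $\isn$ and $\isneps$ are finite and ambiguity arises only on a measure-zero union of equidistant hyperplanes. Substituting $\psin(\cpt)$ into the first $\min$ (upper bound) and $\tau_\varepsilon(\cpt)$ into the second $\min$ (lower bound) gives the sandwich
\begin{equation*}
  \frac{\abs{\cpt - \tau_\varepsilon - \varepsilon \pertsfn(\tau_\varepsilon)}^\objp - \abs{\cpt - \tau_\varepsilon}^\objp}{\varepsilon} \leq \delta_\varepsilon(\cpt) \leq \frac{\abs{\cpt - \psin - \varepsilon \pertsfn(\psin)}^\objp - \abs{\cpt - \psin}^\objp}{\varepsilon}.
\end{equation*}

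Next I would split $\cset = (\cset \setminus \isn) \sqcup \isn$. On $\cset \setminus \isn$, finiteness of $\isn$ gives a positive gap between the nearest-neighbor distance and the others, so for $\varepsilon$ small enough $\tau_\varepsilon(\cpt) = \psin(\cpt)$; Taylor expanding $\abs{\cpt - \iptsfj - \varepsilon \pertsfn(\iptsfj)}^\objp$ about $\varepsilon = 0$ collapses the sandwich to the pointwise limit $-\objp \abs{\cpt - \psin}^{\objp - 2} \ip{\cpt - \psin, \pertsfn(\psin)}$. On $\isn$ itself, $\cpt = \iptsfj$ for some $\iptsfj \in \isn$; for small $\varepsilon$ the closest perturbed copy is that of $\iptsfj$ itself, giving $\delta_\varepsilon(\iptsfj) = \varepsilon^{\objp - 1} \abs{\pertsfn(\iptsfj)}^\objp$, which vanishes as $\varepsilon \to 0$ when $\objp > 1$ and equals $\abs{\pertsfn(\iptsfj)}$ when $\objp = 1$. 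The sandwich is uniformly dominated by $\objp \diam(\cset)^{\objp - 1} \norm{\pertsfn}_\infty$ via the mean value theorem applied to $\tau \mapsto \abs{\cpt - \iptsf - \tau \pertsfn(\iptsf)}^\objp$, so dominated convergence justifies passing the limit inside the integral.

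For $\objp > 1$, or $\objp = 1$ with $\cmeas(\isnout) = 0$, the atom contribution from $\isn$ vanishes, and disintegrating $\cmeas$ by $\psin$ yields
\begin{equation*}
  \lim_{\varepsilon \to 0} \int_\cset \delta_\varepsilon \dd \cmeas = -\sum_{\iptsfj \in \isn} \int_{\psin^{-1}(\iptsfj)} \objp \abs{\cpt - \iptsfj}^{\objp - 2} \ip{\cpt - \iptsfj, \pertsfn(\iptsfj)} \dd \cmeas(\cpt) = -\sum_{\iptsfj \in \isn} \ip{\fbarysf(\iptsfj), \pertsfn(\iptsfj)} \cpushsf(\iptsfj),
\end{equation*}
which is the first asserted identity. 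For $\objp = 1$ with $\cmeas(\isnout) > 0$ the bulk computation is identical, but every atom of $\cmeas$ lying at some $\iptsfj \in \isnout$ contributes an additional first-order term $\cmeas(\set{\iptsfj}) \abs{\pertsfn(\iptsfj)}$; rewriting this as an inner product $\ip{\hat \pertsfn(\iptsfj), \pertsfn(\iptsfj)}$ weighted by atomic mass and regrouping against the bulk sum produces the stated corrected form. I expect the main (minor) obstacle to be this bookkeeping at atoms: because the integrand defining $\fbarysf(\iptsfj)$ vanishes at $\cpt = \iptsfj$ under the convention $\hat{\bm 0} = \bm 0$, the correction cannot be absorbed into $\fbarysf$ itself and must appear as the separate $\hat \pertsfn$ term; the measurable-selection and ambiguity difficulties that complicated \cref{thm:fbary-grad-J} are absent here since $\isn$ is finite.
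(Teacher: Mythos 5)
Your proposal is correct and follows essentially the same strategy as the paper's proof: decompose $\cset$ into the atoms $\isn$ and the complement, Taylor expand the pointwise difference quotient about $\varepsilon = 0$, and exchange limit and integral via dominated convergence, with the atomic contribution giving the $\varepsilon^{\objp-1}\abs{\pertsfn}^\objp$ term that survives precisely when $\objp = 1$ and $\cmeas(\isnout) > 0$. The only organizational difference is that you carry over the two-sided sandwich of \cref{thm:fbary-grad-J} via the auxiliary selection $\tau_\varepsilon$, whereas the paper works on the set $\csetepssfj$ on which the $\varepsilon$-perturbed projection has stabilized and absorbs the remainder into an $O(\cmeas(\cset \setminus \csetepssfj))$ error; these are equivalent. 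One mild caution on the closing remark that ambiguity difficulties are ``absent'': the bisecting hyperplanes where $\psin$ must tiebreak are Lebesgue-null but need not be $\cmeas$-null under \ref{hyp:cset} alone, so the limit can in principle depend on whether $\psin$ tiebreaks compatibly with $\pertsfn$ at such points --- the paper's proof elides this in exactly the same way, so it does not distinguish your argument from theirs, but it is slightly too strong to say the continuous-case difficulties have vanished.
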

\begin{proof}
  Define $\psineps$ to be a measurable selection of the closest point
  projection onto $\isneps$. Note that since $\isn$, $\isneps$ are
  discrete, the $\psineps$ can be chosen such that for any fixed $\cpt
  \in \cset$, there exists $\varepsilon_\cpt$ such that
  \[
    \text{for all } 0 < \varepsilon \leq \varepsilon_\cpt, \quad
    \psineps(\cpt) = \psin(\cpt) + \varepsilon \pertsfn(\psin(\cpt)).
  \]

  Hence fix $\varepsilon > 0$ and let
  \[
    \csetepssfj = \set{\cpt \ | \ \psineps(\cpt) = \psin(\cpt) +
      \varepsilon \pertsfn(\psin(\cpt)) }; \hbox{ note
      $\bigcup_{\varepsilon > 0}^\infty \csetepssfj = \cset$}.
  \]
  Since $\cset$ is bounded \ref{hyp:cset} and $\isneps \subseteq
  \cset$, we thus get
  \begin{align*}
 & \lim_{\varepsilon \to 0} \frac{\objfp(\isneps) -
      \objfp(\isn)}{\varepsilon}
       =
      \lim_{\varepsilon \to 0} \int_\cset \frac{d(\cpt,
      \isnepsj)^\objp - d(\cpt, \isn)^\objp}{\varepsilon} \dd \cmeas \\
    &= \lim_{\varepsilon \to 0} \left[\int_{\csetepssfj}
      \frac{\abs{\cpt - (\psin + \epssfj \pert(\psin))}^\objp -
      \abs{\cpt - \psin}^\objp}{\epssfj}
      \dd \cmeas + O(\cmeas(\cset \setminus \csetepssfj))\right],
  \end{align*}
  where we have suppressed writing the $\cpt$ in $\psin(\cpt)$ (notice
  we get the term $O(\cmeas(\cset \setminus \csetepssfj))$ due to
  $\displaystyle \frac{d(\cpt, \isnepsj)^\objp - d(\cpt,
    \isn)^\objp}{\varepsilon} \le C$ independent of $\varepsilon$).
  Further decomposing the domain of integration of the $\csetepssfj$
  integral into $\csetepssfj \setminus \isn$ and $\isn$, we see the
  $\isn$ integral reduces to $\sum_{\isn} \epssfj^{\objp -1}
  \abs{\pertsfn} \cmeas(\iptsf)$, which is nonzero iff
  $\cmeas(\isnout) > 0$. On the other hand, Taylor expanding the
  integrand of the $\csetepssfj \setminus \isn$ integral around
  $\epssfj = 0$ yields, as in \cref{thm:fbary-grad-J},
  \[
    \msf{h}(\cpt, \psin) = \ip[]{-\objp (\cpt - \psin)
      \abs{\cpt - \psin}^{\objp -2}, \pertsfn(\psin)} +
    O(\varepsilon).
  \]
  Taking the convention $\msf{h}(\cpt, \iptsf) = 0$ when $\cpt =
  \iptsf$ we may thus write
  \[
    \lim_{\varepsilon \to 0} \frac{\objfp(\isneps) -
      \objfp(\isn)}{\varepsilon} = \int_\cset \msf{h}(\cpt, \psin) +
    \lim_{\varepsilon \to 0} \varepsilon^{\objp - 1} \chi_{\isn}(\cpt)
    \abs{\pertsfn(\psin)} \dd \cmeas,
  \]
  the second term vanishing if $\objp \neq 1$ or if $\cmeas(\isnout) =
  0$. Finally, applying the disintegration theorem to the $\msf h$
  integral obtains the desired result.
\end{proof}
Now we discuss a discrete analogue of \cref{thm:smooth-improvement}.
To consider a discrete analogue of the cost $\cost(\fmap)$, start with
an arbitrarily-chosen discrete set $\isn = \set{\cpt_1, \ldots,
  \cpt_\sfn} \subseteq \cset$ and define the collection of admissible
parametrizations by
\begin{equation}
  \msf \Phi(\isn) = \set{\fmap \in \wkpsets \MID \isn \subseteq
    \fmap(\dset)}. \label{eq:admissible-parametrizations}
\end{equation}
A construction similar to \cref{prop:asymptotics} shows that there
always exists a smooth $\fmap : \dset \to \cset$ for which $\isn
\subseteq \iset$, whence $\Phi(\isn)$ is nonempty. So we may define a
discrete analogue of the cost $\cost(\fmap)$ by taking
\begin{equation}
  \costsf(\isn) = \inf_{\fmap \in \msf \Phi(\isn)}
  \cost(\fmap). \label{eq:discrete-cost}
\end{equation}
\begin{boxedremark}
  \label{rem:N-encodes-complexity}
  We note that the definition of $\msf C$ succinctly encodes that
  $\msf N$ is a proxy for the ``complexity'' of the approximation
  $\isn$: If $\msf{\iset_N} \subseteq \msf{\iset_{N+1}}$ then $\msf
  \Phi(\isn) \supseteq \msf \Phi(\msf{\iset_{N+1}})$, which implies
  $\msf{C}(\isn) \leq \msf{C(\iset_{N+1})}$.
\end{boxedremark}

We may state the analogue of \cref{thm:smooth-improvement} in these
terms. Note that in contrast to \cref{thm:smooth-improvement}, in this
case the proof is almost trivial: Because $\isn$ is discrete, we do
not have to treat the same regularity concerns for $\fbarysf$ that we
did for $\fbdh$ in \cref{thm:smooth-improvement}.
\begin{corollary} \label{cor:disc-improvement} Let $\isn
  =\set{\iptsf_1, \ldots, \iptsf_\sfn }\subseteq \cset^\circ$ be given
  and suppose there exists a measurable selection of the closest-point
  projection onto $\isn$ such that
  \[
    \cpushsf(\set[]{\fbarysf \neq 0}) > 0.
  \]
  Then if $\objp > 1$, or if $\objp = 1$ and $\cmeas(\isnout) = 0$,
  there exists $\pertsfn$ such for $\varepsilon$ sufficiently small,
  \[
    \objfp(\isneps) < \objfp(\isn) - C \varepsilon \qquad \text{and}
    \qquad \costsf(\isneps) < \costsf(\isn) + \varepsilon,
  \]
  where $C$ is a constant dependent on $\isn$ and $\cmeas$.
\end{corollary}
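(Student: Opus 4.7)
The plan is to take $\pertsfn = \fbarysf$ (up to a harmless scalar rescaling) and reduce both inequalities to direct computations: the objective bound to the first-variation identity of \cref{prop:fbary-grad-J-discrete}, and the cost bound to an explicit bump-function perturbation of a near-optimal parametrization of $\isn$. As the remark preceding the statement hints, this avoids the regularity pitfalls that made \cref{thm:smooth-improvement} nontrivial, since $\fbarysf$ lives on the finite set $\isn$ and is thus trivially as smooth as one wishes.

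With $\pertsfn = \fbarysf$, \cref{prop:fbary-grad-J-discrete} applied under either of the regimes of $\objp$ permitted by the hypothesis gives
\[
\lim_{\varepsilon \to 0} \frac{\objfp(\isneps) - \objfp(\isn)}{\varepsilon} = -\sum_{\iptsf \in \isn} \abs[]{\fbarysf(\iptsf)}^{2}\, \cpushsf(\iptsf) \eqqcolon -2C_0,
\]
where $C_0 > 0$ by the nontriviality hypothesis $\cpushsf(\set{\fbarysf \neq 0}) > 0$. Hence for all sufficiently small $\varepsilon > 0$ one has $\objfp(\isneps) < \objfp(\isn) - C_0\, \varepsilon$.

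For the cost, fix $\eta > 0$ and pick $\fmap \in \msf{\Phi}(\isn)$ with $\cost(\fmap) \leq \costsf(\isn) + \eta$. For each $j = 1, \ldots, \sfn$ choose a preimage $a_j \in \fmap^{-1}(\iptsf_j)$ and a bump $\bump_j \in C_c^\infty(B_{\delta_j}(a_j); [0,1])$ with $\bump_j(a_j) = 1$, arranging that the balls $B_{\delta_j}(a_j) \subset \dsub$ are pairwise disjoint. Then
\[
\wt\fmap(\dpt) \coloneqq \fmap(\dpt) + \varepsilon \sum_{j=1}^{\sfn} \fbarysf(\iptsf_j)\, \bump_j(\dpt)
\]
satisfies $\wt\fmap(a_j) = \iptsf_j + \varepsilon \fbarysf(\iptsf_j) \in \isneps$, so $\isneps \subseteq \wt\fmap(\dset)$. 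Since $\isn \subseteq \cset^\circ$ and the $\delta_j$ are fixed, for $\varepsilon$ small enough $\wt\fmap$ takes values in $\cset$, whence $\wt\fmap \in \msf{\Phi}(\isneps)$. The Sobolev triangle inequality now yields $\costsf(\isneps) \leq \cost(\wt\fmap) \leq \cost(\fmap) + K\varepsilon$ for some finite constant $K = K(\isn, \fbarysf, \set{\bump_j}_{j=1}^{\sfn})$, and sending $\eta \to 0$ gives $\costsf(\isneps) \leq \costsf(\isn) + K\varepsilon$.

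The only genuine obstacle is matching constants: the bump construction delivers $K\varepsilon$ rather than $\varepsilon$ on the cost side. This is remedied by a trivial scalar rescaling, replacing $\pertsfn = \fbarysf$ by $\pertsfn = \frac{1}{2K}\fbarysf$ (equivalently, running the argument at time parameter $\varepsilon/(2K)$), which gives $\costsf(\isneps) \leq \costsf(\isn) + \varepsilon/2 < \costsf(\isn) + \varepsilon$ while the objective inequality persists with the rescaled constant $C \coloneqq C_0/(2K) > 0$ depending only on $\isn$ and $\cmeas$, as the statement demands.
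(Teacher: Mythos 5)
Your argument is essentially the same as the paper's: extend the discrete perturbation to a compactly-supported smooth bump perturbation of a near-optimal parametrization $\fmap\in\msf\Phi(\isn)$, use the triangle inequality for the Sobolev cost, and appeal to \cref{prop:fbary-grad-J-discrete} for the objective. The only structural differences are cosmetic: the paper normalizes per-point via coefficients $\msf c_j = (2\sfn\norm{\eta_{\dsfj}}_{\wkpsets}\abs{\fbarysf(\iptsfj)})^{-1}$ so that $\cost(\pert)\le 1/2$ directly, yielding a final constant $C\asymp\sum_j\abs{\fbarysf(\iptsfj)}\cpushsf(\iptsfj)/\norm{\eta_{\dsfj}}$, whereas you compute the first variation with $\pertsfn=\fbarysf$ (getting $\sum\abs{\fbarysf}^2\cpushsf$) and rescale globally at the end. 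Both are fine.

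One point you gloss over, which the paper at least flags explicitly: the constant $K$ (equivalently, the bump radii $\delta_j$ and hence $\norm{\bump_j}_{\wkpsets}$) a priori depends on the chosen $\fmap$, which in turn depends on $\eta$, so the step ``sending $\eta\to 0$ gives $\costsf(\isneps)\le\costsf(\isn)+K\varepsilon$'' is not automatic: $K$ is a moving target as $\eta$ shrinks. Consequently your final $\pertsfn=\tfrac1{2K}\fbarysf$ also depends on $\eta$, which conflicts with the statement's demand for a single $\pertsfn$ and a constant $C$ depending only on $\isn$ and $\cmeas$. The paper's proof has the identical issue and addresses it with the (unproved, but plausible) remark that ``although a priori $\dsfj$ depends on $\fmap=\fepssf$, we can make it depend only on $\isn$ for sufficiently small $\epssfn$''; the intended justification is that the near-optimal $\fmap$'s have uniformly bounded Sobolev norm, hence by \cref{prop:gen-sob} a uniform H\"older modulus, which gives a uniform lower bound on the separation of preimages $a_j$ and so a uniform admissible $\delta_j$. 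You should say something to this effect rather than silently taking the $\eta\to 0$ limit.
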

\begin{proof}
  By construction of $\fbarysf$, there exists an $\epssfn > 0$ such
  that for all $\iptsf \in \isn$ we have $\iptsf + \epssfn
  \fbarysf(\iptsf) \in \cset$. Let $\fepssf \in \msf \Phi(\isn)$ such
  that
  \[
    \cost(\fepssf) - \frac{\epssfn}{2} < \costsf(\isn).
  \]
  From now on write $\fmap = \fepssf$. Now for each $\msf j = 1,
  \ldots, \sfn$, let $\dptsfj \in \fmap^{-1}(\iptsfj)$ and pick $\dsfj
  > 0$ such that for all $\msf i \neq \msf j$,
  $\fmap(B_{\dsfj}(\dptsfj)) \cap \fmap(B_{\dsfi}(\dptsfi)) =
  \varnothing$. Finally, for some choice of coefficients $\msf c_1,
  \ldots, \msf c_\sfn > 0$ define $\pert : \dset \to \rcdim$ by
  $\pert(\dpt) = \sum_{\msf j = 1}^\sfn \msf c_{\msf j}
  \eta_{\dsfj}(\dpt - \dptsfj) \fbarysf(\iptsfj)$, where
  $\eta_{\dsfj}$ denotes a smooth nonnegative function compactly
  supported in the $\dsfj$-ball such that $\eta_{\dsfj} (0)=1$. Notice
  that although a priori $\dsfj$ depends on $\fmap=\fepssf$, we can
  make it depend only on $\isn$ for sufficiently small $\epssfn$. Then
  taking each
  \[
    \msf c_j = \frac{1}{2 \sfn \norm[]{\eta_{\dsfj}}_{\wkpsets}
      \abs[]{\fbarysf(\iptsfj)}}
  \]
  yields $\cost(\pert(\dpt)) \leq \frac{1}{2}$.

  Finally, let $\pertsfn(\iptsfj) = \pert(\dptsfj)$ and as before let
  $\iset_{\sfn, \epssfn} = \set{\iptsf + \varepsilon_{\sfn} \pert \mid
    \iptsf \in \isn}$. Note that since $\fmap + \epssfn \pert \in \msf
  \Phi(\iset_{\sfn, \epssfn})$,
  \[
    \costsf(\isneps) \leq \cost(\fmap) + \epssfn \cost(\pert) <
    \costsf(\isn) + \epssfn.
  \]
  By \cref{prop:fbary-grad-J-discrete}, we see that for $\epssfn$
  sufficiently small,
  \begin{equation}
    \objfp(\msf{\iset}_{\sfn, \epssfn}) - \objfp(\isn) = -\epssfn
    \sum_{\msf j} \frac{\abs[]{\fbarysf(\iptsfj)}}{\sfn
      \norm[]{\eta_{\dsfj}}_{\wkpsets}} \cpushsf (\iptsf) +
    O(\epssfn^2) \label{eq:discrete-update}
  \end{equation}
  where the summands on the right-hand side are independent of
  $\epssfn$. This completes the proof.
\end{proof}
Note that \eqref{eq:discrete-update} immediately allows us to see a
coarse upper bound on how small $\epssfn$ must be for the
approximation to hold: Since $0 \leq \objfp(\isneps) = \objfp(\isn) -
C \epssfn + O(\epssfn^2)$, we need at least
\begin{equation}\label{eq:coarse-epssfn}
  \epssfn \leq \objfp(\isn)/C.
\end{equation}

\subsection{Simulations Of \xref{prob:soft-penalty} In Two Dimensions
  With Uniform Target Measure}
\cref{cor:disc-improvement} provides theoretical motivation for a
numerical scheme for performing gradient descent on $\objfp(\isn) +
\lambda \cost(\fmap)$ by taking discrete steps along $\fbarysf +
\lambda \nabla \cost(\fmap)$. Similar ideas have been pursued many
times before: In the context of principal curves and surfaces, see
\eg\ \cite{kegl2000,kirov-slepcev-2017}, and particularly
\cite{gerber_dimensionality_2009}; in the context of spline fitting
see the references for \emph{unstructured} fitting listed in
\cref{sec:spline-smoothing}, particularly \cite{Wang2006Apr}, which
uses a penalty similar to ours for $\sobk = 2$.

However, there is a sense in which these algorithms are all primarily
designed for use in \emph{real-world} contexts rather than in studying
\emph{theoretical} properties of \xref{prob:hard-constraint},
\xref{prob:soft-penalty}. In \cref{sec:real-vs-theoretical} we explain
this distinction in greater detail, and particularly highlight the
motivation for developing algorithms tailored toward the
``theoretical'' case. Then in \cref{sec:algorithm} we
introduce such a routine for the special case where $\ddim=1$,
$\cdim=2$, $\cmeas$ is uniform, and $\partial\supp(\cmeas)$ is
piecewise linear (note there is no restriction on $\objp$ beyond
$\objp \geq 1$). Example outputs are shown in
\cref{sec:gallery-of-sims}. We hypothesize that the algorithm could be
adapted to other values of $\ddim, \cdim$; however, loosening the
restrictions on $\cmeas$ would be more difficult.

\subsubsection{\xref{prob:soft-penalty} in ``Real-world'' vs.\
  ``Theoretical'' Cases}
\label{sec:real-vs-theoretical}

We propose to draw a distinction between numerical algorithms for
\xref{prob:soft-penalty} in ``real-world'' vs.\ ``theoretical''
contexts. The difference is essentially one of data resolution and
hence only applies when $\cmeas$ has a non-atomic part; let us assume
this in the below.

\textbf{(Real-world case):} In real-world cases we typically do not
have access to $\cmeas$ itself but rather a finite set of $\msf M$
(possibly-noisy) observations $\cpt_1, \ldots, \cpt_{\msf M}$. In this
case, discretizing $\iset = \fmap(\dset)$ into $\msf N$ samples $\isn
= \set{\iptsf_1, \ldots, \iptsf_{\sfn}}$, we see that for each
$\iptsfj$ the integral expression for $\fbarysf(\iptsfj)$
(\ref{eq:discrete-bary}) reduces to a summation, making it simple to
compute. Then, we may take a gradient step \`a la
\cref{cor:disc-improvement} and iterate (note the $\msf{c_j}$'s in
\cref{cor:disc-improvement} may be chosen heuristically; see
\cref{sec:alg-discussion}).

In this scheme, we are free to choose $\msf N$ but not $\msf M$. This
begs the question: Given a fixed $\msf M$, what is a ``reasonable''
choice of $\msf N$? Observe that if $\msf N > \msf M$ then the
evolution behavior of $\isn$ will likely be highly distorted: On any
given iteration at most $\msf M$ of the $\iptsfj$ would have
$\fbarysf(\iptsfj) \neq 0$ (denote them $\msf{\iset_N^{\rm lead}}$),
while the remaining $\msf N - \msf M$ points have their behavior
dictated purely by $\lambda \nabla \cost(\fmap)$ (denote them
$\msf{\iset_N^{\rm follow}}$). Note that in the steady state we expect
the points of $\msf{\iset_N^{\rm follow}}$ to simply interpolate
$\msf{\iset_N^{\rm lead}}$ according to some $\msf{\fmap}$ achieving
$\msf C(\msf{\iset_N^{\rm lead}})$; thus the points of
$\msf{\iset_N^{\rm follow}}$ encode no meaningful information, and
might as well be omitted by choosing $\msf N \leq \msf M$ from the
start. Stated plainly, in the real-world case, our data sampling
resolution $\msf M$ is an upper bound on the maximum ``meaningful''
resolution of the approximant $\isn$.

\textbf{(Theoretical case):} Now suppose, by contrast, that we are
working in a context where we have direct access to the ground-truth
measure $\cmeas$ (since we assumed $\cmeas$ has nontrivial non-atomic
part, this typically occurs only in theoretical contexts). We may
obtain a Monte-Carlo scheme for $\fbarysf$ by simply drawing $\msf M$
samples from $\cmeas$ at each iteration and proceeding as in the
real-world case. Importantly, we are now free (modulo computational
constraints) to take as many samples as we want from $\cmeas$, \ie\
to make $\msf M$ arbitrarily large. This offers some important
benefits.
\begin{enumerate}[label=\arabic*.]
  \item Since there is no constraint on $\msf M$, we also no longer
    have a constraint on what constitutes a ``reasonable'' $\msf N$.
    In particular, it becomes possible to take $\msf N \to \infty$,
    which (see \cref{rem:N-encodes-complexity}) opens the door to
    studying an approximate asymptotic relationship between $\objfp$
    and $\msf C$.
  \item By taking $\msf M$ to be large, we can also improve the
    precision of our approximation of $\fbarysf$. This is noteworthy
    because the scheme can exhibit instabilities with respect to
    $\fbarysf$. Loosely speaking, the problem is that $\fbarysf$ is
    dependent on the $\psin^{-1}(\iptsfj)$, which themselves depend
    on the \emph{global} structure of $\isn$ (note, \eg\ that
    updating the position of one of the $\iptsfj$ typically modifies
    \emph{all} of the cells sharing a face with
    $\psin^{-1}(\iptsfj)$). Thus, a distortion in $\fbarysf$ at one
    iteration typically results in a distortion to $\isn$ at the
    next iteration, thereby distorting the subsequent $\fbarysf$,
    and so on. (In our experiments this was partially mitigated by
    the $\lambda \nabla \cost(\fmap)$ term but nonetheless caused
    some problems).
\end{enumerate}
In practice, however, a na\"ive implementation of this Monte-Carlo
scheme can require intractable values of $\msf M$ to achieve these
effects. We illustrate this in the special case $\ddim = 1$, $\cdim =
2$. For each $\msf j$, define
\[
  \sigma_{\msf j} = \Var_{\cmeas_{\iptsfj, \psin}}[(\cpt - \iptsfj)
  \abs{\cpt - \iptsfj}^{\objp -2}].
\]
Let $\msf N = 200$ and suppose we want to guarantee with $\geq 99\%$
confidence that we approximate each $\fbarysf(\iptsfj)$ within, say,
$\frac{\sigma_{\msf j}}{100}$ of the true value. At first glance this
degree of precision might appear overly demanding, but again recall
that since each step of the simulation depends on the previous steps,
small initial errors in $\fbarysf$ may propagate to significant
end-state differences.

In any case, denote the number of sample points $\cpt_{\msf j} \in
\psin^{-1}(\iptsfj)$ by $\msf{M_j}$; it suffices to choose the
$\msf{M_j}$ such that the probability of being within
$\frac{\sigma_{\msf j}}{100}$ of $\fbarysf$ is $\geq \sqrt[\msf
N]{.99} \approx .9999$. Via a standard $Z$-score argument we find that
choosing $\msf{M_j} \geq \ceil{(100 \cdot 3.89)^2} = 151274$ suffices.
Applying this to each $\msf{j}$, we see that we'd need at
\emph{minimum} $\msf M \approx \msf M_{\msf j} \times \msf N \approx 3
\times 10^7$ samples at each iteration to get the desired precision.

Actually, even this is not enough, as a priori there is no guarantee
the sample points will be uniformly distributed across the
$\psin^{-1}(\iptsfj)$. For example, if all the $\cmeas_{\iptsfj,
  \psin}$ have identical total mass, then one may compute that in
order to get $\geq 99\%$ probability that randomly assigning samples
to each $\psin^{-1}(\iptsfj)$ yields each $\msf{M_j} \geq 3 \times
10^7$, the number of points $\msf M$ to draw is lower bounded by $2.92
\times 10^{13}$, which is prohibitively large.

For some choices of $\cmeas$ this could be avoided by employing
\emph{low-discrepancy sequences}. It is essentially trivial to do so
for simple $\supp(\cmeas)$ (\eg, the unit square), but if $\partial
\supp(\cmeas)$ is sufficiently complex, this too could become
computationally burdensome. In any case, there remains a ``hard'' step
of assigning each of the $\msf M$ sample points to the corresponding
closest $\iptsfj$. Na\"ively this can be done in $O(\msf{MN})$ time;
however, this was prohibitively slow on our hardware.

A faster approach can be obtained by preprocessing the Voronoi cells
into an efficient \emph{point location} datastructure (see \eg\
\cite[Ch.\ 6]{Berg2008}), which reduces the complexity to
$O(\msf{M}\log(\msf N) + \msf N)$. However, as it turns out, when
$\ddim=1$, $\cdim = 2$, and $\cmeas$ is uniform, the preprocessing
required to compute these point location datastructures (typically
some form of triangulation procedure) is sufficient for running a fast
algorithm that computes $\fbarysf$ via approximating an
\emph{analytic} formula, thus obviating the need for a Monte Carlo
procedure. We discuss this algorithm now.

\subsubsection{A Deterministic Algorithm For Theoretical
  Applications} \label{sec:algorithm}

We now demonstrate that in at least one special case, for all $\objp
\geq 1$ one may obtain a fast algorithm for computing $\fbarysf$. Note
that some of the precise implementation details and runtime analysis
are beyond the scope of this paper, and hence we postpone them for a
later work.

The particular requirements for our algorithm are
\begin{itemize}
  \item $\ddim=1$,
  \item $\cdim=2$, and
  \item $\cmeas$ is uniform with $\partial \supp(\cmeas)$ piecewise
    linear (PL). \label{term:PL}
\end{itemize}
With these hypotheses we are able to represent $\fbarysf$ analytically
rather than by approximation via Monte Carlo methods. As far as we are
aware this has not been done before. Performance is discussed briefly
in \cref{sec:alg-discussion}; for now, we focus on the high-level
consequences. Essentially, by removing this source of randomness, our
method appears to offer better stability for properties of theoretical
interest such as symmetry, self-similarity, and so on, as can be seen
comparing the curves in \cref{fig:voronoi-sims} against the curve
obtained via stochastic gradient descent in \cite[Fig.\
1]{delattre2020}. We believe this stability is highly desirable when
using the algorithm for purposes such as formulating conjectures,
\eg\ ``\emph{if $\cmeas$ is uniform, then as $\objp \to \infty$, to
  what extent does the regularity of a prototypical `good solution'
  tend towards that of $\partial \cset$?}''

We choose to represent \(\fmap\) by a cubic spline, which we
parametrize by its knots. We initialize with some seed function
$\fmap_0$ (either chosen randomly or set by the user) and then iterate
as follows. In the first step of the $j$\textsuperscript{th}
iteration, we sample $\sfnj = \ceil{\frac{\arclen(\fmap_j)}{\delta}}$
points from \(\fmap_j\) uniformly with respect to arc length, where
$\delta$ is a parameter controlling the maximum distance between two
adjacent points. Call the sampled points $\isnj$. To compute $\isnj$
efficiently, we give a numerically stable method to calculate the
roots of the ``squared speed'' quartic, which we can substitute into
the routine of \cite{henkel2014calculating}. That routine then
calculates the arc length of a cubic parametric curve in two
dimensions by reduction to an elliptic integral, which was previously
solved in \cite{carlson1992table}. Using this we can efficiently
compute an approximate arc length parametrization of $\fmap_j$ using
Newton's method since the speed of the curve is easily computed. Thus
$\isnj$ is given by evaluating evenly-spaced points of
$[0,\arclen(\fmap_j)]$.

Once we have $\isnj$, we compute the Voronoi cells
$\psin^{-1}(\iptsfj)$ using a standard algorithm. Then for each cell,
we compute $\fbarysfj$ by decomposing the cell into triangles and
computing the contribution of each triangle individually. If
$\supp(\cmeas)$ is convex, then $\supp(\cmeas) \cap
\psin^{-1}(\iptsfj)$ is a convex polygon and we obtain a triangulation
by enumerating the vertices $\set{\msf{v}_1, \ldots, \msf{v}_{M}}$ and
iterating through the triangles $\triangle\msf v_{1} \msf v_{i} \msf
v_{i+1}$ where $i = 2, \ldots, M-1$. Otherwise, we run a
Weiler-Atherton routine \cite{Weiler1977Jul} to compute $\supp(\cmeas)
\cap \psin^{-1}(\iptsfj)$ and apply a standard triangulation algorithm
to the result.

Since $\cmeas$ is uniform, any two of the resulting triangles
intersect on at most a $\cmeas$-null set, whence we avoid
double-counting in \eqref{eq:discrete-bary}. Again since $\cmeas$ is
uniform, each triangle integral has a ``nice'' expression for general
$\objp$ in terms of the Gaussian hypergeometric function, which can be
evaluated efficiently. Here, ``nice'' is in scare quotes because while
the expressions are numerically efficient to evaluate, writing them
down by hand for general $\objp$ involves some slightly-messy
casework. Hence, we simply note that in the $\objp = 2$ case, the
contribution of triangle $\triangle \msf{uvw}$ can be computed instead
via
\[
  \textstyle 2 \cdot \pn{\frac{1}{3} \pn{\msf u + \msf v + \msf w} -
    \iptsfj} \cmeas(\triangle \msf{uvw}),
\]
where $\cmeas(\triangle \msf{uvw})$ can be computed \eg\ via Heron's
formula. Summing over all triangles then yields $\fbarysfj$ precisely.
Note that in practice we found that performance sometimes improved
significantly when we reweighted $\fbarysfj$ by
$1/\pn{\cmeas(\psin^{-1}(\iptsfj))}^\kappa$, where $\kappa \in
\bk{0,1}$ is a parameter chosen by the user.

Next, we approximate the gradient $\nabla \cost$ of the Sobolev
constraint with respect to the points $\isnj$ as follows. First note
that since $f$ is a piecewise cubic polynomial, it will generally lack
regularity higher than third differentiability, which is insufficient
for computing $\mathscr C(f)$ for general $k$. Hence we first compute
a new, sufficiently high-order spline $\widetilde f$ interpolating the
arclen-evenly-spaced samples of $f$, and represent it via B-Splines.
Then, we approximate the Sobolev norm of $\widetilde f$ with the
vector $q$-norm of the derivatives of all orders evaluated at each of
the interpolated points (by the regularity properties of the splines,
this scheme is consistent).

We then take the gradient of this expression with respect to the
locations of the points $\isnj$, resulting in an approximation
$\msf{G_{\isnj}}$. This can be done efficiently because the
derivatives at the sample points depend linearly on the points being
interpolated and moreover the linear relationship has the form of the
inverse of a $(k+1)$-width banded matrix, which can be computed very
quickly. Finally, given a learning rate $\learningrate$ we make the
perturbation $$\msf{\iset_{N_{j+1}}} = \isn + \learningrate (\fbarysfj
- \lambda \msf{G_{\isnj}}).$$ These perturbed points then become the
knots of the next curve, and the algorithm terminates after reaching a
prescribed number of iterations.

Initially, \(\fmap\) does not cover \(\cset\) well and so the
barycenter field has very large magnitude. To counteract this, we
choose \(\learningrate\) to be small near the beginning of the
simulation and increase it as a function of the iteration index $i$.
We find that when we do this the arc length of the curve grows at a
steady rate. We show the resulting curves at several points in time in
\cref{fig:voronoi-sims}. The curves themselves are in blue, the
Voronoi cells are drawn with red edges and cell barycenters are shown
as green dots. Note that regions with dark-red shading represent the
convergence of many Voronoi cell boundaries while white regions
represent regions where few Voronoi cell boundaries converge. Hence
(with the exception of the white regions corresponding to the
endpoints) dark-red and bright-white regions tend to reflect base
points on the curve where curvature is high.

In any case, we see that as time progresses, the curve begins to
evenly fill the square in a fractal-like manner.

\begin{figure}[H]
  \centering
  \begin{subfigure}{.49\linewidth}
    \centering
    \includegraphics[scale=.15]{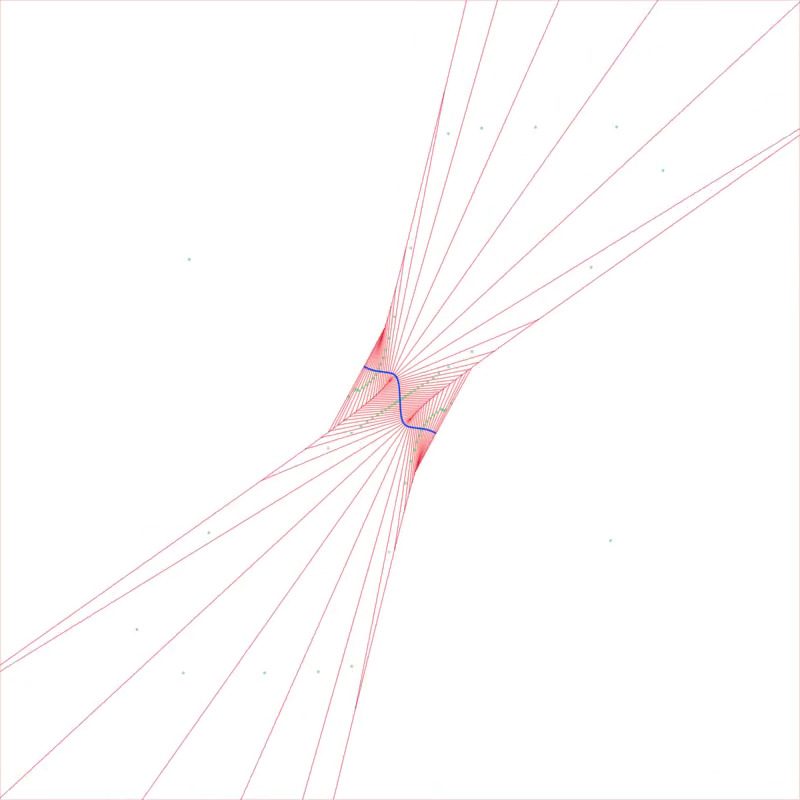}
    \caption{$t=0$}
  \end{subfigure}
  \begin{subfigure}{.49\linewidth}
    \centering
    \includegraphics[scale=.15]{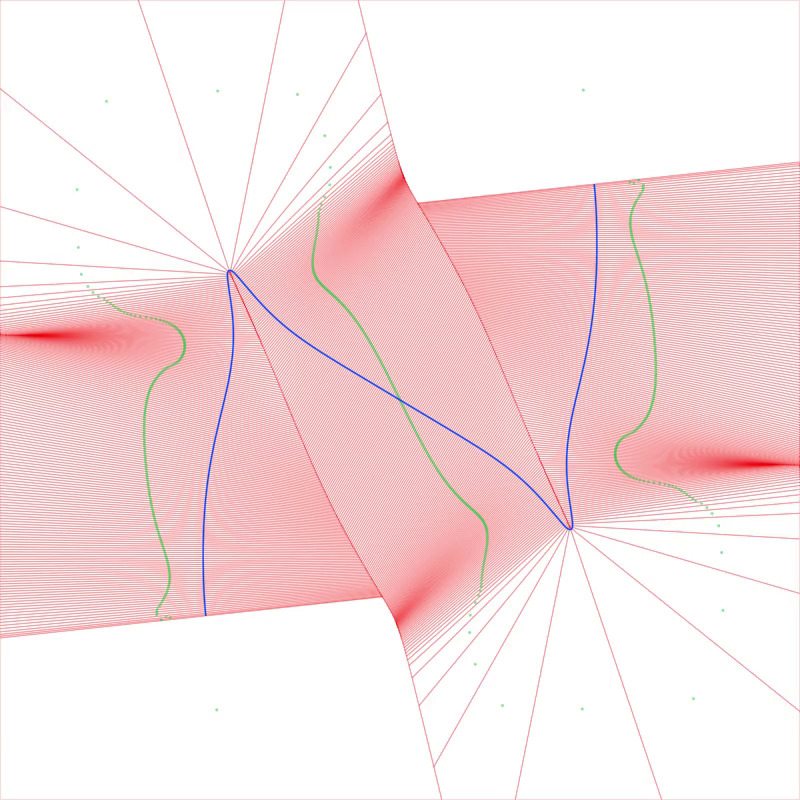}
    \caption{$t=3$}
  \end{subfigure}
\end{figure}\vspace{-1.5em}
\begin{figure}[H] \continuedfloat
  \begin{subfigure}{.49\linewidth}
    \centering
    \includegraphics[scale=.15]{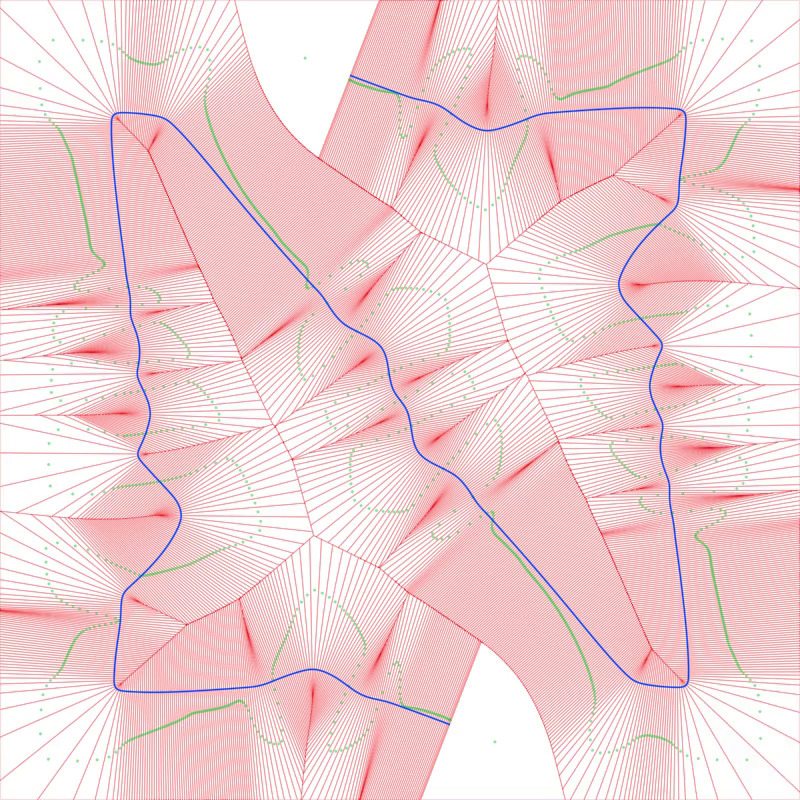}
    \caption{$t=6$}
  \end{subfigure}
  \begin{subfigure}{.49\linewidth}
    \centering
    \includegraphics[scale=.15]{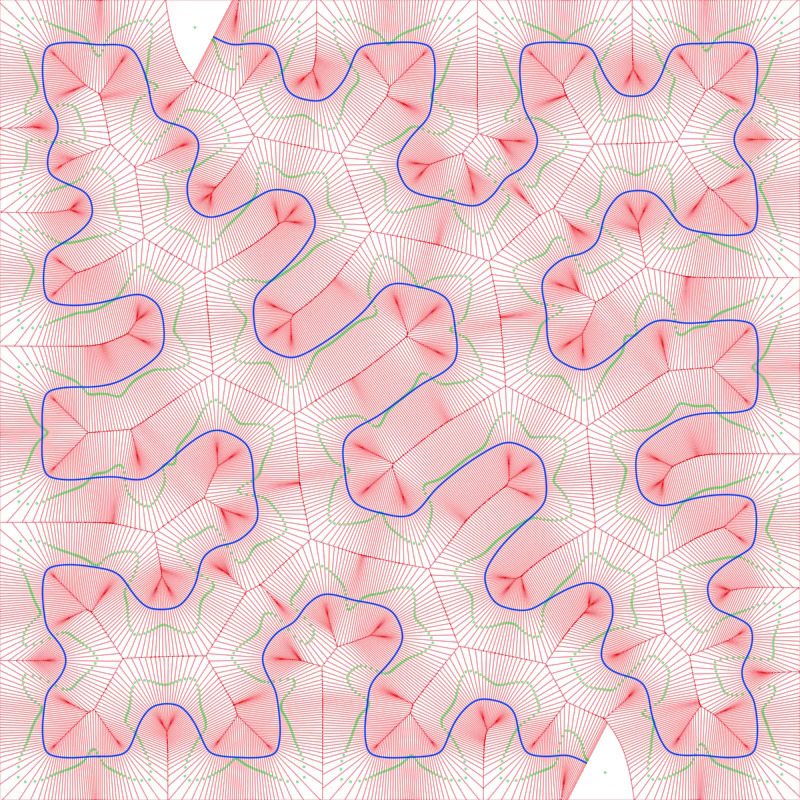}
    \caption{$t=9$}
  \end{subfigure}
\end{figure}\vspace{-1.5em}
\begin{figure}[H] \continuedfloat
  \begin{subfigure}{.49\linewidth}
    \centering
    \includegraphics[scale=.15]{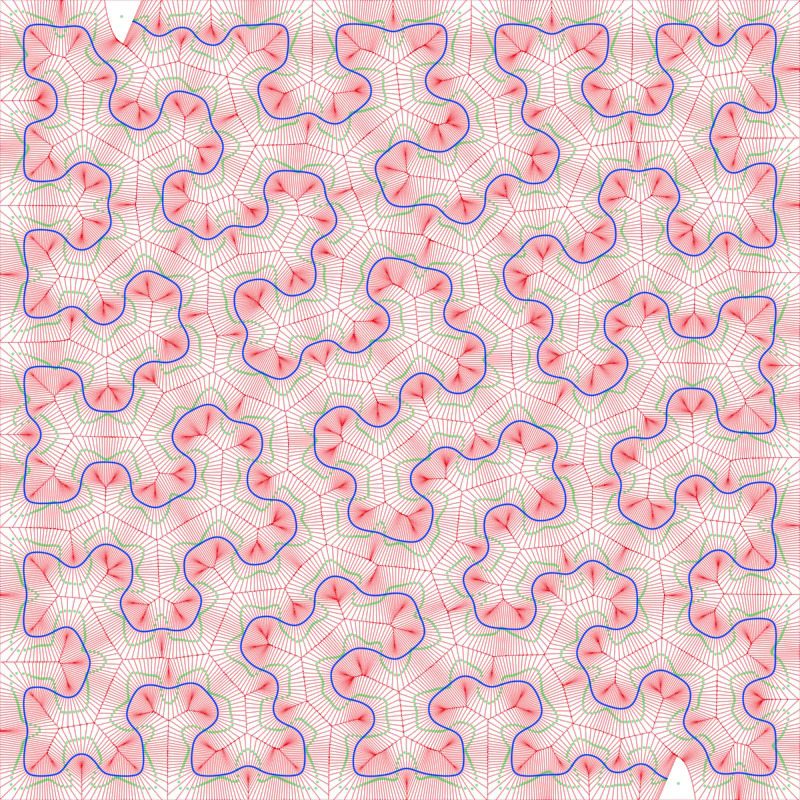}
    \caption{$t=11$}
  \end{subfigure}
  \begin{subfigure}{.49\linewidth}
    \centering
    \includegraphics[scale=.15]{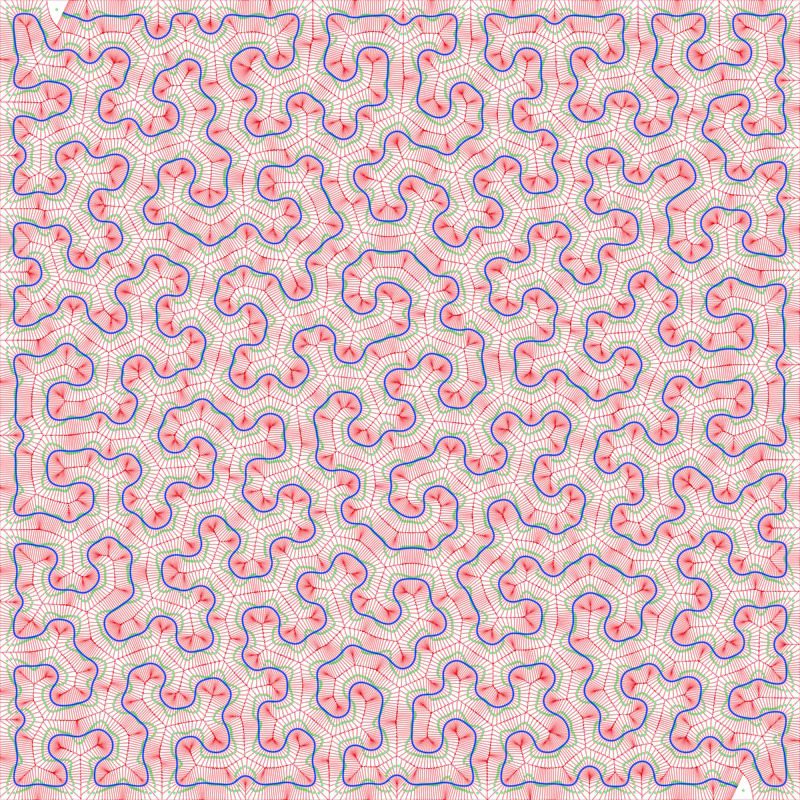}
    \caption{$t=13$}
  \end{subfigure}
  \caption{Evolving a curve according to $\fbarysf$.}\label{fig:voronoi-sims}
\end{figure}

We also experimented in cases where $\lambda$ decays as
$\frac{1}{\learningrate} - 1$, this also yielded good curves,
especially when the seed $\fmap_0$ was very messy, \eg\ interpolating
random points. See \cref{fig:regularized-curves} for an example.

\subsubsection{Discussion}
\label{sec:alg-discussion}

The algorithm is quite fast in practice when $\supp(\cmeas)$ is
convex, typically approaching a steady state after 10-20 seconds on a
Mid-2015 MacBook Pro. For simple nonconvex domains like the one in
\cref{sec:hexagonal-star-domain}, performance was similar. However, we
did see noticeable slowdowns when $\supp(\cmeas)$ is highly nonconvex,
\eg\ the branching shape in \cref{fig:regularized-curves}.

One drawback of the approach above is that it involves a long list of
somewhat-obscure parameters to tune (\eg\ the $\kappa$ in the
$\fbarysfj$ reweighting step), though in practice when $\supp(\cmeas)$
was only mildly-nonconvex we saw decent behavior over a wide range of
parameter values. Implementations that include a parameter search can
remove some of this hand-tuning; for example, the choices of
$\learningrate(i)$ and $\lambda(i)$ can be done away with by employing
a line search. In our experiments, this yielded greatly-improved
stability, though at a cost to runtime.

An alternative approach is to try and derive theoretical heuristics
that yield decent (though perhaps not optimal) behavior. As a case
study, let us consider the learning rate $\msf c$ that we use to scale
$\msf{F}_{\hat \pi_{\msf \iset}}$ for each update. What can the theory
tell us about how to choose a ``good'' value of $\msf c$?

Some guidance comes from examining the $\msf{c_j}$ in the proof of
\cref{cor:disc-improvement}. Note that the $\eta_{\dsfj}$ in the
definition were chosen for two reasons. First, the $\eta_{\dsfj}$
ensured that the effects of the perturbations at each $\iptsfj$ were
isolated from one another. In practice, we can guarantee this instead
by choosing our resampling resolution such that at each iteration we
have enough points to ensure $\inf_{\msf j \neq \msf{j'}} d(\iptsfj,
\iptsf_{\msf{j'}}) \ll \inf_{\msf{j}} \diam(\psin^{-1}(\iptsfj))$,
whence the effects of each $\pertsf$ at each $\iptsfj$ remain
isolated. Second, the controlled smoothness of the $\eta_{\dsfj}$'s
gives us an elementary way to ensure $\costsf(\isneps) < \costsf(\isn)
+ \epssfn$. But since large changes in the cost are penalized already
by the Sobolev gradient approximation $\lambda \msf{G}$ (and there is
little harm in \emph{undershooting} the proper $\msf{c_j}$), we may
choose to ignore the $\norm[]{\eta_{\dsfj}}_{\wkpsets}$ and take
$\msf{c_j} \propto \frac{1}{\sfn \abs[]{\fbarysf(\iptsfj)}}$. Applying
the coarse bound on $\epssfn$ \eqref{eq:coarse-epssfn} then gives
\begin{align*}
  \epssfn \msf{c_j} \leq \frac{\objfp(\isn)}{C \sfn
  \abs[]{\fbarysf(\iptsfj)}}
  &= \frac{\sum_{\msf i} \int_{\psin^{-1}(\iptsfi)} \abs{\cpt -
    \iptsfi}^\objp \dd \cmeas}{\abs[]{\fbarysf(\iptsfj)} \sum_{\msf i}
    \abs[]{\fbarysf(\iptsfi)}},
\end{align*}
whence taking the coarse approximation
\[
  \int_{\psin^{-1}(\iptsfi)} \abs{\cpt - \iptsfi}^{\alpha} \dd \cmeas
  \approx C \max_{\cpt \in \psin^{-1}(\iptsfi)} \abs{\cpt -
    \iptsfi}^{\alpha + 1} \approx C'
  \pn{\diam(\psin^{-1}(\iptsfi))}^{\alpha + 1}
\]
and supposing the $\diam(\psin^{-1}(\iptsfi))$ are relatively uniform
in $\msf i$ (not unreasonable on average since $\cmeas$ is uniform),
we obtain the \emph{very} coarse heuristic of selecting $\epssfn
\msf{c_j} = O(\diam(\psin^{-1}(\iptsfj))^{1-\objp})$. However, in
practice we do not compute $\diam(\psin^{-1}(\iptsfj))$, and we do not
have \emph{a priori} estimates on how quickly the
$\diam(\psin^{-1}(\iptsfi))$ decay. Obtaining such estimates could be
a useful direction for future work.

In any case, denoting the iteration index by $\msf i$ and the total
iterations by $N_{\rm tot}$, empirically we have observed good
performance choosing $\epssfn \msf{c_j} \propto (\msf{i}/N_{\rm
  tot})^\objp$, but as of yet we do not have good theoretical
justification.

\section{Toward Applications to Generative Learning}
\label{sec:generative-learning}
Note that in the below we consider the soft penalty problem
\xref{prob:soft-penalty} rather than the hard constraint
\xref{prob:hard-constraint}. However, as discussed in
\cref{sec:the-general-problem}, \xref{prob:soft-penalty} gives
solutions to \xref{prob:hard-constraint} while being easier to
simulate.

As mentioned in \cref{sec:unequal-dimensional-ot}, one possible
application of \xref{prob:soft-penalty} is in studying generative
modeling problems. In this section, we will expand on those remarks.
We begin by giving a brief overview of generative machine learning
problems (\cref{sec:brief-overview-of-ml}) and then discuss how our
framework can be applied to (unconditional) generative modeling
(\cref{sec:applying-to-ml}). In particular, we show that
\xref{prob:soft-penalty} is essentially a regularized, non-stochastic
analogue of a certain class of generative machine learning problems,
and propose mechanisms beyond the standard ``avoidance of
overfitting'' by which the regularization can offer training
improvements. In light of this connection, we propose that it could be
of interest to further investigate the theoretical properties of
\xref{prob:soft-penalty}, as well as those of the associated numerical
methods (\eg\ \cref{sec:algorithm}), with the hope that insights from
the \xref{prob:soft-penalty} context could help inform work on
generative learning problems.

To that end, in \cref{sec:continuous-opt} we offer some
proof-of-concept experiments demonstrating that adding $\lambda \cost$
to $\objfp$ for a simple feedforward neural network yields training
improvements in two generative tasks: learning the uniform measure on
a disk (\cref{sec:fitting-the-disk}) and learning to generate images
of handwritten digits (\cref{sec:fitting-mnist}). Finally, in
\cref{sec:discrete-optimization} we give a simple MNIST experiment
demonstrating that the regularization benefits we propose appear in
real data even when the data has been in some sense ``maximally
overfitted.''

More work would be required to understand whether the phenomena we
observe persist in more complicated settings.

\subsection{Brief Overview of Generative Learning Problems}
\label{sec:brief-overview-of-ml}

In generative modeling, it is typical to treat our training data as
being samples from some abstract distribution \(\cmeas\) and to view
the generative model itself as an estimating distribution
\(\sigma_\theta\) parametrized by \(\theta\). The goal is to find
\(\theta\) such that \(\sigma_\theta\) is close to \(\cmeas\) in some
statistical sense
\cite{arjovsky2017,goodfellow2020generative,fisher1925theory,myung2003tutorial}.
In many cases \(\cmeas\) is embedded in \(\rcdim\) with large
\(\cdim\), however in most applications it is reasonable to assume
\(\cmeas\) is concentrated on a set of local dimension \(\ddim \ll
\cdim\)
\cite{donoho2003hessian,fefferman2016testing,roweis2000nonlinear,tenenbaum2000global,hein2005intrinsic,cayton2008algorithms,carlsson2009topology,nakada2020adaptive,brahma2015deep}.

A further distinction can be drawn between \emph{unconditional} and
\emph{conditional} generative learning problems. In
\emph{unconditional} problems, the goal is essentially to approximate
a random sampling scheme for $\cmeas$ by randomly sampling
$\sigma_\theta$. To facilitate this, $\sigma_\theta$ is typically
constructed as the pushforward of a fixed, high-regularity
distribution $\mu$ (\eg\ a Gaussian) under a learned map
$\fmap_\theta$, whence sampling $\sigma_\theta$ is essentially trivial
(see \eg
\cite{goodfellow2020generative,papamakarios2021normalizing,ho2020denoising}).
Note that in this case, the user cannot specify a particular
\emph{part} of $\cmeas$ they want to sample from. For example, if
training a model to generate images of common pets, a user cannot
specify whether they want to see a cat or a dog.

By contrast, a \emph{conditional} model takes an input prompt from a
user (\eg\ ``a Labrador retriever'') and tries to sample from the
conditional distribution $\cmeas\vert_{Q}$, where $Q$ is the set of
outputs that can be described by the user query
\cite{saharia2022photorealistic,ramesh2021zero,mirza2014conditional}.
Although conditional modeling is generally more difficult, an
unconditional model can sometimes be extended to support conditioning
after the fact (see, \eg\ works on guided diffusion
\cite{kim2022diffusionclip,dhariwal2021diffusion,crowson2021clip,nichol2022glide},
where an unconditional diffusion model is extended to support
conditioning). Thus, we will leave conditional generation to future
work and restrict ourselves to examining unconditional generation.

A prototypical example of an unconditional generation problem is
generating images from a distribution. Supposing the images have
resolution \(512 \times 512\) it is typical to embed them into \(\cset
\subseteq \rcdim\) for \(\cdim = 512 \times 512 \times 3 = 786432\)
where each dimension encodes one of the RGB components of the color of
one pixel. We imagine that there is some real-world stochastic process
that randomly produces images in $\cset$; the measure $\cmeas$ then
describes how likely a given set of images is to contain the next one
produced. The model itself is a neural network \(\fmap_\theta\) with
parameters \(\theta\); $\fmap_\theta$ is used to transform a
distribution $\dmeas$ on \(\rddim\) (usually Gaussian) to the
distribution \(\sigma_\theta\) on \(\rcdim\) via
\[
  \sigma_\theta = (\fmap_{\theta})_{\#} \dmeas.
\]
In practice, the precise relationship between \(\theta\) and the shape
of \(\fmap\) may be nontrivial. In any case, the network
$\fmap_\theta$ is often taken to be either smooth or piecewise linear
so we may assume $\sigma_\theta$ has local dimension \(\ddim\) a.e.\
(\cf\ \cref{prop:effectiveness}). Since perturbing a natural-looking
image in a random direction is overwhelmingly likely to degrade its
appearance \cite{pope2021intrinsic}, it is typical to assume $\cmeas$
is concentrated on a very thin set in $\cset$. Thus we may take
$\ddim$ to be relatively small---for example \(\ddim = 128\)---and
still hope for $\sigma_\theta$ to approximate $\cmeas$ well.

\subsubsection{A Case Study: WGAN}\label{sec:WGAN-case}
As proposed in \cite{arjovsky2017}, when $\cmeas$ is concentrated on a
thin set, one good choice for the notion of statistical ``closeness''
of $\sigma_\theta$ and $\cmeas$ is the Monge-Kantorovich 1-distance.
Note that $\objp = 1$ is chosen essentially just because
$\wass_1(\cmeas, \sigma_\theta)$ can be represented
computationally-tractably via the dual formula
\begin{equation}
  \wass_1(\cmeas, \sigma_\theta) = \sup \set{\int \varphi \dd (\cmeas
    - \sigma_\theta) \MID \varphi \text{ is
  $1$-Lipschitz}}. \label{eq:KR-duality}
\end{equation}
See \eg\ \cite[Thm.~1.14]{villanitopics} for a rigorous proof of
\eqref{eq:KR-duality}. In any case, the authors of \cite{arjovsky2017}
use \eqref{eq:KR-duality} to propose the so-called \emph{Wasserstein
  GAN} (WGAN), which can offer notable improvements to training
stability over classical GAN designs.

To summarize, WGAN alternatingly trains two separate networks,
parametrized by $w$ and $\theta$, respectively: a \emph{critic} or
\emph{discriminator} $D_w$ and a \emph{generator} $\fmap_\theta$. The
critic plays the role of $\varphi$ in \eqref{eq:KR-duality} and thus
for fixed $\theta$ tries to learn to estimate $\wass_1(\cmeas,
\sigma_\theta)$. Then, the generator uses the trained critic to
estimate how best to adjust $\sigma_\theta$ to decrease
$\wass_1(\cmeas, \sigma_\theta)$. Explicitly, for fixed $\theta$ we
train $D_w$ by iteratively sampling $M$-point batches $\cpt^{(i)} \sim
\cmeas$ and $\dpt^{(i)} \sim \dmeas$ and updating $D_w$ via a small step
along
\[
  \textstyle \nabla_w \bk{\frac{1}{M} \sum_{i=1}^M \left[D_w(\cpt^{(i)}) -
    D_w(\fmap_\theta(\dpt^{(i)})) \right]} \approx \nabla_w \int D_w \dd (\cmeas -
  \sigma_\theta),
\]
using some regularization scheme to try and ensure $D_w$ remains
1-Lipschitz. After training the critic for some number of iterations
$N_{\rm critic}$, we then take another $M$-point batch $\dpt^{(i)} \sim
\dmeas$ and update the generator $\fmap_\theta$ via a small step along
\begin{equation}
  \textstyle
  \nabla_\theta \bk{\frac{1}{M} \sum_{i=1}^M
    D_w(\fmap_\theta(\dpt^{(i)}))} \approx
  \nabla_\theta \int D_w \circ \fmap_\theta \dd \mu =
  -\nabla_\theta \int D_w \dd(\cmeas -
  \sigma_\theta). \label{eq:generator-update}
\end{equation}
This process is then repeated until convergence.

In the original work \cite{arjovsky2017}, $D_w$ was regularized via
the (as they described it) ``clearly terrible'' method of \emph{weight
  clipping}. In \cite{gulrajaniImprovedTrainingWasserstein2017} it was
demonstrated that a better method is to instead augment the objective
\eqref{eq:KR-duality} with a \emph{gradient penalty} term
\begin{equation}
  \textstyle \frac \lambda M \sum_{i=1}^M
  \bk[]{\abs[]{\nabla_{u_{\theta}^{(i)}} D_w(u_{\theta}^{(i)})} -
    1}^2, \label{eq:wass-gp-penalty}
\end{equation}
where $u_{\theta}^{(i)} = s_i \cpt^{(i)} + (1-s_i)
\fmap_\theta(\dpt^{(i)})$ for $s_i \sim \mrm{Unif}[0,1]$. The
justification was that given an optimal coupling $\pi_\theta^*$ for
$\wass_1(\cmeas, \sigma_\theta)$, one can show
\cite[Prop.~1]{gulrajaniImprovedTrainingWasserstein2017} that for
$\pi_\theta^*$-a.e.\ pair $\cpt, \cpt' \in \cset^2$, if an optimal
critic $D_w^*$ is differentiable at $\cpt_t = t\cpt + (1-t) \cpt'$,
then $\abs{\nabla D_w^*(\cpt_t)} = 1$. However, in
\cite{Petzka2017Sep} it was demonstrated that this scheme has some
noteworthy problems, a key one being that the product measure $\cmeas
\otimes \sigma_\theta$ often gives positive mass to null sets of
$\pi_\theta^*$, and hence it is not guaranteed that sampling
$\cpt^{(i)} \sim \cmeas$, $\fmap_\theta(\dpt^{(i)}) \sim
\sigma_\theta$ will yield a point in $\supp(\pi_\theta^*)$. Therefore,
the authors of \cite{Petzka2017Sep} propose replacing the
${\abs{\nabla D_w} - 1}$ term in \eqref{eq:wass-gp-penalty} with just
the positive part $\pn{\abs{\nabla D_w} - 1}_+$, thus enforcing only
the 1-Lipschitz condition. An interpretation of this revised
formulation (called WGAN-GP) in terms of a congested transport problem
was introduced in \cite{Milne2022Jun}.

In any case, given the relationship between $\objfp$ and $\wpp(\cmeas,
\cdot)$ we established in \cref{prop:connection-to-ot}, the inclusion
of these penalization terms makes WGAN-GP evocative of
\xref{prob:soft-penalty}, the difference being that the penalty in
\xref{prob:soft-penalty} enforces regularization on the
\emph{generator} rather than on the critic.
We discuss this comparison now.

\subsection{WGAN and The Constrained Monge-Kantorovich Fitting
  Problem}
\label{sec:applying-to-ml}

By \cref{prop:connection-to-ot}, we see that when $\objp = 1$,
\xref{prob:soft-penalty} and WGAN essentially treat the same
objective, but there are two main distinctions between the
formulations.

First, \xref{prob:soft-penalty} depends solely on the geometry of the
image set $\iset = \fmap_\theta(\dset)$---in particular it is agnostic
with regards to the distribution $\dmeas$ on $\dset$---while WGAN
explicitly seeks to minimize $\wpp(\cmeas, \sigma_\theta)$. Thus, as
we proposed in \cref{sec:unequal-dimensional-ot}, we may loosely view
\xref{prob:soft-penalty} as the first step in a ``factorization'' of
WGAN into two parts:
\begin{enumerate}[label=\arabic*)]
  \item Learning the \emph{shape} of $\supp(\sigma_\theta^*)$, where
    $\sigma_\theta^*$ is an optimal approximating measure, and then
  \item Learning a reparametrization map $\varphi$ so that
    $\sigma_\theta^*$ can be expressed as the pushforward of a
    particular $\mu$.
\end{enumerate}
Intuitively, we expect step 1 might be the ``hard'' step (though we
have not investigated this carefully) as it involves learning an
$\ddim$-to-$\cdim$ map (typically $\ddim \ll \cdim$), while step 2
involves learning only an $\ddim$-to-$\ddim$ map. However, we note
that step 2 can still be nontrivial if the optimal $\varphi$ badly
lacks regularity.

The second main difference is that \xref{prob:soft-penalty} enforces
regularization on $\fmap$ (which plays the role of the WGAN generator)
rather than on the critic. To our knowledge, the closest analogous
idea in the literature is the recent work of \cite{Vardanyan2024Jul},
in which a ``left-invertibility'' penalty was used to regularize the
WGAN generator. The authors of that work found that such a
regularization scheme resulted in improved creativity of the model,
without sacrificing statistical optimality. (This is loosely analogous
to our discussion of generalization performance in the toy problem in
\cref{sec:discrete-optimization}).

By analogy with our ``factored'' WGAN problem, we now propose some
qualitative explanations for why such behavior might be expected.

\subsubsection{Classical Benefits of Regularization}
\label{sec:classical-regularization-benefits}
Classically, the training benefits of regularization terms like
$\lambda \cost$ terms are typically understood in terms of ``avoiding
overfitting.'' In our language we may understand this as follows:
Since we are in a ``real-world'' context
(\cref{sec:real-vs-theoretical}) we do not actually have access to
$\cmeas$, but rather a finite number of (possibly noisy) training
samples, which we denote by the empirical measure $\cmeas_{\msf M}$.
By \cref{cor:consistency-sp}, when $\msf M$ is large, an optimizer
$\fmap_{\msf M}$ of $\objfpl(\tinybox; \cmeas_{\msf M})$ approximates
an optimizer of $\objfpl(\tinybox; \cmeas)$ with high probability.

However, in practice, the threshold for what constitutes a
``sufficiently large'' $\msf M$ depends on $\lambda$. Essentially this
is because for any fixed $\msf M$ the constraint is not
\emph{effective} for $\cmeas_{\msf M}$ (see
\cref{prop:effectiveness}), and so if $\lambda$ is very small,
$\fmap_{\msf M}$ will more or less interpolate the atoms of
$\cmeas_{\msf M}$. Typically this yields poor generalization
performance because $\fmap_{\msf M}$ has ``fit'' random variations in
$\cmeas_{\msf M}$ that are not present in $\cmeas$. On the other hand,
we expect that for a proper choice of $\lambda$ (\ie\ not too small)
the $\lambda \cost$ term will force $\fmap_{\msf M}$ to ``average
out'' the noise in $\cmeas_{\msf M}$, thus helping to ensure
$\fmap_{\msf M}$ is a good approximant of \emph{both} $\cmeas$ and
$\cmeas_{\msf M}$, especially perhaps when the differentiability order
$\sobk$ in the Sobolev constraint is chosen based on the regularity of
$\cmeas$.

\subsubsection{Proposed Additional Benefits of Regularization}
\label{sec:additional-regularization-benefits}
In light of the two-step ``factorization'' of WGAN we proposed
earlier, we hypothesize that the inclusion of the regularization term
$\lambda \cost$ actually offers an additional benefit: Encouraging
solutions $\iset$ for step 1 such that the optimal reparametrization
$\varphi$ in step 2 has a small (local) Lipschitz constant at most
points (thus potentially improving training stability). In particular
we propose that $\lambda \cost$ encourages the following properties in
$\iset$:
\begin{enumerate}[label=\arabic*)]
  \item Avoidance of self-intersections,
  \item Few points outside $\supp(\cmeas)$, and
  \item Better preservation of locality, in the sense that the
    Lipschitz constant of $\fmap_\theta$ is relatively small.
\end{enumerate}
To illustrate the point in cases 1 and 2, suppose $\dmeas$, $\cmeas$
are uniform and $\fmap_\theta$ has constant speed, so that
$\sigma_\theta$ is uniform on $\iset$. Then near self-intersections,
the local density of $\sigma_\theta$ is approximately double that of
$\cmeas$, and so a reparametrization $\varphi$ would need to
compensate by moving faster near these points so as to lower the
density of $\sigma_\theta$. Similarly, near a point outside
$\supp(\cmeas)$, $\varphi$ should become discontinuous as it
approaches optimality so as to ensure the density of $\sigma_\theta$
vanishes. For case 3, suppose $\dmeas$ is uniform, $\cmeas$ is a
bimodal Gaussian, and $\fmap_\theta$ is constant speed. If $\iset$
contains many oscillations between the two peaks, then $\varphi$ will
have to be highly oscillatory to compensate; on the other hand, if
$\iset$ ``fills'' the region near the first peak first and only then
moves to the second, $\varphi$ can be taken to be smoother.

To give a tangible example of these effects, we initialize the
algorithm in \cref{sec:algorithm} on a highly nonconvex domain with an
initial condition given by interpolating 200 randomly-chosen points,
with learning rate $\eta(i) = (i/2000)^2$ and decaying regularization
parameter $\lambda(i) = 0.01(1 - \learningrate(i))/\learningrate(i)$
(\cref{fig:regularized-curves}). After just 20 iterations the curve
has a substantially simplified topology. By 110 iterations the curve
has no more self-intersections, and by 800 iterations it has learned
the basic shape of the domain, with only one arc outside
$\supp(\cmeas)$, though this arc later disappears at around iteration
1120. By iteration 1850, the behavior of the curve has stabilized, and
there are no self-intersections or external arcs, with relatively even
coverage of $\cmeas$. In this case, good performance can be obtained
by taking $\varphi$ approximately uniform.
\begin{figure}[H]
  \centering
  \begin{subfigure}[c]{.49\linewidth}
    \centering
    \includegraphics[scale=.15]{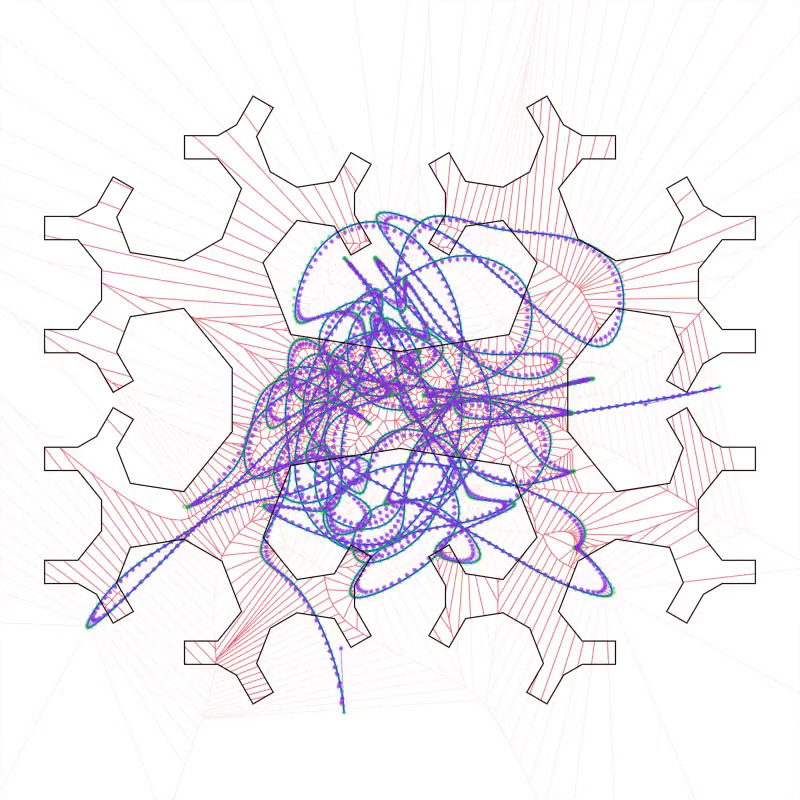}
    \caption{$i=1$}
  \end{subfigure}
  \begin{subfigure}[c]{.49\linewidth}
    \centering
    \includegraphics[scale=.15]{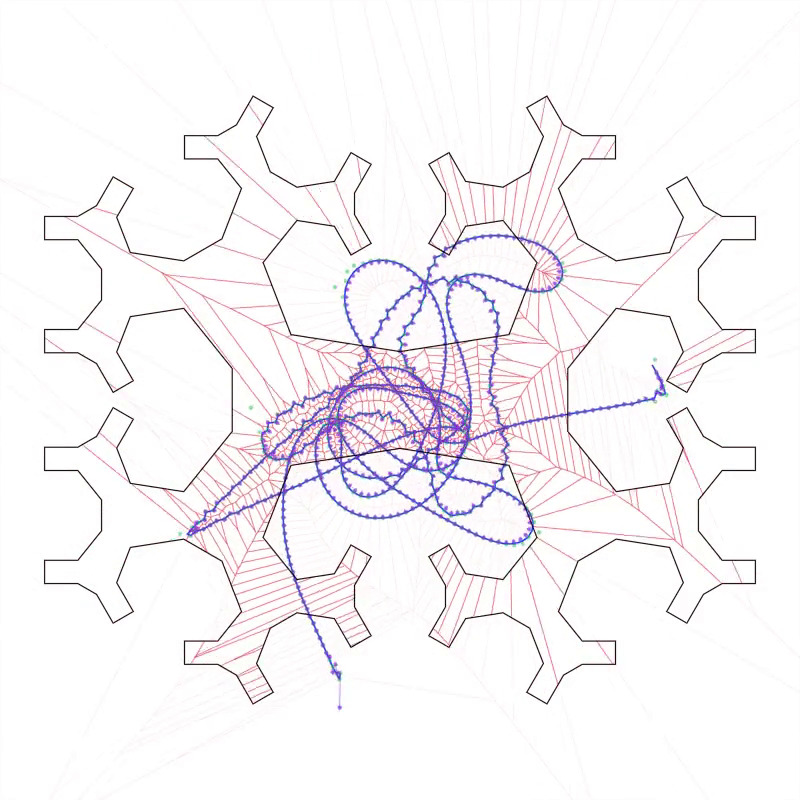}
    \caption{$i=20$}
  \end{subfigure}
  \vspace{-1.5em}
\end{figure}
\begin{figure}[H]
  \ContinuedFloat
  \begin{subfigure}[c]{.49\linewidth}
    \centering
    \includegraphics[scale=.15]{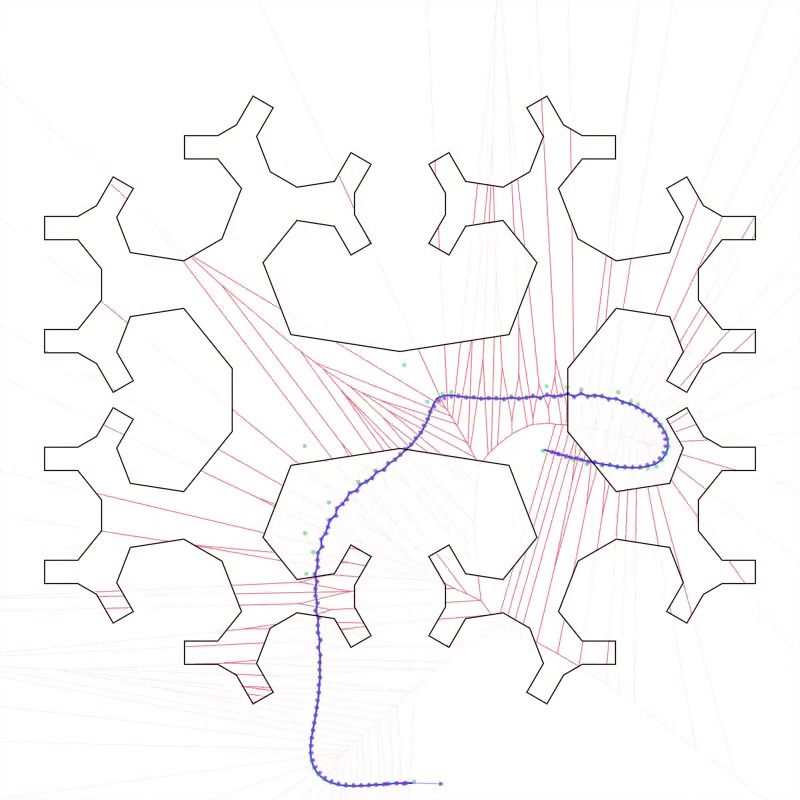}
    \caption{$i=110$}
  \end{subfigure}
  \begin{subfigure}[c]{.49\linewidth}
    \centering
    \includegraphics[scale=.15]{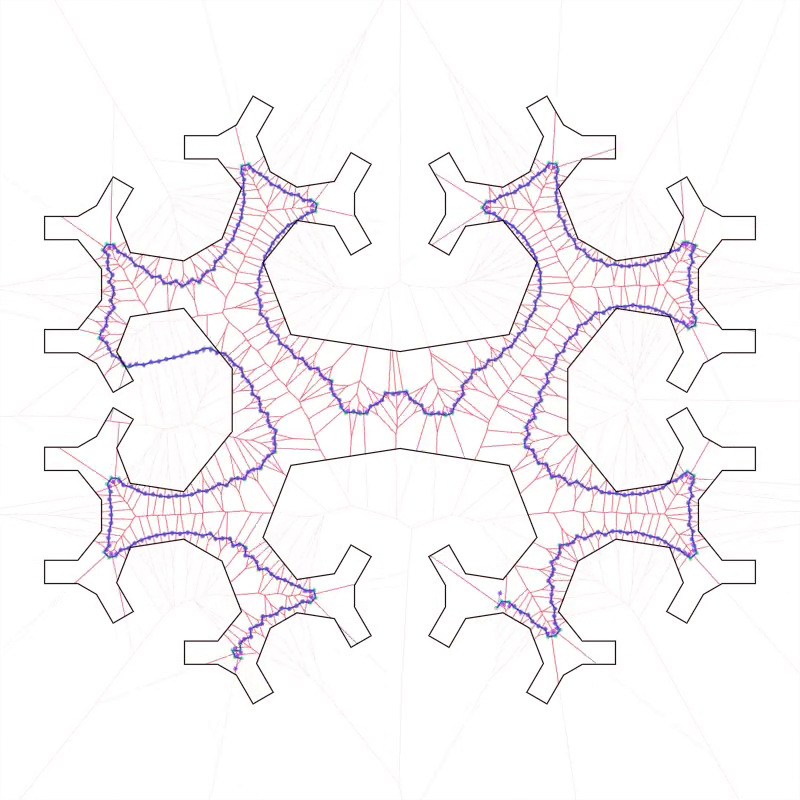}
    \caption{$i=800$}
  \end{subfigure}
  \vspace{-1.5em}
\end{figure}
\begin{figure}[H]
  \ContinuedFloat
  \begin{subfigure}[c]{.49\linewidth}
    \centering
    \includegraphics[scale=.15]{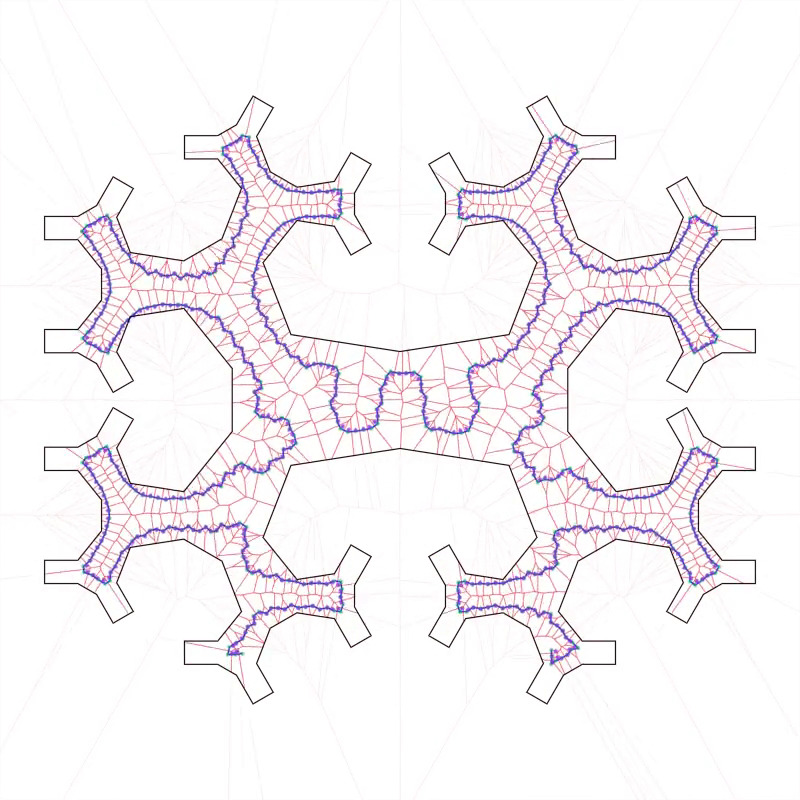}
    \caption{$i=1120$}
  \end{subfigure}
  \begin{subfigure}[c]{.49\linewidth}
    \centering
    \includegraphics[scale=.15]{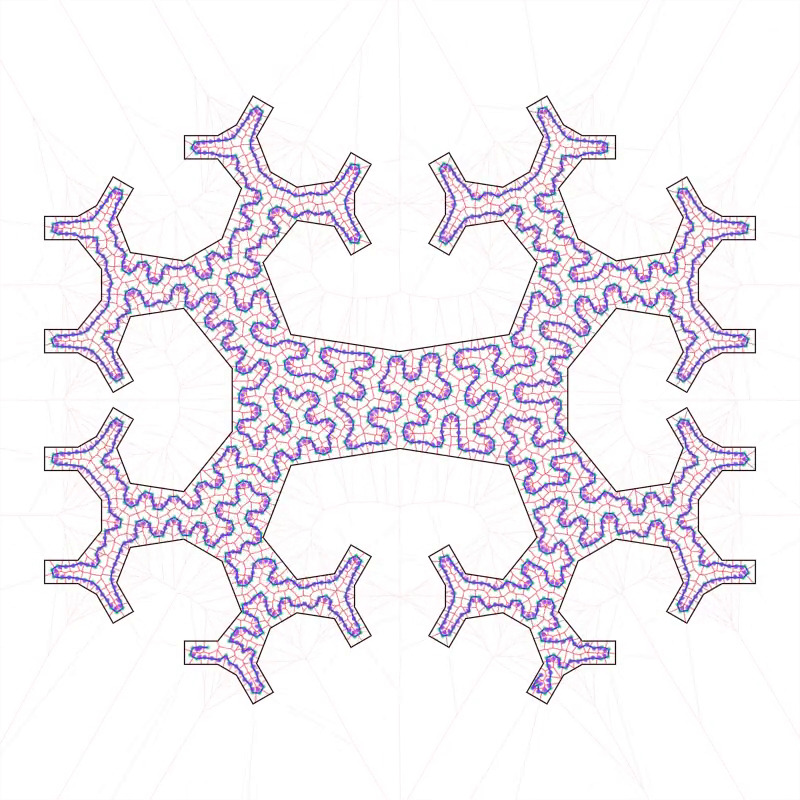}
    \caption{$i=1850$}
  \end{subfigure}
  \caption{Regularized curves.}
  \label{fig:regularized-curves}
\end{figure}
By contrast, the unregularized curve (\cref{fig:unregularized-curves})
initialized with the same parameters but $\lambda(i) \equiv 0$
improves the objective much more rapidly (though in practice we found
the gap can be substantially narrowed by decreasing the coefficient on
$\lambda(i)$, at the cost of introducing two or three persistent arcs
outside of $\supp(\cmeas)$). However, the resulting shape suffers from
many self-intersections and arcs escaping $\supp(\cmeas)$, as well as
a higher point density near the center than near the edges.
Additionally, we see the formation of a very dense spiral shape on the
right edge. Thus this shape would require a very complicated
reparametrization $\varphi$ to yield a good approximation
$\sigma_\theta$.

If the reparametrization were taken to be a neural network, the
network would typically require very extreme weights to effectively
remove density from the arcs outside of $\supp(\cmeas)$. This would be
a poor fit for learning with gradient descent as very large weights
tend to lead to slow convergence and unstable training.

\begin{figure}[H]
  \centering
  \begin{subfigure}[c]{.49\linewidth}
    \centering
    \includegraphics[scale=.15]{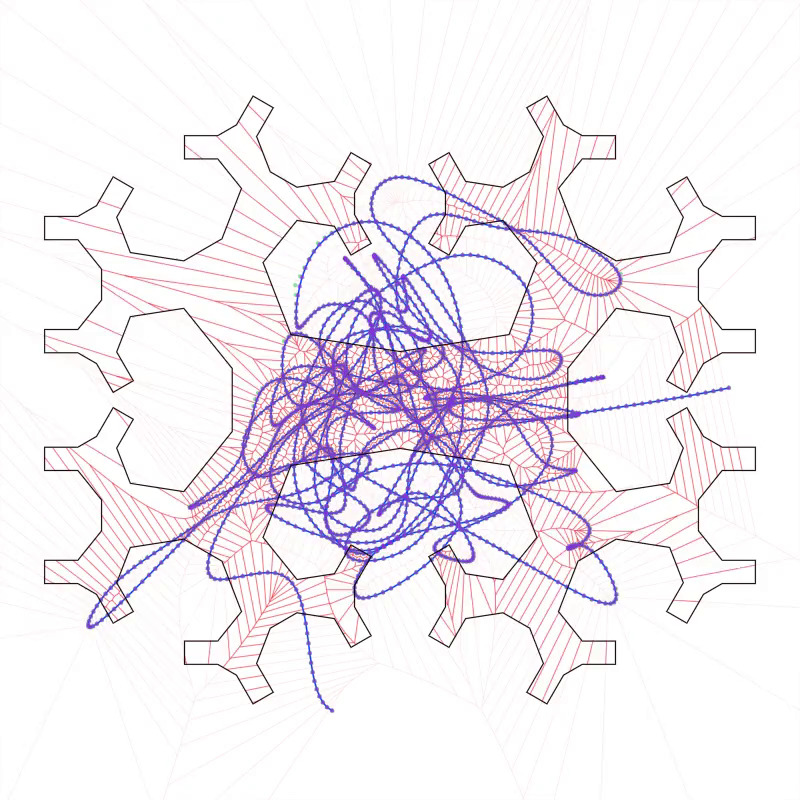}
    \caption{$i=1$}
  \end{subfigure}
  \begin{subfigure}[c]{.49\linewidth}
    \centering
    \includegraphics[scale=.15]{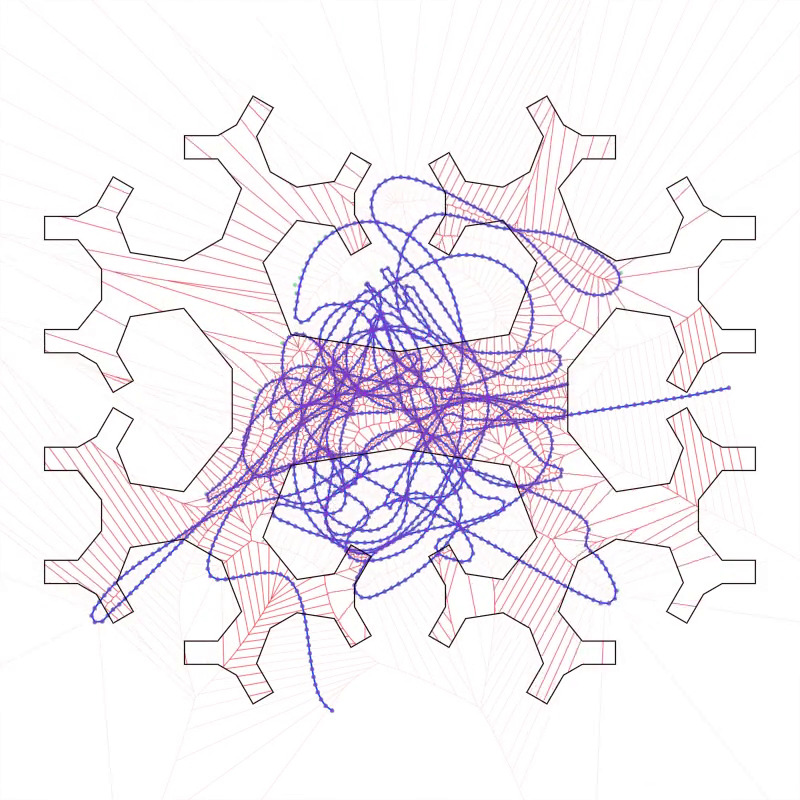}
    \caption{$i=20$}
  \end{subfigure}
  \vspace{-1.5em}
\end{figure}
\begin{figure}[H]
  \ContinuedFloat
  \begin{subfigure}[c]{.49\linewidth}
    \centering
    \includegraphics[scale=.15]{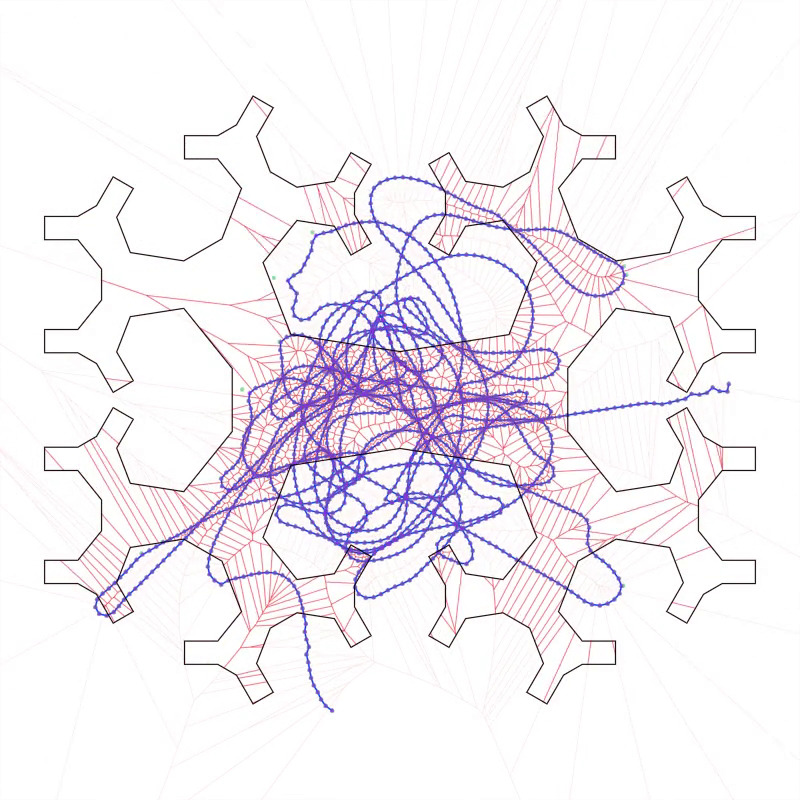}
    \caption{$i=110$}
  \end{subfigure}
  \begin{subfigure}[c]{.49\linewidth}
    \centering
    \includegraphics[scale=.15]{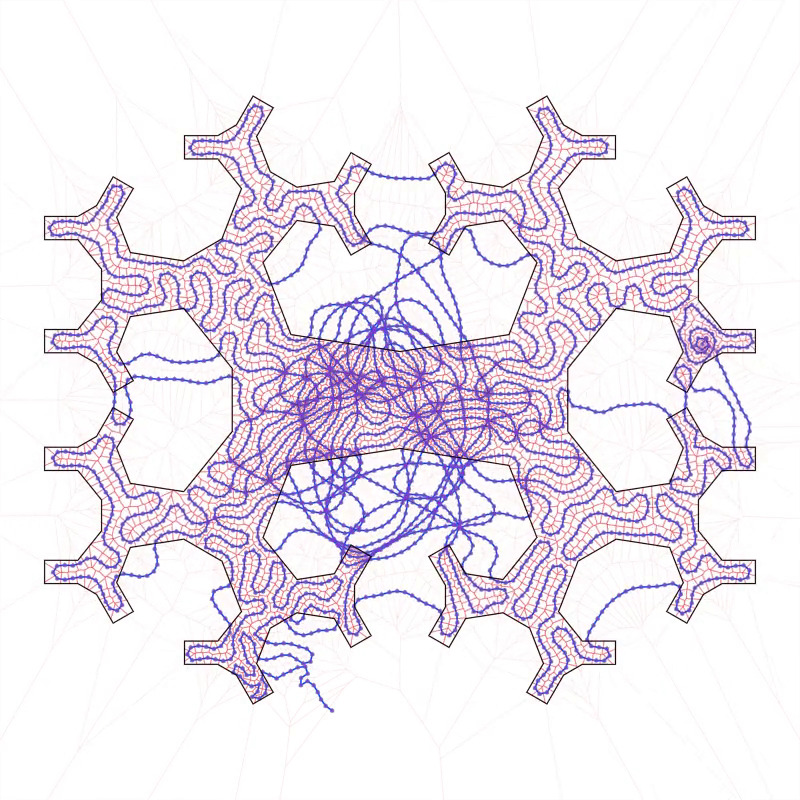}
    \caption{$i=800$}
  \end{subfigure}
  \vspace{-1.5em}
\end{figure}
\begin{figure}[H]
  \ContinuedFloat
  \begin{subfigure}[c]{.49\linewidth}
    \centering
    \includegraphics[scale=.15]{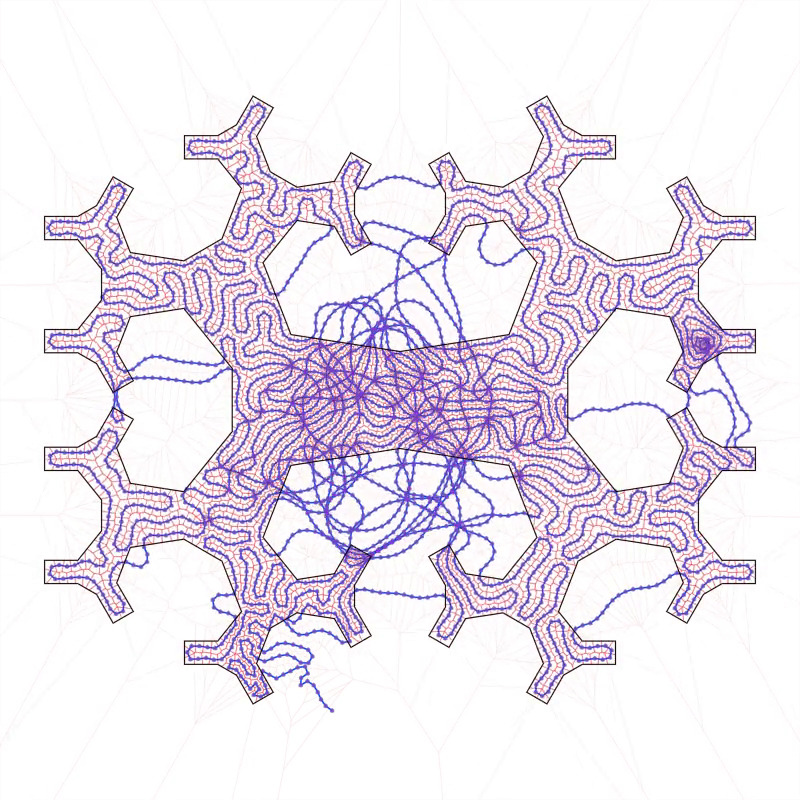}
    \caption{$i=1120$}
  \end{subfigure}
  \begin{subfigure}[c]{.49\linewidth}
    \centering
    \includegraphics[scale=.15]{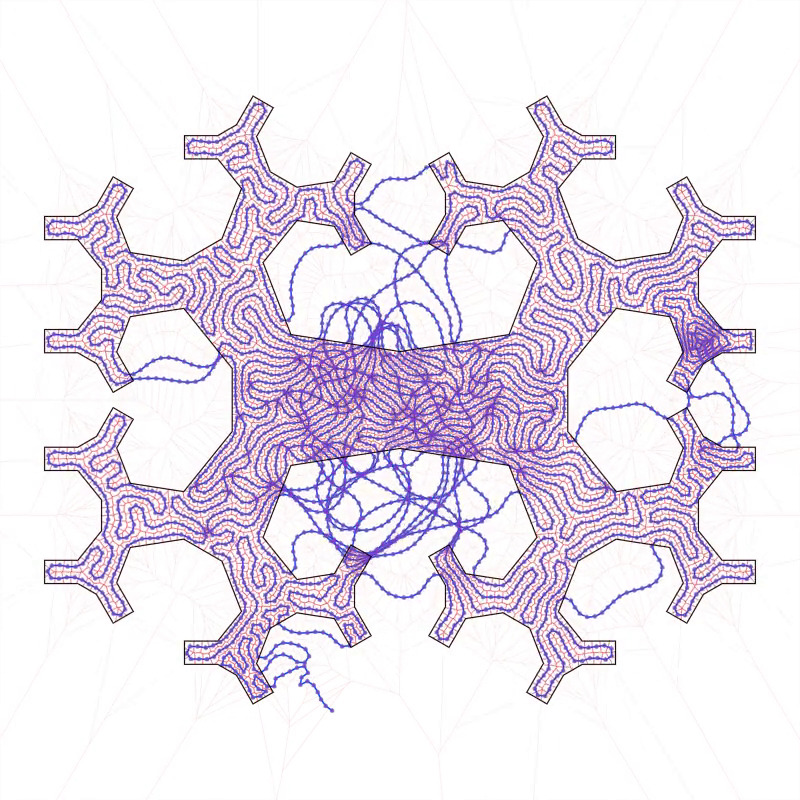}
    \caption{$i=1850$}
  \end{subfigure}
  \caption{Unregularized curves.}
  \label{fig:unregularized-curves}
\end{figure}
We propose that a similar mechanism could operate in general
generative learning tasks, even in the case where the problem is not
``factored'' into the two steps we described earlier.

To that end in \cref{sec:continuous-opt} below we offer two
proof-of-concept experiments demonstrating the utility of a $\sobk =
1$ penalty in training neural nets to produce the uniform measure on a
disk in $\RR^2$, as well as in generating images of handwritten
digits. Then, to highlight the distinction between the
``reparametrization'' benefits we described in this subsection and the
``overfitting avoidance'' benefits we described in
\cref{sec:classical-regularization-benefits}, we perform an MNIST
experiment in \cref{sec:discrete-optimization} which demonstrates that
even when $\lambda$ is too small to prevent overfitting, the
regularizing effects of $\cost$ encourage $\sigma_\theta$ to
``interpolate'' $\cmeas_{\msf M}$ more intelligently, thus possibly
producing a $\sigma_\theta$ that better approximates $\cmeas$ than an
unregularized solution might.

\subsection{Training Neural Networks with a $\sobk=1$
  Penalty}\label{sec:continuous-opt}
In this subsection we exhibit two simple proof-of-concept experiments
demonstrating that adding a $\lambda \cost$ term to $\objfp$ can yield
training improvements in some generative learning tasks. Note that as
it is generally computationally intensive to compute higher-order
derivatives for neural networks, we will focus our attention
particularly on the case $\sobk = 1$. The setup is the same in both
cases (up to a choice of parameters), hence we begin by describing it
in generality.

\subsubsection{Setup}
We consider the ``real-world'' case of \cref{sec:real-vs-theoretical}
where we do not know \(\cmeas\) and instead receive \(\msf M\) samples
\(\nndataset = \{\cpt_{\msf 1}, \ldots, \cpt_{\msf M}\}\) from
\(\cmeas\). In each of the experiments below we consider $\ddim=1$ and
model \(\fmap: \RR \to \RR^\cdim\) using an \(\nnnumlayers\) layer
fully connected neural network \(\nnffnet_\nnnumlayers:
\RR^\nnlatentdim{} \to \RR^\cdim\) of hidden dimension \(d\) with
\[
  \nnffnet_0(\nnlatent) = \nnweight_0\nnlatent,\qquad
  \nnffnet_\nnlayerindex(\nnlatent) =
  \nnnonlin(\nnweight_\nnlayerindex
  \nnffnet_{\nnlayerindex-1}(\nnlatent) + \nnbias_\nnlayerindex)
\]
where \(A_0 \in \RR^{\cdim \times \nnlatentdim}\),
\(\nnweight_\nnlayerindex \in \RR^{\nnlatentdim \times \nnlatentdim},
\nnbias_\nnlayerindex \in \RR^\nnlatentdim\) for all
\(\nnlayerindex\), and \(\nnnonlin\) notates the GeLU nonlinearity
\cite{hendrycks2016gaussian}. It is well-known that such networks
perform poorly when modeling functions with behavior at varying scales
in \(\cset\) \cite{tancik2020fourfeat,mildenhall2021nerf}, so we adopt
a modification of the positional encoding \(\nnposencoder : \RR \to
\RR^\nnlatentdim\) of \cite{mildenhall2021nerf},
\[
  \nnposencoder(x) = \pn{\frac{\sin(2^1\pi x)}{\sqrt{2^1}},
    \frac{\cos(2^1\pi x)}{\sqrt{2^1}},\ \ldots,\ \frac{\sin(2^{d/2}\pi
      x)}{\sqrt{2^{d/2}}}, \frac{\cos(2^{d/2}\pi x)}{\sqrt{2^{d/2}}}}.
\]
Compared to \cite{mildenhall2021nerf}, we damp each coordinate by the
square root of its frequency, otherwise we found the initial arc
length would tend to explode as \(d\) increased, leading to poor
numerical stability at initialization. Finally, we have \(\fmap = h_L
\circ \gamma\).

We approximate the \xref{prob:soft-penalty} objective \(\objfpl =
\objfp + \lambda \cost\) via an empirical loss $\nnloss$ defined as
follows. First, to approximate $\objfp$, for each evaluation we sample
a set of \(\nnbsize\) points \(\msf{\dset_N} = \set{\nncand_0, \ldots,
  \nncand_\nnbsize} \sim \dmeas\) and define
\[
  \nnlossobj = \frac{1}{\msf M}\sum_{i=1}^{\msf M} \min_{\nncand \in
    \msf{\dset_N}} \norm[]{\cpt_i - \fmap(\nncand)}.
\]
Note that because this is a $\objp = 1$ objective, properly speaking,
in order to interpret the gradient of $\nnlossobj$ in terms of the
barycenter field (\cref{prop:fbary-grad-J-discrete}), we need to know
$\{\omega_i\} \cap \fmap(\msf{X_N}) = \varnothing$. Typically this
will not be an issue: Since $\{\omega_i\}$ is countable, as long as
$\fmap$ is nonconstant and $\mu$ is absolutely continuous,
$\fmap^{-1}(\{\omega_i\})$ should be $\mu$-null, whence $\{\omega_i\}
\cap \fmap(\msf{X_N}) = \varnothing$ $\mu$-a.s.

For the $\cost$ part, since we will be primarily interested in
large-budget solutions we ignore the $0$\textsuperscript{th}-order
term and use the approximation
\[
  \nnlosscost = \frac{1}{\nnbsize}\sum_{i=1}^{\nnbsize}
  \norm{\fmap'(\nncand_{\msf i})}.
\]
Note that $\nnlosscost$ is implicitly a $\sobp = 1$ penalty, whereas
in our earlier theory work we considered only the case $1 < \sobp <
\infty$. In practice we do not expect this to yield different
qualitative behavior than the $\sobp = 1 + \varepsilon \approx 1$
case; hence we chose $\sobp =1$ simply as a matter of computational
convenience.

In any case, the final loss is then \(\nnloss = \nnlossobj + \lambda
\nnlosscost\) for some regularization parameter \(\lambda > 0\), just
as in \xref{prob:soft-penalty}.

\subsubsection{Experiment 1: Fitting Toy Data}
\label{sec:fitting-the-disk}
Here we consider the case $\ddim=1$, $\cdim=2$ and take \(\cmeas\) to
be the uniform measure on a set of 50 points randomly sampled from the
unit circle in \(\RR^2\). We perform 300,000 iterations of gradient
descent to optimize \(\fmap\), choosing \(\nnlatentdim = 10^3\) and
\(\nnbsize = 10^4\). We show results for \(\lambda = 10^{-4}\) as well
as \(\lambda = 0\) in \cref{fig:curvegan50}. These results highlight
the regularizing effect of the constraint. In
\cref{fig:curvegan50-noreg} we see that the curve passes through all
50 points almost perfectly, but interpolates them in a messy fashion.
In contrast, the curve in \cref{fig:curvegan50-reg} interpolates
almost linearly between successive points in a manner reminiscent of a
traveling salesperson path, but there remain three points quite far
from the curve.
\begin{figure}[H]
\centering
\begin{subfigure}{.49\linewidth}
  \centering
  \includegraphics[scale=.44]{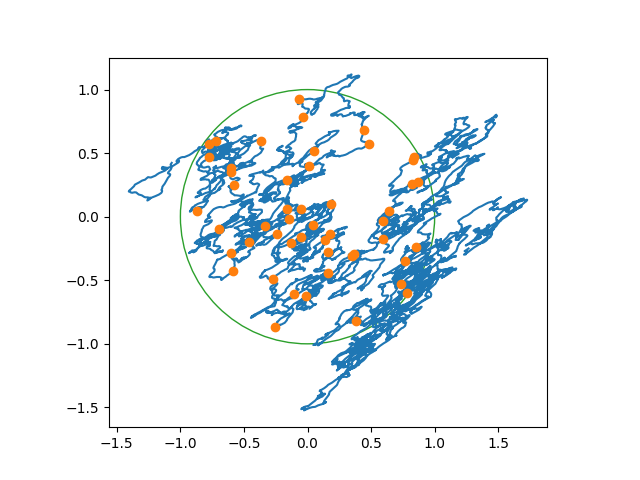}
  \caption{\(\lambda = 0\)}\label{fig:curvegan50-noreg}
\end{subfigure}
\begin{subfigure}{.49\linewidth}
  \centering
  \includegraphics[scale=.44]{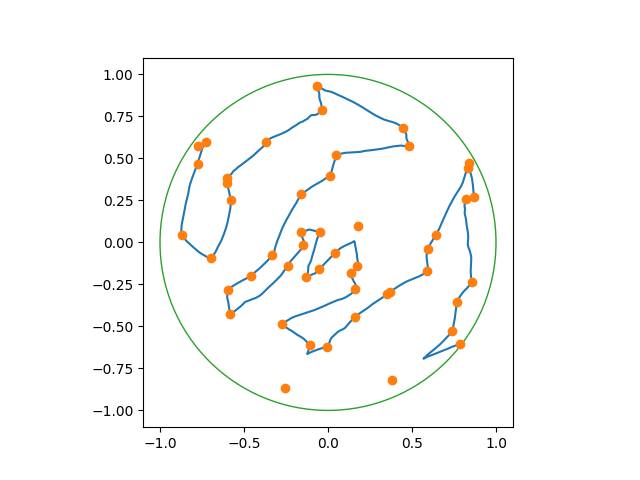}
  \caption{\(\lambda = 10^{-4}\)}\label{fig:curvegan50-reg}
\end{subfigure}
\caption{Visualization of \(\fmap\) for different values of
  \(\lambda\).}\label{fig:curvegan50}
\end{figure}

This failure to fit all of the points is due to the Lagrangification
of the constraint. As \(\lambda \to 0\) we would expect this effect to
vanish while maintaining the desired regularizing effect. Of course,
as \(\lambda \to 0\), our optimization problem becomes increasingly
ill-conditioned, which prevents gradient descent from quickly finding
effective minimizers.

\subsubsection{Fitting MNIST}
\label{sec:fitting-mnist}
Here we take \(\dset = [0, 1]\) with \(\dmeas\) the uniform measure on
\(\dset\) and \(\nndataset\) given by MNIST.

For our experiments, we chose a small value of \(\lambda = 10^{-4}\).
We optimized the weights \((\nnweight_\nnlayerindex,
\nnbias_\nnlayerindex)\) to minimize \(\nnloss\) in expectation using
stochastic gradient descent, performing 1000 epochs over the dataset.
We also chose
\(\nnlatentdim = 10^3\) and \(\nnbsize = 10^4\).

We plot the evolution of \(\fmap\) over the course of training in
\cref{fig:curvegan-mnist}, where we can see that the visual quality of
the digits produced improves as training progresses.
\begin{figure}[H]
\centering
\begin{subfigure}{.49\linewidth}
  \centering
  \includegraphics[scale=.47]{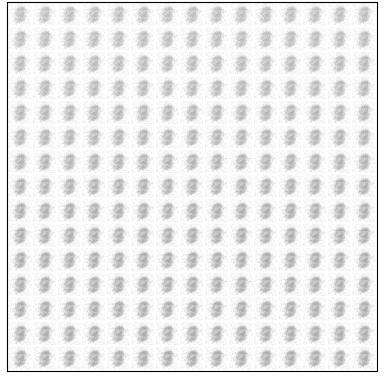}
  \caption{1 epoch}\label{fig:curvegan-mnist-1}
\end{subfigure}
\begin{subfigure}{.49\linewidth}
  \centering
  \includegraphics[scale=.47]{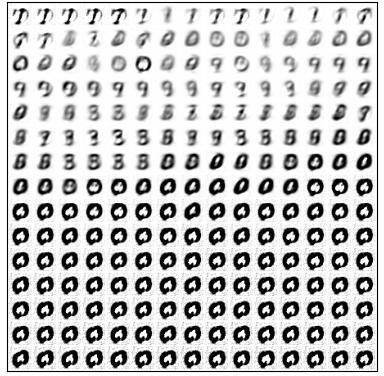}
  \caption{30 epochs}\label{fig:curvegan-mnist-30}
\end{subfigure}
\end{figure}
\begin{figure}[H]\continuedfloat
\begin{subfigure}{.49\linewidth}
  \centering
  \includegraphics[scale=.47]{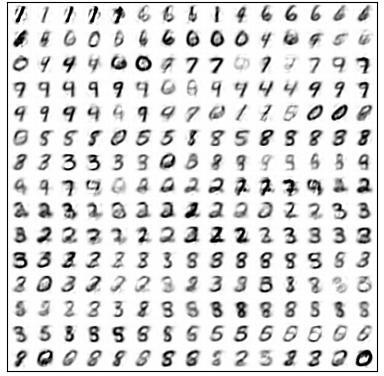}
  \caption{100 epochs}\label{fig:curvegan-mnist-100}
\end{subfigure}
\begin{subfigure}{.49\linewidth}
  \centering
  \includegraphics[scale=.47]{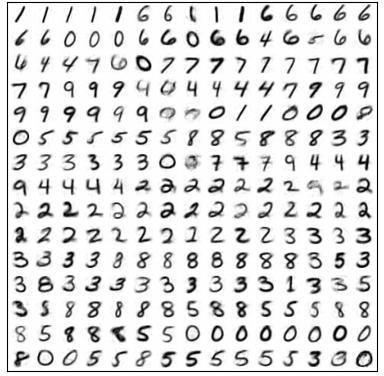}
  \caption{1000 epochs}\label{fig:curvegan-mnist-1000}
\end{subfigure}
\caption{Visualization of \(\fmap\) sampled at \(15^2\)
  uniformly-spaced points (for convenience) from \(0\) to \(1\) at
  several points during the training process.}\label{fig:curvegan-mnist}
\end{figure}

To demonstrate the importance of the constraint, we perform a similar
run setting \(\lambda = 0\). The results are shown in
\cref{fig:curvegan-mnist-noreg}.
\begin{figure}[H]
\centering
\begin{subfigure}{.49\linewidth}
  \centering
  \includegraphics[scale=.47]{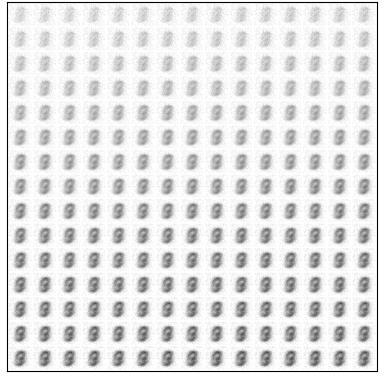}
  \caption{1 epoch}\label{fig:curvegan-mnist-noreg-1}
\end{subfigure}
\begin{subfigure}{.49\linewidth}
  \centering
  \includegraphics[scale=.47]{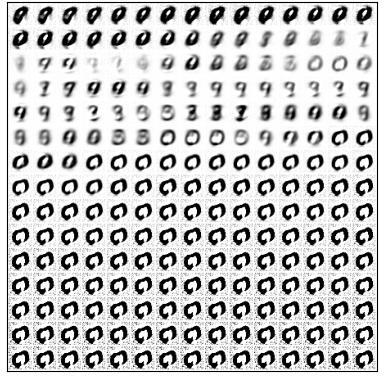}
  \caption{30 epochs}\label{fig:curvegan-mnist-noreg-30}
\end{subfigure}
\end{figure}
\begin{figure}[H] \continuedfloat
\begin{subfigure}{.49\linewidth}
  \centering
  \includegraphics[scale=.47]{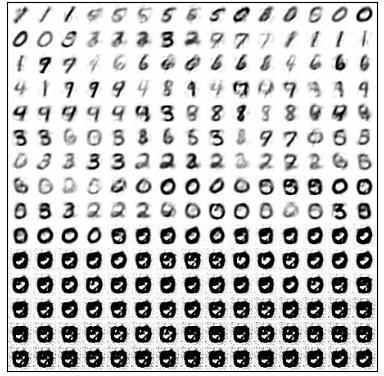}
  \caption{100 epochs}\label{fig:curvegan-mnist-noreg-100}
\end{subfigure}
\begin{subfigure}{.49\linewidth}
  \centering
  \includegraphics[scale=.47]{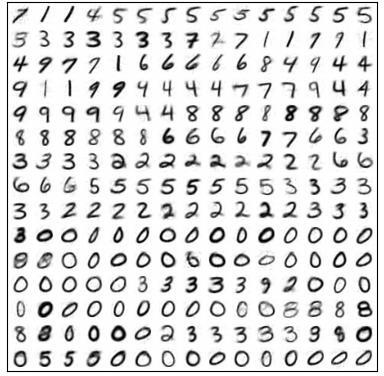}
  \caption{1000 epochs}\label{fig:curvegan-mnist-noreg-1000}
\end{subfigure}
\caption{Visualization of \(\fmap\) as in \cref{fig:curvegan-mnist}
  but with \(\lambda = 0\).}\label{fig:curvegan-mnist-noreg}
\end{figure}
Here we can see that after 100 epochs, the result without
regularization in \cref{fig:curvegan-mnist-noreg-100} is far worse
than the regularized result in \cref{fig:curvegan-mnist-100}. In this
case in \cref{fig:curvegan-mnist-noreg-100}, a large portion of
\(\dset\) is ``wasted'' because large intervals of the domain map to
nearly the same image, which leads to poor coverage of the rest of
MNIST.

Although the arclength penalty does not directly penalize this
behavior (since regions of \(\dset\) where $\fmap$ is approximately
constant do not contribute to the arclength), it is clearly beneficial
in this situation. Note that after sufficient iterations, even the
unregularized network begins to make full use of \(\dset\), as seen in
\cref{fig:curvegan-mnist-noreg-1000}. We hypothesize the following
explanation for the behavior in these two cases.

Recall from our earlier discussion of $\nnlossobj$ that we may
interpret the gradient of $\nnlossobj$ (with respect to the locations
of the sample points $\fmap(\msf{x})$) in terms of the barycenter
field. So, at each iteration, our weight updates should be modifying
$\fmap$ in a way that approximates following the barycenter field.

Suppose there is a region $\dset_{\rm static}$ where $\fmap$ is
approximately constant. Unless the $\{\omega_i\}$ are highly
concentrated around $\fmap(\dset_{\rm static})$, this typically
implies the barycenter will approximately vanish everywhere except a
few points near the boundary of $\fmap(\dset_{\rm static})$ (since
only these outer points will get data samples projecting to them).

Consider such an ``outer'' point $\msf x \in \dset_{\rm static}$. In
the update step, $\msf{x}$ will be pulled towards more useful regions
of \(\cset\) by the barycenter field. Without the arclength penalty,
the curve will simply lengthen to accommodate this, leaving
$\dset_{\rm static}$ largely intact. In contrast, the arclength
penalty causes the shifted point $\msf x$ to ``pull'' some of
$\dset_{\rm static}$ with it. Thus, the regularization will promote
effective usage of the entirety of \(\dset\) much more quickly than in
the unregularized case.

\subsubsection{Comparison with weight decay}
\label{sec:weight-decay-comparison}

One popular form of regularization for neural networks is \(L_2\)
regularization of the network's weights which is sometimes called
``weight decay'' in the context of gradient descent
\cite{krogh1991simple,zhang2018three,gnecco2009weight}. Under gradient
descent with weight decay, we multiply the weights of the network by
\(1 - \nnwdparam\) after each gradient step, where \(\nnwdparam > 0\)
is typically quite small. A notable connection between our constraint
and weight decay was explored in \cite{zhang2018three}; there the
authors propose that weight decay may behave similarly to a penalty on
the Frobenius norm of the network's Jacobian, which has a similar form
to our constraint when \(\sobk = 1\) and \(\sobp = 2\). We show
results when running under weight decay in
\cref{fig:curvegan-mnist-wd}. We chose \(\nnwdparam = 10^{-4}\) so
that the amount of regularization would be similar to that in
\cref{sec:fitting-mnist}. Comparing \cref{fig:curvegan-mnist-100} and
\cref{fig:curvegan-mnist-wd-100} we see that our constraint produces
better results than weight decay after 100 epochs although after 1000
epochs the gap has narrowed considerably.
\begin{figure}[H]
\centering
\begin{subfigure}{.49\linewidth}
  \centering
  \includegraphics[scale=.47]{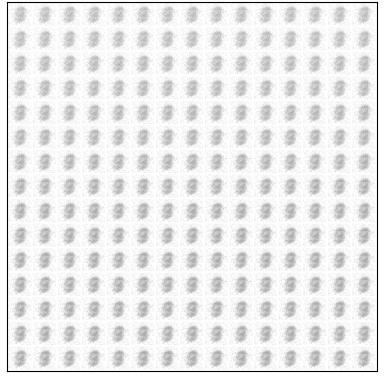}
  \caption{1 epoch}\label{fig:curvegan-mnist-wd-1}
\end{subfigure}
\begin{subfigure}{.49\linewidth}
  \centering
  \includegraphics[scale=.47]{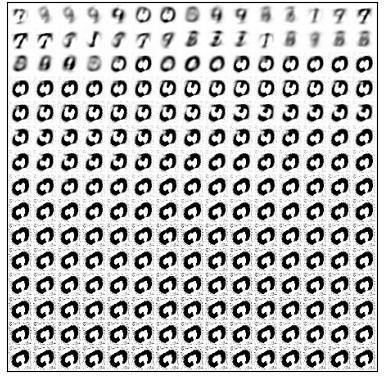}
  \caption{30 epochs}\label{fig:curvegan-mnist-wd-30}
\end{subfigure}
\end{figure}
\begin{figure}[H] \continuedfloat
\begin{subfigure}{.49\linewidth}
  \centering
  \includegraphics[scale=.47]{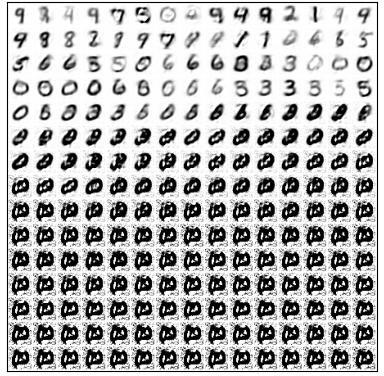}
  \caption{100 epochs}\label{fig:curvegan-mnist-wd-100}
\end{subfigure}
\begin{subfigure}{.49\linewidth}
  \centering
  \includegraphics[scale=.47]{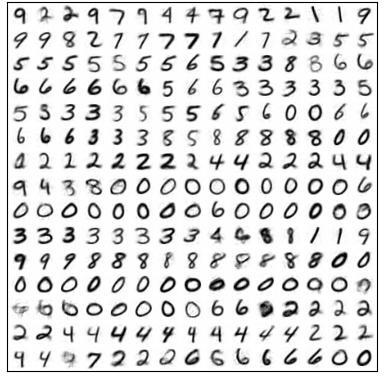}
  \caption{1000 epochs}\label{fig:curvegan-mnist-wd-1000}
\end{subfigure}
\caption{Visualization of \(\fmap\) as in \cref{fig:curvegan-mnist}
  but using weight decay with \(\nnwdparam =
  10^{-4}\).}\label{fig:curvegan-mnist-wd}
\end{figure}
Unlike our constraint, the effect of weight decay varies with the
architecture and parametrization of the neural network being
optimized. This may be seen as a weakness or a strength depending on
the details of the task being performed. In this particular setting,
weight decay does not appear to offer much advantage over not using
regularization at all.

\subsection{Demonstrating the Interpolation Benefits of a $\sobk = 1$
  Penalty}\label{sec:discrete-optimization}

Recall that in \cref{sec:applying-to-ml} we hypothesized that, in
addition to avoidance of overfitting, the penalty term $\lambda \cost$
could improve training efficiency by encouraging regularity in the
optimal reparametrization map $\varphi$. To illustrate this point, we
initialized our ``theoretical'' algorithm from \cref{sec:algorithm}
with and without regularization, yielding the outputs in
\cref{fig:regularized-curves} and \cref{fig:unregularized-curves}.

In this final subsection, we perform an analogous ``real-world''
experiment using MNIST data. So as to emphasize the distinction with
the ``avoidance of overfitting'' effects described in
\cref{sec:classical-regularization-benefits}, we will compare two
curves that both \emph{fully interpolate} the data but differ in their
$\cost$ values. Typically such interpolation can occur only when
$\lambda = 0$ in \xref{prob:soft-penalty}; to avoid some technical
hand-wringing around this case we instead formalize our experiments
via an ``adjoint'' formulation of \xref{prob:hard-constraint}:
\begin{adjustwidth}{1em}{0em}
  \vspace{.25em}
  \begin{leftbar} \vspace{-.75em}
    \begin{problem}[``Adjoint'' Problem (AdjHC)]
      \xlabel[(AdjHC)]{prob:adjoint}
      Minimize $\cost(\fmap)$ over $\set{\objfp(\fmap) \leq \zeta}$.
    \end{problem} \vspace{-.25em}
  \end{leftbar}
  \vspace{-.375em}
\end{adjustwidth}
While \xref{prob:adjoint} is similar to \xref{prob:hard-constraint} in
the sense that we have a hard constraint instead of a soft penalty,
note that \xref{prob:adjoint} recovers the property that all
optimizers must be maximally-efficient, and is thus equivalent to
\xref{prob:soft-penalty} and not \xref{prob:hard-constraint} (see
\cref{sec:comparison-to-soft-penalty}).

Now, let $\ddim = 1$, $\sobk = 1$, and $\sobp = 1$ (as in
\cref{sec:continuous-opt} we do not expect $\sobp = 1$ to cause
problems), and suppose our target is a discrete measure $\cmeas_{\msf
  M} = \sum_{j=1}^{\msf M} \delta_{\cpt_j}$. We now show that in this
case the $\zeta = 0$ \xref{prob:adjoint} essentially recovers the
Euclidean traveling salesperson problem (TSP), whence we may
approximate solutions to \xref{prob:adjoint} using TSP solvers.

To that end, note that if $\fmap$ is a $\zeta=0$ optimizer of
\xref{prob:adjoint} then $\fmap$ must visit every point in \(S =
\supp(\cmeas_{\msf M})\). Due to our choice of \(\sobk = 1\) and
negligibility of the $0$\textsuperscript{th}-order term for large
$\budg$ (a reasonable assumption for large $\msf M$), the optimal way
to travel between two points of \(S\) is via a straight line.
Accordingly, we are interested in choosing an ordering on \(S\) such
that visiting the points of \(S\) in that order, connected by straight
lines, gives the shortest overall arc length. This is precisely the
Euclidean TSP.

Although the Euclidean TSP is known to be NP-hard, mature numerical
solvers exist which produce solutions of very high quality. We use the
LKH solver \cite{helsgaun2000effective}, an optimized implementation
of the Lin-Kernighan heuristic, to produce an approximate tour of the
MNIST training set. By drawing random samples uniformly from the
domain set $[0,1]$, we may visually compare random samples from this
optimized tour against those from a random tour:
\begin{figure}[H]
  \centering
  \def\figdir{figures}
  \def\locscale{.5}
  \begin{subfigure}{.32\linewidth}
    \includegraphics[scale=\locscale]{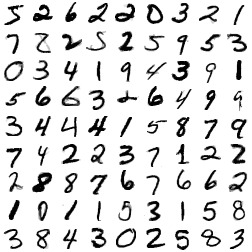}
    \caption{TSP}\label{fig:mnist-tsp}
  \end{subfigure}\hspace{4em}
  \begin{subfigure}{.32\linewidth}
    \includegraphics[scale=\locscale]{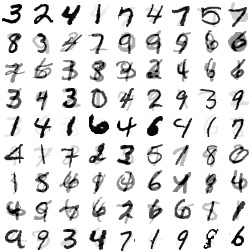}
    \caption{Random}\label{fig:mnist-random}
  \end{subfigure}
  \caption{Samples from tours of MNIST with \cref{fig:mnist-tsp} a
    tour optimized by LKH and \cref{fig:mnist-random} a random
    tour.}\label{fig:mnist}
\end{figure}
Perhaps unsurprisingly, the interpolated images from the optimized
tour in \cref{fig:mnist-tsp} appear more similar to real MNIST digits
than the ones from the random tour in \cref{fig:mnist-random} do.
Since the domain samples were chosen uniformly at random from $[0,1]$,
we see that in this case taking $\varphi$ to be identity yields good
performance in the second step of our ``factorization'' in
\cref{sec:applying-to-ml}.

In particular, note that this improvement is not due to ``prevention
of overfitting,'' as both tours fully interpolate the data. This
suggests that even when $\lambda$ is too small to prevent overfitting,
the inclusion of a regularization term may still yield training
improvements via the mechanisms in
\cref{sec:additional-regularization-benefits}.

While it is certainly possible to reparametrize the random curve to
produce reasonable-looking images (by placing point masses at the
original data points, for example) this model does not effectively
generalize the examples seen. In contrast, the TSP path is able to
produce novel images that are visually consistent with those of MNIST.

\section{Conclusion}
In this paper, we analyzed a certain variational problem regarding
approximating high-dimensional measure $\rho$ via a lower-dimensional
measure $\nu$ whose complexity is bounded. We showed that in fact it
is sufficient to treat the \emph{support} of $\nu$ (parametrized by
some $\fmap$) as the fundamental variable of optimization, yielding
the problem \xref{prob:hard-constraint}, as well as the
nearly-equivalent and nicer-to-simulate problem
\xref{prob:soft-penalty}. Importantly, the fact that we optimize over
$\fmap$ instead of $\nu$ in these problems allowed us to use simpler,
geometric techniques in our analysis.

We used this perspective to propose a two-step ``factorization'' of a
WGAN problem in which there is regularization on the generator. In the
first step, we obtain an $\fmap$ that is optimal in the sense of
\xref{prob:hard-constraint}/\xref{prob:soft-penalty}, and then in the
second step, $\fmap$ is reparametrized to properly push-forward a
given measure $\mu$. Since the essential qualitative behavior of the
solution is determined primarily in the first step, we proposed that
theoretical study of
\xref{prob:hard-constraint}/\xref{prob:soft-penalty} could provide
insights into the qualitative behavior of WGAN, and other similar
problems.

To that end, we examined basic properties of
\xref{prob:hard-constraint} over a general class of regularity
parameters for $\fmap$. We placed a particular focus on understanding
the ``gradient'' of the objective functional $\objfp(\fmap)$; we
called this gradient the \emph{barycenter field}. Under mild
hypotheses, we showed (in both the continuous and discrete cases) that
the barycenter field has a simple geometric interpretation that is
amenable to visualization. Then, in the discretized case, we proposed
an efficient, deterministic numerical scheme that used the barycenter
field to simulate \xref{prob:soft-penalty} for a special family of
targets $\rho$, provided $\fmap$ is a curve.

The resulting simulations had intuitive qualitative behavior that
offered a simple explanatory narrative for how the regularization on
$\fmap$ in the first step of the ``factored'' WGAN problem results in
a much more well-behaved reparametrization in the second step.
Importantly, we hypothesized such regularization could yield benefits
in both training and generalization performance, even when no explicit
factorization of WGAN is performed. In this vein, we gave simple,
proof-of-concept experiments demonstrating both effects manifesting in
an ``unfactored'' model when generating MNIST data in the
presence/absence of our regularization.

In future work it would be interesting to quantitatively examine the
extent to which similar effects persist in more nontrivial datasets.

\bibliographystyle{amsplain}
\bibliography{bib}

\appendix

\section{Gallery of Numerical Experiments}
\label{sec:gallery-of-sims}
\subsection{Triangular Domain}
The numerics below were obtained with the following choices of
parameters:
\begin{itemize}
  \item Domain: Triangle with vertices $\set{(1, 2\pi k/3) \MID k =
    0,1,2}$,
  \item Objective functional exponent: $\objp = 2$,
  \item Sobolev parameters: $\sobk = 2$, $\sobp = 2$
  \item Initial condition: 500 equally-spaced samples of
    $(t, \frac{1}{200} \sin(200t))$ on $[-1/20,1/20]$
  \item Spline resampling resolution: $\frac{3}{1000}$
  \item Learning rate: $\learningrate(i) = (i/500)^2$
  \item Regularization penalty: $\lambda(i) = \frac{1}{100} (1 - \learningrate(i))$
  \item Area rescaling: $\fbarysf(\iptsfj) \mapsto \fbarysf(\iptsfj) /
    \cmeas(\psin^{-1}(\iptsfj))^{17/20}$
  \item Smoothing window width: 1 point each side for the $\iptsfj$, 3
    points each side for $\fbarysfj$, 3 points each side for $\nabla
    \cost$.
\end{itemize}

\begin{figure}[H]
  \centering
  \begin{subfigure}{.49\linewidth}
    \centering
    \includegraphics[scale=.09]{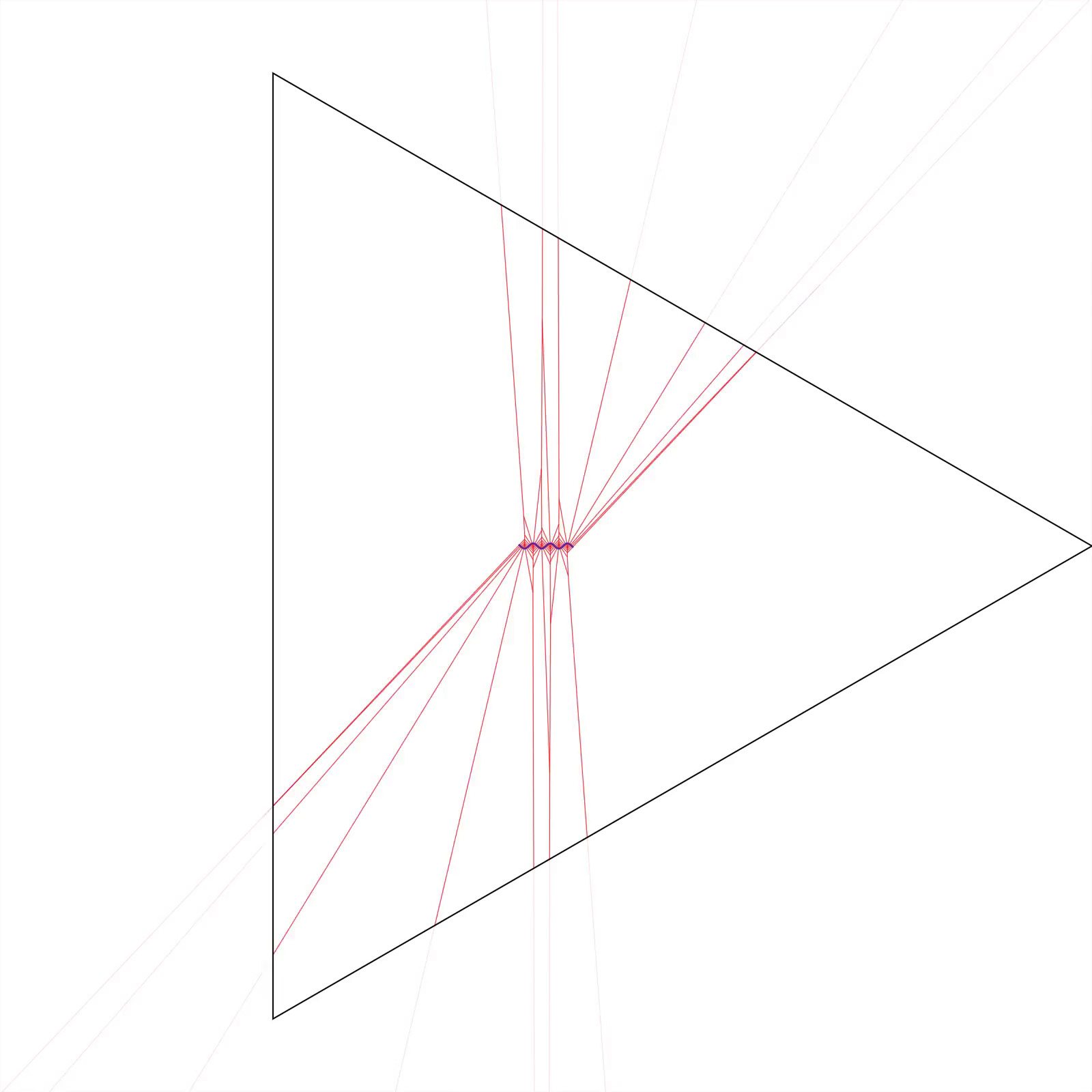}
    \caption{$i=1$}
  \end{subfigure}
  \begin{subfigure}{.49\linewidth}
    \centering
    \includegraphics[scale=.09]{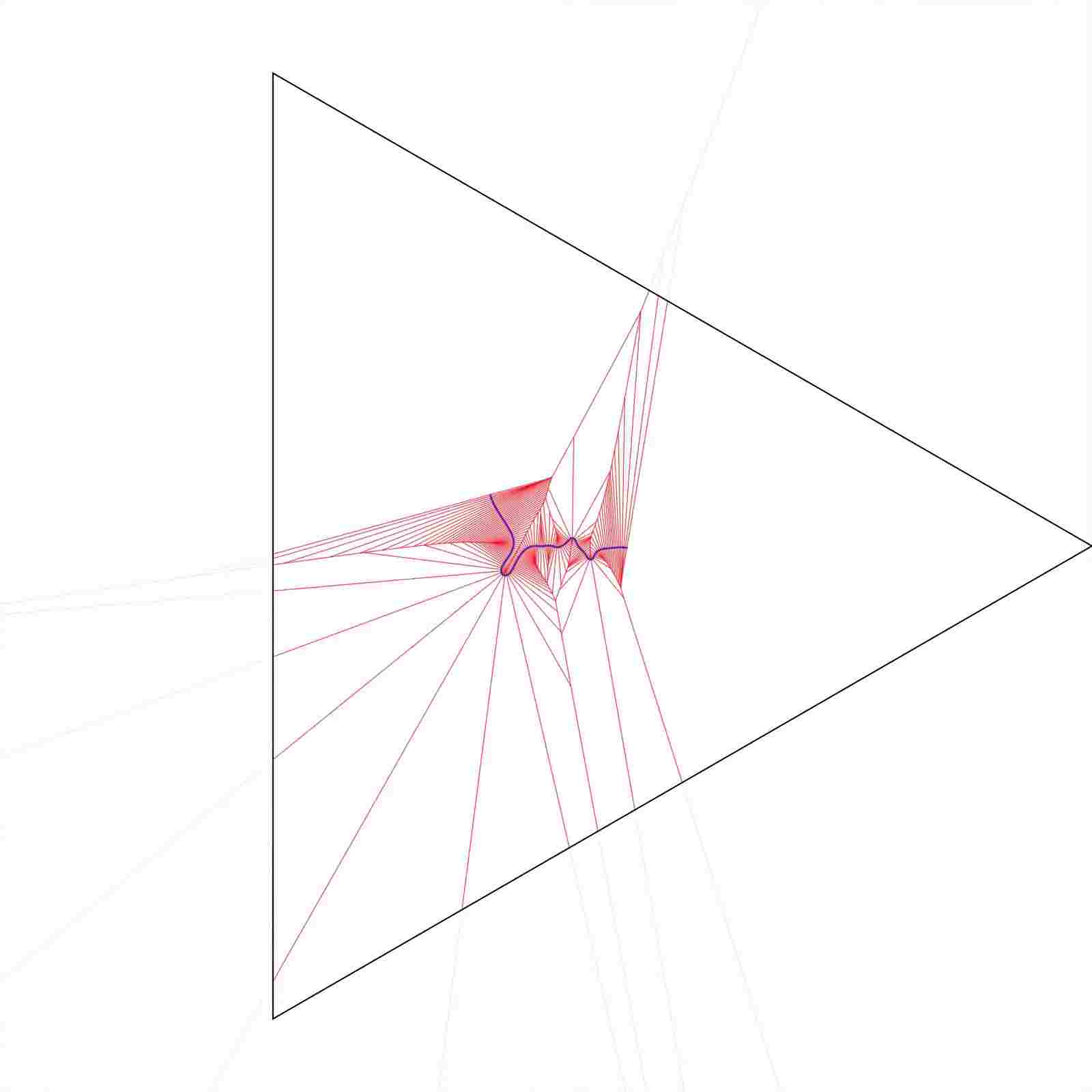}
    \caption{$i=50$}
  \end{subfigure}
\end{figure}
\begin{figure}[H] \ContinuedFloat
  \centering
  \begin{subfigure}{.49\linewidth}
    \centering
    \includegraphics[scale=.09]{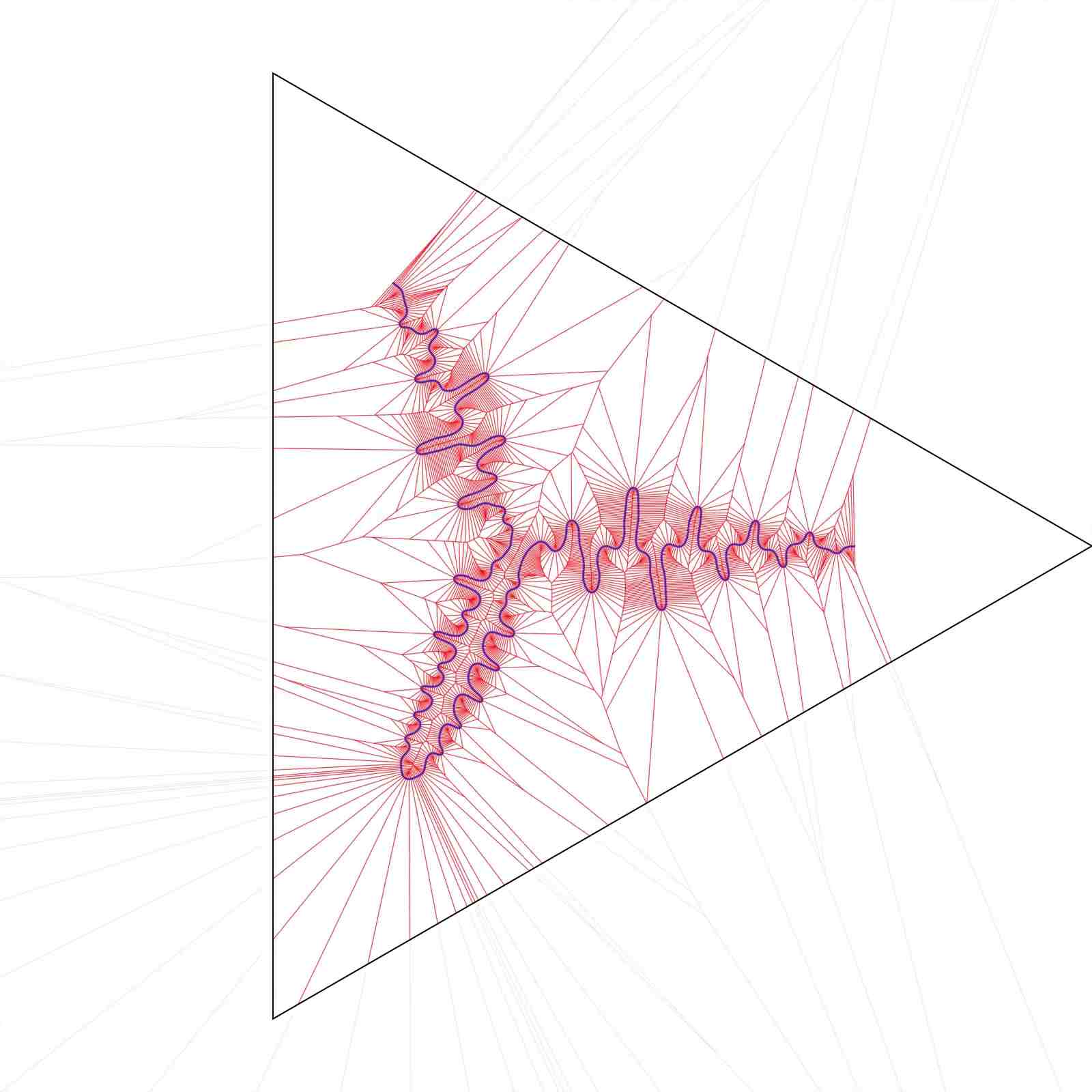}
    \caption{$i=100$}
  \end{subfigure}
  \begin{subfigure}{.49\linewidth}
    \centering
    \includegraphics[scale=.09]{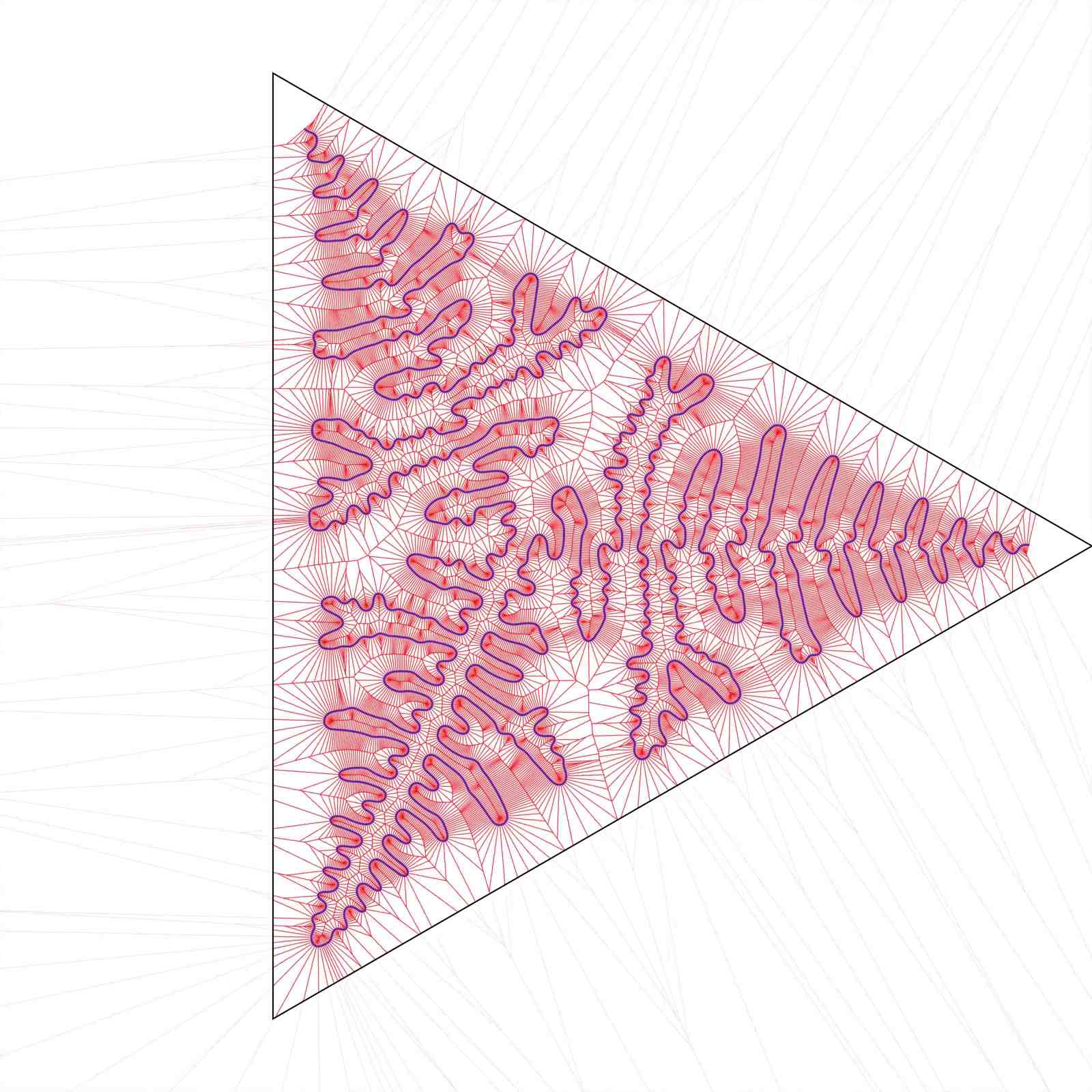}
    \caption{$i=150$}
  \end{subfigure}
\end{figure}
\begin{figure}[H] \ContinuedFloat
  \centering
  \begin{subfigure}{.49\linewidth}
    \centering
    \includegraphics[scale=.09]{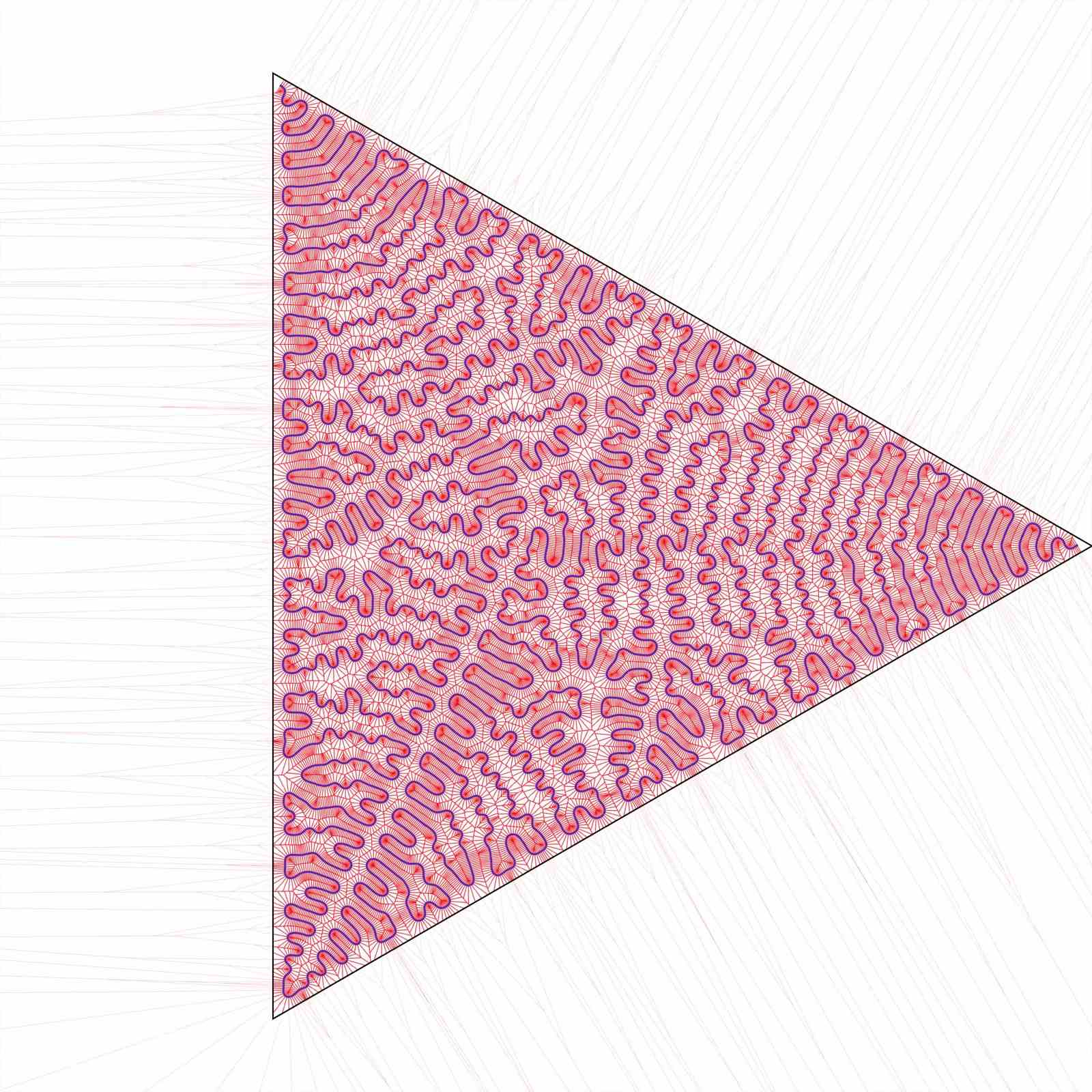}
    \caption{$i=200$}
  \end{subfigure}
  \begin{subfigure}{.49\linewidth}
    \centering
    \includegraphics[scale=.09]{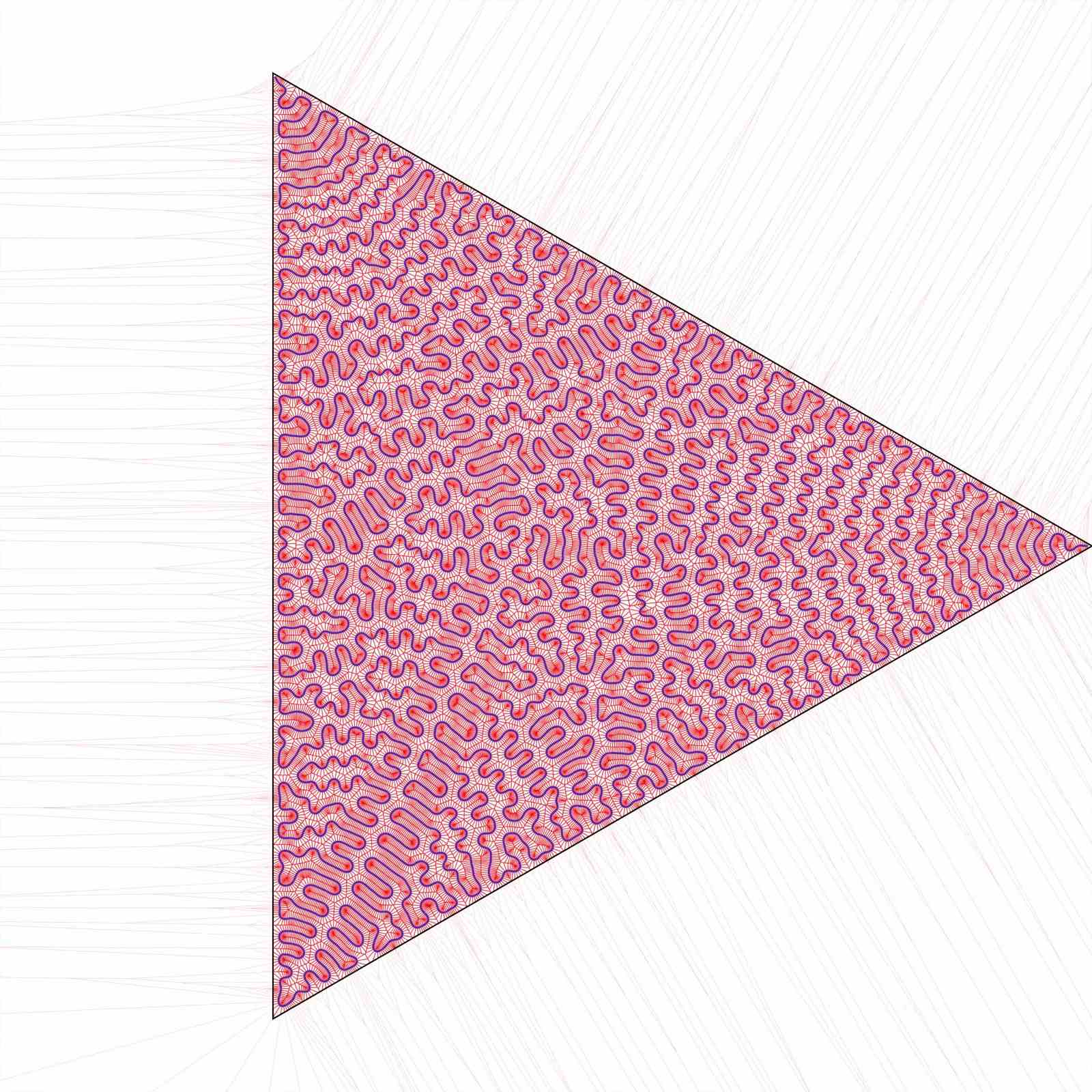}
    \caption{$i=250$}
  \end{subfigure}
\end{figure}
\begin{figure}[H] \ContinuedFloat
  \centering
  \begin{subfigure}{.49\linewidth}
    \centering
    \includegraphics[scale=.09]{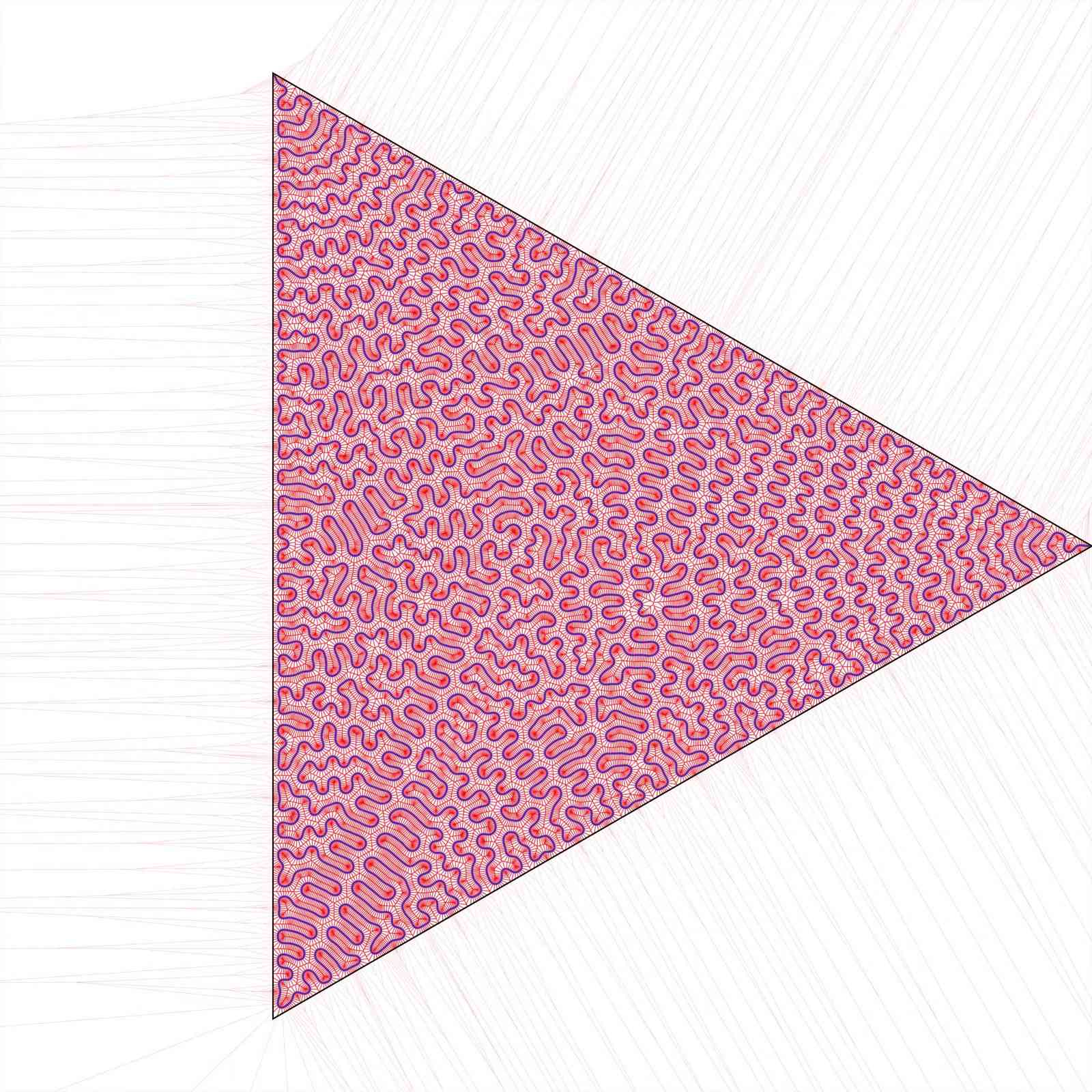}
    \caption{$i=300$}
  \end{subfigure}
\end{figure}

\subsection{Hexagonal Star Domain}
\label{sec:hexagonal-star-domain}
The numerics below were obtained with the following choices of
parameters:
\begin{itemize}
  \item Domain: Hexagonal star with vertices $\set{(\frac{3}{10} +
    \frac{7}{10} ((i+1) \bmod 2), \frac{2\pi k}{12}) \MID
    k = 0, \ldots, 11}$,
  \item Objective functional exponent: $\objp = 2$,
  \item Sobolev parameters: $\sobk = 2$, $\sobp = 2$
  \item Initial condition: 500 equally-spaced samples of
    $(t, t/250)$ on $[-1/5,1/5]$
  \item Spline resampling resolution: $\frac{1}{125}$
  \item Learning rate: $\learningrate(i) = \frac{9}{10} (i/1000)^2$
  \item Regularization penalty: $\lambda(i) = \frac{1}{10000} (1 -
    \learningrate(i))$
  \item Area rescaling: $\fbarysf(\iptsfj) \mapsto \fbarysf(\iptsfj) /
    \cmeas(\psin^{-1}(\iptsfj))^{17/20}$
  \item Smoothing window width: 1 point each side for the $\iptsfj$, 1
    point each side for $\fbarysfj$, 1 point each side for $\nabla
    \cost$.
\end{itemize}

\begin{figure}[H]
  \centering
  \begin{subfigure}{.49\linewidth}
    \centering
    \includegraphics[scale=.09]{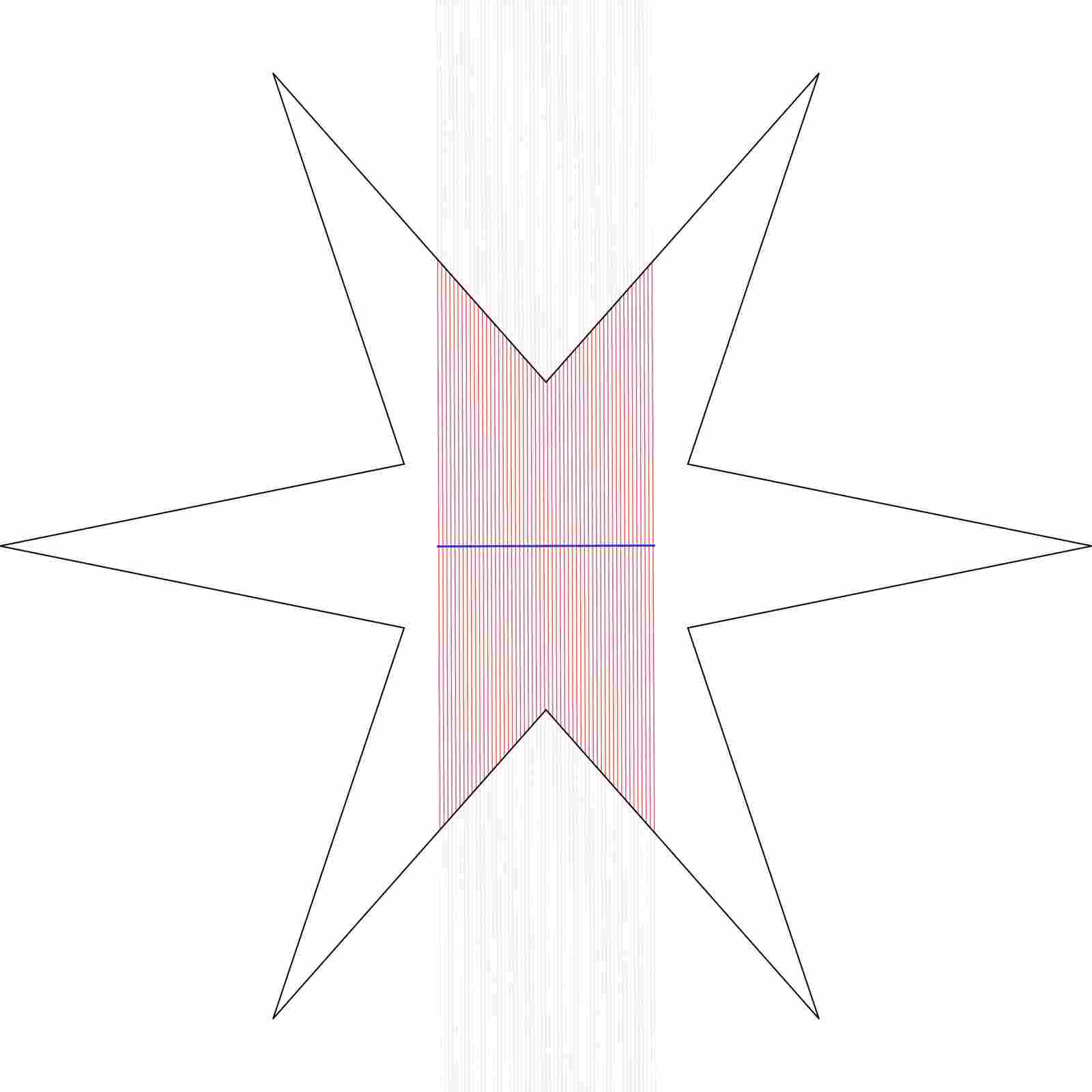}
    \caption{$i=1$}
  \end{subfigure}
  \begin{subfigure}{.49\linewidth}
    \centering
    \includegraphics[scale=.09]{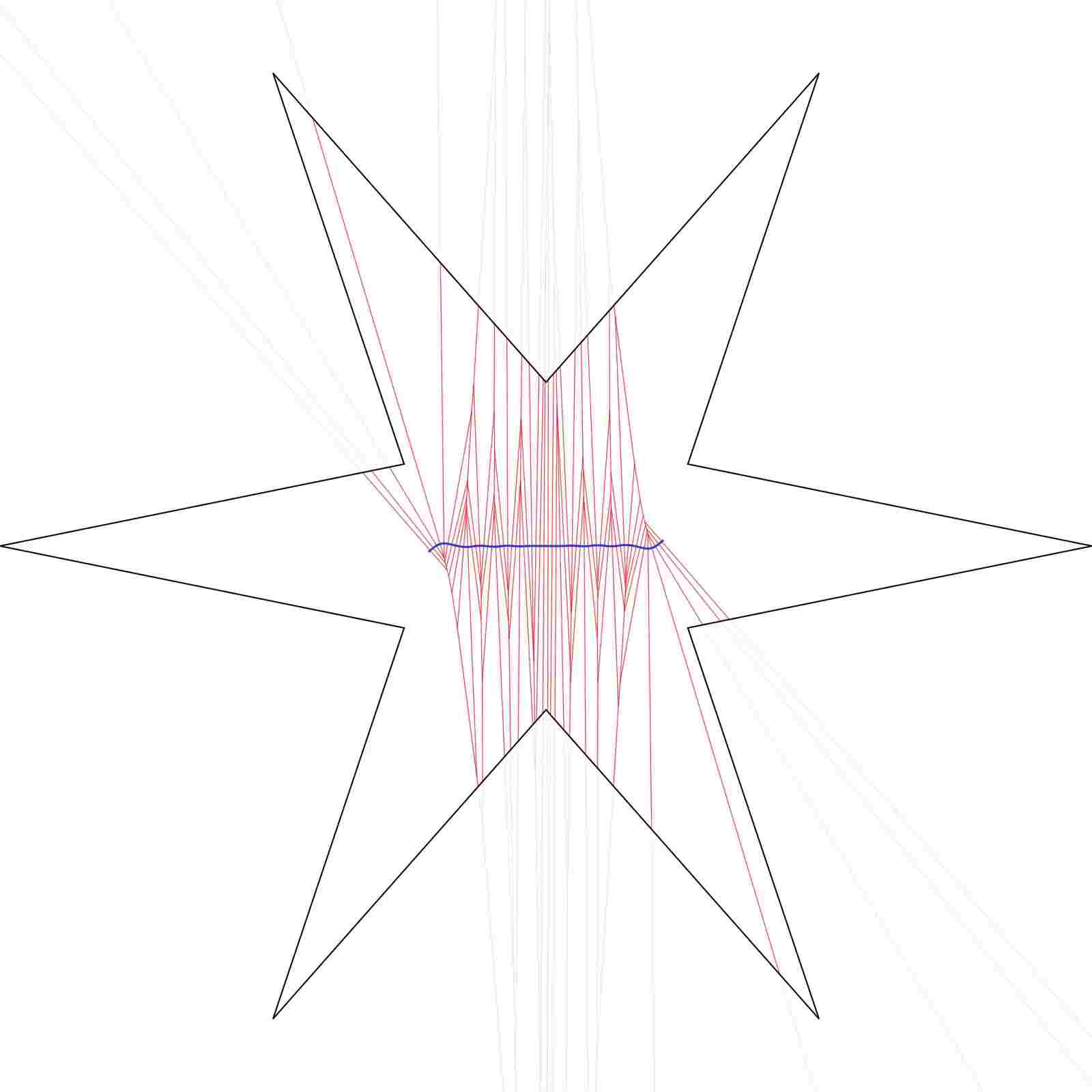}
    \caption{$i=50$}
  \end{subfigure}
\end{figure}
\begin{figure}[H] \ContinuedFloat
  \centering
  \begin{subfigure}{.49\linewidth}
    \centering
    \includegraphics[scale=.09]{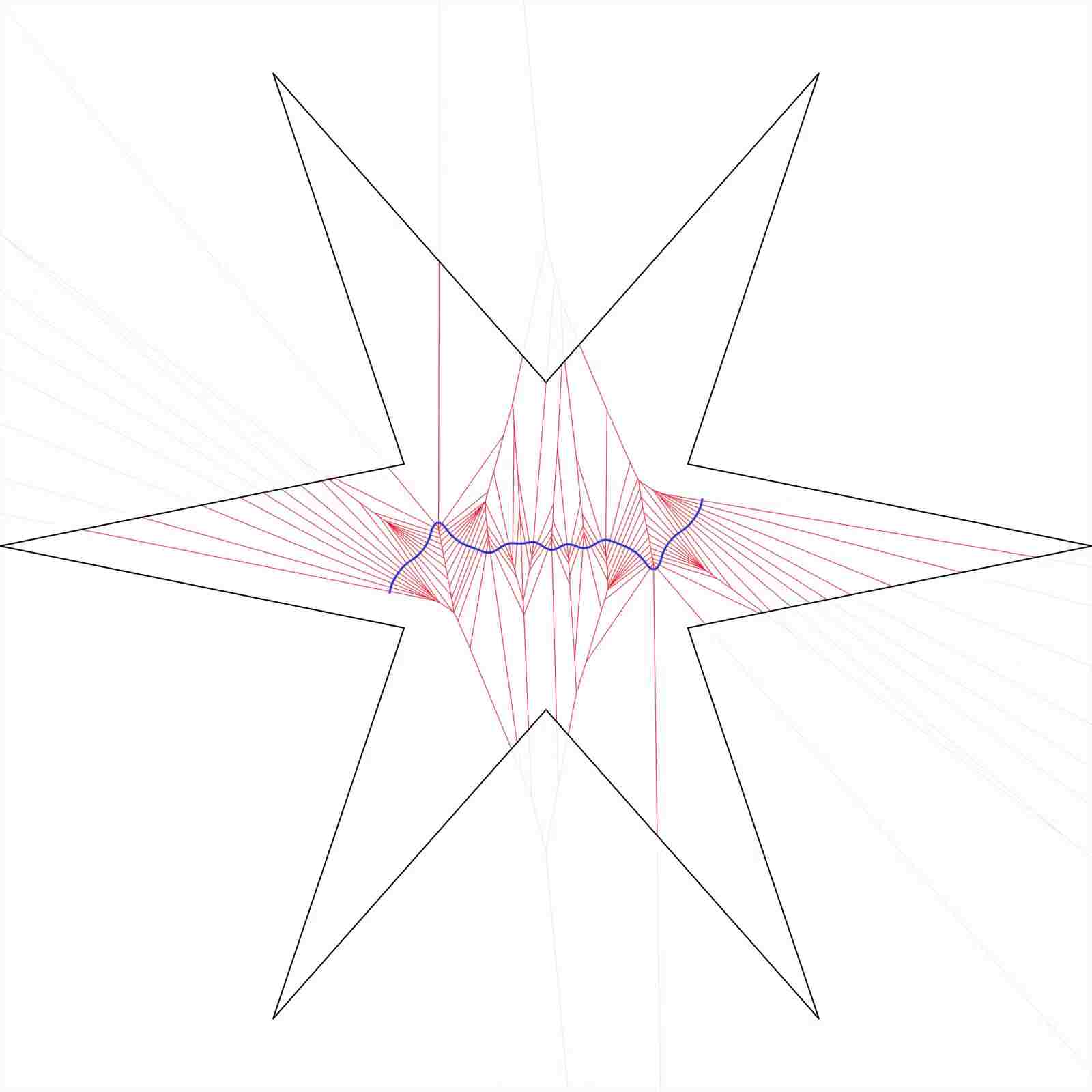}
    \caption{$i=100$}
  \end{subfigure}
  \begin{subfigure}{.49\linewidth}
    \centering
    \includegraphics[scale=.09]{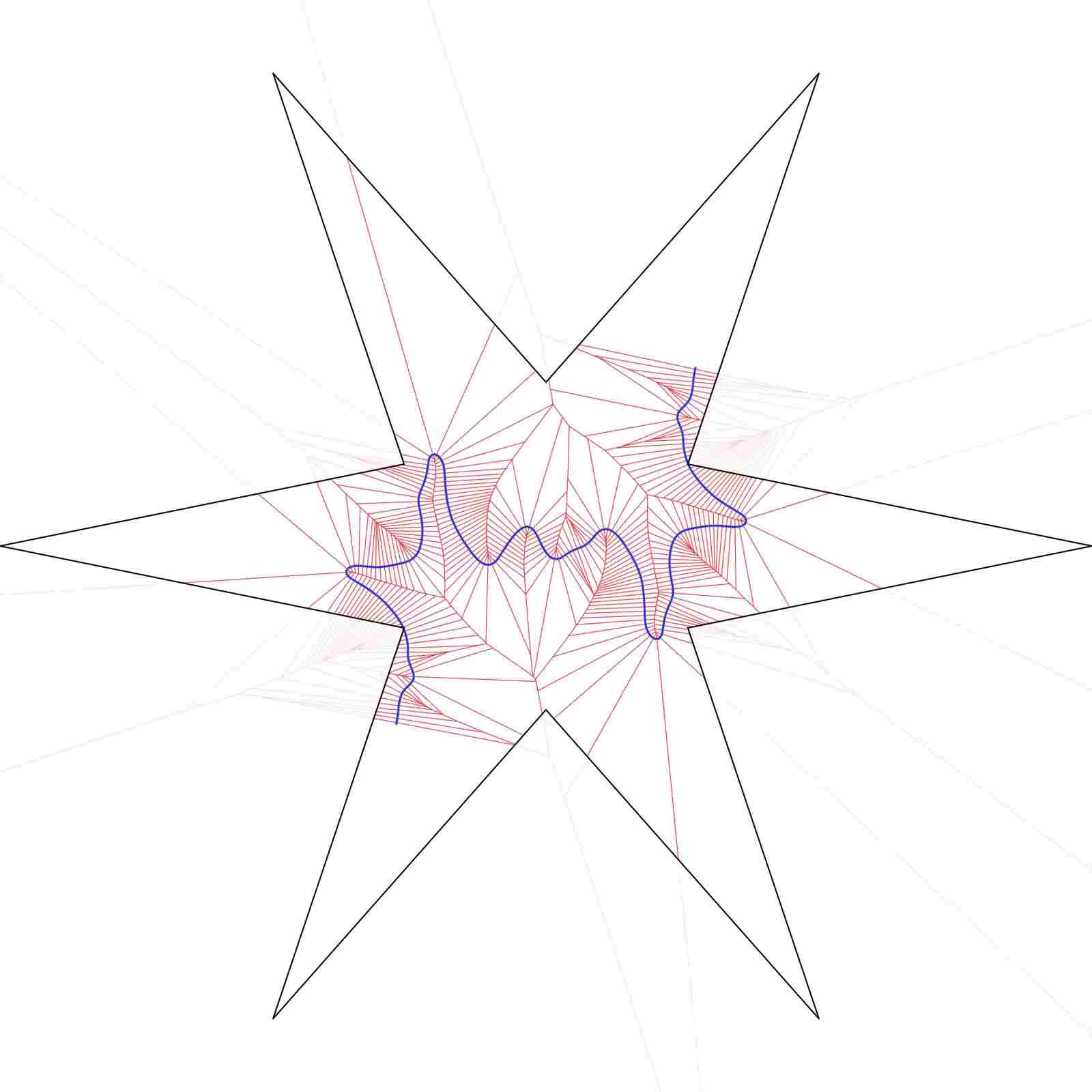}
    \caption{$i=150$}
  \end{subfigure}
\end{figure}
\begin{figure}[H] \ContinuedFloat
  \centering
  \begin{subfigure}{.49\linewidth}
    \centering
    \includegraphics[scale=.09]{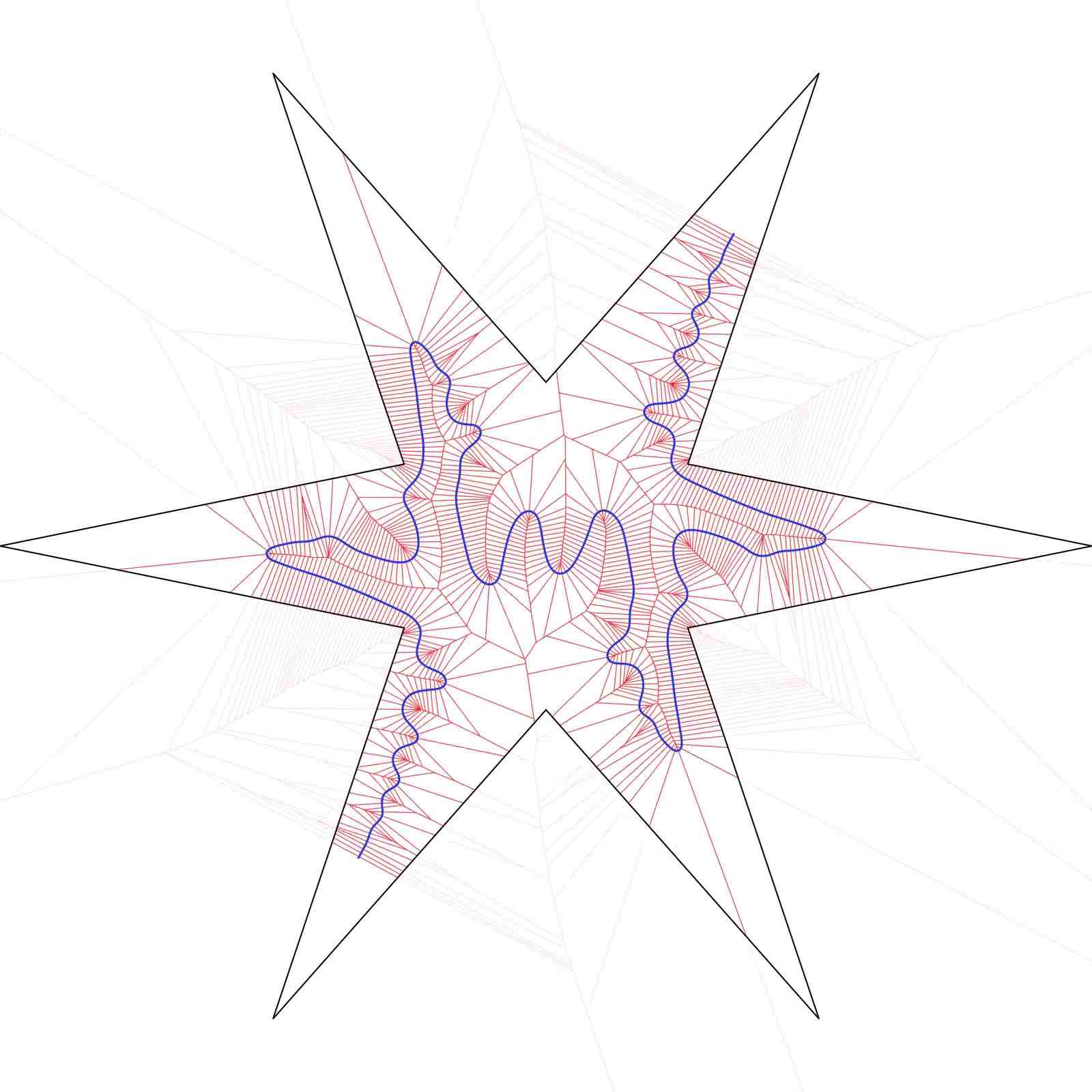}
    \caption{$i=200$}
  \end{subfigure}
  \begin{subfigure}{.49\linewidth}
    \centering
    \includegraphics[scale=.09]{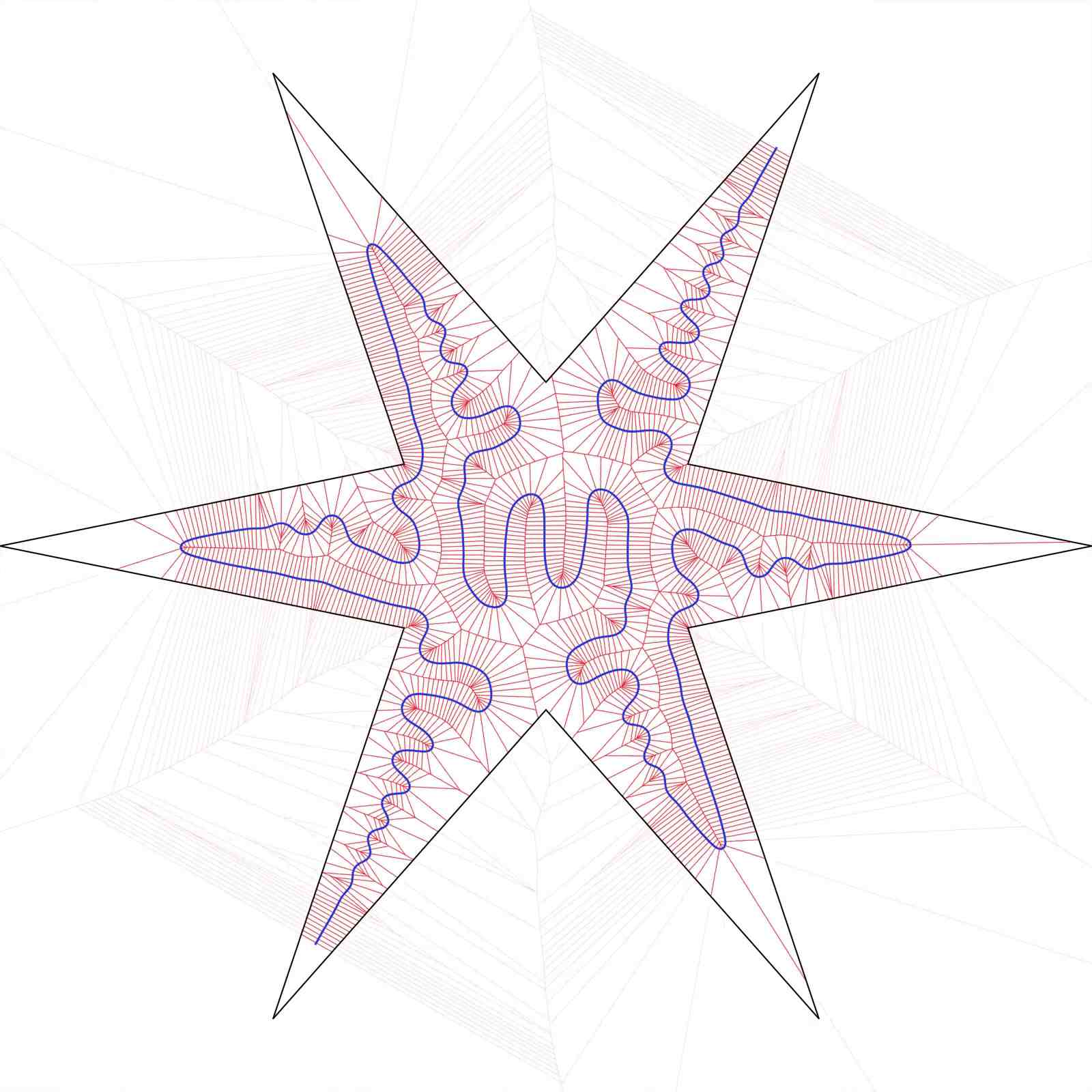}
    \caption{$i=250$}
  \end{subfigure}
\end{figure}
\begin{figure}[H] \ContinuedFloat
  \centering
  \begin{subfigure}{.49\linewidth}
    \centering
    \includegraphics[scale=.09]{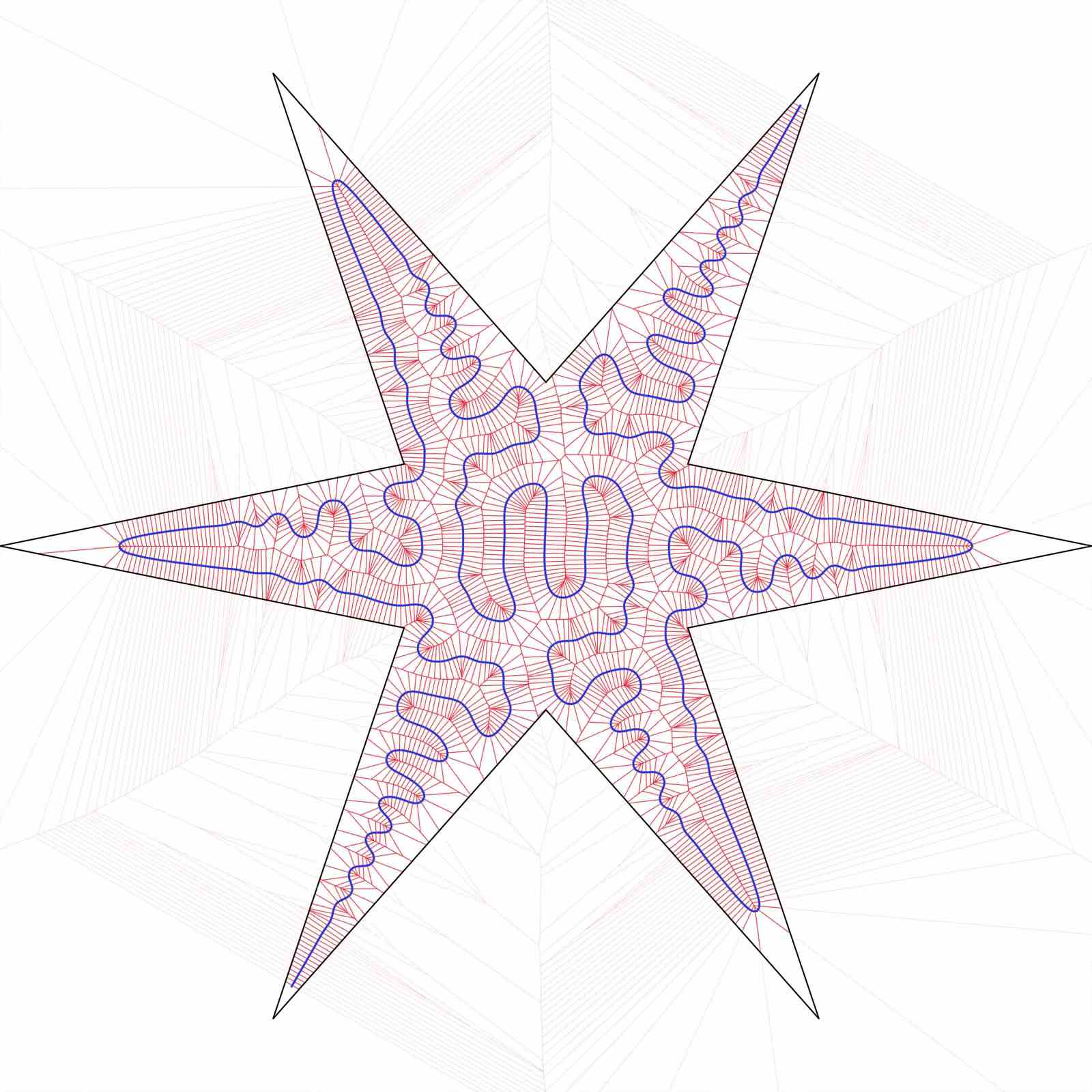}
    \caption{$i=300$}
  \end{subfigure}
  \begin{subfigure}{.49\linewidth}
    \centering
    \includegraphics[scale=.09]{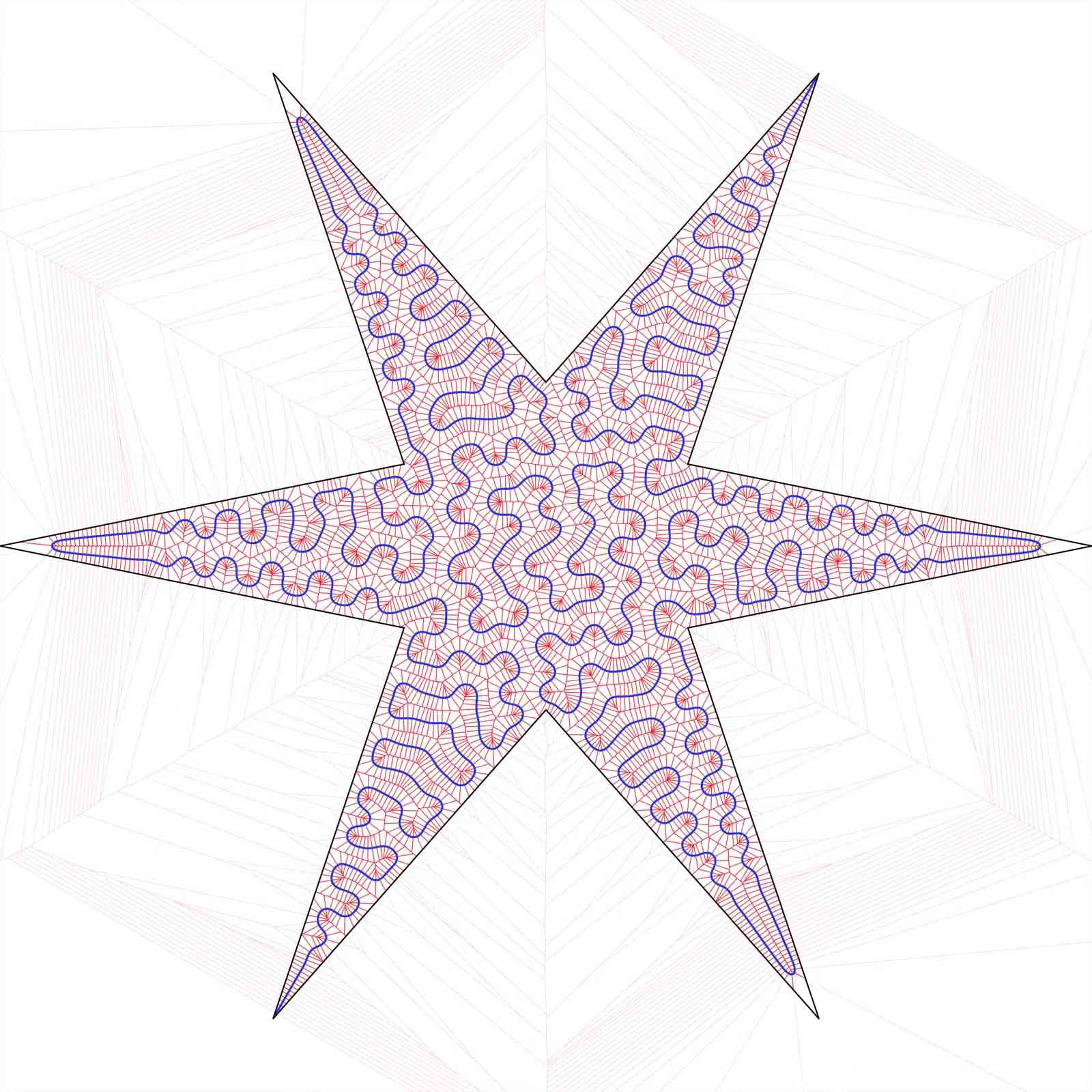}
    \caption{$i=500$}
  \end{subfigure}
\end{figure}
\begin{figure}[H] \ContinuedFloat
  \centering
  \begin{subfigure}{.49\linewidth}
    \centering
    \includegraphics[scale=.09]{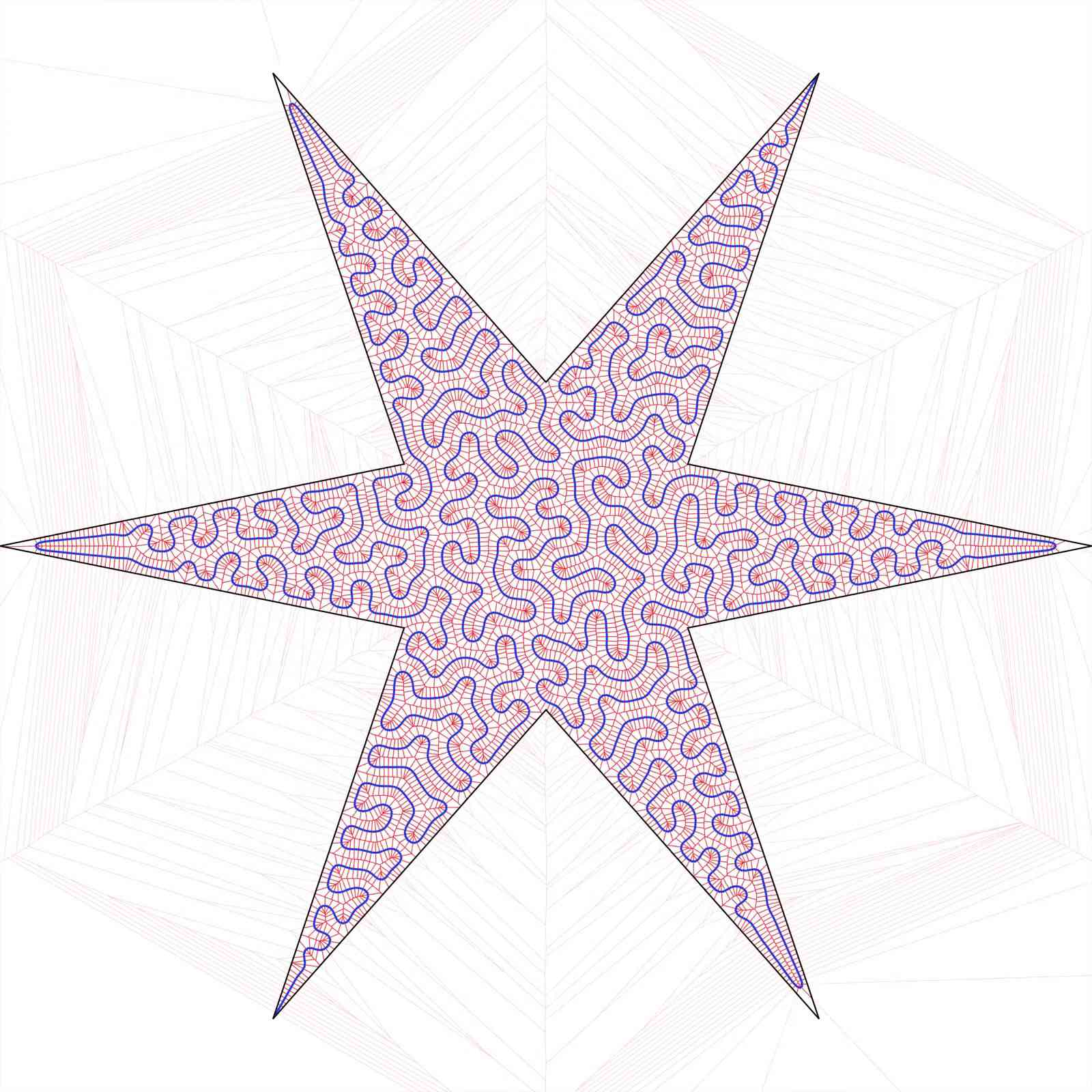}
    \caption{$i=750$}
  \end{subfigure}
  \begin{subfigure}{.49\linewidth}
    \centering
    \includegraphics[scale=.09]{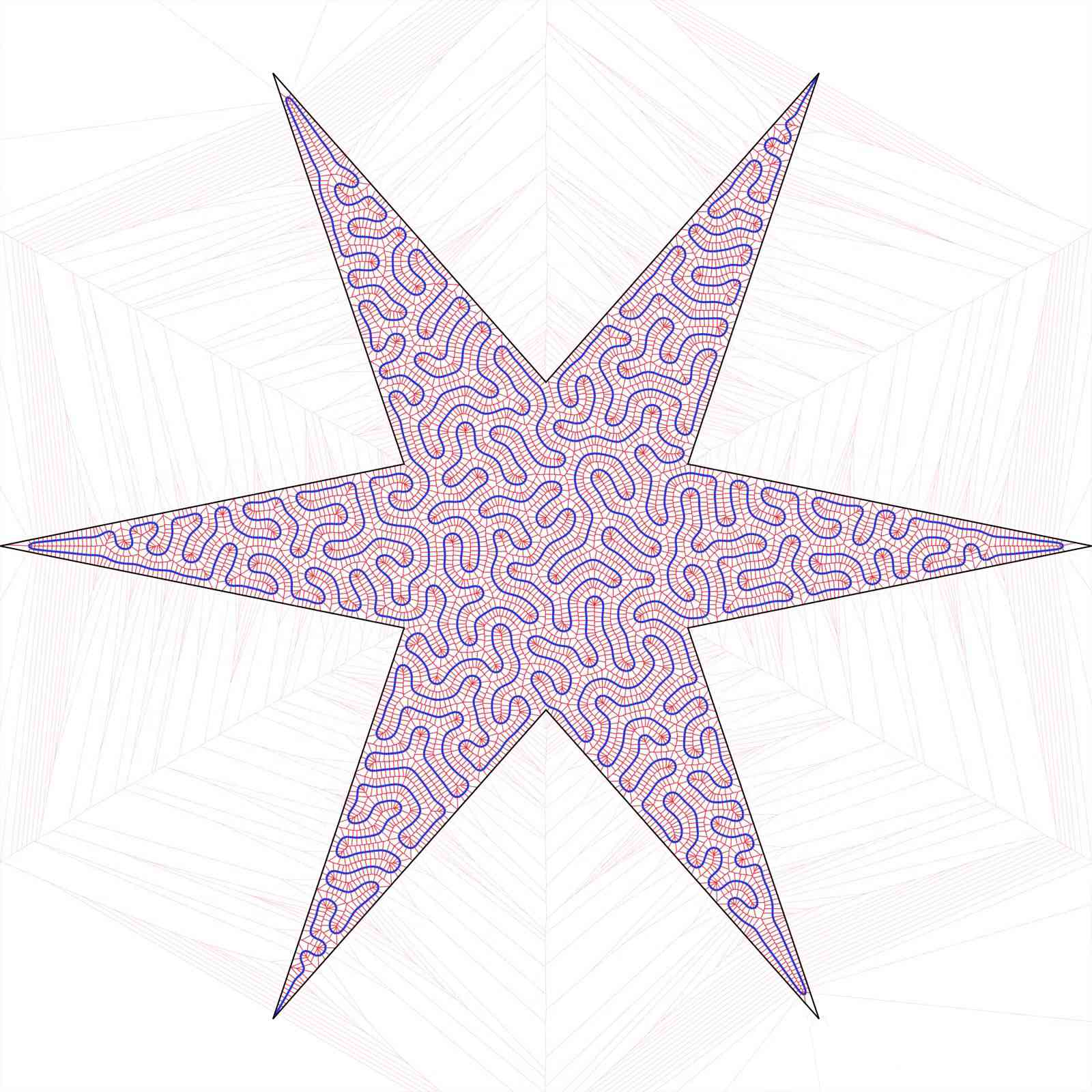}
    \caption{$i=1000$}
  \end{subfigure}
\end{figure}

\subsection{Chevron Domain}
The numerics below were obtained with the following choices of
parameters:
\begin{itemize}
  \item Domain: Chevron shape with vertices $\set{(-1, -\frac{3}{4}), (0,
    \frac{3}{4}), (1, -\frac{3}{4}), (0, -\frac{1}{4})}$
  \item Objective functional exponent: $\objp = 2$,
  \item Sobolev parameters: $\sobk = 2$, $\sobp = 2$
  \item Initial condition: 500 equally-spaced samples of
    $(t, t/250)$ on $[-1/5,1/5]$
  \item Spline resampling resolution: $\frac{1}{125}$
  \item Learning rate: $\learningrate(i) = \frac{9}{10} (i/1000)^2$
  \item Regularization penalty: $\lambda(i) = \frac{1}{10000} (1 -
    \learningrate(i))$
  \item Area rescaling: $\fbarysf(\iptsfj) \mapsto \fbarysf(\iptsfj) /
    \cmeas(\psin^{-1}(\iptsfj))^{17/20}$
  \item Smoothing window width: 1 point each side for the $\iptsfj$, 1
    point each side for $\fbarysfj$, 1 point each side for $\nabla
    \cost$.
\end{itemize}

\begin{figure}[H]
  \centering
  \begin{subfigure}{.49\linewidth}
    \centering
    \includegraphics[scale=.09]{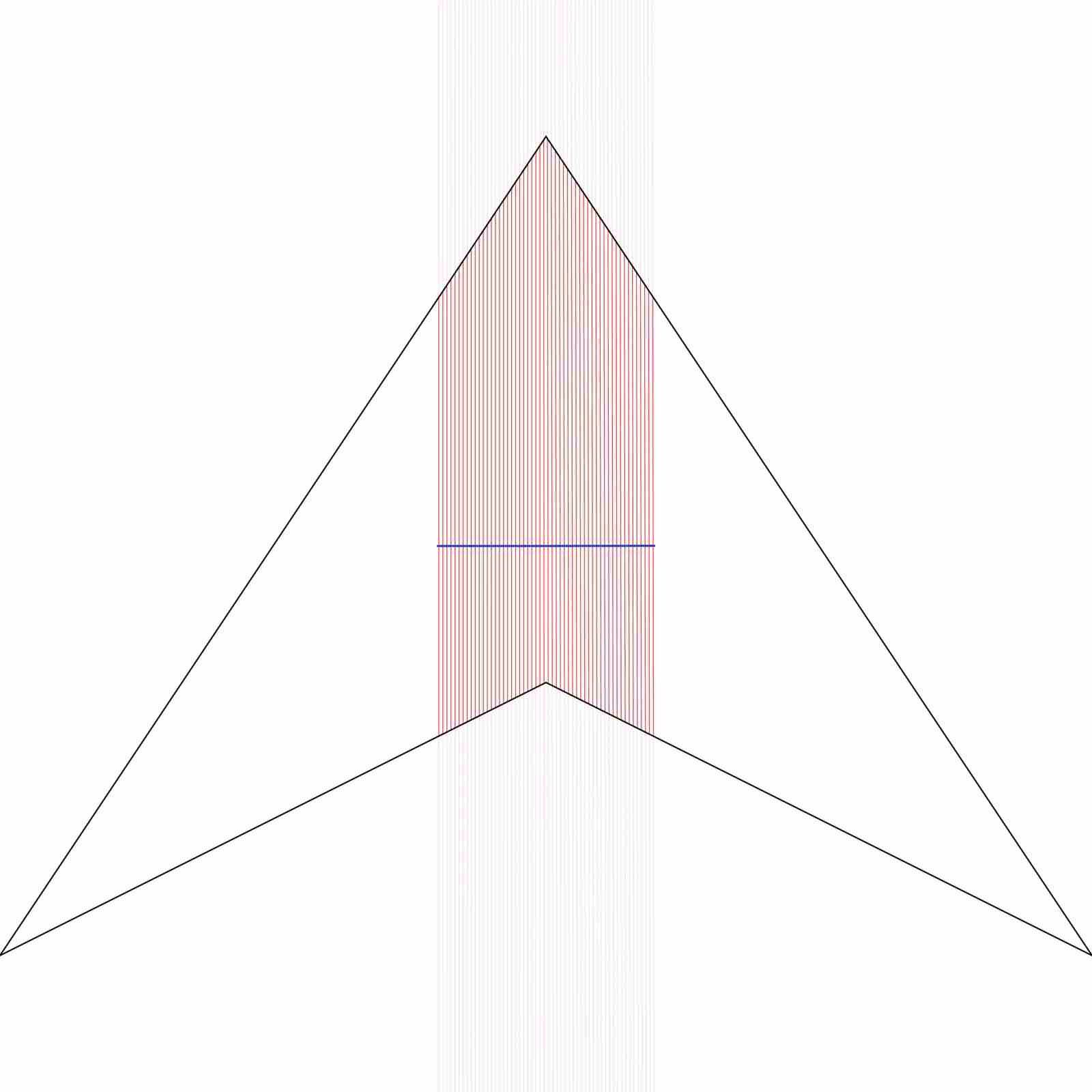}
    \caption{$i=1$}
  \end{subfigure}
  \begin{subfigure}{.49\linewidth}
    \centering
    \includegraphics[scale=.09]{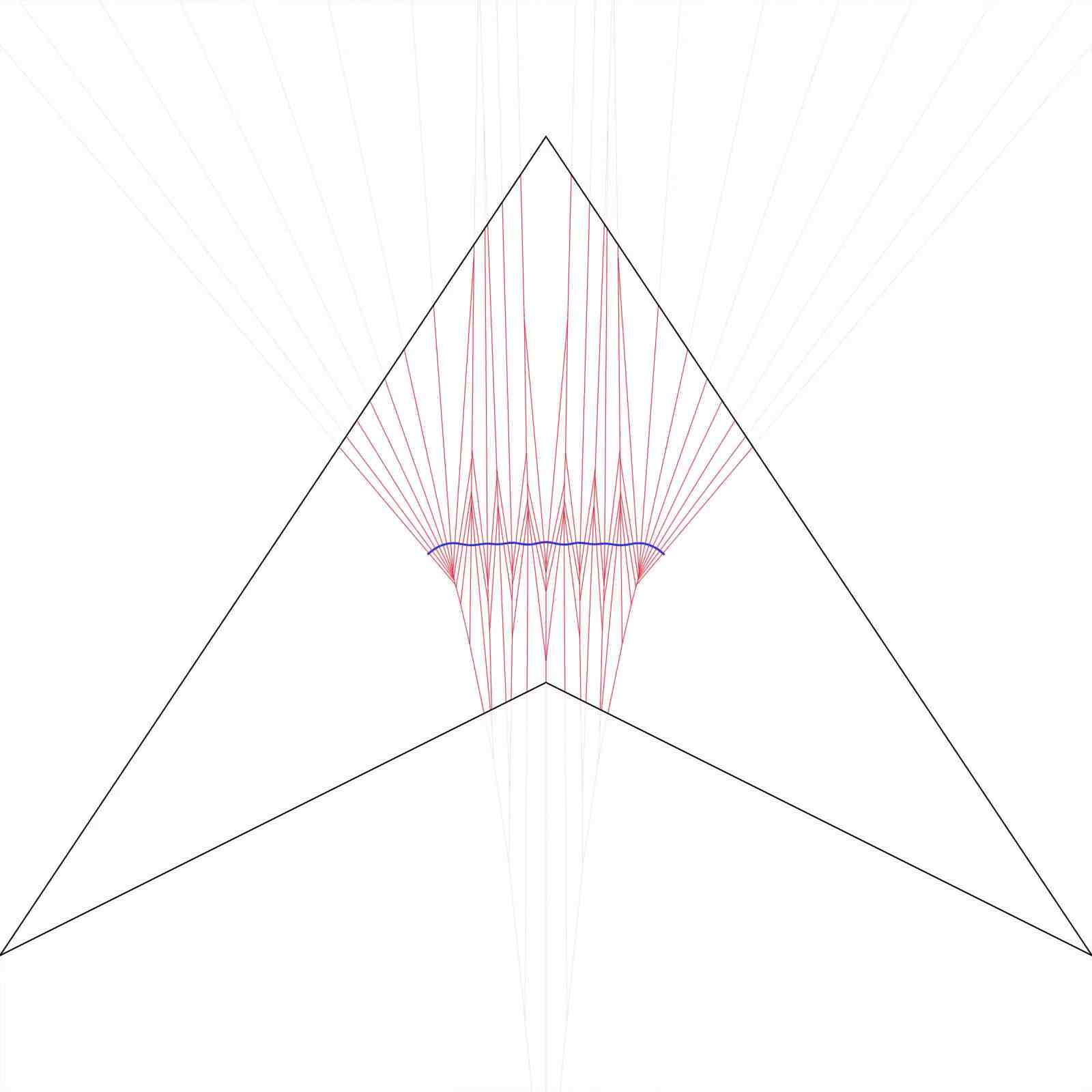}
    \caption{$i=50$}
  \end{subfigure}
\end{figure}
\begin{figure}[H] \ContinuedFloat
  \centering
  \begin{subfigure}{.49\linewidth}
    \centering
    \includegraphics[scale=.09]{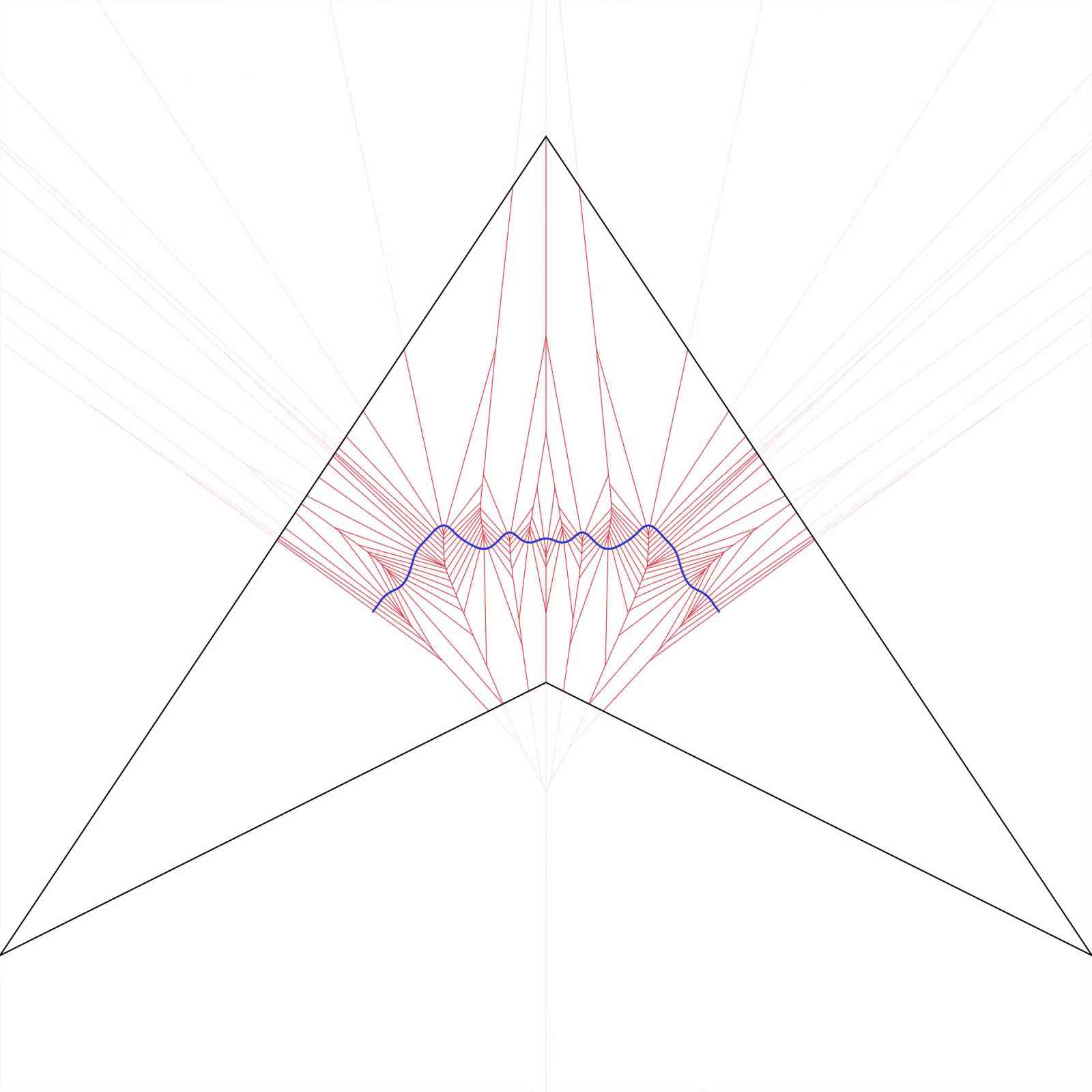}
    \caption{$i=100$}
  \end{subfigure}
  \begin{subfigure}{.49\linewidth}
    \centering
    \includegraphics[scale=.09]{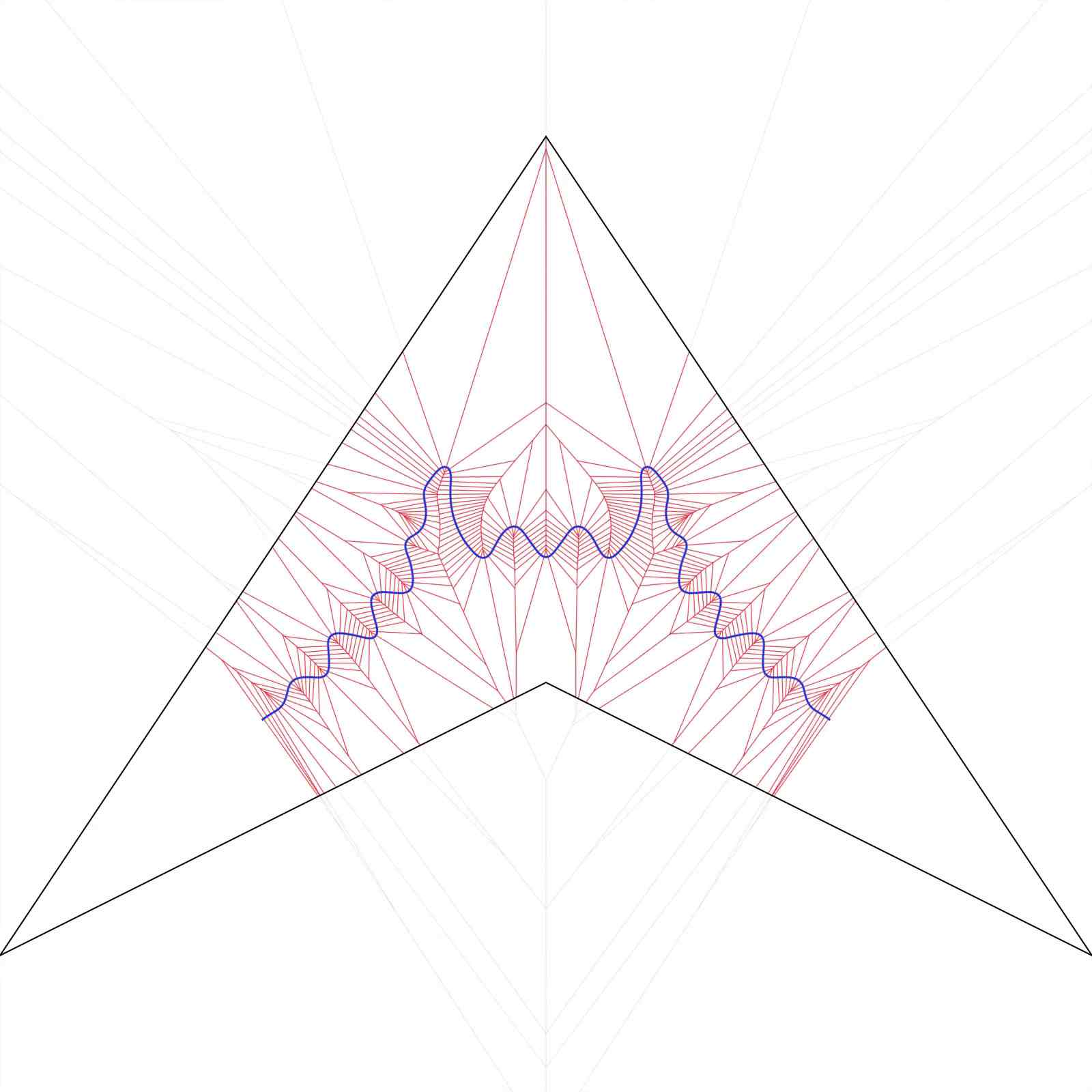}
    \caption{$i=150$}
  \end{subfigure}
\end{figure}
\begin{figure}[H] \ContinuedFloat
  \centering
  \begin{subfigure}{.49\linewidth}
    \centering
    \includegraphics[scale=.09]{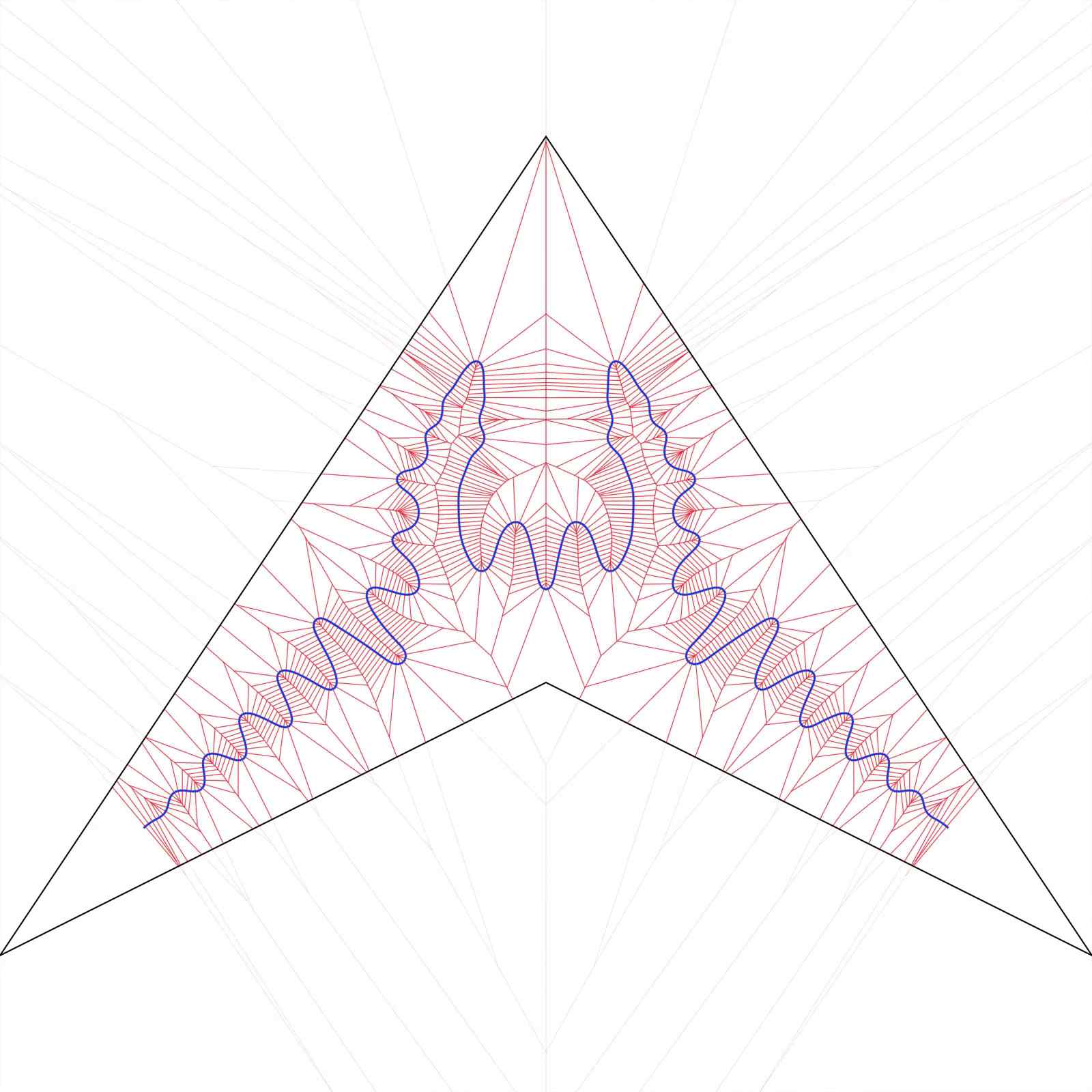}
    \caption{$i=200$}
  \end{subfigure}
  \begin{subfigure}{.49\linewidth}
    \centering
    \includegraphics[scale=.09]{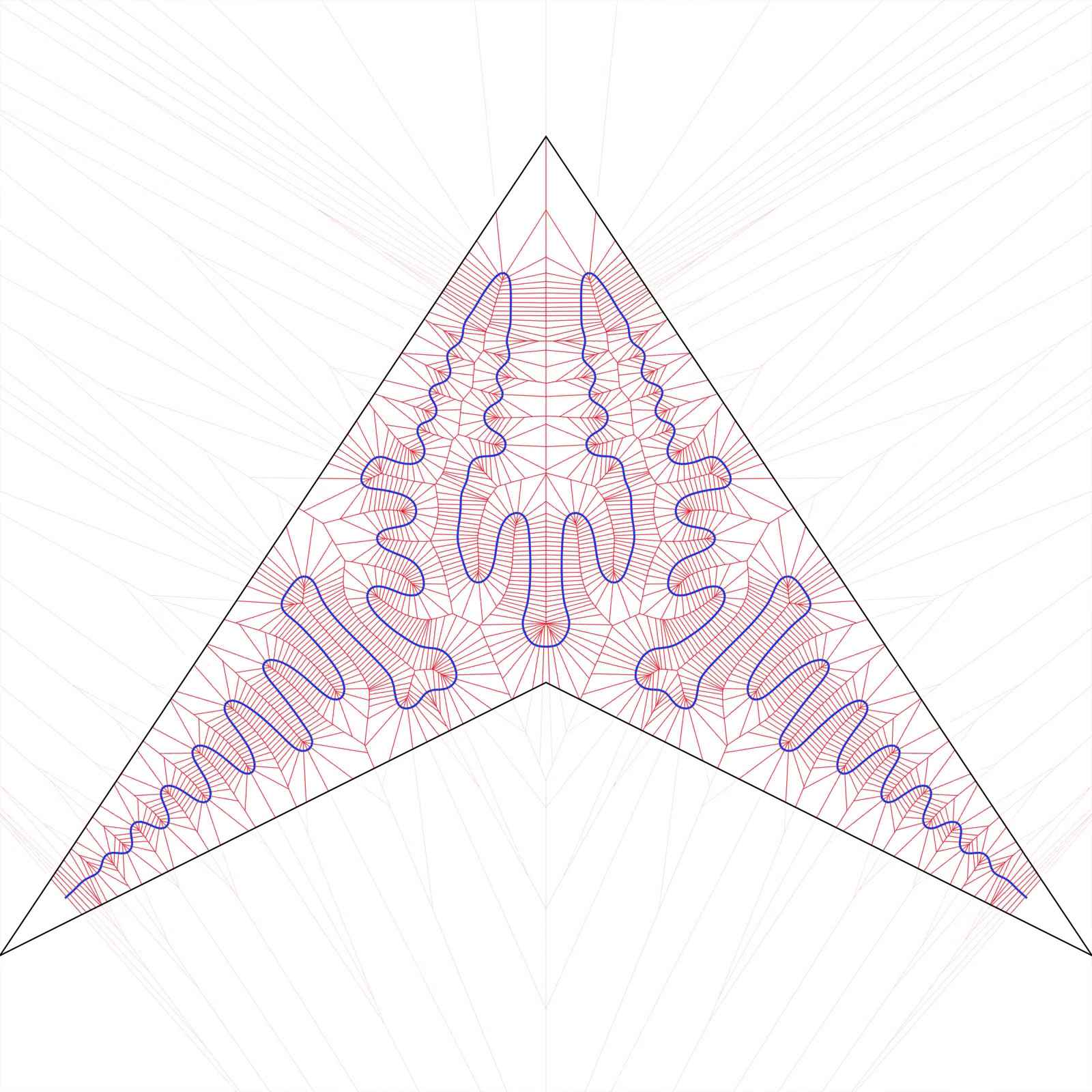}
    \caption{$i=250$}
  \end{subfigure}
\end{figure}
\begin{figure}[H] \ContinuedFloat
  \centering
  \begin{subfigure}{.49\linewidth}
    \centering
    \includegraphics[scale=.09]{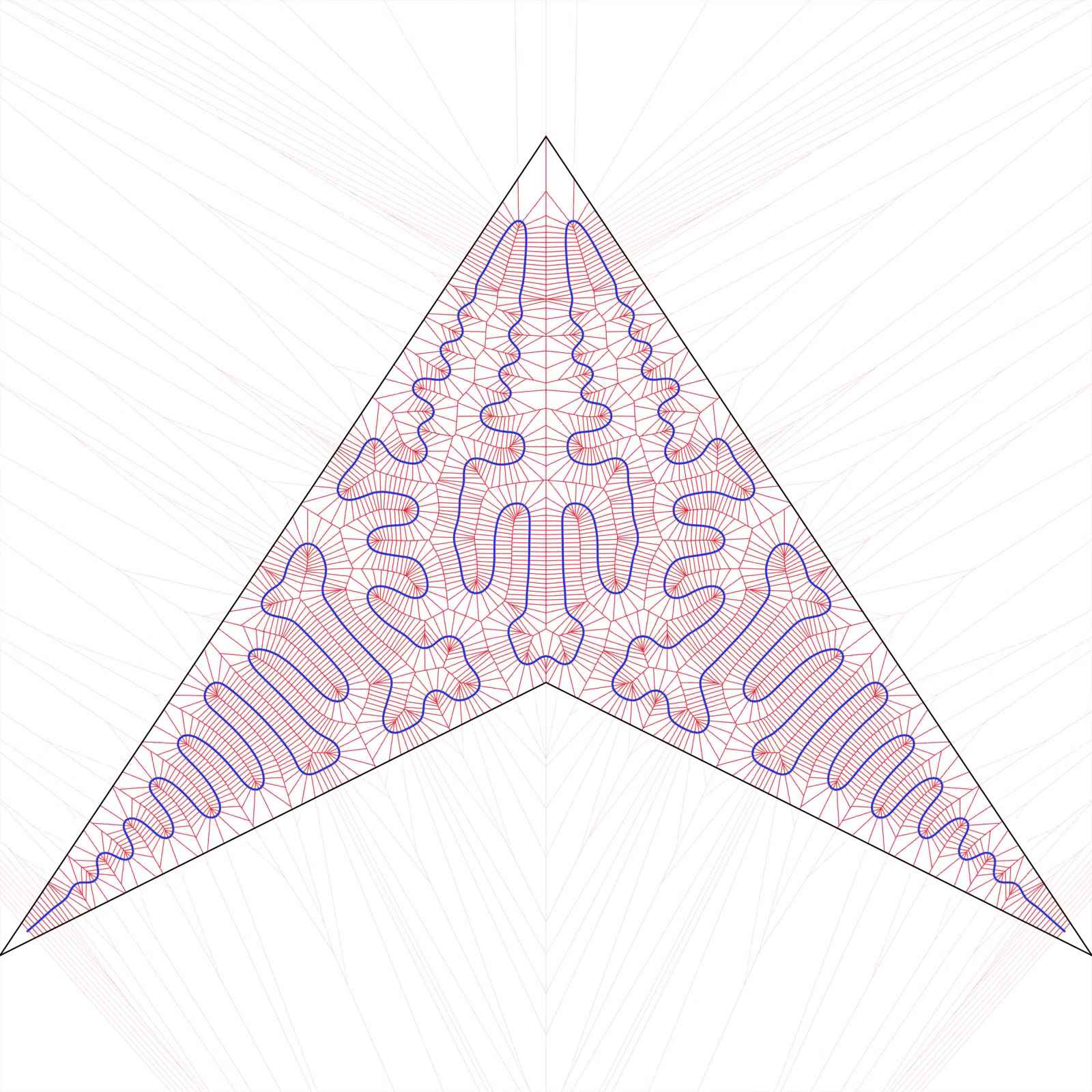}
    \caption{$i=300$}
  \end{subfigure}
  \begin{subfigure}{.49\linewidth}
    \centering
    \includegraphics[scale=.09]{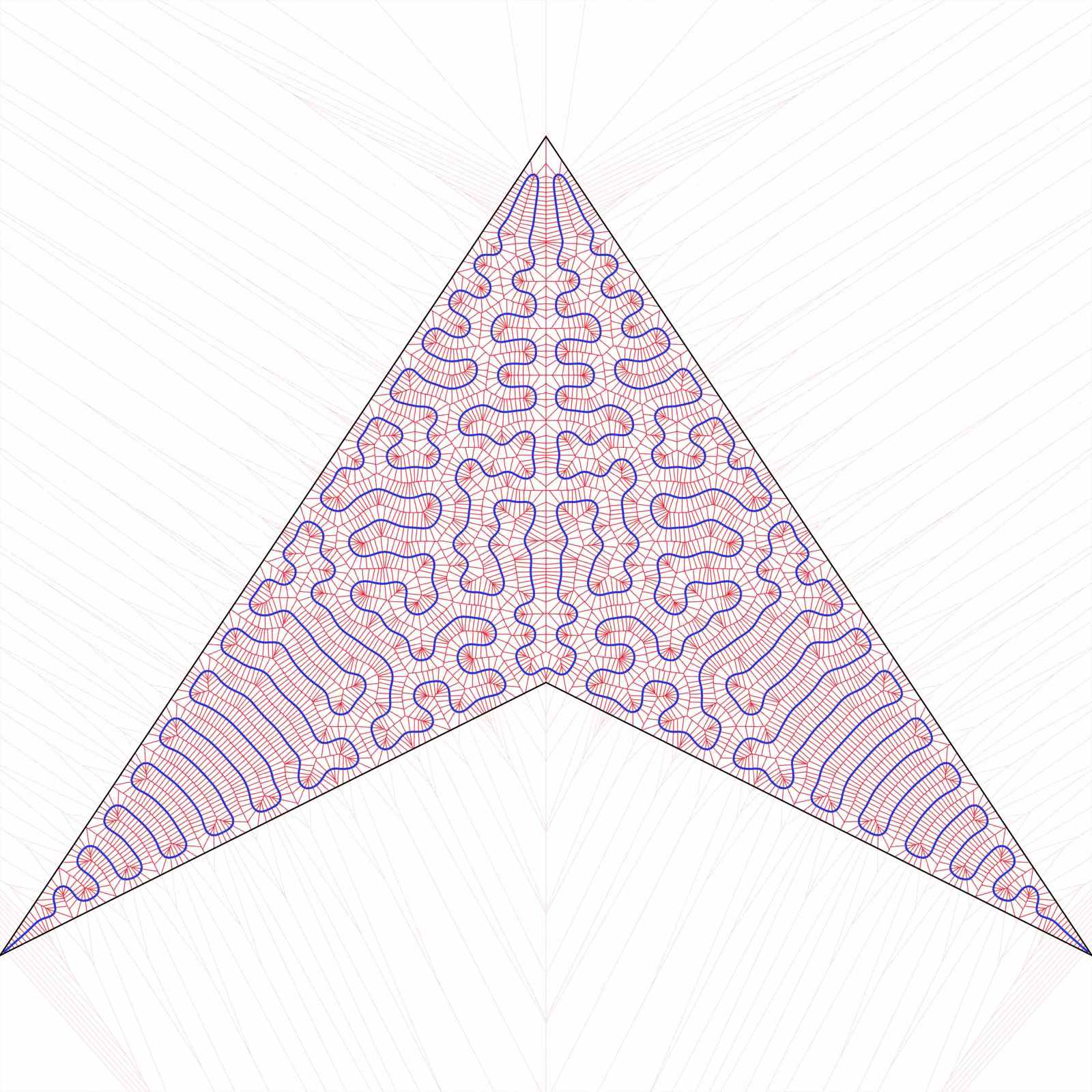}
    \caption{$i=500$}
  \end{subfigure}
\end{figure}
\begin{figure}[H] \ContinuedFloat
  \centering
  \begin{subfigure}{.49\linewidth}
    \centering
    \includegraphics[scale=.09]{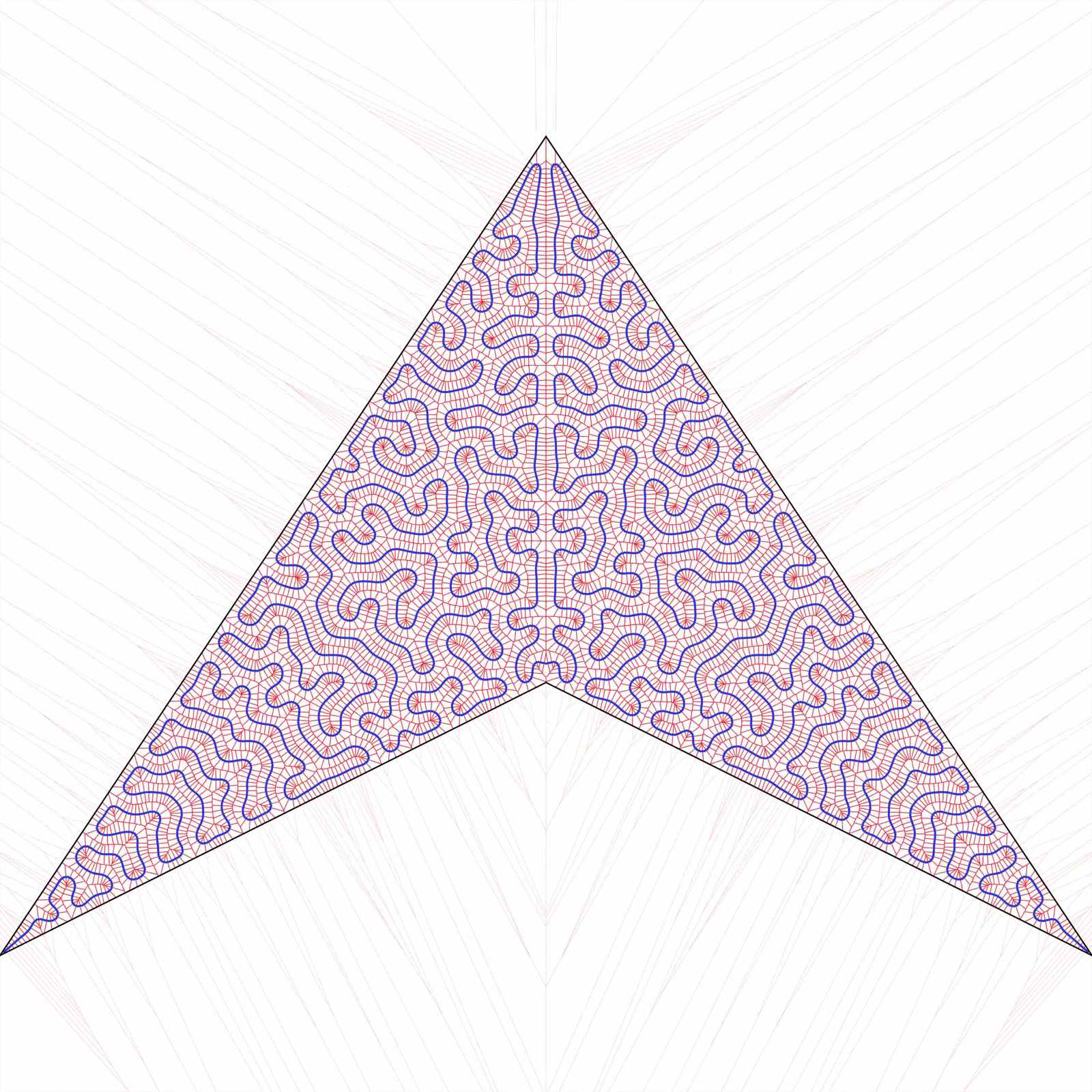}
    \caption{$i=750$}
  \end{subfigure}
  \begin{subfigure}{.49\linewidth}
    \centering
    \includegraphics[scale=.09]{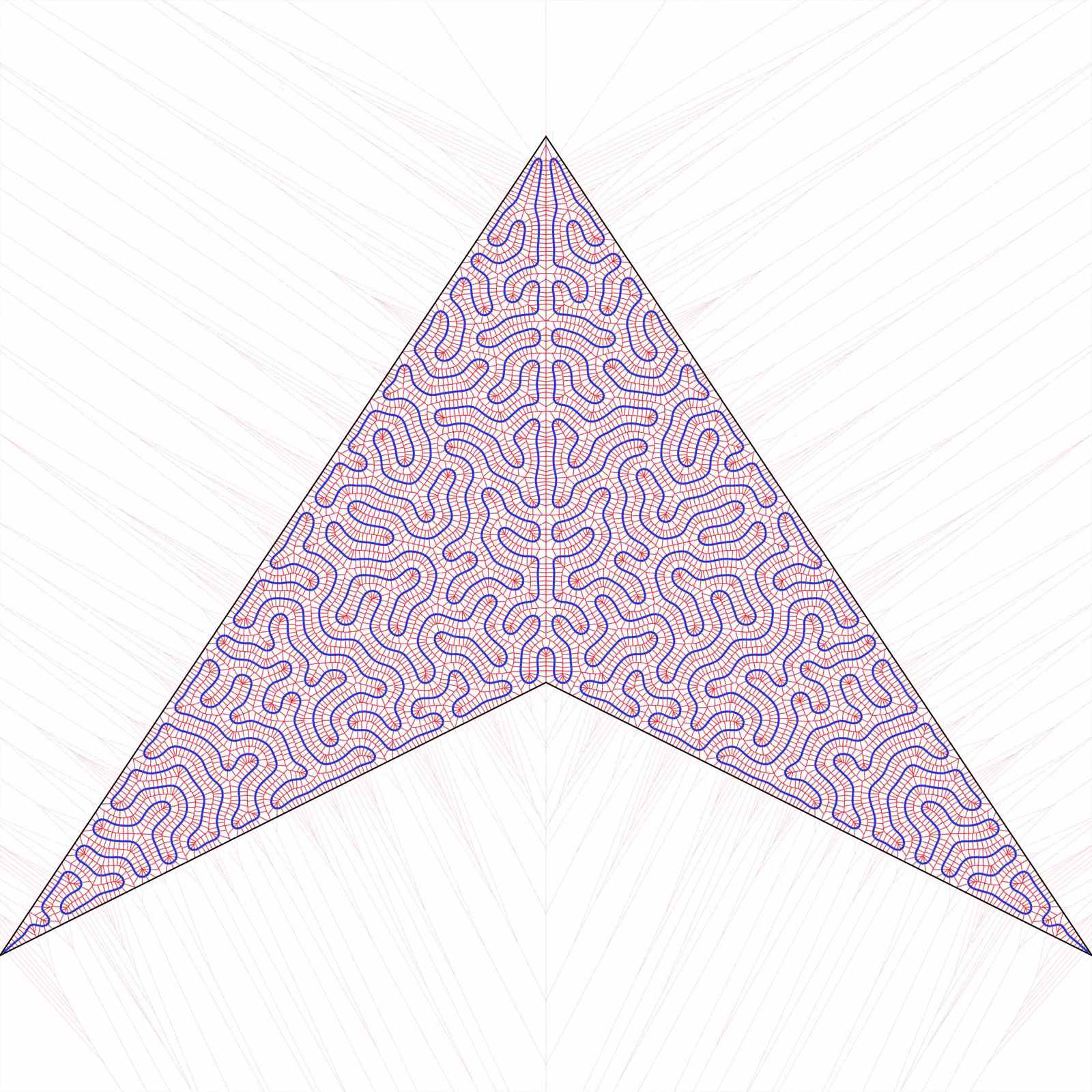}
    \caption{$i=1000$}
  \end{subfigure}
\end{figure}

\end{document}